\documentclass[10pt,a4paper]{article}
\usepackage[utf8]{inputenc}
\usepackage{amsmath}
\usepackage{amsthm}
\usepackage{amsfonts}
\usepackage{amssymb}
\usepackage[left=2cm,right=2cm,top=2cm,bottom=2cm]{geometry}
\usepackage{amsmath,amsfonts,amssymb,xspace,bm}

\usepackage{verbatim,dsfont,mathtools,algorithm,algorithmic, url, color}
\usepackage{booktabs}
\newcommand{\ra}[1]{\renewcommand{\arraystretch}{#1}}
\usepackage{multirow}
\usepackage{xfrac}
\usepackage{wrapfig}
\usepackage{varwidth}

\usepackage{color}
\usepackage{epstopdf}
\usepackage{appendix}
\usepackage{epstopdf}
\usepackage{enumitem}
\usepackage{pifont}

\usepackage{authblk}

\newtheorem{theorem}{Theorem}[section]
\newtheorem{lemma}[theorem]{Lemma}
\newtheorem{corollary}[theorem]{Corollary}
\newtheorem{definition}[theorem]{Definition}
\newtheorem{remark}{Remark}

\newcommand{\algo}{\textsc{Fgd}\xspace}
\newcommand{\R}{\mathbb{R}}

\DeclareMathOperator{\trace}{Tr}
\DeclareMathOperator*{\argmin}{argmin}

\newcommand{\ip}[2]{\left\langle #1, #2 \right\rangle}

\newcommand{\norm}[1]{\left \Vert #1\right \Vert}
\newcommand{\dist}{{\rm{\textsc{Dist}}}}

\newcommand{\plus}[1]{{#1}^{\scalebox{0.7}{+}}}

\newcommand{\X}{X}
\newcommand{\Xp}{\plus{X}}
\newcommand{\Xo}{X^{\star}}

\newcommand{\U}{U}

\newcommand{\f}{f}

\newcommand{\gradf}{\nabla \f}

\newcommand{\Uo}{U^{\star}}

\newcommand{\uut}{\U \U^{\top}}

\newcommand{\Q}{Q}

\newcommand{\condx}{\tau(\X)}

\newcommand{\rscm}{m}

\newcommand{\Uor}{U^\star R}
\newcommand{\Uorr}{U_r^\star R_U^\star}
\newcommand{\Xor}{X^{\star}_r}
\newcommand{\weta}{\widehat{\eta}}

\def\Up{\plus{U}}

\def\U{U}
\def\V{V}
\def\Q{Q}
\def\Y{Y}
\def\S{\mathbb{S}}

\def\linmap{\mathcal{A}}

\newcommand{\cdiff}{\delta}
\newcommand{\cdiffp}{\plus{\cdiff}}

\title{Dropping Convexity for Faster Semi-definite Optimization}

\author{Srinadh Bhojanapalli\thanks{Toyota Technological Institute at Chicago}, Anastasios Kyrillidis$^\dagger$, Sujay Sanghavi\thanks{The University of Texas at Austin. \\ \quad      Email: srinadh@ttic.edu, anastasios@utexas.edu, sanghavi@mail.utexas.edu.}}
\date{}

\begin{document}

\maketitle
\begin{abstract}
We study the minimization of a convex function $f(X)$ over the set of $n\times n$ positive semi-definite matrices, 
but when the problem is recast as $\min_U g(U) :=  f(UU^\top)$, with $U \in \R^{n \times r}$ and $r\leq n$. 
We study the performance of gradient descent on $g$---which we refer to as Factored Gradient Descent (\algo)---under standard assumptions on the {\em original} function $f$.

We provide a rule for selecting the step size and, with this choice, show that the \emph{local} convergence rate of \algo mirrors that of standard gradient descent on the original $f$: \emph{i.e.}, after $k$ steps, the error is $O(1/k)$ for smooth $f$, and exponentially small in $k$ when $f$ is (restricted) strongly convex. 
In addition, we provide a procedure to initialize \algo for (restricted) strongly convex objectives and when one only has access to $f$ via a first-order oracle; for several problem instances, such proper initialization leads to \emph{global} convergence guarantees.

\algo and similar procedures are widely used in practice for problems that can be posed as matrix factorization.
To the best of our knowledge, this is the first paper to provide precise convergence rate guarantees for general convex functions under standard convex assumptions.
\end{abstract}


\section{Introduction}
Consider the following standard convex semi-definite optimization problem: 
\begin{equation}
\begin{aligned}
\underset{\X \in \R^{n \times n}}{\text{minimize}}
& & f(\X) \quad \text{subject to}
& & \X \succeq 0,
\end{aligned} \label{intro:eq_00}
\end{equation} 
where $f: \R^{n \times n} \rightarrow \R$ is a convex and differentiable function, and $\X \succeq 0$ denotes the convex set over positive semi-definite matrices in $\R^{n \times n}$. 
Let $\Xo$ be an optimum of \eqref{intro:eq_00} with $\text{rank}(\Xo) = r^\star \leq n$.
This problem can be remodeled as a non-convex problem, by writing $X = UU^\top$ where $U$ is an $n\times r$ matrix.
Specifically, define $g(U) := f(UU^\top)$ and\footnote{While $g$ is a non-convex function, we note that it is a very specific kind of non-convexity, arising ``only" due to the recasting of an originally convex function.}
consider direct optimization of the transformed problem, \textit{i.e.},
\begin{equation}
\begin{aligned}
 \underset{\U \in \R^{n \times r}}{\text{minimize}}
& & g(U) \quad \text{where $r\leq n$}.
\end{aligned} \label{intro:eq_01}
\end{equation} 
Problems (\ref{intro:eq_00}) and (\ref{intro:eq_01}) will have the same optimum when $r=r^\star$. 
However, the recast problem is \textit{unconstrained} and leads to computational gains in practice: 
\textit{e.g.}, iterative update schemes, like gradient descent, do not need to do eigen-decompositions to satisfy semi-definite constraints at every iteration.

In this paper, we also consider the case of $r < r^\star$, which often occurs in applications. The reasons of such a choice chould be three-fold: 
$(i)$ it might model better an underlying task (\textit{e.g.}, $f$ may have arisen from a relaxation of a rank constraint in the first place), 
$(ii)$ it leads to computational gains, since smaller $r$ means fewer variables to maintain and optimize, 
$(iii)$ it leads to statistical ``gains'', as it might prevent over-fitting in machine learning or inference problems.

Such recasting of matrix optimization problems is empirically widely popular, especially as the size of problem instances increases.
Some applications in modern machine learning includes matrix completion \cite{candes2009exact, jain2013low, kyrillidis2014matrix, chen2014coherent}, affine rank minimization \cite{recht2010guaranteed, jain2010guaranteed, becker2013randomized},  covariance / inverse covariance selection \cite{hsieh2011sparse, kyrillidis2014scalable}, phase retrieval \cite{netrapalli2013phase, candes2015phase, white2015local, sun2016geometric}, Euclidean distance matrix completion \cite{mishra2011low}, finding the square root of a PSD matrix \cite{jain2015computing}, and sparse PCA \cite{d2007direct}, just to name a few. 
Typically, one can solve \eqref{intro:eq_01} via simple, first-order methods on $\U$ like gradient descent. 
Unfortunately, such procedures have no guarantees on convergence to the optima of the original $f$, or on the rate thereof. 
Our goal in this paper is to provide such analytical guarantees, by using---simply and transparently---standard convexity properties of the original $f$.


\medskip
\noindent \textbf{Overview of our results.} 
In this paper, we prove that updating $U$ via gradient descent in \eqref{intro:eq_01} converges (fast) to optimal (or near-optimal) solutions. 
While there are some recent and very interesting works that consider using such non-convex parametrization \cite{jain2013low, netrapalli2013phase, tu2015low, zheng2015convergent, sun2014guaranteed, zhao2015nonconvex}, 
their results only apply to specific examples. 
To the best of our knowledge, this is the first paper that solves the re-parametrized problem with attractive convergence rate guarantees for \emph{general convex functions $f$} and under common convex assumptions. 
Moreover, we achieve the above by assuming the {\em first order oracle} model: for any matrix $X$, we can only obtain the value $f(X)$ and the gradient $\gradf (X)$.

To achieve the desiderata, we study how gradient descent over $U$ performs in solving  (\ref{intro:eq_01}). 
This leads to the \textit{factored gradient descent} (\algo) algorithm, which applies the simple update rule 
\begin{align}\label{eq:main_recursion}
\Up = \U - \eta \gradf(\U\U^T)  \cdot \U.
\end{align}
We provide a set of sufficient conditions to guarantee convergence. 
We show that given a suitable initialization point, \algo converges to a solution close to the optimal point in sublinear or linear rate, depending on the nature of $f$. 

Our contributions in this work can be summarized as follows: 
\begin{itemize}[leftmargin=0.4cm]
\item [$(i)$] \emph{New step size rule and \algo.} 
Our main algorithmic contribution is a special choice of the step size $\eta$.
Our analysis showcase that $\eta$ needs to depend not only on the convexity parameters of $f$ (as is the case in standard convex optimization) but also on the top singular value of the unknown optimum. 
Section \ref{sec:algo} describes the precise step size rule, and also the intuition behind it. 
Of course, the optimum is not known a priori. 
As a solution in practice, we show that choosing $\eta$ based on a point that is constant relative distance from the optimum also provably works. 

\item [$(ii)$]  \emph{Convergence of \algo under common convex assumptions.} 
We consider two cases: $(i)$ when $f$ is just a $M$-smooth convex function, and 
$(ii)$ when $f$ satisfies also {\em restricted strong convexity (RSC)}, \textit{i.e.}, $f$ satisfies strong-convexity-like conditions, but only over low rank matrices; see next section for definitions. 
Both cases are based on now-standard notions, common for the analysis of convex optimization algorithms. 
Given a good initial point, we show that, when $f$ is $M$-smooth, \algo converges sublinearly to an optimal point $\Xo$.
For the case where $f$ has RSC, \algo converges linearly to the unique $\Xo$, matching analogous result for classic gradient descent schemes, under smoothness and strong convexity assumptions. 

Furthermore, for the case of smooth and strongly convex $f$, our analysis extends to the case $r < r^\star$, where \algo converges to a point  close to the best rank-$r$ approximation of $\Xo$.\footnote{In this case, we require $\|\Xo - \Xo_r\|_F$ to be small enough, such that the rank-constrained optimum be close to the best rank-$r$ approximation of $\Xo$. 
This assumption naturally applies in applications, where \emph{e.g.}, $\X^\star$ is a superposition of a low rank latent matrix, plus a small perturbation term \cite{javanmard2013localization, yu2014large}.
In Section \ref{sec:no_tail}, we show how this assumption can be dropped by using a different step size $\eta$, where spectral norm computation of two $n \times r$ matrices is required per iteration.}

Both results hold when \algo is initialized at a point with constant relative distance from optimum. 
Interestingly, the linear convergence rate factor depends not only on the convexity parameters of $f$, but also on the spectral characteristics of the optimum; a phenomenon borne out in our experiments. 
Section \ref{sec:main} formally states these results.

\item [$(iii)$]  \emph{Initialization:} 
For specific problem settings, various initialization schemes are possible (see \cite{jain2013low, netrapalli2013phase, chen2015fast}). 
In this paper, we extend such results to the case where we only have access to $f$ via the first-order oracle: specifically, we initialize based on the gradient at zero, \textit{i.e.}, $\gradf(0)$. 
We show that, for certain condition numbers of $f$, this yields a constant relative error initialization (Section \ref{sec:init}).
Moreover, Section \ref{sec:init} lists alternative procedures that lead to good initialization points and comply with our theory.
\end{itemize}

\paragraph{Roadmap.} The rest of the paper is organized as follows. Section \ref{sec:prelim} contains basic notation and standard convex definitions. Section \ref{sec:algo} presents the \algo algorithm and the step size $\eta$ used, along with some intuition for its selection. Section \ref{sec:main} contains the convergence guarantees of \algo; the main supporting lemmas and proofs of the main theorems are provided in Section \ref{sec:proofs}. In Section \ref{sec:init}, we discuss some initialization procedures that guarantee a ``decent'' starting point for \algo. This paper concludes with discussion on related work (Section \ref{sec:related}).

\section{Preliminaries}\label{sec:prelim}

\noindent {\bf Notation.} For matrices $\X, \Y \in \R^{n \times n}$, their inner product is $\ip{\X}{\Y} = \trace\left(\X^\top \Y \right)$. Also, $\X \succeq 0$ denotes $\X$ is a positive semi-definite (PSD) matrix, while the convex set of PSD matrices is denoted $\S_{+}^n$. We use $\norm{\X}_F$ and $\norm{X}_2$ for the Frobenius and spectral norms of a matrix, respectively. Given a matrix $\X$, we use $\sigma_{\min}\left(\X\right)$ and $\sigma_{\max}\left(\X\right)$ to denote the smallest and largest \emph{strictly positive} singular values of $\X$ and define $\condx = \frac{\sigma_{\max}\left(\X\right)}{\sigma_{\min}\left(\X\right)}$; with a slight abuse of notation, we also use $\sigma_1\left(\X\right) \equiv \sigma_{\max}\left(\X\right) \equiv \norm{\X}_2$. $\X_r$ denotes the rank-$r$ approximation of $\X$ via its truncated singular value decomposition. Let $\tau(\Xor) =\frac{\sigma_{1}(\Xo)}{\sigma_{r}(\Xo)}$ denote the condition number of $\Xo_r$; again, observe $\sigma_r\left(\X_r\right) \equiv \sigma_{\min}\left(\X_r\right)$. $Q_A$ denotes the basis of the column space of matrix $A$. $\text{\texttt{srank}}\left(\X\right) := \sfrac{\norm{\X}_F^2}{\norm{\X}_2^2}$ represents the stable rank of matrix $\X$. We use $e_i \in \R^n$ to denote the standard basis vector with 1 at the $i$-th position and zeros elsewhere.

Without loss of generality, $f$ is a symmetric convex function, \emph{i.e.}, $f(\X) = f(\X^\top)$. Let $\gradf(X)$ denote the gradient matrix, \emph{i.e.}, its $(i,j)^{th}$ element is $[\gradf(X)]_{ij} = \frac{\partial f(\X)}{\partial x_{ij}}$. For $\X = \U\U^\top$, the gradient of $f$ with respect to $\U$ is $\left(\gradf(\U \U^\top) + \gradf(\U \U^\top)^\top\right)\U = 2\gradf(\X) \cdot \U$, due to symmetry of $f$. Finally, let $\Xo$ be the optimum of $f(\X)$ over $\S_{+}^n$ with factorization $ \Xo = \Uo (\Uo)^T$. 

For any general symmetric matrix $\X$, let the matrix $\mathcal{P}_+(\X)$ be its projection onto the set of PSD matrices. This can be done by finding all the strictly positive eigenvalues and corresponding eigenvectors $(\lambda_i,v_i:\lambda_i>0)$ and then forming $\mathcal{P}_+(\X) = \sum_{i:\lambda_i>0} \lambda_i v_i v_i^\top$. 

In algorithmic descriptions, $\U$ and $\Up$ denote the putative solution of current and next iteration, respectively. 
An important issue in optimizing $f$ over the $\U$ space is the existence of non-unique possible factorizations $\U\U^\top$ for any feasible point $\X$. To see this, given factorization $\X = \U\U^\top$ where $\U \in \R^{n \times r}$, one can define an class of \emph{equivalent} factorizations $\U R^\top R\U^\top = \U\U^\top$, where $R$ belongs to the set $\{ R \in \R^{r \times r} ~:~ R^\top R = I\}$ of rotational matrices. 
So we use a rotation invariant distance metric in the factored space that is equivalent to distance in the matrix $X$ space, which is defined below.
\begin{definition}{\label{prelim:def_04}}
Let matrices $\U, \V \in \R^{n \times r}$. Define:
\begin{align*}
\dist\left(U, V\right) :=\min_{R: R \in \mathcal{O}} \norm{U - V R}_F.
\end{align*}
$\mathcal{O}$ is the set of $r \times r$ orthonormal matrices $R$, such that $R^\top R = I_{r\times r}$. The optimal $R$ satisfies $PQ^\top$ where $P\Sigma \Q^\top$ is the singular value decomposition of $\V^\top \U$.
\end{definition}

\medskip
\noindent {\bf Assumptions.} We will investigate the performance of non-convex gradient descent for functions $f$ that satisfy standard smoothness conditions only, as well as the case where $f$ further is (restricted) strongly convex. We state these standard definitions below.

\begin{definition}{\label{prelim:def_00}}
Let $f: \S_{+}^n \rightarrow \R$ be convex and differentiable. Then, $f$ is $m$-strongly convex if: 
\begin{equation}\label{eq:sc}
f(\Y) \geq f(\X) + \ip{\gradf\left(\X\right)}{\Y - \X} + \tfrac{m}{2} \norm{Y - \X}_F^2, \quad \forall \X, \Y \in  \S_{+}^n.
\end{equation}
\end{definition}

\begin{definition}{\label{prelim:def_01}}
Let $f: \S_{+}^n \rightarrow \R$ be a convex differentiable function. Then, $f$ is $M$-smooth if: 
\begin{equation}
\norm{\gradf\left(\X\right) - \gradf\left(\Y\right)}_F \leq M \cdot \norm{\X - \Y}_F, \quad \X, \Y \in  \S_{+}^n.
\end{equation} This further implies the following upper bound:
\begin{equation}\label{eq:lip}
f(\Y) \leq \f(X) + \ip{\gradf\left(\X\right)}{\Y - \X} + \tfrac{M}{2} \norm{\Y - \X}_F^2.
\end{equation}
\end{definition} Given the above definitions, we define $\kappa = \frac{M}{m}$ as the condition number of function $f$.

Finally, in high dimensional settings, often loss function $f$ does not satisfy strong convexity globally, but only on a restricted set of directions; see \cite{negahban2012restricted, agarwal2010fast} and Section~\ref{sec:sensing} for a more detailed discussion.

\begin{definition}{\label{prelim:def_02}}
A convex function $\f$ is $(\rscm, r)$-restricted strongly convex if:
\begin{equation}\label{eq:rsc}
\f(Y) \geq \f(X) + \ip{\gradf\left(\X\right)}{Y-X} + \tfrac{\rscm}{2} \norm{Y-X}_F^2,  \quad \text{for any rank-$r$  matrices $X, Y \in \S_{+}^n$}.
\end{equation}
\end{definition}

\section{Factored gradient descent}{\label{sec:algo}}

We solve the non-convex problem (\ref{intro:eq_01}) via \emph{Factored Gradient Descent} (\algo) with update rule\footnote{The true gradient of $f$ with respect to $\U$ is $2 \gradf(\U\U^\top) \cdot \U$. However, for simplicity and clarity of exposition, in our algorithm and its theoretical guarantees, we absorb the 2-factor in the step size $\eta$.}:
\begin{align*}
\Up = \U - \eta \gradf(\U\U^\top)  \cdot \U.
\end{align*}
\algo does this, but with two key innovations: a careful initialization and a special step size $\eta$. 
The discussion on the initialization is deferred until Section \ref{sec:init}. 

\paragraph{Step size $\eta$.} Even though $f$ is a convex function over $X\succeq 0$, the fact that we operate with the non-convex $\U\U^\top$ parametrization means that we need to be careful about the step size $\eta$; 
\textit{e.g.}, our \emph{constant} $\eta$ selection should be such that, when we are close to $\Xo$, we do not ``overshoot'' the optimum $\Xo$. 

In this work, we pick the step size parameter, according to the following closed-form\footnote{Constant $16$ in the expression \eqref{eq:step_size} appears due to our analysis, where we do not optimize over the constants. One can use another constant in order to be more aggressive; nevertheless, we observed that our setting works well in practice. }:
\begin{wrapfigure}{r}{0.5\textwidth} 
\begin{minipage}{0.52\textwidth}
   \vspace{-1em}
	\begin{algorithm}[H]\label{algo:altgrad}
		\caption{Factored gradient descent (\algo)}
		\begin{algorithmic}[1]		
			\INPUT Function $f$, target rank $r$, \# iterations $K$. 
			\STATE Compute $\X^0$ as in \eqref{eq:X0}.
			\STATE Set $\U \in \R^{n \times r}$ such that $\X^0 = \U\U^\top$.
			\STATE Set step size $\eta$ as in \eqref{eq:step_size}.
			\FOR {$k=0$ to $K-1$}
				\STATE $\Up = \U - \eta \gradf(\U\U^T)  \cdot \U$.
		    		\STATE $\U= \Up$.
			\ENDFOR
			\OUTPUT $\X = \U\U^\top$. 	
		\end{algorithmic}
	\end{algorithm} \vspace{-0.5cm}
\end{minipage}
\end{wrapfigure} 
\begin{align}\label{eq:step_size}
\eta =\frac{1}{16 ( M\norm{\X^0}_2 + \norm{\gradf(X^0) }_2) }.
\end{align} 
Recall that, if we were just doing standard gradient descent on $f$, we would choose a step size of $\sfrac{1}{M}$, where $M$ is a uniform upper bound on  the largest eigenvalue of the Hessian $\nabla^2 f(\cdot)$.

To motivate our step size selection, let us consider a simple setting where $U \in \R^{n \times r}$ with $r = 1$; 
\textit{i.e.}, $U$ is a vector. 
For clarity, denote it as $u$.
Let $f$ be a separable function with $f(\X) = \sum_{ij} f_{ij}(X_{ij})$. 
Furthermore, define the function $g: \R^n \rightarrow \R$ such that $f(uu^\top) \equiv g(u)$. 
It is easy to compute (see Lemma~\ref{lem:hessian}):
\begin{small}
\begin{align*}
\nabla g(u) = \gradf(uu^\top)  \cdot u \in \R^{n} \quad \text{and} \quad \nabla^2 g(u) = \text{\texttt{mat}}\left( \texttt{diag}(\nabla^2 f(uu^\top)) \cdot \text{\texttt{vec}}\left(uu^\top\right)\right) + \gradf(uu^\top) \in \R^{n \times n},
\end{align*}
\end{small} 
where $\text{\texttt{mat}}: \R^{n^2} \rightarrow \R^{n \times n}, ~\text{\texttt{vec}}: \R^{n \times n} \rightarrow \R^{n^2}$ and, $\text{\texttt{diag}}: \R^{n^2 \times n^2} \rightarrow \R^{n^2 \times n^2}$ are the matricization, vectorization and diagonalization operations, respectively; 
for the last case, $\text{\texttt{diag}}$ generates a diagonal matrix from the input, discarding its off-diagonal elements. 
We remind that $\nabla f(uu^\top) \in \R^{n \times n}$ and $\nabla^2 f(uu^\top) \in \R^{n^2 \times n^2}$. 
Note also that $\nabla^2 f(\X)$ is diagonal for separable $f$. 

Standard convex optimization suggests that $\eta$ should be chosen such that $\eta < \sfrac{1}{\|\nabla^2 g(\cdot)\|_2}$. 
The above suggest the following step size selection rule for $M$-smooth $f$:
\begin{align*}
\eta < \tfrac{1}{\|\nabla^2 g(\cdot)\|_2} \propto \tfrac{1}{M\norm{X}_2+\|\gradf(\X)\|_2}.
\end{align*} 
In stark contrast with classic convex optimization where $\eta \propto \tfrac{1}{M}$, the step size selection further depends on the spectral information of the current iterate and the gradient.  
Since computing $\norm{X}_2, \|\gradf(\X)\|_2$ per iteration could be computational inefficient, we use the spectral norm of $\X^0$ and its gradient $\gradf(X^0)$ as surrogate, where $X^0$ is the initialization point\footnote{However, as we show in Section \ref{sec:no_tail}, one could compute $\|\X\|_2$ and $\|\gradf(X)\|_2$ per iteration in order to relax some of the requirements of our approach.}.

To clarify $\eta$ selection further, we next describe a toy example, in order to illustrate the necessity of such a scaling of the step size.
Consider the following minimization problem. 
\begin{equation*}
\begin{aligned}
 \underset{u \in \R^{n \times 1}}{\text{minimize}}
& & f(uu^\top) := ||uu^\top - Y||_F^2,
\end{aligned} 
\end{equation*} 
where $u \equiv \U \in \R^{n \times 1}$---and thus, $\X = uu^\top$, \textit{i.e.}, we are interested in rank-1 solutions---and $Y$ is a given rank-2 matrix such that $Y = \alpha^2 v_1 v_1^\top - \beta^2 v_2 v_2^\top$, for $\alpha > \beta \in \R$ and $v_1, v_2$ orthonormal vectors. 
Observe that $f$ is a strongly convex function with rank-1 minimizer $\Xo = \alpha^2 v_1 v_1^\top$; let $u^\star = \alpha v_1$. 
It is easy to verify that $(i)$ $\|\Xo\|_2 = \alpha^2$, $(ii)$ $\|\gradf(\Xo)||_2 = \|2\cdot \left(\Xo - Y\right)\|_2 = 2 \beta^2$, and $(iii)$ $\|\gradf(\X_1) - \gradf(\X_2)\|_F \leq M \cdot \|\X_1 - \X_2\|_F$, where $M = 2$.

Consider the case where $u = \tfrac{\alpha}{2} v_1 + \tfrac{\beta}{10} v_2$ is the current estimate. 
Then, the gradient of $f$ at $u$ is evaluated as: 
\begin{align*}
\gradf(uu^\top) \cdot u = 2\left(-\tfrac{3\alpha^2}{8}v_1v_1^\top + \tfrac{101 \beta^2}{10^3} v_2 v_2^\top \right) \cdot \left( \tfrac{\alpha}{2} v_1 + \tfrac{\beta}{10} v_2\right) = -\tfrac{3\alpha^3}{4}v_1 + \tfrac{101 \beta^3}{500} v_2. 
\end{align*} 
Hence, according to the update rule $u^{+} = u - 2\eta\gradf(uu^\top) \cdot u$, the next iterate satisfies:
\begin{align*}
u^+= u - 2\eta \left(-\tfrac{3\alpha^3}{4}v_1 + \tfrac{101 \beta^3}{500} v_2\right) = \left( \tfrac{\alpha}{2} + \eta \tfrac{3 \alpha^3}{2} \right) v_1  + \left( \tfrac{\beta}{10} + \eta \tfrac{202 \beta^3}{500} \right) v_2 .
\end{align*} 
Observe that coefficients of both $v_1, ~v_2$ in $u^{+}$ include $O(\alpha^3)$ and $O(\beta^3)$ quantities.

The quality of $u^{+}$ clearly depends on how $\eta$ is chosen. 
In the case $\eta = \tfrac{1}{M} = \tfrac{1}{2}$, such step size can result in divergence$/$``overshooting'', as $\|\Xo\|_2 = O(\alpha^2)$ and $\| \gradf(\Xo)\|_2 = O(\beta^2)$ can be arbitrarily large (independent of $M$). 
Therefore, it could be the case that $\dist(u^+, u^\star) > \dist(u, u^\star)$.

In contrast, consider the step size\footnote{For illustration purposes, we consider a step size that depends on the unknown $\X^\star$; in practice, our step size selection is a surrogate of this choice and our results automatically carry over, with appropriate scaling.} $\eta = \tfrac{1}{16(M\|\Xo\|_2 + \|\gradf(\Xo)\|_2)} \propto \tfrac{1}{C(\alpha^2 +\beta^2)}$. 
Then, with appropriate scaling $C$, we observe that $\eta$ lessens the effect of $O(\alpha^3)$ and $O(\beta^3)$ terms in $v_1$ and $v_2$ terms, that lead to overshooting for the case $\eta = \tfrac{1}{2}$. 
This most possibly result in $\dist(u^+, u^\star) \leq  \dist(u, u^\star)$.

\paragraph{Computational complexity.} 
The per iteration complexity of \algo is dominated by the gradient computation. 
This computation is required in any first order algorithm and the complexity of this operation depends on the function $f$. 
Apart from $\gradf(X)$, the additional computation required in \algo is matrix-matrix additions and multiplications, with time complexity upper bounded by $\text{\texttt{nnz}}(\gradf(\cdot)) \cdot r$, where $\text{\texttt{nnz}}(\gradf(\cdot))$ denotes the number of non zeros in the gradient at the current point.\footnote{It could also occur that gradient $\nabla f(X)$ is low-rank, or low-rank + sparse, depending on the problem at hand; it could also happen that the structure of $\nabla f(X)$ leads to ``cheap" matrix-vector calculations, when applied to vectors. Here, we state a more generic --and maybe pessimistic-- scenario where $\gradf(\X)$ is unstructured.}
Hence, the per iteration complexity of \algo is much lower than traditional convex methods like projected gradient descent~\cite{nesterov2004introductory} or classic interior point methods~\cite{nesterov1988general, nesterov1989self}, as they often require a full eigenvalue decomposition per step. 

Note that, for $r = O(n)$, \algo and projected gradient descent have same per iteration complexity of $O(n^3)$. 
However, \algo performs only a single matrix-matrix multiplication operation, which is much ``cheaper" than a SVD calculation. 
Moreover, matrix multiplication is an easier-to-parallelize operation, as opposed to eigen decomposition operation which is inherently sequential. 
We notice this behavior in practice; see Sections~\ref{sec:sensing}, \ref{sec:QST} and \ref{sec:high_rank} for applications in matrix sensing and quantum state tomography.

\section{Local convergence of {\rm \algo}}{\label{sec:main}}

In this section, we present our main theoretical results on the performance of \algo. 
We present convergence rates for the settings where 
$(i)$ $f$ is a $M$-smooth convex function, 
and $(ii)$ $f$ is a $M$-smooth \emph{and} $(m, r)$-restricted strongly convex function. 
These assumptions are now standard in convex optimization. 
Note that, since the $UU^\top$ factorization makes the problem non-convex, it is hard to guarantee convergence of gradient descent schemes in general, without any additional assumptions. 

We now state the main assumptions required by \algo for convergence: \vspace{0.3cm}

\quad \quad \quad \quad \quad \quad \quad \quad \quad \quad \quad \quad \quad \quad \algo \textsc{Assumptions}
\begin{itemize}[leftmargin=0.8cm] 
	\item {\it Initialization:} We assume that \algo is initialized with a ``good'' starting point $X^0=U^0(U^0)^\top$ that has constant relative error to $\Xo_r =\Uo_r(\Uo_r)^\top$.\footnote{If $r = r^\star$, then one can drop the subscript. For completeness and in order to accommodate the approximate rank-$r$ case, described below, we will keep the subscript in our discussion.}
	In particular, we assume
	\begin{itemize}
	\item [$(A1)$]  \quad \quad \quad $ \dist(\U^0, \Uo_r) \leq  \rho \sigma_{r}(\Uo_r)$ \quad \text{for } $\rho := \frac{1}{100} \frac{\sigma_r(\Xo)}{\sigma_1(\Xo)} $ \quad \quad~~ (\text{Smooth } $f$) \vspace{-0.1cm}
	\item [$(A2)$]   \quad \quad \quad$ \dist(\U^0, \Uo_r) \leq \rho' \sigma_{r}(\Uo_r)$ ~~~\text{for } $\rho' := \frac{1}{100 \kappa} \frac{\sigma_r(\Xo)}{\sigma_1(\Xo)}  $ \quad \quad (\text{Strongly convex }$f$),
	\end{itemize}
	for the smooth and restricted strongly convex setting, respectively. 	
	This assumption helps in avoiding saddle points, introduced by the $\U$ parametrization\footnote{To illustrate this consider the following example, \begin{equation*}
\begin{aligned}
 \underset{U \in \R^{n \times r}}{\text{minimize}}
& & f(UU^\top) := ||UU^\top - \Uo_r(\Uo_r)^\top||_F^2.
\end{aligned} 
\end{equation*}  
Now it is easy to see that $\dist(\Uo_{r-1}, \Uo_r) = \sigma_r(\Uo_r)$ and $\Uo_{r-1}$ is a stationary point of the function considered $\left( \gradf(\Uo_{r-1}(\Uo_{r-1})^\top) \cdot \Uo_{r-1} =0 \right)$. We need the initial error to be further smaller than $\sigma_r(\Uo)$ by a factor of condition number of $\Xo_r$.}.

In many applications, an initial point $U^0$ with this type of guarantees is easy to obtain, often with just one eigenvalue decomposition; we refer the reader to the works \cite{jain2013low, netrapalli2013phase, chen2015fast, zheng2015convergent, tu2015low} for specific initialization procedures for different problem settings. See also Section~\ref{sec:init} for a more detailed discussion. 
Note that the problem is still non-trivial after the initialization, as this only gives a constant error approximation. 

	\item {\it Approximate rank-$r$ optimum:} 
	In many learning applications, such as localization~\cite{javanmard2013localization} and multilabel learning~\cite{yu2014large}, the true $\X^\star$ emerges as the superposition of a low rank latent matrix plus a small perturbation term, such that $\|\X^\star - \X^\star_r\|_F$ is small.
	While, in practice, it might be the case $\text{rank}(\X^\star) = n$---due to the presence of noise---often we are more interested in revealing the latent low-rank part.
	As already mentioned, we might as well set $r < \text{rank}(\Xo) $ for computational or statistical reasons. 
	 In all these cases, further assumptions w.r.t. the quality of approximation have to be made. 
	  In particular, let $\Xo$ be the optimum of \eqref{intro:eq_00} and $f$ is $M$-smooth and $(m, r)$-strongly convex. In our analysis, we assume:
		\begin{itemize}
	\item [$(A3)$]  \quad \quad \quad \quad \quad$ \|\Xo -\Xo_r\|_F \leq  \frac{1}{200 \kappa^{1.5}} \frac{\sigma_r(\Xo)}{\sigma_1(\Xo)} \sigma_{r}(\Xo)$ \quad ~~(\text{Strongly convex } $f$), 
	\end{itemize}
	This assumption intuitively requires the noise magnitude to be smaller than the optimum and constrains the rank constrained optimum to be closer to $\Xo_r$.\footnote{Note that the assumption $(A3)$ can be dropped by using a different step size $\eta$ (see Theorem~\ref{thm:convergence_main_new} in Section \ref{sec:no_tail}). However, this requires two additional spectral norm computations per iteration.}
\end{itemize}

\medskip
We note that, in the results presented below, we have not attempted to optimize over the constants appearing in the assumptions and any intermediate steps of our analysis. Finding such tight constants could strengthen our arguments for fast convergence; however, it does not change our claims for sublinear or linear convergence rates. 
Moreover, we consider the case $r \leq \text{rank}(\Xo)$; we believe the analysis can be extended to the setting $r > \text{rank}(\Xo)$ and leave it for future work.\footnote{Experimental results on synthetic matrix sensing settings have shown that, if we overshoot $r$, \emph{i.e.}, $r > \text{rank}(\Xo)$, \algo still performs well, finding an $\varepsilon$-accurate solution with linear rate.}

\subsection{$1/k$ convergence rate for smooth $f$} \label{sec:sublin_convg}
Next, we state our first main result under smoothness condition, as in Definition \ref{prelim:def_01}. 
In particular, we prove that \algo makes progress per iteration with sublinear rate. 
Here, we assume only the case where $r = r^\star$; for consistency reasons, we denote $\Xo = \Xo_r$. 
Key lemmas and their proofs for this case are provided in Section \ref{sec:smooth_case_proof}.

\begin{theorem}[Convergence performance for smooth $f$]\label{thm:smooth_inexact}
Let $\Xo_r = \Uo_r \U_r^{\star^\top}$ denote an optimum of $M$-smooth $f$ over the PSD cone. 
Let $f(\X^0) > f(\Xor)$. 
Then, under assumption $(A1)$, after $k$ iterations, the \textsc{FGD} algorithm 
finds solution $\X^k$ such that
\begin{align}\label{eq:smooth_exact}
f(\X^k) - f(\Xo_r) \leq \frac{\tfrac{5}{\eta} \cdot \dist(\U^0, \Uo_r)^2}{k + \tfrac{5}{\eta} \cdot \tfrac{\dist(\U^0, \Uo_r)^2}{f(\X^0) - f(\Xo_r)}}.
\end{align}
\end{theorem}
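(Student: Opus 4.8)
The goal is to show a $1/k$ decay of the objective gap, which strongly suggests setting up a recursion of the form $\frac{1}{f(\X^{k+1}) - f(\Xo_r)} \geq \frac{1}{f(\X^k) - f(\Xo_r)} + c$ for a suitable constant $c$ related to $\eta$ and $\dist(\U^0,\Uo_r)^2$, then unrolling. The first step is to establish a \emph{per-iteration descent inequality}: starting from $M$-smoothness \eqref{eq:lip} applied at $\X = \uut$ and $\Y = \Up(\Up)^\top$, expand $\Up(\Up)^\top = (\U - \eta\gradf(\uut)\U)(\U - \eta\gradf(\uut)\U)^\top$ and collect terms. The linear term produces $-2\eta \ip{\gradf(\uut)}{\gradf(\uut)\uut} = -2\eta\,\trace(\U^\top\gradf(\uut)^2\U) = -2\eta\norm{\gradf(\uut)\U}_F^2$, which is the key negative quantity; the quadratic correction from $M$-smoothness and from the $\eta^2$ cross terms must be controlled by the step-size choice \eqref{eq:step_size}. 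Here is where the special $\eta \propto \frac{1}{M\norm{\X^0}_2 + \norm{\gradf(\X^0)}_2}$ enters: one needs $\norm{\uut}_2$ and $\norm{\gradf(\uut)}_2$ to stay comparable to their values at $\X^0$, which should follow from assumption $(A1)$ (the iterate stays in a constant-relative-error ball around $\Xo_r$) together with an invariance argument showing the ball is preserved. The upshot should be
\begin{align*}
f(\X^{+}) - f(\Xo_r) \leq f(\uut) - f(\Xo_r) - \tfrac{\eta}{2}\norm{\gradf(\uut)\U}_F^2,
\end{align*}
or something of this flavor with an explicit constant in front of $\eta$.

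The second step is a \emph{lower bound on the gradient term} $\norm{\gradf(\uut)\U}_F^2$ in terms of the objective gap $f(\uut) - f(\Xo_r)$. For standard gradient descent this is where strong convexity or, for the merely-smooth case, a more delicate argument using convexity of $f$ in $\X$-space is needed: by convexity of $f$, $f(\uut) - f(\Xo_r) \leq \ip{\gradf(\uut)}{\uut - \Xo_r}$. One then needs to bound $\ip{\gradf(\uut)}{\uut - \Xo_r}$ by roughly $\norm{\gradf(\uut)\U}_F \cdot \dist(\U,\Uo_r)$-type quantities; a natural route is to write $\uut - \Xo_r$ in terms of the factor difference $\Delta = \U - \Uo_r R$ (with $R$ the optimal rotation from Definition~\ref{prelim:def_04}), use $\uut - \uouot = \Delta\U^\top + \Uo_r R\Delta^\top$, and then exploit that $\dist(\U,\Uo_r) = \norm{\Delta}_F$ stays bounded by $(A1)$. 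This yields
\begin{align*}
f(\uut) - f(\Xo_r) \leq \norm{\gradf(\uut)\U}_F \cdot \big(\text{const}\big)\cdot\dist(\U,\Uo_r),
\end{align*}
so that $\norm{\gradf(\uut)\U}_F^2 \geq \frac{(f(\uut) - f(\Xo_r))^2}{\text{const}\cdot\dist(\U,\Uo_r)^2}$. Crucially, one also needs to know $\dist(\U^k,\Uo_r) \leq \dist(\U^0,\Uo_r)$ for all $k$ — i.e., the distance is monotonically non-increasing — so that the single bound $\dist(\U^0,\Uo_r)^2$ can be used uniformly in the denominator; this monotonicity should be a byproduct of the descent inequality combined with a separate argument that decreasing $f$ keeps $\U$ closer to $\Uo_r$ (again using $(A1)$ to stay in the good region).

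Combining the two steps gives $f(\X^{+}) - f(\Xo_r) \leq (f(\uut) - f(\Xo_r)) - \frac{\eta}{C\,\dist(\U^0,\Uo_r)^2}(f(\uut) - f(\Xo_r))^2$. Writing $\delta_k := f(\X^k) - f(\Xo_r) > 0$, this reads $\delta_{k+1} \leq \delta_k - c\,\delta_k^2$ with $c = \frac{\eta}{C\,\dist(\U^0,\Uo_r)^2}$; dividing by $\delta_k \delta_{k+1} > 0$ and using $\delta_{k+1}\leq\delta_k$ gives $\frac{1}{\delta_{k+1}} \geq \frac{1}{\delta_k} + c$, and telescoping from $0$ to $k$ yields $\frac{1}{\delta_k} \geq \frac{1}{\delta_0} + ck$, i.e. $\delta_k \leq \frac{1}{ck + 1/\delta_0} = \frac{\delta_0/ (c)}{k + 1/(c\delta_0)}\cdot c = \frac{1/c}{k + 1/(c\delta_0)}$, which matches \eqref{eq:smooth_exact} once $\frac{1}{c} = \frac{5}{\eta}\dist(\U^0,\Uo_r)^2$ (so $C = 5$ after tracking constants). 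The main obstacle is the second step — obtaining the lower bound on $\norm{\gradf(\uut)\U}_F^2$ by the squared objective gap without strong convexity, since in the $\U$-space the gradient can vanish at spurious points; this is exactly where assumption $(A1)$ and the careful bookkeeping of $\dist(\U,\Uo_r)$, $\sigma_r(\Uo_r)$, and $\sigma_1(\Xo)$ must be leveraged to rule out such degeneracy and produce a clean constant. A secondary technical burden is proving the invariance of the good region and the monotonicity of $\dist(\U^k,\Uo_r)$, which underpins the uniform use of $\dist(\U^0,\Uo_r)$.
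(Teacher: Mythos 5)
Your outline reproduces the paper's own strategy step for step: smoothness applied at $\Xp=\Up(\Up)^\top$ to get a descent inequality $f(\Xp)-f(\Xor)\leq f(\X)-f(\Xor)-c\,\eta\|\gradf(\X)\U\|_F^2$; convexity of $f$ in $\X$ plus the factor decomposition of $\X-\Xor$ to get $f(\X)-f(\Xor)\leq \tfrac{5}{2}\|\gradf(\X)\U\|_F\cdot\dist(\U,\Uo_r)$; monotonicity of $\dist(\U^k,\Uo_r)$ so that $\dist(\U^0,\Uo_r)$ can be used uniformly; and the standard $\delta^{+}\leq\delta-c\,\delta^2$ telescoping, which indeed yields exactly \eqref{eq:smooth_exact}. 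So the architecture is right and matches Section \ref{sec:proofs_3}.

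The gap is that the two items you defer are precisely where the proof's content lives, and the shortcut you propose for one of them is unsound. First, the monotonicity $\dist(\U^{k+1},\Uo_r)\leq\dist(\U^k,\Uo_r)$ is \emph{not} ``a byproduct of the descent inequality'' nor of the heuristic that decreasing $f$ keeps $\U$ close to $\Uo_r$: in the factored, non-convex space a decrease of $f$ gives no control over the rotation-invariant distance. The paper proves it (Lemma \ref{lem:gradf,D_bound1}) by expanding $\|\Up-\Uorr\|_F^2$ and invoking the inner-product bound $\ip{\gradf(\X)\U}{\U-\Uorr}\geq\tfrac{\eta}{2}\|\gradf(\X)\U\|_F^2$ (Lemma \ref{lem:ipbound_case1}), and that bound itself requires two nontrivial ingredients: a lower bound $\ip{\gradf(\X)}{\X-\Xor}\geq\tfrac{18\weta}{10}\|\gradf(\X)\U\|_F^2$, proved by constructing a pseudo-iterate with the surrogate step size $\weta$ and using the hypothesis $f(\Xp)\geq f(\Xor)$ together with the comparison $\weta\geq\tfrac{5}{6}\eta$ (Lemmas \ref{lem:gradX,X-X_r_ bound} and \ref{lem:diff_eta}); and a lower bound on the possibly negative term $\ip{\gradf(\X)}{\Delta\Delta^\top}$ in terms of $|\ip{\gradf(\X)}{\X-\Xor}|$ (Lemma \ref{lem:new_lemma_smooth}), whose proof needs a pseudo-inverse construction for $[\U~\,\Uorr]$ and a separate convexity argument showing $\ip{\gradf(\X)}{(\X-\Xor)yy^\top}\geq 0$. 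Second, in your bound on $\ip{\gradf(\X)}{\X-\Xor}$ via $\Delta\U^\top+\Uo_rR\Delta^\top$, the cross term produces $\|\gradf(\X)\Uo_r\|$, which must be converted into $\|\gradf(\X)\U\|$ to obtain a usable inequality; this conversion is the point of Lemma \ref{lem:DD_bound}, which uses $(A1)$ and the spectral comparison of Lemma \ref{lem:sigma_bounds} to show $\|\gradf(\X)Q_{\Uo_r}Q_{\Uo_r}^\top\|_2$ is controlled by $\|\gradf(\X)Q_{\U}Q_{\U}^\top\|_2$ up to $\tau(\Uo_r)$ factors, and it is here that the specific radius $\rho=\tfrac{1}{100}\tfrac{\sigma_r(\Xo)}{\sigma_1(\Xo)}$ produces the constant $\tfrac{5}{2}$ (hence the $\tfrac{5}{\eta}$ in the theorem). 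Without these pieces the plan does not yet constitute a proof, although the recursion and telescoping at the end are correct as stated.
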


The theorem states that provided $(i)$ we choose the step size $\eta$, based on a starting point that has constant relative distance to $\Uo_r$, and $(ii)$ we start from such a point, gradient descent on $\U$ will converge sublinearly to a point $\Xo_r$. 
In other words, Theorem \ref{thm:smooth_inexact} shows that \algo computes a sequence of estimates in the $U$-factor space such that the function values decrease with $O\left(\tfrac{1}{k}\right)$ rate, towards a global minimum of $f$ function. 
Recall that, even in the standard convex setting, classic gradient descent schemes over $X$ achieve the same $O\left(\tfrac{1}{k}\right)$ convergence rate for smooth convex functions \cite{nesterov2004introductory}.  
Hence, \algo matches the rate of convex gradient descent, under the assumptions of Theorem~\ref{thm:smooth_inexact}. 
The above	 are abstractly illustrated in Figure \ref{fig:smooth_illustration}. 

\begin{figure}[!ht]
	\centering
	\includegraphics[width=1\textwidth]{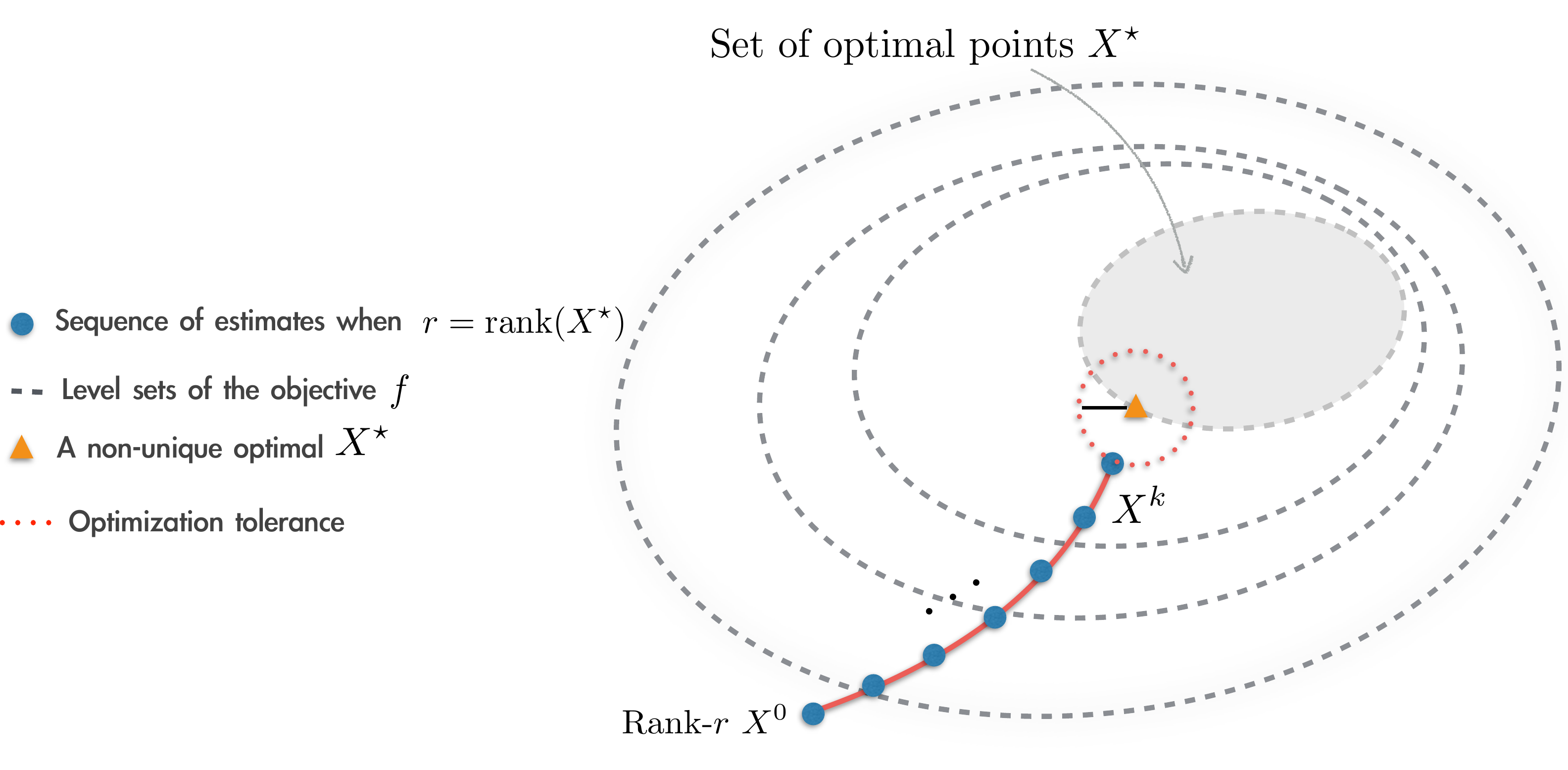}
	\caption{Abstract illustration of Theorem \ref{thm:smooth_inexact} and the behavior of \algo in the case where $f$ is \emph{just $M$-smooth}. 
	The grey-shaded area represents the set of optimum solutions $\X^\star = \Xo_r$. Let the orange triangle denote the optimum, close to which \algo converges; the dashed red circle denotes the \emph{optimization tolerance/error}. 
	} \label{fig:smooth_illustration}
\end{figure}

\subsection{Linear convergence rate under strong convexity assumption} \label{sec:lin_convg}
Here, we show that, with the additional assumption that $f$ satisfies the $(m, r)$-restricted strong convexity over $ \S_{+}^n $, \algo achieves linear convergence rate. 
The proof is provided in Section \ref{sec:strong_case_proof}.

\begin{theorem}[Convergence rate for restricted strongly convex $f$]\label{thm:convergence_main}
Let the current iterate be $\U$ and $\X = \U\U^\top$. Assume $ \dist(\U, \Uo_r) \leq \rho' \sigma_{r}(\Uo_r)$ and let the step size be $\eta =\frac{1}{16 \, (M \norm{X^0 }_2 + \norm{\gradf(X^0)}_2) }$. Then under assumptions $(A2), (A3)$, the new estimate $\Up= U -\eta \gradf(X) \cdot U$ satisfies
\begin{equation}
\dist(\Up, \Uo_r)^2 \leq \alpha \cdot \dist(\U, \Uo_r)^2+ \beta \cdot \|\Xo -\Xo_r\|_F^2, \label{conv:eq_00}
\end{equation}
where $\alpha = 1 -\frac{m \sigma_{r}(\Xo)}{64 (M\|\Xo\|_2 + \|\gradf(\Xo)\|_2)}$ and $\beta=  \frac{M}{28(M\|\Xo\|_2 + \|\gradf(\Xo)\|_2)}$. Furthermore, $\Up$ satisfies $ \dist(\Up, \Uo_r) \leq \rho' \sigma_{r}(\Uo_r). $
 \end{theorem}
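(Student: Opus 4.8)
The plan is to analyze one step of the \algo update $\Up = \U - \eta\,\gradf(\X)\cdot\U$ with $\X=\uut$, and track how the rotation-invariant distance $\dist(\cdot,\Uo_r)$ evolves. Fix the optimal rotation $R$ realizing $\dist(\U,\Uo_r) = \norm{\U - \Uo_r R}_F$, and write $\Delta := \U - \Uo_r R$ for the (rotated) error in the factor space. The key identity to expand is
\begin{align*}
\dist(\Up,\Uo_r)^2 \le \norm{\Up - \Uo_r R}_F^2 = \norm{\Delta - \eta\,\gradf(\X)\U}_F^2 = \norm{\Delta}_F^2 - 2\eta\,\ip{\gradf(\X)\U}{\Delta} + \eta^2 \norm{\gradf(\X)\U}_F^2,
\end{align*}
where the inequality is because $R$ need not be optimal for $\Up$. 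So the whole argument reduces to (i) lower-bounding the cross term $\ip{\gradf(\X)\U}{\Delta}$ and (ii) upper-bounding the quadratic term $\norm{\gradf(\X)\U}_F^2$.

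For the quadratic term I would use smoothness: $\norm{\gradf(\X)\U}_F \le \norm{\gradf(\X)}_2 \norm{\U}_F$, and then bound $\norm{\gradf(\X)}_2 \le \norm{\gradf(\Xo)}_2 + M\norm{\X-\Xo}_F \lesssim \norm{\gradf(\Xo)}_2 + M\norm{\Xo}_2$ using the initialization bound $(A1)$/$(A2)$ which keeps $\X$ within constant relative distance of $\Xo$; similarly $\norm{\U}_F^2 \lesssim \sigma_1(\Xo) r$ or, more usefully, control it in terms of $\sigma_r(\Xo)$ and the error $\norm{\Delta}_F$. The cross term is the delicate one. Writing $\gradf(\X)\U$ and using the matrix factorization $\ip{\gradf(\X)\U}{\U-\Uo_r R} $, I would split $\gradf(\X) = \gradf(\Xo) + (\gradf(\X)-\gradf(\Xo))$; the first piece, by first-order optimality of $\Xo$ over the PSD cone (so $\ip{\gradf(\Xo)}{\X - \Xo}\ge 0$ and in fact $\gradf(\Xo)\succeq 0$ with $\gradf(\Xo)\Xo = 0$ when $r=r^\star$, and an analogous approximate statement when $r<r^\star$ using $(A3)$), contributes a nonnegative "drift toward the optimum", while the second piece is handled by restricted strong convexity: $\ip{\gradf(\X)-\gradf(\Xo)}{\X-\Xo} \ge m\norm{\X-\Xo}_F^2$. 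The bridge from the $X$-space inner product $\ip{\cdot}{\X-\Xo}$ back to the $U$-space inner product $\ip{\cdot}{\Delta}$ is the standard factored-descent identity $\X - \Xo = \Delta\U^\top + \Uo_r R\Delta^\top - \Delta\Delta^\top$ (so $\ip{\nabla}{\X-\Xo} = 2\ip{\nabla\U}{\Delta} - \ip{\nabla\Delta}{\Delta}$ up to symmetrization), which lets me solve for $\ip{\gradf(\X)\U}{\Delta}$ in terms of $\norm{\X-\Xo}_F^2$, $\norm{\Delta}_F^2$, and lower-order terms. I would also need the standard local equivalence $\norm{\X-\Xo}_F^2 \gtrsim \sigma_r(\Xo)\,\dist(\U,\Uo_r)^2$ (valid in the small-$\dist$ regime), which converts the $\norm{\X-\Xo}_F^2$ gain into the $\alpha$ contraction factor with $\alpha = 1 - \Theta\!\big(\tfrac{m\sigma_r(\Xo)}{M\norm{\Xo}_2+\norm{\gradf(\Xo)}_2}\big)$, matching the claimed constant after carefully tracking the $64$ and $28$.

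Assembling: combining the cross-term lower bound, the quadratic-term upper bound, and the definition of $\eta$ (which is exactly calibrated so that $\eta^2\norm{\gradf(\X)\U}_F^2$ is dominated by a fraction of $\eta\cdot m\sigma_r(\Xo)\dist^2$), I get $\dist(\Up,\Uo_r)^2 \le \alpha\,\dist(\U,\Uo_r)^2 + \beta\,\norm{\Xo-\Xo_r}_F^2$ for the stated $\alpha,\beta$; the $\beta$ term collects all the slack coming from $\gradf(\Xo)\ne 0$ off the range of $\Xo_r$ and from the mismatch $\Xo\ne\Xo_r$ when $r<r^\star$. Finally, to close the induction (the "furthermore" clause), I note that $\alpha<1$ and that under $(A3)$ the additive term $\beta\norm{\Xo-\Xo_r}_F^2$ is small enough — roughly $\beta\norm{\Xo-\Xo_r}_F^2 \le (1-\alpha)\rho'^2\sigma_r(\Uo_r)^2$ by plugging in the explicit bound on $\norm{\Xo-\Xo_r}_F$ from $(A3)$ — so that $\dist(\U,\Uo_r)\le\rho'\sigma_r(\Uo_r)$ implies $\dist(\Up,\Uo_r)\le\rho'\sigma_r(\Uo_r)$, which also guarantees $\Up$ stays in the region where all the above local bounds ($\norm{\X-\Xo}_F$ small, $\norm{\gradf(\X)}_2$ comparable to $\norm{\gradf(\Xo)}_2$) remain valid.

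The main obstacle I anticipate is the cross-term analysis: correctly combining first-order optimality of $\Xo$ on the PSD cone with restricted strong convexity while the natural inner products live in different spaces ($X$-space vs.\ $U$-space), and doing so without losing the dependence on $\sigma_r(\Xo)$ — the descent is "slow" precisely along the small singular directions, and it is exactly there that the $\sigma_r(\Xo)$ factor in $\alpha$ comes from. Keeping every perturbation term (e.g.\ the $-\Delta\Delta^\top$ piece, the $\gradf(\X)-\gradf(\Xo)$ substitution error, the non-optimality of $R$ for $\Up$) controlled by the initialization radius $\rho'$, rather than letting them blow up, is where the bookkeeping is heaviest; this is also why the initialization must be tighter by a factor of $\kappa$ in the strongly convex case.
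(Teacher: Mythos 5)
Your overall architecture is the same as the paper's: expand $\dist(\Up,\Uo_r)^2 \le \norm{\Delta}_F^2 - 2\eta\,\ip{\gradf(\X)\U}{\Delta} + \eta^2\norm{\gradf(\X)\U}_F^2$, rewrite the cross term as $\tfrac12\ip{\gradf(\X)}{\X-\Xor} + \tfrac12\ip{\gradf(\X)}{\Delta\Delta^\top}$, invoke optimality of $\Xo$ and restricted strong convexity, convert back to the factored distance via $\norm{\X-\Xor}_F^2\gtrsim \sigma_r(\Xo)\dist(\U,\Uo_r)^2$, and control the $\Delta\Delta^\top$ piece by the initialization radius (this is exactly Lemmas \ref{lem:gradU,U-U_r_ bound}, \ref{lem:DD_bound_sc} and \ref{lem:lower_bound_X_Xr}). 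The genuine gap is your treatment of the quadratic term $\eta^2\norm{\gradf(\X)\U}_F^2$. The crude bound $\norm{\gradf(\X)}_2\norm{\U}_F$ yields an additive error that does not scale with $\dist(\U,\Uo_r)$, so no contraction can follow; and the refined option fails quantitatively: the best local estimate (using $\gradf(\Xo)\Uo_r=0$ and smoothness) is $\norm{\gradf(\X)\U}_F \lesssim (M\norm{\Xo}_2+\norm{\gradf(\Xo)}_2)\dist(\U,\Uo_r) + M\norm{\U}_2\norm{\Xo-\Xor}_F$, so $\eta^2\norm{\gradf(\X)\U}_F^2$ is on the order of a \emph{constant} fraction of $\dist(\U,\Uo_r)^2$ (since $\eta(M\norm{\Xo}_2+\norm{\gradf(\Xo)}_2)\approx \tfrac1{16}$), whereas the strong-convexity gain you plan to absorb it into is only $\sim \eta\, m\sigma_r(\Xo)\dist(\U,\Uo_r)^2 \le \tfrac{1}{16\,\kappa\,\tau(\Xo_r)}\dist(\U,\Uo_r)^2$. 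Your claim that ``$\eta$ is exactly calibrated so that $\eta^2\norm{\gradf(\X)\U}_F^2$ is dominated by a fraction of $\eta\, m\sigma_r(\Xo)\dist^2$'' therefore only holds if $\kappa\,\tau(\Xo_r)=O(1)$, which the theorem must not assume.

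The paper closes this hole by making the cross-term lower bound itself carry a positive multiple of $\norm{\gradf(\X)\U}_F^2$, and this requires more than convexity plus optimality (your ``nonnegative drift'' step forfeits it). In Lemma \ref{lem:gradX,X-X_r_ bound_sc} one forms the pseudo-iterate $\U-\weta\gradf(\X)\U$ with the surrogate step $\weta=\tfrac{1}{16(M\norm{\X}_2+\norm{\gradf(\X)Q_UQ_U^\top}_2)}$, applies smoothness of $f$ at this feasible PSD point, optimality of $\Xo$, smoothness at $\Xor$ together with the KKT orthogonality $\ip{\gradf(\Xo)}{\Xor-\Xo}=0$, and RSC between the two rank-$r$ matrices $\X$ and $\Xor$ (note RSC cannot be applied to the pair $(\X,\Xo)$ when $\text{rank}(\Xo)>r$, which is why the $\norm{\Xo-\Xor}_F^2$ tail and assumption $(A3)$ appear, producing $\beta$). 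This gives $\ip{\gradf(\X)}{\X-\Xor}\ge \tfrac{18\weta}{10}\norm{\gradf(\X)\U}_F^2+\tfrac m2\norm{\X-\Xor}_F^2-\tfrac M2\norm{\Xo-\Xor}_F^2$, and it is the $\tfrac{18\weta}{10}$-term (combined with $\weta\ge\tfrac56\eta$ and $\tfrac{10}{11}\eta^\star\le\eta\le\tfrac{11}{10}\eta^\star$ from Lemma \ref{lem:diff_eta}, which also bridges your unaddressed gap between the $\X^0$-based step size in the statement and quantities at $\X$, $\Xo$) that cancels $\eta^2\norm{\gradf(\X)\U}_F^2$ in the expansion, leaving the $m\sigma_r(\Xo)$ contraction and the $\beta\norm{\Xo-\Xor}_F^2$ term. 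Without this pseudo-iterate smoothness argument, or an equivalent device injecting a $\Theta(\eta)\norm{\gradf(\X)\U}_F^2$ term into the cross-term bound, your assembly step does not go through.
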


The theorem states that provided $(i)$ we choose the step size based on a point that has constant relative distance to $\Uo_r$, and $(ii)$ we start from such a point, gradient descent on $\U$ will converge linearly to a neighborhood of $\Uo_r$.
The above theorem immediately implies linear convergence rate for the setting where $f$ is standard strongly convex, with parameter $m$. This follows by observing that standard strong convexity implies restricted strong convexity for all values of rank $r$.

Last, we present results for the special case where $r = r^\star$; in this case, \algo finds an optimal point $\Uo_r$ with linear rate, within the equivalent class of orthonormal matrices in $\mathcal{O}$.

\begin{corollary}[Exact recovery of $\Xo$]\label{cor:exact}
Let $\Xo$ be the optimal point of $f$, over the set of PSD matrices, such that $rank(\Xo) =r$.  
Consider $X$ as in Theorem \ref{thm:convergence_main}. 
Then, under the same assumptions and with the same convergence factor $\alpha$ as in Theorem \ref{thm:convergence_main}, we have 
\begin{equation*}
\dist(\Up, \Uo)^2 \leq \alpha \cdot \dist(\U, \Uo)^2.
\end{equation*}
 \end{corollary}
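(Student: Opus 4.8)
The plan is to deduce the corollary directly from Theorem~\ref{thm:convergence_main}, specialized to the ``noiseless'' situation in which the optimum is \emph{exactly} rank $r$. First I would observe that $\mathrm{rank}(\Xo)=r$ forces the best rank-$r$ approximation to coincide with the optimum itself, i.e. $\Xo_r=\Xo$, and hence $\|\Xo-\Xo_r\|_F=0$. Any factorization $\Xo=\Uo(\Uo)^\top$ with $\Uo\in\R^{n\times r}$ then has full column rank, and $\Uo$ agrees with $\Uo_r$ up to an $r\times r$ orthonormal transformation; since $\dist(\cdot,\cdot)$ in Definition~\ref{prelim:def_04} is rotation invariant, every statement about $\Uo_r$ transfers verbatim to $\Uo$.

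Next I would check that the hypotheses of Theorem~\ref{thm:convergence_main} hold in this setting. Assumption $(A2)$ is precisely the standing hypothesis $\dist(\U,\Uo_r)\le\rho'\sigma_r(\Uo_r)$ of the corollary, and the step size is the prescribed $\eta=\tfrac{1}{16(M\|X^0\|_2+\|\gradf(X^0)\|_2)}$. Assumption $(A3)$ reads $0=\|\Xo-\Xo_r\|_F\le\tfrac{1}{200\kappa^{1.5}}\tfrac{\sigma_r(\Xo)}{\sigma_1(\Xo)}\sigma_r(\Xo)$, which is trivially true because the right-hand side is nonnegative. Therefore Theorem~\ref{thm:convergence_main} applies without modification.

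It then remains to substitute $\|\Xo-\Xo_r\|_F=0$ into the recursion \eqref{conv:eq_00}: the $\beta$-term vanishes and, identifying $\Uo_r$ with $\Uo$, we obtain
\[
\dist(\Up,\Uo)^2\le\alpha\cdot\dist(\U,\Uo)^2,
\]
with the same contraction factor $\alpha=1-\tfrac{m\,\sigma_r(\Xo)}{64(M\|\Xo\|_2+\|\gradf(\Xo)\|_2)}$ as in Theorem~\ref{thm:convergence_main}. The ``furthermore'' clause of that theorem additionally guarantees $\dist(\Up,\Uo)\le\rho'\sigma_r(\Uo)$, so $\Up$ re-enters the region where the hypothesis holds; iterating this over $k$ steps yields $\dist(\U^k,\Uo)^2\le\alpha^k\,\dist(\U^0,\Uo)^2$, i.e. linear convergence to the equivalence class of the exact factor $\Uo$ in $\mathcal{O}$.

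Honestly, there is no substantive obstacle here beyond what is already carried by Theorem~\ref{thm:convergence_main}; the only steps needing a line of care are (i) passing between $\Uo_r$ and $\Uo$ through the rotation-invariant metric, and (ii) noting that $\alpha\in(0,1)$, which follows from $m>0$ together with the fact---used throughout the analysis---that $M\|\Xo\|_2+\|\gradf(\Xo)\|_2$ is finite and large enough to keep the subtracted term in $(0,1)$, so that the displayed bound is a genuine contraction rather than vacuous.
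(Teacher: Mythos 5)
Your proposal is correct and follows essentially the same route as the paper: the corollary is obtained directly from Theorem~\ref{thm:convergence_main} by noting that $\mathrm{rank}(\Xo)=r$ gives $\Xo=\Xo_r$, so $(A3)$ holds trivially and the $\beta\|\Xo-\Xo_r\|_F^2$ term in \eqref{conv:eq_00} vanishes, leaving the stated contraction with the same factor $\alpha$. Your added remarks on the rotation-invariant identification of $\Uo$ with $\Uo_r$ and on iterating the bound are consistent with the paper's treatment and require no changes.
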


Further, for $r =n$ we recover the exact case of semi-definite optimization. In plain words, the above corollary suggests that, given an accuracy parameter $\varepsilon$, \algo requires $K = O\left(\log \left(\sfrac{1}{\varepsilon}\right)\right)$ iterations in order to achieve $\dist(\U^{K}, \Uo)^2 \leq \varepsilon$; recall the analogous result for classic gradient schemes for $M$-smooth and strongly convex functions $f$, where similar rates can be achieved in $\X$ space \cite{nesterov2004introductory}. The above are abstractly illustrated in Figure \ref{fig:stronglycvx_illustration}.

\begin{figure}[!ht]
	\centering
	\includegraphics[width=1\textwidth]{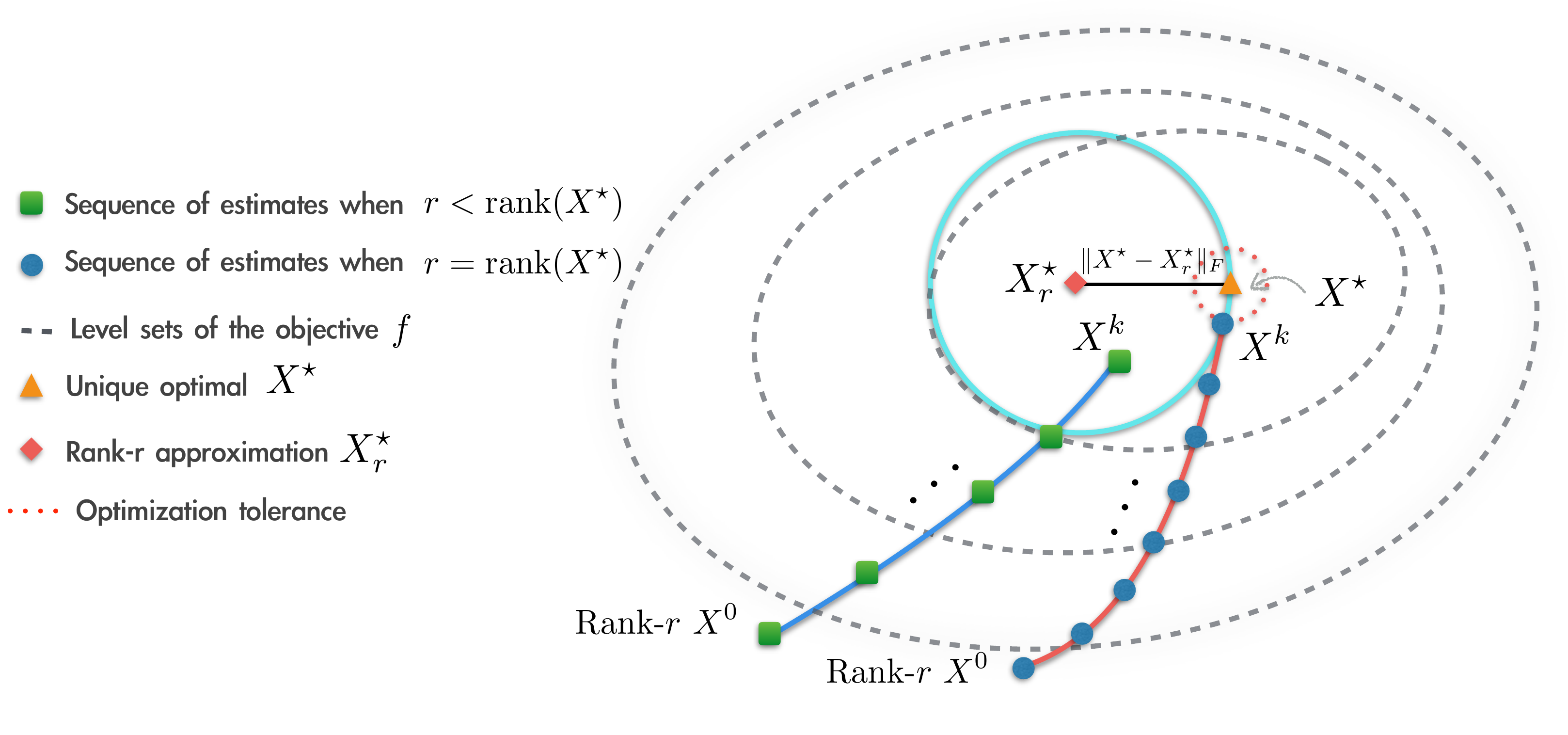}
	\caption{Abstract illustration of Theorem \ref{thm:convergence_main} and Corollary \ref{cor:exact}. The two curves denote the two cases: $(i)$ $r = \text{rank}(X^\star)$ and, $(ii)$ $r < \text{rank}(\X^\star)$. $(i)$ In the first case, the triangle marker denotes the unique optimum $\X^\star$ and the dashed red circle denotes the \emph{optimization tolerance/error}. $(ii)$ In the case where $r < \text{rank}(\X^\star)$, let the cyan circle with radius $c\|\X^\star - \X^\star_r\|_F$ (set $c = 1$ for simplicity) denote a \emph{neighborhood} around $\X^\star$. In this case, $\algo$ converges to a rank-$r$ approximation in the vicinity of $\X^\star$ in sublinear rate, according to Theorem \ref{thm:convergence_main}.} \label{fig:stronglycvx_illustration}
\end{figure}

\begin{remark}
\textit{By the results above, one can easily observe that the convergence rate factor $\alpha$, in contrast to standard convex gradient descent results, depends both on the condition number of $\Xo_r$ and $\|\gradf(\Xo)\|_2$, in addition to $\kappa$. 
This dependence is a result of the step size selection, which is different from standard step sizes, \textit{i.e.}, $\sfrac{1}{M}$ for standard gradient descent schemes. 
We also refer the reader to Section~\ref{sec:discuss} for some discussion.
}
\end{remark}

As a ramification of the above, notice that $\alpha$ depends only on the condition number of $\Xo_r$ and not that of $\Xo$. This suggests that, in settings where the optimum $\Xo$ has bad condition number (and thus leads to slower convergence), it is indeed beneficial to restrict $U$ to be a $n \times r$ matrix and only search for a rank-$r$ approximation of the optimal solution, which leads to faster convergence rate in practice; see Figure \ref{fig:03} in our experimental findings at the end of Section \ref{sec:sims}.

\begin{remark}
\textit{In the setting where the optimum $\Xo$ is 0, directly applying the above theorems requires an initialization that is exactly at the optimum 0. On the contrary, this is actually an easy setting and the \algo converges from any initial point to the optimum. 
}
\end{remark}

\section{Initialization}\label{sec:init}

In the previous section, we show that gradient descent over $U$ achieves sublinear$/$linear convergence, once the iterates are closer to $\Uo_r$. Since the overall problem is non-convex, intuition suggests that we need to start from a ``decent" initial point, in order to get provable convergence to $\Uo_r$.

One way to satisfy this condition for general convex $f$ is to use one of the standard convex algorithms and obtain $\U$ within constant error to $\Uo$ (or $\Uo_r$); then, switch to \algo to get the high precision solution. 
See \cite{tu2015low} for a specific implementation of this idea on matrix sensing.
Such initialization procedure comes with the following guarantees; the proof can be found in Section \ref{sec:init_proofs}:

\begin{lemma}\label{lem:switch}
Let $f$ be a $M$-smooth and $(m, r)$-restricted strongly convex function over PSD matrices and let $\Xo$ be the minimum of $f$ with $rank(\Xo)=r$. Let $\Xp =\mathcal{P}_{+} (X -\frac{1}{M}\gradf(X))$ be the projected gradient descent update. 
Then, $\norm{\Xp -\X}_F \leq \frac{c}{\kappa\sqrt{r} \tau(X_r)} \sigma_r(X)$ implies,
$$\dist(\U_r, \Uo_r) \leq \tfrac{c'}{ \tau(\Xo_r)} \sigma_r(\Uo_r), \quad \text{ for constants } c,~c' > 0.$$
\end{lemma}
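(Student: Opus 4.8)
The plan is to relate the Frobenius-norm progress of one projected gradient step on $X$ to the distance in the factored $U$-space, using two intermediate facts: (1) a lower bound on $\norm{\Xp - X}_F$ in terms of $\dist(X, \Xo)$ coming from smoothness and restricted strong convexity of $f$, and (2) the standard ``matrix distance controls factor distance'' inequality that converts a bound on $\norm{X - \Xo}_F$ into a bound on $\dist(U_r, \Uo_r)$. Chaining these, the hypothesis $\norm{\Xp - X}_F \leq \frac{c}{\kappa \sqrt{r}\, \tau(X_r)} \sigma_r(X)$ will force $X$ to be within a small multiple of $\sigma_r(\Xo)$ of $\Xo$, which in turn yields the claimed $\dist(U_r, \Uo_r) \leq \frac{c'}{\tau(\Xo_r)} \sigma_r(\Uo_r)$.

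First I would establish the lower bound on the projected-gradient step size. Since $\Xp = \mathcal{P}_+(X - \tfrac{1}{M}\gradf(X))$ and $\Xo$ is feasible (PSD) and is the minimizer, a standard projected-gradient argument (firm non-expansiveness of $\mathcal{P}_+$ together with the fact that the projected-gradient map has $\Xo$ as a fixed point) gives something like $\norm{\Xp - X}_F \geq c_1 \norm{X - \Xo}_F$ provided $X$ is in a neighborhood of $\Xo$; more carefully, using $M$-smoothness and $(m,r)$-restricted strong convexity of $f$ on the low-rank iterates, one gets $\norm{\Xp - X}_F \gtrsim \frac{m}{M}\norm{X-\Xo}_F$ or an analogous bound with the relevant condition-number factors. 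The precise constant is not important; what matters is that $\norm{\Xp - X}_F$ is bounded below by a fixed fraction (depending on $\kappa$) of $\norm{X - \Xo}_F$. Combining with the hypothesis, $\norm{X - \Xo}_F \leq \frac{c_2}{\sqrt{r}\,\tau(X_r)}\sigma_r(X)$; and since $\sigma_r(X) \leq \sigma_r(\Xo) + \norm{X-\Xo}_2$ and $\tau(X_r) \asymp \tau(\Xo_r)$ once $X$ is close enough to $\Xo$, this can be re-expressed as $\norm{X - \Xo}_F \leq \frac{c_3}{\sqrt{r}\,\tau(\Xo_r)}\sigma_r(\Xo)$ (absorbing the lower-order corrections by shrinking constants and using that $\rank(\Xo)=r$ so $\Xo = \Xo_r$).

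Next I would invoke the standard lemma converting matrix closeness to factor closeness: for PSD matrices $X = UU^\top$ and $\Xo = \Uo (\Uo)^\top$ of rank $r$, one has $\dist(U, \Uo)^2 \leq \frac{1}{2(\sqrt{2}-1)\,\sigma_r(\Xo)}\norm{X - \Xo}_F^2$ (this is the now-standard bound due to Tu et al.\ / similar to what the paper uses elsewhere). Applying it with the bound on $\norm{X - \Xo}_F$ from the previous step gives $\dist(U_r, \Uo_r)^2 \lesssim \frac{1}{\sigma_r(\Xo)}\cdot\frac{c_3^2}{r\,\tau(\Xo_r)^2}\sigma_r(\Xo)^2 = \frac{c_3^2}{r\,\tau(\Xo_r)^2}\sigma_r(\Xo) = \frac{c_3^2}{r\,\tau(\Xo_r)^2}\sigma_r(\Uo_r)^2$, and dropping the harmless $1/r$ factor yields $\dist(U_r, \Uo_r) \leq \frac{c'}{\tau(\Xo_r)}\sigma_r(\Uo_r)$, as desired.

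The main obstacle I anticipate is the first step: getting a clean lower bound $\norm{\Xp - X}_F \geq (\text{fraction depending on } \kappa) \cdot \norm{X - \Xo}_F$ that holds with the right constants and handles the projection $\mathcal{P}_+$ correctly. The subtlety is that $X - \tfrac{1}{M}\gradf(X)$ need not be PSD, so one must argue that projecting it does not destroy the progress toward $\Xo$; this uses that $\Xo$ is PSD together with the non-expansiveness of $\mathcal{P}_+$, but one has to be careful that the inequality goes in the useful direction. A secondary technical point is justifying $\tau(X_r) \asymp \tau(\Xo_r)$ and $\sigma_r(X) \asymp \sigma_r(\Xo)$ from closeness, which is routine via Weyl's inequality but needs the neighborhood to be small enough — and that is exactly why the hypothesis carries the $\frac{1}{\kappa\sqrt r\,\tau(X_r)}$ factor in front of $\sigma_r(X)$.
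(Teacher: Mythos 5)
Your route is essentially the paper's: you lower-bound the progress of one projected-gradient step by a $\tfrac{1}{\kappa}$-fraction of the distance to the optimum (the paper gets $\norm{\X - \Xp}_F \geq \tfrac{1}{2\kappa}\norm{\X - \Xo}_F$ from the standard contraction $\norm{\Xp - \Xo}_F \leq \bigl(1-\tfrac{1}{2\kappa}\bigr)\norm{\X - \Xo}_F$ plus the reverse triangle inequality), then convert matrix distance to factor distance via Lemma 5.4 of Tu et al., and finally transfer $\sigma_r$ and $\tau$ from $\X$ to $\Xo$ by Weyl-type perturbation bounds. Those are exactly the paper's three ingredients, so there is no difference of method to report.

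There is, however, one concrete gap in your middle step. The conclusion concerns $\dist(\U_r, \Uo_r)$, where $\U_r$ is a factor of the rank-$r$ truncation $\X_r$, and the projected-gradient iterate $\X$ is in general \emph{not} rank $r$; so the rank-$r$ factor-distance inequality cannot be applied to the pair $(\X, \Xo)$ as you do when you write ``for PSD matrices $X = UU^\top$ and $\Xo$ of rank $r$.'' You must first convert your bound on $\norm{\X - \Xo}_F$ into a bound on $\norm{\X_r - \Xo}_F$ and only then invoke Tu et al.\ with $\X_r = \U_r \U_r^\top$. The paper does this via $\norm{\X_r - \Xo}_F \leq \sqrt{r}\,\norm{\X_r - \Xo}_2 \leq 2\sqrt{r}\,\norm{\X - \Xo}_2$, and this is precisely what consumes the $\sqrt{r}$ in the hypothesis --- it is not ``harmless slack'' to be dropped, as your write-up suggests. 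The repair is short: since $\X_r$ is the best rank-$r$ Frobenius approximation of $\X$ and $\Xo$ has rank $r$, one has $\norm{\X_r - \Xo}_F \leq \norm{\X_r - \X}_F + \norm{\X - \Xo}_F \leq 2\norm{\X - \Xo}_F$, after which your chain goes through (with the $\sqrt{r}$ then genuinely absorbed into constants). One further shared imprecision, not counted against you: both you and the paper invoke the strongly convex contraction of projected gradient descent although $f$ is only assumed $(m,r)$-restricted strongly convex.
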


Next, we present a generic initialization scheme for general smooth and strongly convex $f$. 
We use only the \emph{first-order oracle}: we only have access to---at most---gradient information of $f$. 
Our initialization comes with theoretical guarantees w.r.t. distance from optimum. 
Nevertheless, in order to show small relative distance in the form of $\dist(\U^0, \Uo_r) \leq \rho \sigma_r(\Uo_r)$, one requires certain condition numbers of $f$ and further assumptions on the spectrum of optimal solution $\Xo$ and rank $r$. 
However, empirical findings in Section \ref{sec:sims} show that our initialization performs well in practice.

Let $\gradf(0) \in \R^{n \times n}$. 
Since the initial point should be in the PSD cone, we further consider the projection $\mathcal{P}_+(-\gradf(0))$. 
By strong convexity and smoothness of $f$, one can observe that the point $\sfrac{1}{M} \cdot \mathcal{P}_{+}\left(- \gradf(0)\right)$ is a good initialization point, within some radius from the vicinity of $\Xo$; \textit{i.e.}, 
\begin{align}\label{eq:init_point}
\norm{\tfrac{1}{M}\mathcal{P}_+(-\gradf(0)) - \Xo}_F ~ \leq ~ 2\left(1-\tfrac{m}{M}\right) \|\Xo\|_F;
\end{align} 
see also Theorem \ref{thm:scinit}. 
Thus, a scaling of $\mathcal{P}_+(-\gradf(0))$ by $M$ could serve as a decent initialization. 
In many recent works~\cite{jain2013low, netrapalli2013phase, candes2015phase1, zheng2015convergent, chen2015fast} this initialization has been used for specific applications.\footnote{To see this, consider the case of least-squares objective $f(\X) := \tfrac{1}{2} \|\mathcal{A}(\X) - y\|_2^2$, where $y$ denote the set of observations and, $\mathcal{A}$ is a properly designed sensing mechanism, depending on the problem at hand. For example, in the affine rank minimization case \cite{zheng2015convergent, chen2015fast}, $\left(\mathcal{A}(X)\right)_i$ represents the linear system mechanism where $\trace(A_i\cdot X) = b_i$. Under this setting, computing the gradient $\nabla f(\cdot)$ at zero point, we have: $-\nabla f(0) = \mathcal{A}^*(y)$, where $\mathcal{A}^*$ is the adjoint operator of $\mathcal{A}$. Then, it is obvious that the operation $\mathcal{P}_{+}\left(-\nabla f(0)\right)$  is very similar to the spectral methods, proposed for initialization in the references above.} 
Here, we note that the point $\sfrac{1}{M} \cdot \mathcal{P}_{+}\left(- \gradf(0)\right)$ can be used as initialization point for generic smooth and strongly convex $f$. 

The smoothness parameter $M$ is not always easy to compute exactly; in such cases, one can use the surrogate $m \leq \| \gradf(0)-\gradf(e_1 e_1^\top)\|_F \leq M$. Finally, our initial point $U^0 \in \R^{n \times r}$ is a rank-$r$ matrix such that $\X^0_r = \U^0 \U^{0 \top}$. 

We now present guarantees for the initialization discussed. The proof is provided in Section~\ref{sec:init_prof}. 

\begin{theorem}[Initialization]\label{thm:scinit}
Let $\f$ be a $M$-smooth and $m$-strongly convex function, with condition number $\kappa = \frac{M}{m}$, and let $\Xo$ be its minimum over PSD matrices. 
Let $\X^0$ be defined as:
\begin{equation}
\X^0 := \tfrac{1}{\| \gradf(0)-\gradf(e_1 e_1\top)\|_F}\mathcal{P}_+ \left( -\gradf(0)\right ),
\label{eq:X0}
\end{equation} and $\X^0_r$ is its rank-$r$ approximation. Let $\norm{\Xo -\Xo_r}_F \leq \tilde{\rho} \norm{\Xo_r}_2$ for some $\tilde{\rho}$. Then, 
$\dist(U^0, \Uo_r) \leq \gamma \sigma_r(\Uo_r)$,
where $\gamma =4 \tau(\Xo_r) \sqrt{2r}  \cdot \left( \sqrt{\kappa^2 -\sfrac{2}{\kappa} +1}\left(\text{\texttt{srank}}^{\sfrac{1}{2}}\left(\Xo_r\right) +\tilde{\rho}\right) +\tilde{\rho} \right)$ and $\text{\texttt{srank}}\left(\Xo_r\right) = \frac{||\Xo_r||_F^2}{||\Xo_r||_2^2}$.
\end{theorem}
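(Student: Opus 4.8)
## Proof proposal for Theorem \ref{thm:scinit}

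The plan is to control $\dist(U^0, \Uo_r)$ by first controlling the Frobenius distance $\|X^0_r - \Xo_r\|_F$ in the matrix space, and then converting this matrix-space bound into a factored-space bound using a standard lemma (the one underlying Definition \ref{prelim:def_04}, of the form $\dist(U,V)^2 \le \frac{1}{2(\sqrt{2}-1)\sigma_r(V)^2}\|UU^\top - VV^\top\|_F^2$, or the version with $\sigma_r$ in the denominator). So the first order of business is to bound $\|X^0 - \Xo\|_F$, then pass to rank-$r$ truncations, then divide by the appropriate singular value.

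The core estimate is \eqref{eq:init_point}: I would establish $\|\tfrac{1}{M}\mathcal{P}_+(-\gradf(0)) - \Xo\|_F \le 2(1-\tfrac{m}{M})\|\Xo\|_F$. To see this, note that by $m$-strong convexity and $M$-smoothness of $f$, the point $Z := \tfrac1M \mathcal{P}_+(-\gradf(0))$ is one step of projected gradient descent from $0$ with step $1/M$, so standard convex-analysis contraction gives $\|Z - \Xo\|_F \le (1-\tfrac{m}{M})\|0 - \Xo\|_F$ plus a term coming from the fact that $\Xo$ need not be $0$ and from the projection; a cleaner route is to use $-\gradf(0) = \gradf(\Xo) - \gradf(0) + (-\gradf(\Xo))$ together with the first-order optimality of $\Xo$ over the PSD cone and co-coercivity/Lipschitz bounds to squeeze out the factor $2(1-m/M)$. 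Then the true initializer $X^0$ in \eqref{eq:X0} differs from $Z$ only in the scalar normalization: instead of $1/M$ it uses $1/\|\gradf(0) - \gradf(e_1e_1^\top)\|_F$, and since $m \le \|\gradf(0)-\gradf(e_1e_1^\top)\|_F \le M$, the normalization lies within a factor $\kappa$ of the ideal one. Propagating this through yields $\|X^0 - \Xo\|_F \le c\sqrt{\kappa^2 - \tfrac2\kappa + 1}\,\|\Xo\|_F$ for an explicit constant — the expression $\sqrt{\kappa^2-2/\kappa+1}$ in the statement is exactly what falls out of combining the $(1-m/M)$ contraction with the $\kappa$-slack in the scaling.

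Next I would pass to rank-$r$ parts. Writing $\|X^0_r - \Xo_r\|_F \le \|X^0_r - \Xo\|_F + \|\Xo - \Xo_r\|_F$ and using that $X^0_r$ is the best rank-$r$ approximation of $X^0$ (so $\|X^0 - X^0_r\|_F \le \|X^0 - \Xo_r\|_F$, hence $\|X^0_r - \Xo\|_F \le \|X^0_r - X^0\|_F + \|X^0 - \Xo\|_F \le 2\|X^0 - \Xo\|_F + \|\Xo-\Xo_r\|_F$), I get $\|X^0_r - \Xo_r\|_F$ bounded by $2\|X^0-\Xo\|_F + 2\|\Xo - \Xo_r\|_F$. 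Now substitute $\|X^0 - \Xo\|_F \le \|X^0-\Xo_r\|_F$-type bounds, replace $\|\Xo\|_F$ by $\sqrt{r}\,$ or by the stable-rank expression $\text{\texttt{srank}}^{1/2}(\Xo_r)\,\|\Xo_r\|_2$ (this is where $\text{\texttt{srank}}(\Xo_r)$ enters), and fold in the hypothesis $\|\Xo - \Xo_r\|_F \le \tilde\rho\|\Xo_r\|_2$ (this is where the $\tilde\rho$ terms enter). Collecting everything gives $\|X^0_r - \Xo_r\|_F \le C\big(\sqrt{\kappa^2-2/\kappa+1}(\text{\texttt{srank}}^{1/2}(\Xo_r)+\tilde\rho) + \tilde\rho\big)\|\Xo_r\|_2$.

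Finally, I invoke the matrix-to-factor conversion lemma: $\dist(U^0,\Uo_r)^2 \le \frac{1}{2(\sqrt2 - 1)}\cdot\frac{\|X^0_r - \Xo_r\|_F^2}{\sigma_r(\Xo_r)}$, and use $\sigma_r(\Xo_r) = \sigma_r(\Uo_r)^2$ together with $\|\Xo_r\|_2 = \sigma_1(\Xo_r) = \tau(\Xo_r)\sigma_r(\Xo_r)$ to rewrite $\|\Xo_r\|_2 / \sigma_r(\Uo_r) = \tau(\Xo_r)\sigma_r(\Uo_r)$. Taking square roots and absorbing all numerical constants into the leading $4$ produces $\dist(U^0,\Uo_r) \le \gamma\sigma_r(\Uo_r)$ with $\gamma = 4\tau(\Xo_r)\sqrt{2r}\big(\sqrt{\kappa^2-2/\kappa+1}(\text{\texttt{srank}}^{1/2}(\Xo_r)+\tilde\rho)+\tilde\rho\big)$, as claimed. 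The main obstacle I anticipate is getting the scaling-slack argument clean: the initializer uses the data-dependent normalization $\|\gradf(0)-\gradf(e_1e_1^\top)\|_F$ rather than $M$, and tracking how the gap between this quantity and $M$ (resp.\ $m$) propagates through \eqref{eq:init_point} — without losing more than the stated $\sqrt{\kappa^2-2/\kappa+1}$ factor — is the delicate bookkeeping step; everything else is triangle inequalities and the standard best-rank-$r$ and distance lemmas.
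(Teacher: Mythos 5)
Your overall skeleton matches the paper's proof: bound the matrix-space error of $\X^0$, pass to the rank-$r$ truncations by triangle inequalities, absorb $\norm{\Xo-\Xo_r}_F\leq\tilde{\rho}\norm{\Xo_r}_2$ and $\norm{\Xo}_F\leq\left(\text{\texttt{srank}}(\Xo_r)^{1/2}+\tilde{\rho}\right)\norm{\Xo_r}_2$, and convert to $\dist(U^0,\Uo_r)$ via the bound of Lemma~\ref{lem:lower_bound_X_Xr} (Lemma 5.4 of \cite{tu2015low}), $\norm{\X^0_r-\Xo_r}_F^2\geq 2(\sqrt{2}-1)\sigma_r(\Xo)\dist(U^0,\Uo_r)^2$, together with $\sigma_r(\Xo)=\sigma_r(\Uo_r)^2$ and $\norm{\Xo_r}_2=\tau(\Xo_r)\sigma_r(\Xo)$. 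Your truncation step is in fact slightly cleaner than the paper's: by staying in Frobenius norm and using the best-rank-$r$ property of $\X^0_r$ you avoid the paper's detour through the spectral norm (Weyl's inequality) and the resulting $\sqrt{2r}$ inflation, so your bound sits inside the stated $\gamma$ with room to spare.

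Where you genuinely diverge is the core estimate on $\norm{\X^0-\Xo}_F$, and there your sketch is loose. The paper does not go through \eqref{eq:init_point} or any projected-gradient contraction: it expands $\norm{\X^0-\Xo}_F^2=\norm{\X^0}_F^2+\norm{\Xo}_F^2-2\ip{\X^0}{\Xo}$ and bounds the two nontrivial terms separately, namely $\norm{\X^0}_F\leq\kappa\norm{\Xo}_F$ (smoothness, nonexpansiveness of $\mathcal{P}_-$, and the KKT fact $\gradf(\Xo)\succeq 0$ so that $\mathcal{P}_-(\gradf(\Xo))=0$) and $\ip{\X^0}{\Xo}\geq\tfrac{1}{\kappa}\norm{\Xo}_F^2$ (strong convexity applied at $0$ and at $\Xo$, plus $\ip{\mathcal{P}_+(-\gradf(0))}{\Xo}\geq\ip{-\gradf(0)}{\Xo}$ since $\Xo\succeq 0$); this yields exactly $\kappa^2-\sfrac{2}{\kappa}+1$. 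Your contraction-plus-normalization-slack route can be made to work, but three things need to be made explicit: $(i)$ the one-step contraction only gives $\|\tfrac{1}{M}\mathcal{P}_+(-\gradf(0))-\Xo\|_F\leq\sqrt{1-\sfrac{1}{\kappa}}\,\norm{\Xo}_F$, which near $\kappa\approx 1$ is \emph{larger} than the $2(1-\sfrac{m}{M})$ factor of \eqref{eq:init_point}, so you should argue directly with $\sqrt{1-\sfrac{1}{\kappa}}$ rather than via \eqref{eq:init_point}; $(ii)$ the slack term $\big|\tfrac{1}{\norm{\gradf(0)-\gradf(e_1e_1^\top)}_F}-\tfrac{1}{M}\big|\cdot\norm{\mathcal{P}_+(-\gradf(0))}_F$ still requires the bound $\norm{\mathcal{P}_+(-\gradf(0))}_F\leq M\norm{\Xo}_F$, i.e., precisely the smoothness/projection/KKT argument above, which your proposal never states; and $(iii)$ what then falls out is $(\kappa-1)+\sqrt{1-\sfrac{1}{\kappa}}$, not "exactly" $\sqrt{\kappa^2-\sfrac{2}{\kappa}+1}$, so you must check domination — which does hold, since $2(\kappa-1)\sqrt{1-\sfrac{1}{\kappa}}\leq 2\kappa-2$ gives $\big(\kappa-1+\sqrt{1-\sfrac{1}{\kappa}}\big)^2\leq\kappa^2-\sfrac{2}{\kappa}+1$. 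With these points filled in, your argument closes and yields the claimed $\gamma$.
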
 

\begin{figure}[!t]
\centering
\includegraphics[width=0.33\textwidth]{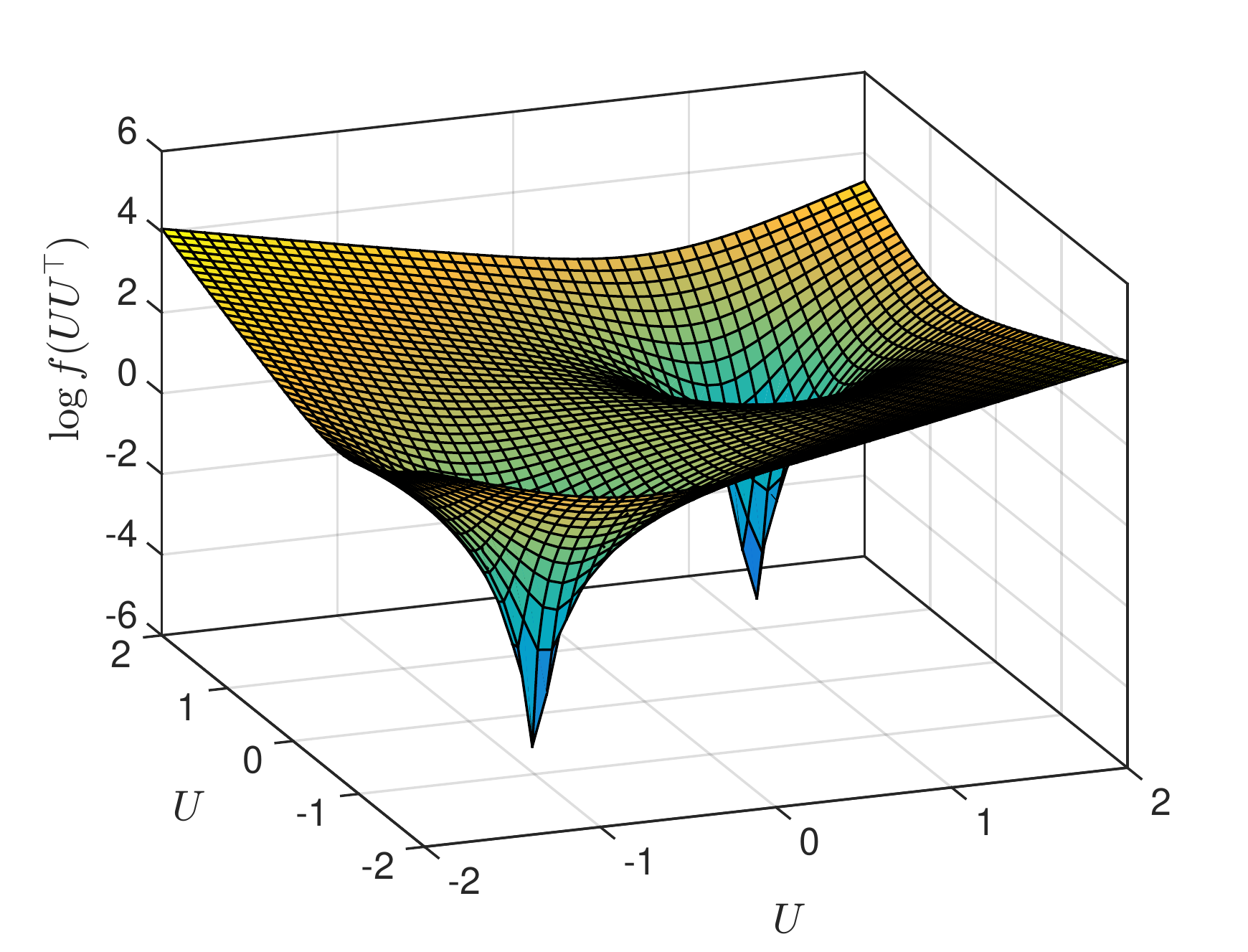}
\includegraphics[width=0.33\textwidth]{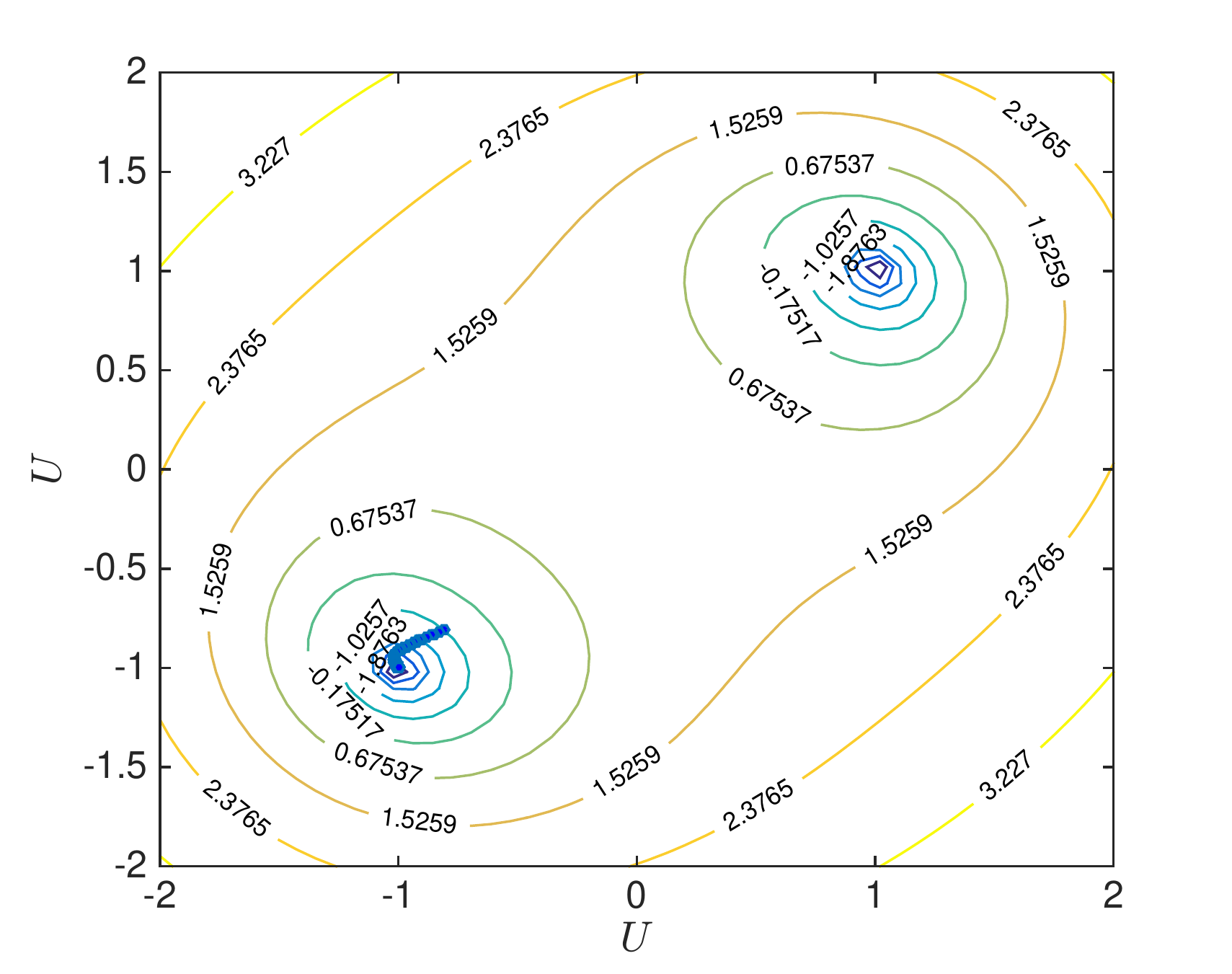}
\includegraphics[width=0.32\textwidth]{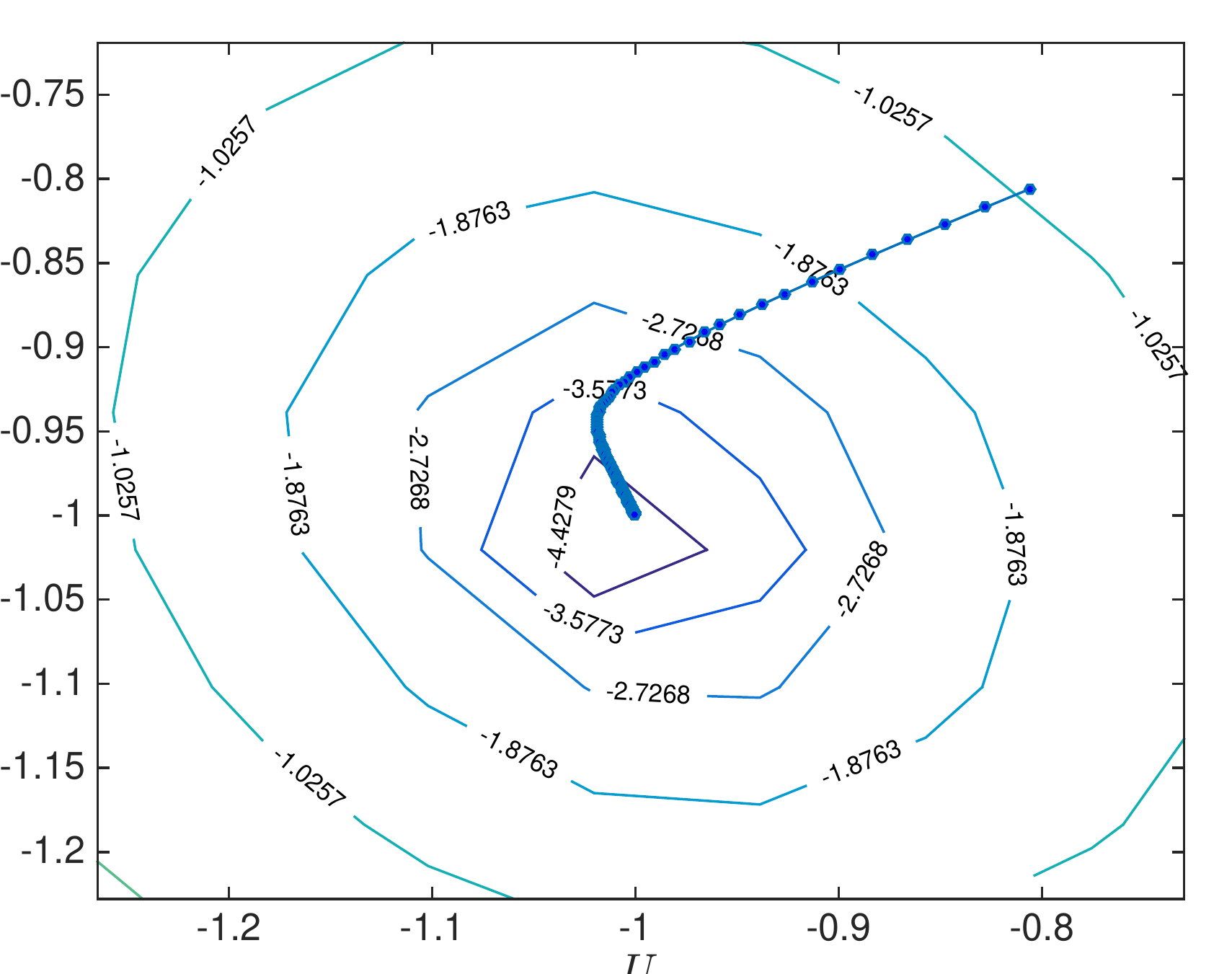}
\caption{Abstract illustration of initialization effect on a toy example. In this experiment, we design $\Xo = \Uo \U^{\star \top}$ where $\Uo = [1 ~~1]^\top$ (or  $\Uo = -[1 ~~1]^\top$---these are equivalent). We observe $\Xo$ via $y = \text{vec}\left(A\cdot \Xo\right)$ where $A \in \R^{3 \times 2}$ is randomly generated. We consider the loss function $f(\U\U^\top) = \tfrac{1}{2} \|y - \text{vec}\left(A\cdot \U\U^\top\right)\|_2^2$. Left panel: $f$ values in logarithimic scale for various values of variable $\U \in \R^{2 \times 1}$. Center panel: Contour lines of $f$ and the bahavior of $\algo$ using our initialization scheme. Right panel: zoom-in plot of center plot.} \label{fig:init_example}
\end{figure}

While the above result guarantees a good initialization for only small values of $\kappa$, in many applications ~\cite{jain2013low, netrapalli2013phase, chen2015fast}, this is indeed the case and $X^0$ has constant relative error to the optimum.

To understand this result, notice that in the extreme case, when $f$ is the $\ell_2$ loss function $\| X -\Xo\|_F^2$, which has condition number $\kappa =1$ and $\text{rank}(\Xo) =r$,  $\X^0$ indeed is the optimum. 
More generally as the condition number $\kappa$ increases, the optimum moves away from $\X^0$ and the above theorem characterizes this error as a function of condition number of the function. 
See also Figure \ref{fig:init_example}.

Now for the setting when the optimum is exactly rank-$r$ we get the following result.
\begin{corollary}[Initialization, exact]\label{cor:exactinit}
Let $\Xo$ be rank-$r$ for some $r\leq n$. Then, under the conditions of Theorem~\ref{thm:scinit}, we get
\begin{equation*} 
\dist(U^0, \Uo_r) \leq  4\sqrt{2} r \tau(\Xo_r)   \cdot \sqrt{\kappa^2 -\sfrac{2}{\kappa} +1}  \cdot \sigma_r(\Uo_r). \\
\end{equation*} 
\end{corollary}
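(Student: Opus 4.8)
The plan is to specialize Theorem~\ref{thm:scinit} to the case $\text{rank}(\Xo) = r$. When $\Xo$ is exactly rank-$r$, we have $\Xo_r = \Xo$, and consequently the tail term $\norm{\Xo - \Xo_r}_F = 0$, so in the statement of Theorem~\ref{thm:scinit} we may take $\tilde{\rho} = 0$. Substituting $\tilde{\rho} = 0$ into the expression
\[
\gamma = 4 \tau(\Xo_r)\sqrt{2r}\cdot\left( \sqrt{\kappa^2 - \sfrac{2}{\kappa} + 1}\left(\text{\texttt{srank}}^{\sfrac{1}{2}}(\Xo_r) + \tilde{\rho}\right) + \tilde{\rho}\right)
\]
collapses it to $\gamma = 4\sqrt{2r}\,\tau(\Xo_r)\cdot \sqrt{\kappa^2 - \sfrac{2}{\kappa}+1}\cdot\text{\texttt{srank}}^{\sfrac{1}{2}}(\Xo_r)$.

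The remaining step is to bound the stable rank factor. Since $\Xo = \Xo_r$ has rank exactly $r$, its squared Frobenius norm is the sum of $r$ squared singular values, each at most $\sigma_1(\Xo_r)^2$, so $\text{\texttt{srank}}(\Xo_r) = \norm{\Xo_r}_F^2/\norm{\Xo_r}_2^2 \leq r$, hence $\text{\texttt{srank}}^{\sfrac{1}{2}}(\Xo_r) \leq \sqrt{r}$. Plugging this in yields
\[
\dist(U^0, \Uo_r) \leq 4\sqrt{2r}\,\tau(\Xo_r)\cdot\sqrt{\kappa^2 - \sfrac{2}{\kappa}+1}\cdot\sqrt{r} = 4\sqrt{2}\,r\,\tau(\Xo_r)\cdot\sqrt{\kappa^2 - \sfrac{2}{\kappa}+1}\cdot\sigma_r(\Uo_r),
\]
which is exactly the claimed bound (the $\sigma_r(\Uo_r)$ factor being carried along from Theorem~\ref{thm:scinit}, whose conclusion is already stated in units of $\sigma_r(\Uo_r)$).

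There is essentially no obstacle here: the corollary is a direct corollary, obtained by setting $\tilde{\rho}=0$ and using the elementary bound $\text{\texttt{srank}} \leq \text{rank}$. The only thing to double-check is that the hypotheses of Theorem~\ref{thm:scinit} are genuinely met in the exact case---in particular that the condition $\norm{\Xo - \Xo_r}_F \leq \tilde{\rho}\norm{\Xo_r}_2$ holds with $\tilde{\rho}=0$, which it trivially does since the left-hand side vanishes---so no extra assumptions beyond $M$-smoothness and $m$-strong convexity of $f$ are needed.
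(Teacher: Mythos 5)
Your proposal is correct and is exactly the argument the paper intends: with $\Xo=\Xo_r$ one takes $\tilde{\rho}=0$ in Theorem~\ref{thm:scinit} and bounds $\text{\texttt{srank}}(\Xo_r)\leq r$, so $\sqrt{2r}\cdot\text{\texttt{srank}}^{\sfrac{1}{2}}(\Xo_r)\leq \sqrt{2}\,r$, giving the stated constant. The only cosmetic slip is in your final display, where the factor $\sigma_r(\Uo_r)$ appears on the right of an equality whose left side omits it; as you note, it is simply carried over from the conclusion of Theorem~\ref{thm:scinit}.
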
 

Finally, for the setting when the function satisfies $(m, r)$-restricted strong convexity, the above corollary still holds as the optimum is a rank-$r$ matrix.

\section{Convergence proofs for the {\rm \algo} algorithm}{\label{sec:proofs}}
In this section, we first present the key techniques required for analyzing the convergence of \algo. 
Later, we present proofs for both Theorems~\ref{thm:smooth_inexact} and \ref{thm:convergence_main}. 
Throughout the proofs we use the following notation. $\Xo$ is the optimum of problem~\eqref{intro:eq_00} and $\Xo_r =\Uorr (\Uorr)^\top$ is the rank-$r$ approximation; 
for the just smooth case, $\Xo = \Xo_r$, as we consider only the rank-$r^\star$ case and $r = r^\star$.
Let $R_U^\star := \argmin_{R:R\in \mathcal{O}} \|U - \Uo_r R\|_F$ and $\Delta =\U - \Uo_r R_U^\star$.

A key property that assists classic gradient descent to converge to the optimum $\Xo$ is the fact that $\langle \Xp -X, ~\Xo -X \rangle \geq 0$ for a smooth convex function $f$; 
in the case of strongly convex $f$, the inner product is further lower bounded by $\frac{m}{2}\|X-\Xo\|_F^2$ (see Theorem 2.2.7 of~\cite{nesterov2004introductory}). 
Classical proofs mainly use such lower bounds to show convergence (see Theorems 2.1.13 and 2.2.8 of~\cite{nesterov2004introductory}). 

We follow broadly similar steps in order to show convergence of \algo. In particular,
\begin{itemize}
\item In section~\ref{sec:proofs_1}, we show a lower bound for the inner product $\ip{U -\Up}{\U -\Uorr}$ (Lemma~\ref{lem:gradU,U-U_r_ bound}), even though the function is not convex in $U$. The initialization and rank-$r$ approximate optimum assumptions play a crucial role in proving this, along with the fact that $f$ is convex in $X$.
\item In sections~\ref{sec:proofs_2} and~\ref{sec:proofs_3}, we use the above lower bound to show convergence for $(i)$ smooth and strongly $f$, and $(ii)$ just smooth $f$, respectively, similar to the convex setting.
\end{itemize}

\subsection{Rudiments of our analysis}\label{sec:proofs_1}
Next, we present the main descent lemma that is used for both sublinear and linear convergence rate guarantees of \algo. 

\begin{lemma}[Descent lemma]\label{lem:gradU,U-U_r_ bound}
For $f$ being a $M$-smooth and $(m, r)$-strongly convex function and, under assumptions $(A2)$ and $(A3)$, the following inequality holds true:
\begin{align*}
\tfrac{1}{\eta}\ip{U -\Up}{\U -\Uorr} \geq  \tfrac{2}{3} \eta  \|\gradf(X) U\|_F^2 + \tfrac{3m}{20} \cdot  \sigma_{r}(\Xo)  \dist(U, \Uo_r)^2 - \tfrac{M}{4}\norm{\Xo -\Xor}_F^2.
\end{align*}
Further, when $f$ is just $M$-smooth convex function and, under the assumptions $f(\Xp) \geq f(\Xor)$ and $(A1)$, we have: 
\begin{align*}
\tfrac{1}{\eta}\ip{U -\Up}{\U -\Uorr} \geq  \tfrac{1}{2} \eta  \|\gradf(X) U\|_F^2.
\end{align*}
\end{lemma}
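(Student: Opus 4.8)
The plan is to expand the inner product directly using the update rule $\Up = \U - \eta \gradf(\X)\U$. We have $U - \Up = \eta\,\gradf(\X)\U$, so
\[
\tfrac{1}{\eta}\ip{U-\Up}{\U - \Uorr} = \ip{\gradf(\X)\U}{\U - \Uorr} = \ip{\gradf(\X)}{(\U-\Uorr)\U^\top} = \tfrac12\ip{\gradf(\X)}{\X - \Xo_r} + \tfrac12\ip{\gradf(\X)}{\Delta\Delta^\top},
\]
where $\Delta = \U - \Uorr$ and we used $\X = \U\U^\top$, $\Xo_r = \Uorr(\Uorr)^\top$, together with the algebraic identity $(\U-\Uorr)\U^\top + \U(\U-\Uorr)^\top = \X - \Xo_r + \Delta\Delta^\top$ and symmetry of $\gradf(\X)$. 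Now the first term is a "convex-in-$X$" term: I would invoke restricted strong convexity of $f$ (Definition \ref{prelim:def_02}) to bound $\ip{\gradf(\X)}{\X - \Xo_r} \geq f(\X) - f(\Xo_r) + \tfrac{m}{2}\|\X - \Xo_r\|_F^2$, and then relate $f(\X) - f(\Xo_r)$ to the true optimum $f(\Xo)$ plus a perturbation controlled by $(A3)$; alternatively bound it via $\ip{\gradf(\Xo)}{\X - \Xo_r}$ plus a strong-convexity term and control the residual gradient $\gradf(\Xo)$ on the rank-deficient part using $(A3)$. The second term $\ip{\gradf(\X)}{\Delta\Delta^\top}$ is the genuinely new, non-convex piece; it can be negative, and it must be absorbed. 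I expect this to be handled by splitting $\gradf(\X) = \gradf(\Xo) + (\gradf(\X) - \gradf(\Xo))$, using $M$-smoothness on the difference, and then bounding $\|\Delta\Delta^\top\|_F \le \|\Delta\|_F^2 \le \dist(\U,\Uo_r)^2$, crucially using the initialization bound $(A2)$ which makes $\dist(\U,\Uo_r)$ small relative to $\sigma_r(\Xo)$ so that this quadratic-in-$\Delta$ term is dominated by the linear-in-$\|\X-\Xo_r\|_F^2$ gain from strong convexity.

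The second step is to reintroduce the term $\tfrac{2}{3}\eta\|\gradf(\X)\U\|_F^2$. This is not obtained by lower-bounding the inner product by zero and adding it — rather, the convexity/strong-convexity lower bound on $\ip{\gradf(\X)}{\X - \Xo_r}$ should itself be strong enough that, after using smoothness to relate $f(\X) - f(\Xo_r)$ (or $\|\gradf(\X)\U\|_F$) to the descent quantity, a positive multiple of $\|\gradf(\X)\U\|_F^2$ survives. Concretely, I would use the standard smoothness consequence $f(\Xp) \le f(\X) - \tfrac{\eta}{2}\|\gradf(\X)\U\|_F^2$-type inequality in the $X$-space (adapted to the factored update, paying attention to the extra $\|\gradf(\X)\U\|_F\cdot\|\Delta\|$ cross terms and the step-size choice \eqref{eq:step_size} which is exactly calibrated so that $\eta(M\|\X\|_2 + \|\gradf(\X)\|_2) \le \tfrac{1}{16}$ along the trajectory — this requires first showing $\|\X\|_2$ and $\|\gradf(\X)\|_2$ stay comparable to $\|\X^0\|_2$, $\|\gradf(\X^0)\|_2$, which follows from $(A1)$/$(A2)$), to trade part of the inner-product lower bound for the squared-gradient term.

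For the second ("just smooth") statement, the argument simplifies: without strong convexity there is no $\sigma_r(\Xo)\dist^2$ gain and no $\|\Xo - \Xor\|_F^2$ penalty (since $r = r^\star$, $\Xo = \Xo_r$). Here I would use the hypothesis $f(\Xp) \ge f(\Xo_r)$ together with convexity of $f$ in $X$: $\ip{\gradf(\X)}{\X - \Xo_r} \ge f(\X) - f(\Xo_r) \ge f(\X) - f(\Xp) \ge \tfrac{\eta}{2}\|\gradf(\X)\U\|_F^2 - (\text{cross terms})$, where the last inequality is the smoothness descent estimate for one \algo step, and the cross terms involving $\Delta$ are again controlled by $(A1)$. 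The main obstacle throughout is the non-convex term $\ip{\gradf(\X)}{\Delta\Delta^\top}$: showing it is dominated requires carefully combining $M$-smoothness, the spectral-norm control of $\gradf(\X)$, and — most importantly — the initialization assumption, which is precisely what forces $\|\Delta\|_F$ to be small enough relative to $\sqrt{\sigma_r(\Xo)}$ and to $\|\X - \Xo_r\|_F$ that a clean quadratic-form argument closes. I would also need the elementary inequalities relating $\dist(\U,\Uo_r)$, $\|\Delta\|_F$, $\|\Delta\|_2$, and $\|\X - \Xo_r\|_F$ (of the form $\|\X - \Xo_r\|_F \gtrsim \sigma_r(\Uo_r)\|\Delta\|_F$ when $\Delta$ is small), which presumably appear as auxiliary lemmas in Section \ref{sec:proofs}.
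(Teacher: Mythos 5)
Your setup (the split $\tfrac{1}{\eta}\ip{\U-\Up}{\U-\Uorr}=\tfrac12\ip{\gradf(\X)}{\X-\Xor}+\tfrac12\ip{\gradf(\X)}{\Delta\Delta^\top}$) and your Step-I plan (lower bound the first term by restricted strong convexity plus a smoothness-type descent estimate for the factored update, with the step size kept valid along the trajectory via comparability of $\|\X\|_2,\|\gradf(\X)\|_2$ with their values at $\X^0$) coincide with the paper's route, which constructs a pseudo-iterate with surrogate step $\weta$, uses feasibility of $\Xp$ against the optimum $\Xo$, and swaps $\Xo$ for $\Xor$ at cost $\tfrac{M}{2}\|\Xo-\Xor\|_F^2$ using the KKT orthogonality $\ip{\gradf(\Xo)}{\Xor-\Xo}=0$. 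The gaps are in Step II. For the strongly convex statement, after splitting $\gradf(\X)=\gradf(\Xo)+(\gradf(\X)-\gradf(\Xo))$ you only handle the difference; the remaining term $\ip{\gradf(\Xo)}{\Delta\Delta^\top}$ is left untreated, and the only generic bound available, $-\|\gradf(\Xo)\|_2\,\dist(\U,\Uo_r)^2$, cannot be absorbed by the strong-convexity gain $\tfrac{m}{2}\|\X-\Xor\|_F^2\approx\tfrac{3m\sigma_r(\Xo)}{8}\dist(\U,\Uo_r)^2$, since $\|\gradf(\Xo)\|_2$ is not controlled by $m\sigma_r(\Xo)$ (the paper's own toy example stresses it can be arbitrarily large). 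The missing ingredient is first-order optimality of $\Xo$ over the PSD cone: $\gradf(\Xo)\succeq 0$ and $\gradf(\Xo)\Xo=0$. With $\gradf(\Xo)\succeq 0$ your split actually closes nicely (the term is nonnegative and can be dropped, leaving $-M\|\X-\Xo\|_F\dist^2$, which $(A2)$ and $(A3)$ absorb) and is arguably simpler than the paper's argument, which instead projects $\gradf(\X)$ onto the column spans of $\U$ and $\Uo_r$, uses $\gradf(\Xo)\Xo=0$ for the latter, and runs a case analysis trading $\|Q_UQ_U^\top\gradf(\X)\|_2\dist^2$ against $\weta\|\gradf(\X)\U\|_F^2$; but as written your proposal omits the one fact that makes either version work.

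For the smooth-only statement the gap is more serious: you propose to control the $\Delta$ cross terms "via $(A1)$" and to let them be dominated by a strong-convexity gain, but in this case there is no $\dist^2$ or $\|\X-\Xor\|_F^2$ gain at all, and the desired conclusion $\tfrac{\eta}{2}\|\gradf(\X)\U\|_F^2$ has no slack term to absorb an additive $-c\,\dist^2$ or $-c\,\|\gradf(\X)\U\|_2\dist$ penalty (neither is comparable pointwise to $\eta\|\gradf(\X)\U\|_F^2$). The paper resolves this with a \emph{relative} bound (Lemma \ref{lem:new_lemma_smooth}): it writes $\Delta=(\X-\Xor)\bigl(\left[\U~\Uorr\right]^\dagger\bigr)^\top\begin{bmatrix}I\\ I\end{bmatrix}$, bounds the pseudo-inverse via Weyl, and—crucially—proves that the symmetrization of $\gradf(\X)(\X-\Xor)$ is PSD through a one-dimensional level-set convexity argument on $g(t)=f(\X+tyy^\top)$, using $f(\Xp)\geq f(\Xor)$ and optimality of $\Xor$; this yields $\ip{\gradf(\X)}{\Delta\Delta^\top}\geq -\tfrac{\sqrt{2}}{\sqrt{2}-1/100}\cdot\tfrac{1}{100}\,\ip{\gradf(\X)}{\X-\Xor}$, i.e., an error term that is a small \emph{fraction of the main term}, which is what lets Step I survive with the factor $\tfrac{\eta}{2}$. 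Your sketch contains no analogue of this mechanism, so the smooth part of the lemma would not close as proposed.
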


\begin{proof}
First, we rewrite the inner product as shown below.
\begin{align}
 \ip{\gradf(X) \U}{\U -\Uorr} &= \ip{\gradf(X) }{\X -\Uorr \U^\top} \nonumber \\
&=\frac{1}{2}\ip{\gradf(X) }{\X -\Xo_r}  + \ip{\gradf(X) }{\frac{1}{2}(\X + \Xo_r) -\Uorr \U^\top} \nonumber \\
&=\frac{1}{2}\ip{\gradf(X) }{\X -\Xo_r}  + \frac{1}{2} \ip{\gradf(X) }{\Delta \Delta^\top}, \label{proofsr1:eq_09_main}
\end{align} which follows by adding and subtracting $\tfrac{1}{2}\Xo_r$. 

\begin{itemize}
\item \textsc{Strongly convex $f$ setting.} For this case, the next 3 steps apply.
\end{itemize}

\noindent \textit{Step I: Bounding $\ip{\gradf(X) }{\X -\Xo_r}$.}
The first term in the above expression can be lower bounded using smoothness and strong convexity of $f$ and, involves a construction of a feasible point $\X$. 
We construct such a feasible point by modifying the current update to one with bigger step size $\weta$. 

\begin{lemma}\label{lem:gradX,X-X_r_ bound_sc}
Let $f$ be a $M$-smooth and $(m, r)$-restricted strongly convex function with optimum point $\Xo$. Moreover, let $\Xo_r$ be the best rank-$r$ approximation of $\Xo$. Let $\X = \U\U^\top$. Then,
\begin{align*}   
\ip{\gradf(\X)}{\X-\Xor}  \geq \tfrac{18\weta}{10} \|\gradf(X) U\|_F^2 +\tfrac{m}{2}\norm{\X - \Xor}_F^2 -\tfrac{M}{2}\|\Xo -\Xor\|_F^2,
\end{align*}
where  $\weta =\frac{1}{16 (M\|\X\|_2 + \|\gradf(\X)Q_U Q_U^\top\|_2)} \geq \tfrac{5\eta}{6}$, by Lemma \ref{lem:diff_eta}.
\end{lemma} 
Proof of this lemma is provided in Section~\ref{sec:supp_sc1}.

\medskip
\noindent \textit{Step II: Bounding $\ip{\gradf(X) }{\Delta \Delta^\top}$.}
The second term in equation~\eqref{proofsr1:eq_09_main} can actually be negative. 
Hence, we lower bound it using our initialization assumptions. Intuitively, the second term is smaller than the first one as it scales as $\dist(\U, \Uo_r)^2$, while the first term scales as $\dist(\U, \Uo_r)$. 
\begin{lemma}\label{lem:DD_bound_sc}
Let $f$ be $M$-smooth and $(m, r)$-restricted strongly convex. Then, under assumptions $(A2)$ and $(A4)$, 
the following bound holds true:
\begin{align*}
\ip{\gradf(X) }{ \Delta \Delta^\top} \geq - \tfrac{2\weta}{25 } \|\gradf(\X) U \|_F^2 - \left( \tfrac{m\sigma_{r}(\Xo)}{20} + M\|\Xo -\Xo_r\|_F\right) \cdot \dist(\U, \Uo_r)^2.
\end{align*} 
\end{lemma} 
Proof of this lemma can be found in Section~\ref{sec:supp_sc2}.

\medskip
\noindent \textit{Step III: Combining the bounds in equation~\eqref{proofsr1:eq_09_main}.} For a detailed description, see Section~\ref{sec:supp_sc3}.

\begin{itemize}
\item \textsc{Smooth $f$ setting.} For this case, the next 3 steps apply.
\end{itemize}

\medskip
\noindent \textit{Step I: Bounding $\ip{\gradf(X) }{\X -\Xo_r}$.} Similar to the strongly convex case, one can obtain a lower bound on $\ip{\gradf(X) }{\X -\Xo_r}$, according to the following Lemma:
\begin{lemma}
Let $f$ be a $M$-smooth convex function with optimum point $\Xo_r$. 
 Then, under the assumption that $f(\Xp) \geq f(\Xor)$, the following holds:
\begin{align*}
\ip{\gradf(\X)}{\X-\Xor}  \geq \tfrac{18\weta}{10} \|\gradf(X) U\|_F^2. 
\end{align*}
\end{lemma}
The proof of this lemma can be found in Appendix \ref{sec:smooth_case_proof}.

\medskip
\noindent \textit{Step II: Bounding $\ip{\gradf(X) }{\Delta \Delta^\top}$.} Here, we follow a different path in providing a lower bound for $\ip{\gradf(X) }{\Delta \Delta^\top}$. The following lemma provides such a lower bound.

\begin{lemma}
Let $\X = \U\U^\top$ and define $\Delta := \U - \Uorr$. Let $f(\Xp) \geq f(\Xor)$. 
Then, for $\dist(\U, \Uo_r) \leq \rho \sigma_r\left(\Uo_r\right)$, where $\rho=\tfrac{1}{100} \frac{\sigma_r(\Xo)}{\sigma_1(\Xo)}$, and $f$ being a $M$-smooth convex function, the following lower bound holds:
\begin{align*}
\ip{\gradf(X)}{\Delta \Delta^\top} \geq - \tfrac{\sqrt{2}}{\sqrt{2} - \tfrac{1}{100}} \cdot \tfrac{1}{100} \cdot \left|\ip{\gradf(X)}{\X -\Xor}\right|.
\end{align*}
\end{lemma}
The proof of this lemma can be found in Appendix \ref{sec:smooth_case_proof}.

\medskip
\noindent \textit{Step III: Combining the bounds in equation~\eqref{proofsr1:eq_09_main}.} For a detailed description, see Section \ref{sec:smooth_case_proof} and Lemma~\ref{lem:ipbound_case1}.

%

\end{proof}

\subsection{Proof of linear convergence (Theorem \ref{thm:convergence_main})}\label{sec:proofs_2}

The proof of this theorem involves showing that the potential function $\dist(U, \Uo_r)$ is decreasing per iteration (up to approximation error $||\Xo -\Xor||_F$), using the descent Lemma \ref{lem:gradU,U-U_r_ bound}. 
Using the algorithm's update rule, we obtain
\begin{align}
\dist(\Up, \Uo_r)^2 &= \min_{R:~R \in \mathcal{O}} \|\U - \Uo_rR\|_F^2 \nonumber \\ &\leq \norm{\Up -\Uorr }_F^2 \nonumber \\&= \norm{\Up -\U + \U -\Uorr }_F^2 \nonumber \\
&= \norm{\Up -\U}_F^2 + \norm{\U -\Uorr}_F^2 -2\ip{\Up -\U}{\Uorr -\U}, \label{proofs:eq_04}
\end{align}
which follows by adding and subtracting $\U$ and then expanding the squared term. 

\medskip
\noindent \textit{Step I: Bounding term  $\ip{\U - \Up }{\U - \Uor}$  in \eqref{proofs:eq_04}. }
By Lemma \ref{lem:gradU,U-U_r_ bound}, we can bound the last term on the right hand side as:
\begin{align*}
\ip{\gradf(X) \U}{\U -\Uorr} \geq  \tfrac{2}{3} \eta  \|\gradf(X) U\|_F^2 + \tfrac{3m}{20} \cdot  \sigma_{r}(\Xo)  \dist(\U, \Uo_r)^2- \tfrac{M}{4}\norm{\Xo -\Xor}_F^2.
\end{align*}
Furthermore, we can substitute $\Up$ in the first term to obtain $\|\Up - \U\|_F^2 = \eta^2 \|\gradf(\X)\U\|_F^2$. 

\medskip
\noindent \textit{Step II: Combining bounds into \eqref{proofs:eq_04}.}
Combining the above two equations \eqref{proofs:eq_04} becomes:
\begin{small}
\begin{align}
\dist(\Up, \Uo_r)^2 &\leq \eta^2 \|\gradf(X) \U \|_F^2 + \norm{\U -\Uorr}_F^2 \nonumber \\
							  &\quad \quad \quad \quad \quad \quad \quad \quad - 2 \eta \left( \tfrac{2}{3} \eta  \|\gradf(X) U\|_F^2 + \tfrac{3m}{20} \cdot  \sigma_{r}(\Xo)  \dist(\U, \Uo_r)^2 - \tfrac{M}{4}\norm{\Xo -\Xor}_F^2
 \right) \nonumber \\
&= \norm{\U -\Uorr}_F^2 +  \tfrac{\eta M}{2}\norm{\Xo -\Xor}_F^2 +\eta^2 \underbrace{\left(\|\gradf(X) \U\|_F^2 - \tfrac{4}{3} \norm{\gradf(X) \U}_F^2 \right)}_{\leq 0} \nonumber \\ 
							  &\quad \quad \quad \quad \quad \quad \quad \quad -\tfrac{3m \eta}{10} \cdot  \sigma_{r}(\Xo)  \dist(\U, \Uo_r)^2  \nonumber \\
&\stackrel{(i)}{\leq} \norm{\U -\Uorr}_F^2 + \tfrac{\eta M}{2}\norm{\Xo -\Xor}_F^2  -\tfrac{3m \eta}{10} \cdot  \sigma_{r}(\Xo) \dist(\U, \Uo_r)^2 \nonumber \\
&= \left(1 - \tfrac{3m \eta}{10}\cdot \sigma_r(\Xo)\right)\cdot \dist(\U, \Uo_r)^2 + \tfrac{\eta M}{2}\norm{\Xo -\Xor}_F^2  \nonumber \\
&\stackrel{(ii)}{\leq} \left(1 - \tfrac{3m}{10} \cdot \tfrac{10 \eta^\star}{11} \cdot \sigma_r(\Xo)\right)\cdot \dist(\U, \Uo_r)^2 + M\cdot \tfrac{11 \eta^\star}{20} \norm{\Xo -\Xor}_F^2  \nonumber \\
&\leq  \left(1 -\tfrac{m \eta^\star}{4} \cdot \sigma_{r}(\Xo) \right) \dist(\U, \Uo_r)^2 +  \tfrac{11M\eta^\star}{20}\norm{\Xo -\Xor}_F^2 \nonumber \\
&\stackrel{(iii)}{\leq}\left( 1 -\tfrac{m \sigma_{r}(\Xo)}{64 (M\|\Xo\|_2 + \|\gradf(\Xo)\|_2)}\right)  \dist(\U , \Uo_r)^2 \nonumber \\ 
								&\quad \quad \quad \quad \quad \quad \quad \quad+  \tfrac{ M}{28(M\|\Xo\|_2 + \|\gradf(\Xo)\|_2)}\norm{\Xo -\Xor}_F^2, \nonumber
\end{align}
\end{small}
where $(i)$ is due to removing the negative part from the right hand side, $(ii)$ is due to $\tfrac{10}{11}\eta^\star \leq \eta \leq \tfrac{11}{10} \eta^\star$ by Lemma \ref{lem:diff_eta}, $(iii)$ follows from substituting $\eta^\star =\tfrac{1}{16 (M\|\Xo\|_2 +  \|\gradf(\Xo)\|_2)}$. 
This proves the first part of the theorem.

\medskip
\noindent \textit{Step III: $\Up$ satisfies the initial condition.}
Now we will prove the second part. By the above equation, we have:
\begin{align*}
\dist(\Up, \Uo_r)^2 &\leq \left(1 -\tfrac{m \eta^\star}{4} \cdot \sigma_{r}(\Xo) \right) \dist(\U, \Uo_r)^2 +  \tfrac{11M\eta^\star}{20}\norm{\Xo -\Xor}_F^2 \nonumber \\
&\stackrel{(i)}{\leq}  \left(1 -\tfrac{m \eta^\star}{4} \cdot \sigma_{r}(\Xo) \right) (\rho')^2 \sigma_r(\Xo) +  \tfrac{11M\eta^\star}{20} \tfrac{(\rho')^2}{4\kappa}\sigma_r^2(\Xo)   \nonumber \\
&=( \rho')^2 \sigma_r(\Xo) \left( 1 -  \tfrac{m \eta^\star}{4} \cdot \sigma_{r}(\Xo) + \tfrac{11M\eta^\star}{80\kappa } \cdot \sigma_r(\Xo) \right) \\
&\leq (\rho')^2 \sigma_r(\Xo) \left( 1 -  \tfrac{m \eta^\star}{4} \cdot \sigma_{r}(\Xo) + \tfrac{m\eta^\star}{7} \sigma_r(\Xo)\right) \\
&\leq (\rho')^2 \sigma_r(\Xo).
\end{align*}
$(i)$ follows from substituting the assumptions on $\dist(\U, \Uo_r)$ and $\|\Xo -\Xo_r\|_F$ and the last inequality is due to the term in the parenthesis being less than one.

\subsection{Proof of sublinear convergence (Theorem \ref{thm:smooth_inexact})}\label{sec:proofs_3}

Here, we show convergence of \algo when $f$ is only a $M$-smooth convex function.  
At iterate $k$, we assume $f(X^k) > f(\Xo_r)$; in the opposite case, the bound follows trivially. 
Recall the updates of \algo over the $\U$-space satisfy $$\Up =\U - \eta \gradf(\X) \U.$$ 
It is easy to verify that $\Xp = \Up\left(\Up\right)^\top = \X - \eta \gradf(\X) \X \Lambda - \eta \Lambda^\top \X \gradf(\X)$, where $\Lambda = I - \tfrac{\eta}{2} Q_U Q_U^\top \gradf(X) \in \R^{n \times n}$. Notice that for step size $\eta$, using Lemma ~A.5 we get, 
\begin{align}{\label{eq:Lambda_stuff}}
\Lambda \succ 0, \quad  \|\Lambda\|_2 \leq  1+ \sfrac{1}{32} \quad \text{and} \quad \sigma_{n}(\Lambda) \geq  1- \sfrac{1}{32}.
\end{align} 
Our proof proceeds using the smoothness condition on $f$, at point $\Xp$. In particular,
\begin{align*}
f(\Xp) &\leq f(\X) +\ip{\gradf(\X)}{\Xp -\X} + \tfrac{M}{2} \|\Xp -\X\|_F^2 \\
&\stackrel{(i)}{\leq}  f(\X) - 2 \eta \cdot \sigma_{n} (\Lambda) \cdot \|\gradf(\X) U\|_F^2 + 2M\eta^2 \cdot \|\gradf(\X) U\|_F^2 \cdot \|\X\|_2 \cdot \|\Lambda\|_2^2\\ 
&\stackrel{(ii)}{\leq} f(\X) - \tfrac{\eta \cdot 62}{32} \cdot \|\gradf(\X) U\|_F^2 + \tfrac{\eta}{7} \cdot  \left(\tfrac{33}{32}\right)^2 \cdot \|\gradf(\X) U\|_F^2\\
 &\leq f(\X)  -  \tfrac{17\eta}{10} \|\gradf(\X) U\|_F^2,
\end{align*} 
where $(i)$ follows from symmetry of $\gradf(X)$, $\X$ and 
\begin{align}
\trace(\gradf(X)\gradf(X)X\Lambda) &= \trace(\gradf(X)\gradf(X)UU^\top) -\frac{\eta}{2}\trace(\gradf(X)\gradf(X)UU^\top \gradf(X)) \nonumber \\ &\geq (1 -\frac{\eta}{2} \|Q_U Q_U^\top\gradf(X)\|_2) \|\gradf(\X) U\|_F^2  \nonumber \\ &\geq  (1- \sfrac{1}{32}) \|\gradf(\X) U\|_F^2,
\end{align} and $(ii)$ is due to \eqref{eq:Lambda_stuff} and the fact that  $\eta \leq \tfrac{1}{16M\|\X^0\|_2} \leq \tfrac{1}{14M\|\X\|_2}$ (see Lemma~ A.5). Hence,
\begin{equation}\label{proofs_eq:jc1} 
f(\Xp) - f(\Xo_r) \leq f(\X) - f(\Xo_r) - \tfrac{18\eta}{10} \|\gradf(\X) U\|_F^2.
\end{equation}

To bound the term $f(\X) - f(\Xo_r) $ on the right hand side of \eqref{proofs_eq:jc1}, we use standard convexity as follows: \begin{align}
f(X) &- \f(\Xor) \leq  \ip{\gradf(\X)}{\X -\Xor} \nonumber \\
&\stackrel{(i)}{=} 2\ip{\gradf(\X)}{UU^\top - \Uorr U^\top} - \ip{\gradf(\X)}{UU^\top + \Uorr (\Uorr)^\top- 2\Uorr U^\top}  \nonumber \\
&= 2\ip{\gradf(\X) U}{U - \Uorr} - \ip{\gradf(\X)}{(U -\Uorr) (U -\Uorr)^\top} \nonumber \\
&\stackrel{(ii)}{=} 2\ip{\gradf(\X) U}{\Delta} -\ip{\gradf(\X)}{\Delta \Delta^\top} \nonumber \\ 
&\leq 2\ip{\gradf(\X) U}{\Delta} + \left|\ip{\gradf(\X)}{\Delta \Delta^\top}\right| \nonumber \\ 
&\stackrel{(iii)}{\leq} 2\cdot\|\gradf(\X) U\|_F \cdot \dist(\U, \Uo_r) + \tfrac{1}{40} \|\gradf(X)U\|_2 \cdot \dist(\U, \Uo_r) \nonumber \\ 
&\stackrel{(iv)}{\leq} \tfrac{5}{2} \|\gradf(\X) U\|_F \cdot \dist(\U, \Uo_r), \label{proofs_eq:jc2}
\end{align} 
where $(i)$ is due to $\X = \U\U^\top$ and $\Xor = \Uorr \left(\Uorr\right)^\top$ for orthonormal matrix $R_{\U}^\star \in \R^{r \times r}$, 
$(ii)$ is by $\Delta := U - \Uorr$, 
$(iii)$ is due to Cauchy-Schwarz inequality and Lemma \ref{lem:DD_bound} and, 
$(iv)$ is due to norm ordering $\|\cdot\|_2 \leq \|\cdot\|_F$. 

From \eqref{proofs_eq:jc2}, we obtain to the following bound:
\begin{align}\label{proofs_eq:jc3}
\|\gradf(\X) U\|_F \geq \tfrac{2}{5} \cdot \tfrac{f(\X) - f(\Xor)}{\dist(U, \Uo_r)}.
\end{align} 
Define $\cdiff = f(X) - f(\Xor)$ and $\cdiffp =f(\Xp) -\f(\Xor)$. 
Moreover, by Lemma~\ref{lem:gradf,D_bound1}, we know that $\dist(\U, \Uo_r) \leq \dist(U^0, \Uo_r)$ for all iterations of \textsc{FGD}; 
thus, we have $\tfrac{1}{\dist(U, \Uo_r)} \geq \tfrac{1}{\dist(U^0, \Uo_r)}$ for every update $U$. Using the above definitions and substituting~\eqref{proofs_eq:jc3} in~\eqref{proofs_eq:jc1}, we obtain the following recursion:
\begin{align*}
\cdiffp \leq \cdiff - \tfrac{17{\eta}}{10} \cdot \left(\tfrac{2}{5}\right)^2 \cdot \left(\frac{\cdiff}{\|\Delta\|_F} \right)^2 \leq \cdiff - \tfrac{{\eta}}{5\cdot \dist(U^0, \Uo_r)^2} \cdot \cdiff^2 \Longrightarrow \cdiffp \leq \cdiff\left(1 - \tfrac{{\eta}}{5\cdot \dist(U^0, \Uo_r)^2} \cdot \cdiff\right), \nonumber
\end{align*} which can be further transformed as:
\begin{align*}
\frac{\left(1 - \tfrac{{\eta}}{5\cdot \dist(U^0, \Uo_r)^2} \cdot \cdiff\right)}{\cdiffp} \geq \frac{1}{\cdiff} \Longrightarrow \frac{1}{\cdiffp} \geq \frac{1}{\cdiff} + \tfrac{{\eta}}{5\cdot\dist(U^0, \Uo_r)^2} \cdot \frac{\delta}{\cdiffp} \geq \frac{1}{\cdiff} + \tfrac{{\eta}}{5\cdot \dist(U^0, \Uo_r)^2} 
\end{align*} since $\cdiffp \leq \cdiff$ from equation~\eqref{proofs_eq:jc1}.
Since each $\cdiff$ and $\cdiffp$ correspond to previous and new estimate in \textsc{FGD} per iteration, we can sum up the above inequalities over $k$ iterations to obtain
\begin{align*}
\frac{1}{\cdiff^k} \geq \frac{1}{\cdiff^0} + \tfrac{{\eta}}{5\cdot \dist(U^0, \Uo_r)^2} \cdot k;
\end{align*} here, $\cdiff^k :=  f(\X^k) - f(\Xor)$ and $\cdiff^0 :=  f(\X^0) - f(\Xor)$. 
After simple transformations, we finally obtain\footnote{One can further obtain a bound on the right hand side that depends on $\eta^\star =\frac{1}{16 \, (M \norm{\Xo }_2 + \norm{\gradf(\Xo)}_2) }$. By Lemma \ref{lem:diff_eta}, we know $\eta \geq \tfrac{10}{11}\eta^\star$. Thus, the current proof leads to the bound:
\begin{align*}
f(\X^k) - f(\Xor) \leq \frac{\tfrac{6}{ \eta^\star} \cdot \dist(\U^0, \Uo_r)^2}{k + \tfrac{6}{ \eta^\star} \cdot \tfrac{\dist(\U^0, \Uo_r)^2}{f(\X^0) - f(\Xor)}}.
\end{align*} 
}:
\begin{align*}
f(\X^k) - f(\Xor) \leq \frac{\tfrac{5}{ \eta} \cdot \dist(\U^0, \Uo_r)^2}{k + \tfrac{5}{\eta} \cdot \tfrac{\dist(\U^0, \Uo_r)^2}{f(\X^0) - f(\Xor)}}.
\end{align*} 
This finishes the proof.


\section{Related work}{\label{sec:related}}

\noindent \textbf{Convex approaches.} 
A significant volume of work has focused on solving the classic Semi-Definite Programming (SDP) formulation, where the {\it objective $f$ (as well as any additional convex constraints) is assumed to be  linear}. There, interior point methods (IPMs) constitute a popular choice for small- and moderate-sized problems; see \cite{karmarkar1984new, alizadeh1995interior}. For a comprehensive treatment of this subject, see the excellent survey in \cite{monteiro2003first}.

Large scale SDPs pointed research towards first-order approaches, which are more computationally appealing. 
For linear $f$, we note among others the work of \cite{wen2010alternating}, a provably convergent alternating direction augmented Lagrangian algorithm, and that of Helmberg and Rendl \cite{helmberg2000spectral}, where they develop an efficient first-order spectral bundle method for SDPs with the constant trace property; see also \cite{helmberg2014spectral} for extensions on this line of work. 
In both cases, no convergence rate guarantees are provided;
see also \cite{monteiro2003first}. For completeness, we also mention the work of \cite{burer2003semidefinite, fukuda2001exploiting, nakata2003exploiting, toh2004solving} on \emph{second-order} methods, that take advantage of data sparsity in order to handle large SDPs in a more efficient way. However, it turns out that the amount of computations required per iteration is comparable to that of log-barrier IPMs \cite{monteiro2003first}. 

Standard SDPs have also found application in the field of combinatorial optimization; there, in most cases, even a rough approximation to the discrete problem, via SDP, is sufficiently accurate and computationally affordable, than exhaustive combinatorial algorithms. Goemans and Williamson \cite{goemans1995improved} were the first to propose the use of SDPs in approximating graph \textsc{Max Cut}, where a near-optimum solution can be found in polynomial time. \cite{klein1996efficient} propose an alternative approach for solving \textsc{Max Cut} and \textsc{Graph Coloring} instances, where SDPs are transformed into eigenvalue problems. Then, power method iterations lead to $\varepsilon$-approximate solutions; however, the resulting running-time dependence on $\varepsilon$ is worse, compared to standard IPMs. Arora, Hazan and Kale in \cite{arora2005fast} derive an algorithm to approximate SDPs, as a hybrid of the Multiplicative Weights Update method and of ideas originating from an ellipsoid variant \cite{vaidya1989new}, improving upon existing algorithms for graph partitioning, computational biology and metric embedding problems.\footnote{The algorithm in \cite{arora2005fast} shows significant computational gains over standard IPMs per iteration, due to requiring only a power method calculation per iteration (versus a Cholesky factorization per iteration, in the latter case). However, the polynomial dependence on the accuracy parameter $\tfrac{1}{\varepsilon}$ is worse, compared to IPMs. Improvements upon this matter can be found in \cite{arora2007combinatorial} where a primal-dual Multiplicative Weights Update scheme is proposed.}

Extending to {\it non-linear convex $f$} cases, \cite{nesterov1988general, nesterov1989self} have shown how IPMs can be generalized to solve instances of \eqref{intro:eq_00}, via the notion of self-concordance; see also \cite{lee2012proximal, trandihn15a} for a more recent line of work. 
Within the class of first-order methods, approaches for nonlinear convex $f$ include, among others, projected and proximal gradient descent methods \cite{nesterov2004introductory, trandihn15a, jiang2012inexact}, (smoothed) dual ascent methods \cite{nesterov2007smoothing}, as well as Frank-Wolfe algorithm variants \cite{jaggi2011convex}. Note that all these schemes, often require heavy calculations, such as eigenvalue decompositions, to compute the updates (often, to remain within the feasible set).\\

\noindent \textbf{Burer \& Monteiro factorization and related work.}
Burer and Monteiro \cite{burer2003nonlinear, burer2005local} popularized the idea of solving classic SDPs by representing the solution as a product of two factor matrices. The main idea in such representation is to remove the positive semi-definite constraint by directly embedding it into the objective. While the problem becomes non-convex, Burer and Monteiro propose a method-of-multiplier type of algorithm which iteratively updates the factors in an alternating fashion. For linear objective $f$, they establish convergence guarantees to the optimum but do not provide convergence rates. 

For generic smooth convex functions, Hazan in \cite{hazan2008sparse} proposes \textsc{SparseApproxSDP} algorithm,\footnote{Sparsity here corresponds to low-rankness of the solution, as in the Cholesky factorization representation. Moreover, inspired by Quantum State Tomography applications \cite{aaronson2007learnability}, \textsc{SparseApproxSDP} can also handle constant trace constraints, in addition to PSD ones. } a generalization of the Frank-Wolfe algorithm for the vector case \cite{clarkson2010coresets}, where putative solutions are refined by rank-1 approximations of the gradient. At the $r$-th iteration, \textsc{SparseApproxSDP} is guaranteed to compute a $\tfrac{1}{r}$-approximate solution, with rank at most $r$, \textit{i.e.}, achieves a sublinear $O\left(\tfrac{1}{\varepsilon}\right)$ convergence rate. However, depending on $\varepsilon$, \textsc{SparseApproxSDP} is not guaranteed to return a low rank solution unlike \algo. Application of these ideas in machine learning tasks can be found in \cite{shalev2011large}. Based on \textsc{SparseApproxSDP} algorithm, \cite{laue2012hybrid} further introduces ``de-bias'' steps in order to optimize parameters in \textsc{SparseApproxSDP} and do local refinements of putative solutions via L-BFGS steps. Nevertheless, the resulting convergence rate is still sublinear.\footnote{For running time comparisons with \algo see Sections~\ref{sec:QST} and \ref{sec:high_rank}.} 

Specialized algorithms -- for objectives beyond the linear case -- that utilize such factorization include matrix completion /sensing solvers \cite{jain2013low, sun2014guaranteed, zheng2015convergent, tu2015low}, non-negative matrix factorization schemes \cite{lee2001algorithms}, phase retrieval methods \cite{netrapalli2013phase, candes2015phase1} and sparse PCA algorithms \cite{laue2012hybrid}. Most of these results guarantee linear convergence for various algorithms on the factored space starting from a ``good'' initialization. They also present a simple spectral method to compute such an initialization. For the matrix completion /sensing setting, \cite{sa2015global} have shown that stochastic gradient descent achieves global convergence at a sublinear rate. Note that these results only apply to quadratic loss objectives and not to generic convex functions $f$.\footnote{We recently became aware of the extension of the work \cite{tu2015low} for the non-square case $\X = \U\V^\top$.} 
\cite{jain2015computing} consider the problem of computing the matrix square-root of a PSD matrix via gradient descent on the factored space: in this case, the objective $f$ boils down to minimizing the standard squared Euclidean norm distance between two matrices. 
Surprisingly, the authors show that, given an initial point that is well-conditioned, the proposed scheme is guaranteed to find an $\varepsilon$-accurate solution with linear convergence rate.

\cite{chen2015fast} propose a first-order optimization framework for the problem \eqref{intro:eq_00}, where the same parametrization technique is used to efficiently accommodate the PSD constraint.\footnote{In this work, the authors further assume orthogonality of columns in $\U$.} Moreover, the proposed algorithmic solution can accommodate extra constraints on $\X$.\footnote{Though, additional constraints should satisfy the $\Xo$-\textit{faithfulness} property: a constraint set on $\U$, say $\mathcal{U}$, is faithful if for each $\U \in \mathcal{U}$, that is within some bounded radius from optimal point, we are guaranteed that the closest (in the Euclidean sense) rotation of $\Uo$ lies within $\mathcal{U}$.} 
The set of assumptions listed in \cite{chen2015fast} include---apart from $\Xo$-faithfulness---local descent, local Lipschitz and local smoothness conditions \emph{in the factored space}. \emph{E.g.}, the local descent condition can be established if $g(\U) := f(\U\U^\top)$ is locally strongly convex and $\nabla g(\cdot)$ at an optimum point vanishes. They also require bounded gradients as their step size doesn't account for the modified curvature of $f(UU^\top)$. \footnote{One can define non-trivially conditions on the original space; we defer the reader to \cite{chen2015fast}}
These conditions are less standard than the global assumptions of the current work and one needs to validate that they are satisfied for each problem, separately. 
\cite{chen2015fast} presents some applications where these conditions are indeed satisfied.
Their results are of the same flavor with ours: under such proper assumptions, one can prove local convergence with $O(1/\varepsilon)$ or  $O(\log(1/\varepsilon))$ rate and for $f$ instances that even fail to be locally convex.

Finally, for completeness, we also mention optimization over the Grassmannian manifold that admits tailored solvers~\cite{edelman1998geometry}; see \cite{keshavan2010matrix, boumal2014optimization, boumal2015riemannian, zhang2015global, uschmajewgreedy} for applications in matrix completion and references therein. \cite{journee2010low} presents a second-order method for \eqref{intro:eq_00}, based on manifold optimization over the set of all equivalence class $\mathcal{O}$. The proposed algorithm can additionally accommodate constraints and enjoys monotonic decrease of the objective function (in contrast to \cite{burer2003nonlinear, burer2005local}), featuring quadratic local convergence. 
In practice, the per iteration complexity is dominated by the extraction of the eigenvector, corresponding to the smallest eigenvalue, of a $n \times n$ matrix---and only when the current estimate of rank satisfies some conditions.

Table \ref{table:algo_comp_summary} summarizes the comparison of the most relevant work to ours, for the case of matrix factorization techniques.

\begin{table*}[!htb]
\centering
\begin{footnotesize}
\begin{tabular}{c c c c c c c}
  \toprule
  Reference & & Conv. rate & & Initialization & & Output rank \\ 
  \cmidrule{1-1} \cmidrule{3-3} \cmidrule{5-5} \cmidrule{7-7} 
      \cite{hazan2008sparse} & & $\sfrac{1}{\varepsilon}$ (Smooth $f$) & & $\X^0 = 0$ & & $\sfrac{1}{\varepsilon}$ \\      
      \cite{laue2012hybrid} & & $\sfrac{1}{\varepsilon}$ (Smooth $f$) & & $\X^0 = 0$ & & $\sfrac{1}{\varepsilon}$ \\
      \cite{chen2015fast} & & $\sfrac{1}{\varepsilon}$ (Local Asm.) & & Application dependent & & $r$ \\ 
      \cite{chen2015fast} & & $\log(\sfrac{1}{\varepsilon})$ (Local Asm.)& & Application dependent  & &  $r$ \\ 
  \midrule
        This work & & $\sfrac{1}{\varepsilon}$ (Smooth $f$) & & SVD $/$ top-$r$ & & $r$ \\      
        This work & & $\log(\sfrac{1}{\varepsilon})$ (Smooth, RSC $f$)& & SVD $/$ top-$r$ & & $r$ \\      
  \bottomrule
\end{tabular}
\end{footnotesize}
\caption{Summary of selected results on solving variants of \eqref{intro:eq_00} via matrix factorization. ``Conv. rate" describes the number of iterations required to achieve $\varepsilon$ accuracy. ``Initialization'' describes the process for starting point computation. ``SVD" stands for singular value decomposition and ``top-$r$" denotes that a rank-$r$ decomposition is computed. For the case of \cite{chen2015fast}, ``Local Asm.'' refer to specific assumptions made on the $U$-space; we refer the reader to the footnote for a short description. ``Output rank'' denotes the maximum rank of solution returned for $\varepsilon$-accuracy. 
} \label{table:algo_comp_summary} 
\end{table*}



%

%
%
%
%
%
%
\section{Conclusion}
In this paper, we focus on how to efficiently minimize a convex function $f$ over the positive semi-definite cone. Inspired by the seminal work \cite{burer2003nonlinear, burer2005local}, we drop convexity by factorizing the optimization variable $\X = \U\U^\top$ and show that \emph{factored gradient descent} with a non-trivial step size selection results in linear convergence when $f$ is smooth and (restricted) strongly convex, even though the problem is now non-convex. In the case where $f$ is only smooth, only sublinear rate is guaranteed. In addition, we present initialization schemes that use only first order information and guarantee to find a starting point with small relative distance from optimum. 

There are many possible directions for future work, extending the idea of using non-convex formulation for semi-definite optimization. Showing convergence under weaker initialization condition or without any initialization requirement is definitely of great interest. Another interesting direction is to improve the convergence rates presented in this work, by using acceleration techniques and thus, extend ideas used in the case of convex gradient descent \cite{nesterov2004introductory}. Finally, it would be valuable to see how the techniques presented in this paper can be generalized to other standard algorithms like stochastic gradient descent and coordinate descent. 

Furthermore, we identify applications, such as sparse PCA \cite{vu2013minimax, asteris2015sparse}, that require non-smooth constraints on the factors $U$. That being said, an extension of this work to proximal techniques for the non-convex case is a very interesting future research direction.

\bibliographystyle{plain}
\bibliography{gradient_matrix}

\appendix
\section{Supporting lemmata}

\begin{lemma}[Hoffman, Wielandt~\cite{bhatia1987perturbation}]
\label{lem:lowtrace}
Let $A$ and $B$ be two PSD $n \times n$ matrices. Also let $A$ be full rank. 
Then,
\begin{equation}\label{eq:lowtrace} 
\trace(AB) \geq \sigma_{\min}(A) \trace(B). 
\end{equation}
\end{lemma}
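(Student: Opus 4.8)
The plan is to reduce the statement to the elementary fact that the trace of a product of two positive semi-definite matrices is nonnegative, applied to a suitably shifted version of $A$. Since $A$ is PSD and full rank, in the paper's convention $\sigma_{\min}(A) = \lambda_{\min}(A) > 0$ is its smallest eigenvalue. The key observation is that $A - \sigma_{\min}(A)\, I \succeq 0$: indeed, every eigenvalue of $A$ is at least $\sigma_{\min}(A)$, so every eigenvalue of $A - \sigma_{\min}(A)\, I$ is nonnegative.

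First I would record the auxiliary fact: if $P \succeq 0$ and $Q \succeq 0$, then $\trace(PQ) \geq 0$. This follows by writing $P = P^{1/2} P^{1/2}$ with $P^{1/2} \succeq 0$, so that $\trace(PQ) = \trace\!\big(P^{1/2} Q P^{1/2}\big)$, and $P^{1/2} Q P^{1/2} \succeq 0$ has nonnegative trace (its eigenvalues, equivalently diagonal entries in an eigenbasis, are nonnegative). Next I would apply this with $P = A - \sigma_{\min}(A)\, I \succeq 0$ and $Q = B \succeq 0$, obtaining
\begin{equation*}
0 \;\leq\; \trace\!\big((A - \sigma_{\min}(A)\, I)\, B\big) \;=\; \trace(AB) - \sigma_{\min}(A)\, \trace(B),
\end{equation*}
by linearity of the trace. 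Rearranging yields $\trace(AB) \geq \sigma_{\min}(A)\, \trace(B)$, which is exactly \eqref{eq:lowtrace}.

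There is essentially no obstacle here: the only point requiring a line of justification is the nonnegativity of $\trace(PQ)$ for PSD $P, Q$, and the full-rank hypothesis on $A$ is used only to guarantee that $\sigma_{\min}(A)$ (as the smallest \emph{strictly positive} singular value in the paper's convention) coincides with $\lambda_{\min}(A)$, so that the shift $A - \sigma_{\min}(A)\, I$ is indeed PSD. If one wished, an alternative route is to diagonalize $A = \sum_i \lambda_i v_i v_i^\top$ and write $\trace(AB) = \sum_i \lambda_i\, v_i^\top B v_i \geq \sigma_{\min}(A) \sum_i v_i^\top B v_i = \sigma_{\min}(A)\, \trace(B)$, using $v_i^\top B v_i \geq 0$; this is the same argument expressed spectrally.
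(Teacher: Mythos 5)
Your proof is correct and complete. Note that the paper itself does not prove this lemma at all: it is stated as a known result (Hoffman--Wielandt) with a citation to the literature, so there is no in-paper argument to compare against. Your argument is the standard elementary one and is sound: the full-rank hypothesis together with the paper's convention ($\sigma_{\min}$ being the smallest strictly positive singular value) guarantees $\sigma_{\min}(A)=\lambda_{\min}(A)>0$, hence $A-\sigma_{\min}(A)I\succeq 0$; combining this with the nonnegativity of $\trace(PQ)$ for PSD $P,Q$ (which you justify via $P^{1/2}QP^{1/2}\succeq 0$) and linearity of the trace gives exactly \eqref{eq:lowtrace}. The spectral variant you sketch at the end is the same argument in eigenbasis form. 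Either version would serve as a self-contained replacement for the citation.
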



The following lemma shows that $\dist$, in the factor $U$ space, upper bounds the Frobenius norm distance in the matrix $X$ space.
\begin{lemma}\label{lem:X_Xr_bound}
Let $\X = \U\U^\top$ and $\Xor = \Uo_r \U_r^{\star\top}$ be two $n \times n$ rank-$r$ PSD matrices. 
Let $\dist(\U, \Uo_r)  \leq \rho \sigma_{r}(\Uo_r)$, for some rotation matrix $R_U^\star$ and constant $\rho > 0$. 
Then,
\begin{align*}
\|\X -\Xor\|_F \leq (2 + \rho) \rho \cdot  \|\Uo_r\|_2 \cdot \sigma_r(\Uo_r).
\end{align*}
\end{lemma}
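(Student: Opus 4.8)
The plan is to absorb the optimal rotation into $\Uo_r$ and then reduce the statement to the triangle inequality for the Frobenius norm. Let $R_U^\star$ be a minimizer in Definition~\ref{prelim:def_04}, so that $\dist(\U,\Uo_r) = \norm{\U - \Uo_r R_U^\star}_F$, and set $\Delta := \U - \Uo_r R_U^\star$; by hypothesis $\norm{\Delta}_F \le \rho\,\sigma_r(\Uo_r)$. Since $R_U^\star$ is $r\times r$ with $R_U^{\star\top}R_U^\star = I$, it is orthogonal, hence $R_U^\star R_U^{\star\top} = I$ as well; writing $V := \Uo_r R_U^\star$ we therefore have $VV^\top = \Uo_r(\Uo_r)^\top = \Xor$ and $\norm{V}_2 = \norm{\Uo_r}_2$.

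First I would expand the difference in the matrix space using $\U = V + \Delta$:
\[
\X - \Xor \;=\; (V+\Delta)(V+\Delta)^\top - VV^\top \;=\; \Delta V^\top + V\Delta^\top + \Delta\Delta^\top .
\]
Then, applying the triangle inequality together with the submultiplicative bound $\norm{AB}_F \le \norm{A}_F\norm{B}_2$, the three terms are controlled by $\norm{\Delta}_F\norm{V}_2$, $\norm{\Delta}_F\norm{V}_2$, and $\norm{\Delta}_F^2$ respectively, which gives $\norm{\X-\Xor}_F \le 2\norm{\Delta}_F\norm{V}_2 + \norm{\Delta}_F^2$.

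Finally I would substitute the bounds: for the cross terms, $2\norm{\Delta}_F\norm{V}_2 \le 2\rho\,\sigma_r(\Uo_r)\norm{\Uo_r}_2$; for the quadratic term, $\norm{\Delta}_F^2 \le \rho\,\sigma_r(\Uo_r)\cdot\norm{\Delta}_F \le \rho^2\sigma_r(\Uo_r)^2 \le \rho^2\,\sigma_r(\Uo_r)\norm{\Uo_r}_2$, where the last step uses $\sigma_r(\Uo_r) \le \sigma_1(\Uo_r) = \norm{\Uo_r}_2$. Adding the two contributions yields $\norm{\X-\Xor}_F \le (2+\rho)\rho\,\norm{\Uo_r}_2\,\sigma_r(\Uo_r)$, which is the claim.

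There is no genuine obstacle here: once the optimal rotation is folded into $\Uo_r$, the lemma is purely an exercise in norm bookkeeping. The only point requiring mild care is tracking which factor carries the spectral norm and which carries the Frobenius norm in the two cross terms and in $\Delta\Delta^\top$, and recalling that $\sigma_r(\Uo_r)\le\norm{\Uo_r}_2$ so that the quadratic term can be absorbed into the same product $\norm{\Uo_r}_2\,\sigma_r(\Uo_r)$ that multiplies the linear terms.
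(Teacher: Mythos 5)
Your proof is correct and follows essentially the same route as the paper's: expand $\X - \Xor$ in terms of $\Delta = \U - \Uo_r R_U^\star$, apply the triangle inequality with $\|AB\|_F \leq \|A\|_F \|B\|_2$, and invoke the hypothesis $\|\Delta\|_F \leq \rho\,\sigma_r(\Uo_r)$. The only cosmetic difference is that the paper telescopes into the two terms $\Delta \U^\top + \Uo_r R_U^\star \Delta^\top$ and bounds $\|\U\|_2 \leq (1+\rho)\|\Uo_r\|_2$, whereas you expand symmetrically into $\Delta V^\top + V\Delta^\top + \Delta\Delta^\top$ and absorb the quadratic term via $\sigma_r(\Uo_r) \leq \|\Uo_r\|_2$; both yield the identical constant $(2+\rho)\rho$.
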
 

\begin{proof}
By substituting $\X = \U\U^\top$ and $\Xor = \Uo_r\U_r^{\star\top}$ in $\|\X -\Xor\|_F$, we have:
\begin{align*}
\|\X -\Xor\|_F &= \|\U\U^\top - \Uo_r\U_r^{\star\top}\|_F \\ 
				   &\stackrel{(i)}{=} \| \U \U^\top - \Uorr\U^\top + \Uorr \U^\top -\Uorr (\Uorr)^\top\|_F\\
				   &\stackrel{(ii)}{\leq} \dist(\U, \Uo_r) \cdot \|\U\|_2 + \dist(\U, \Uo_r) \cdot \|\Uo_r\|_2 \\
				   &\stackrel{(iii)}{\leq}  (1 + \rho) \|\Uo\|_2 \cdot \dist(\U, \Uo_r) + \dist(\U, \Uo_r) \cdot \|\Uo_r\|_2 \\
				   &= (2+\rho) \cdot \dist(\U, \Uo_r) \cdot \|\Uo_r\|_2 \\
				   &\stackrel{(iv)}{\leq} (2+\rho)\rho \cdot \|\Uo_r\|_2 \cdot \sigma_r(\Uo_r)
\end{align*} where $(i)$ is due to the orthogonality $R_U^{\star\top} R_U^\star = I_{r \times r}$, 
$(ii)$ is due to the triangle inequality, the Cauchy-Schwarz inequality and the fact that spectral norm is invariant w.r.t. orthogonal transformations and, $(iii)$ is due to the following sequence of inequalities, based on the hypothesis of the lemma:
\begin{align*}
\|\U\|_2	- \|\Uo_r\|_2 \leq \|\U - \Uorr\|_2 \leq \dist(\U, \Uo_r) \leq \rho \sigma_{r}(\Uo_r)
\end{align*} and thus $\|\U\|_2 \leq (1 + \rho)\cdot \|\Uo_r\|_2$. The final inequality $(iv)$ follows from the hypothesis of the lemma.
\end{proof}


The following lemma connects the spectrum of $U$ to $\Uo_r$ under the initialization assumptions.

\begin{lemma}\label{lem:sigma_bounds}
Let $\U$ and $\Uo_r$ be $n \times r$ matrices such that $\dist(\U, \Uo_r) \leq  \rho \sigma_r\left(\Uo_r\right)$, for $\rho=\tfrac{1}{100} \tfrac{\sigma_r(\Xo)}{\sigma_1(\Xo)}$. 
Withal, define $\Xo_r = \Uo_r\U_r^{\star\top}$. 
Then, the following bounds hold true: 
\begin{equation*}
\left(1-\sfrac{1}{100}\right) \sigma_1(\Uo_r) \leq \sigma_1(\U) \leq \left(1+\sfrac{1}{100}\right) \sigma_1(\Uo_r), 
\end{equation*}
\begin{equation*}
\left(1-\sfrac{1}{100}\right) \sigma_r(\Uo_r) \leq \sigma_r(\U) \leq \left(1+\sfrac{1}{100}\right) \sigma_r(\Uo_r). 
\end{equation*} 
Moreover, by definition of $\tau(\V) := \tfrac{\sigma_r(\V)}{\sigma_1(\V)}$ for some $\V$ matrix, we also observe:
\begin{equation*}
\tau(\U) \leq \tfrac{101}{99} \cdot \tau(\Uo_r) \quad \text{and} \quad \tau(\X) \leq \left(\tfrac{101}{99}\right)^2 \cdot \tau(\Xo_r).
\end{equation*} 
\end{lemma}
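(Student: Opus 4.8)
The plan is to control the singular values of $\U$ by relating them to those of $\Uo_r$ through the distance bound $\dist(\U, \Uo_r) \leq \rho \sigma_r(\Uo_r)$. The key observation is that $\dist(\U, \Uo_r) = \norm{\U - \Uo_r R_U^\star}_F$ for the optimal rotation $R_U^\star$, and since $\norm{\cdot}_2 \leq \norm{\cdot}_F$, we also have $\norm{\U - \Uo_r R_U^\star}_2 \leq \rho \sigma_r(\Uo_r)$. Write $\Delta = \U - \Uo_r R_U^\star$. First I would invoke Weyl's inequality for singular values: for any two matrices $A, B$ of the same size, $\abs{\sigma_i(A) - \sigma_i(B)} \leq \norm{A - B}_2$. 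Applying this with $A = \U$ and $B = \Uo_r R_U^\star$, and using that $\sigma_i(\Uo_r R_U^\star) = \sigma_i(\Uo_r)$ since $R_U^\star$ is orthonormal, we get $\abs{\sigma_i(\U) - \sigma_i(\Uo_r)} \leq \rho \sigma_r(\Uo_r)$ for every $i$.

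Next I would specialize this to $i = 1$ and $i = r$. For $i = r$: $\abs{\sigma_r(\U) - \sigma_r(\Uo_r)} \leq \rho \sigma_r(\Uo_r)$, so $(1 - \rho)\sigma_r(\Uo_r) \leq \sigma_r(\U) \leq (1+\rho)\sigma_r(\Uo_r)$. For $i = 1$: $\abs{\sigma_1(\U) - \sigma_1(\Uo_r)} \leq \rho \sigma_r(\Uo_r) \leq \rho \sigma_1(\Uo_r)$, so $(1-\rho)\sigma_1(\Uo_r) \leq \sigma_1(\U) \leq (1+\rho)\sigma_1(\Uo_r)$. Since $\rho = \tfrac{1}{100}\tfrac{\sigma_r(\Xo)}{\sigma_1(\Xo)} \leq \tfrac{1}{100}$, both chains give the stated $(1 \pm \sfrac{1}{100})$ bounds. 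Note the bound $\rho \sigma_r(\Uo_r) \leq \rho \sigma_1(\Uo_r)$ used in the $i=1$ case is the only place where I slightly weaken the estimate; this is harmless since we only claim the coarser $\sfrac{1}{100}$ bound.

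For the condition-number statements, I would combine the two-sided bounds: $\tau(\U) = \tfrac{\sigma_r(\U)}{\sigma_1(\U)} \leq \tfrac{(1+\sfrac{1}{100})\sigma_r(\Uo_r)}{(1-\sfrac{1}{100})\sigma_1(\Uo_r)} = \tfrac{101}{99}\tau(\Uo_r)$. For $\tau(\X)$, observe $\sigma_i(\X) = \sigma_i(\U\U^\top) = \sigma_i(\U)^2$, so $\tau(\X) = \tau(\U)^2 \leq \left(\tfrac{101}{99}\right)^2 \tau(\Uo_r)^2 = \left(\tfrac{101}{99}\right)^2 \tau(\Xo_r)$, using $\tau(\Xo_r) = \tau(\Uo_r)^2$ in the same way.

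I do not anticipate a genuine obstacle here; the entire argument is a routine application of Weyl's perturbation inequality for singular values plus the norm ordering $\norm{\cdot}_2 \leq \norm{\cdot}_F$. The only point requiring minor care is making sure the perturbation is measured in spectral norm (not Frobenius) so that Weyl applies with the tight constant, and that one correctly tracks that $\dist$ is defined via the optimal rotation so that $\Uo_r R_U^\star$ — and not $\Uo_r$ itself — is the matrix being compared; since rotations preserve all singular values this substitution is free.
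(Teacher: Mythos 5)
Your proposal is correct and follows essentially the same route as the paper: Weyl's perturbation inequality for singular values combined with the norm ordering $\norm{\cdot}_2 \leq \norm{\cdot}_F$ and the rotation-invariance of the singular values of $\Uo_r R_U^\star$, then taking ratios for the condition-number bounds and using $\tau(\Xo_r) = \tau(\Uo_r)^2$. The only difference is that you spell out the $i=1$ case (weakening $\rho\sigma_r(\Uo_r) \leq \rho\sigma_1(\Uo_r)$), a detail the paper leaves implicit.
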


\begin{proof}
Using the norm ordering $\|\cdot\|_2 \leq \|\cdot\|_F $ and the Weyl's inequality for perturbation of singular values (Theorem 3.3.16 \cite{horn37topics}) we get, 
$$|\sigma_i(U) -\sigma_i(\Uo_r)| \leq \tfrac{1}{100  \tau(\Xo)}  \sigma_{r}(\Uo), ~1\leq i \leq r.$$ 
Then, the first two inequalities of the lemma follow by using triangle inequality and the above bound. 
For the last two inequalities, it is easy to derive bounds on condition numbers by combining the first two inequalities. 
\textit{Viz.}, 
\begin{align*}
\tau\left(\U\right) = \tfrac{\sigma_1\left(\U\right)}{\sigma_r\left(\U\right)} \leq \tfrac{1 + \sfrac{1}{100}}{1 - \sfrac{1}{100}} \cdot \tfrac{\sigma_1\left(\Uo_r\right)}{\sigma_r\left(\Uo_r\right)} \leq \tfrac{101}{99} \cdot \tau\left(\Uo_r\right),
\end{align*} 
while the last bound can be easily derived since $\tau\left(\Uo_r\right) = \sqrt{\tau\left(\Xo_r\right)}$. 
\end{proof}


The following lemma shows that $\dist$, in the factor $U$ space, lower bounds the Frobenius norm distance in the matrix $X$ space.

\begin{lemma}\label{lem:lower_bound_X_Xr}
Let $\X = \U\U^\top$ and $\Xor = \Uo_r \U_r^{\star\top}$ be two rank-$r$ PSD matrices. 
Let $\dist(\U , \Uo_r)  \leq  \rho \sigma_r\left(\Uo_r\right)$, for $\rho=\tfrac{1}{100} \tfrac{\sigma_r(\Xo)}{\sigma_1(\Xo)}$. Then,
\begin{align*}
\norm{\X -\Xor}_F^2 \geq \tfrac{3\sigma_{r}(\Xo)}{4}  \dist(\U , \Uo_r)^2.
\end{align*}
\end{lemma}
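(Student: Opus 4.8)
The plan is to reduce the lower bound on $\|\X - \Xor\|_F^2$ to a statement purely in terms of the factor $U$ and the rotated optimum $\Uorr$, where $R_U^\star$ is the optimal rotation from Definition~\ref{prelim:def_04}. Writing $\Delta := \U - \Uorr$, I would first expand
\begin{align*}
\X - \Xor = \U\U^\top - \Uorr(\Uorr)^\top = \Delta\U^\top + \Uorr\Delta^\top,
\end{align*}
using $\Xor = \Uorr(\Uorr)^\top$ (valid because the rotation is orthogonal). So $\|\X - \Xor\|_F^2 = \|\Delta\U^\top + \Uorr\Delta^\top\|_F^2$, and expanding the square gives $\|\Delta\U^\top\|_F^2 + \|\Uorr\Delta^\top\|_F^2 + 2\trace(\Delta\U^\top\Uorr\Delta^\top)$. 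The first two terms are manifestly nonnegative, so the main task is to handle the cross term $2\trace(\Delta\U^\top\Uorr\Delta^\top)$, which could a priori be negative.

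The key fact that saves the cross term is that $R_U^\star$ is the \emph{optimal} rotation, which by the standard characterization (the one behind Definition~\ref{prelim:def_04}) forces $\U^\top\Uorr = (\Uorr)^\top\U$ to be symmetric and, in fact, positive semi-definite. I would invoke this: since $\U^\top\Uorr \succeq 0$, we can write $\trace(\Delta\U^\top\Uorr\Delta^\top) = \trace\big(\Delta^\top\Delta \cdot (\U^\top\Uorr)\big) \geq 0$ by the fact that the trace of a product of two PSD matrices is nonnegative. Hence the cross term is actually $\geq 0$, and we get the clean bound $\|\X - \Xor\|_F^2 \geq \|\Uorr\Delta^\top\|_F^2 = \|\Uo_r\Delta^\top\|_F^2$ (spectral/Frobenius norms being rotation-invariant). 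Then $\|\Uo_r\Delta^\top\|_F^2 \geq \sigma_r(\Uo_r)^2\|\Delta\|_F^2 = \sigma_r(\Xo)\,\dist(\U,\Uo_r)^2$, since $\sigma_r(\Uo_r)^2 = \sigma_r(\Xo_r) = \sigma_r(\Xo)$ and $\|\Delta\|_F = \dist(\U,\Uo_r)$ by definition of $R_U^\star$. This would in fact give the constant $1$ rather than $\tfrac34$; the weaker $\tfrac34$ version presumably allows slack, e.g.\ if one instead only keeps $\tfrac12\|\Delta\U^\top\|_F^2 + \tfrac12\|\Uorr\Delta^\top\|_F^2$ and uses Lemma~\ref{lem:sigma_bounds} (giving $\sigma_r(\U) \geq (1-\tfrac{1}{100})\sigma_r(\Uo_r)$) to lower bound $\|\Delta\U^\top\|_F^2$, so that the symmetry/PSD argument is not even needed.

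I anticipate the main obstacle is establishing that the cross term is nonnegative — i.e.\ correctly citing or re-deriving that the optimal-rotation condition yields $\U^\top\Uorr \succeq 0$. If one wants to avoid relying on that, the fallback (which I'd actually prefer for robustness) is: drop the cross term's sign entirely by using $\|A+B\|_F^2 \geq \tfrac12\|A\|_F^2$ is \emph{false} in general, so instead bound $\|\X-\Xor\|_F^2 \geq \sigma_r(\U)^2\|\Delta\|_F^2 + \sigma_r(\Uo_r)^2\|\Delta\|_F^2 + 2\trace(\cdots)$ is not automatic either; the cleanest safe route genuinely is the PSD cross-term argument, so I would commit to it. Once the cross term is dispatched, everything else is routine: invoke $\sigma_r(\Uo_r)^2 = \sigma_r(\Xo)$, the rotation-invariance of Frobenius norm, and $\|\Delta\|_F = \dist(\U,\Uo_r)$, then absorb the $(1-\tfrac{1}{100})^2 > \tfrac34$ slack if the $\sigma_r(\U)$-based bound is used. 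The initialization hypothesis $\dist(\U,\Uo_r) \leq \rho\sigma_r(\Uo_r)$ is only needed if we route through Lemma~\ref{lem:sigma_bounds}; with the PSD argument it is not even used, which is consistent with the lemma being stated with that hypothesis merely for uniformity with its companion Lemma~\ref{lem:X_Xr_bound}.
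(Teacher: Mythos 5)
Your decomposition $\X-\Xor=\Delta\U^\top+\Uorr\Delta^\top$ is fine, but the expansion of its squared norm is not, and the error sits exactly at the step your whole argument leans on. The cross term is $2\ip{\Delta\U^\top}{\Uorr\Delta^\top}=2\trace\left(\U\Delta^\top\Uorr\Delta^\top\right)=2\trace\left((\Delta^\top\Uorr)(\Delta^\top\U)\right)$, \emph{not} $2\trace\left(\Delta\U^\top\Uorr\Delta^\top\right)=2\trace\left((\Delta^\top\Delta)(\U^\top\Uorr)\right)$: you have effectively expanded $\trace\left((A+B)^2\right)$ for the cross term but $\trace\left((A+B)^\top(A+B)\right)$ for the squared terms, and the two cannot be mixed because $A=\Delta\U^\top$ and $B=\Uorr\Delta^\top$ are not individually symmetric. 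Consequently the PSD argument ($\U^\top\Uorr\succeq 0$ together with $\Delta^\top\Delta\succeq0$) is applied to the wrong quantity, and the conclusion $\norm{\X-\Xor}_F^2\ge\norm{\Uorr\Delta^\top}_F^2$ is not established. Indeed the true cross term can be negative, and your stronger claim---constant $1$ with no use of the closeness hypothesis---is false: for $r=1$, $\Uo_r=e_1$ and $\U=s\,e_2$ with $s^2=\sqrt{2}-1$, one has $\norm{\X-\Xor}_F^2=1+s^4$ and $\sigma_r(\Xo)\dist(\U,\Uo_r)^2=1+s^2$, whose ratio is exactly $2(\sqrt{2}-1)\approx 0.83<1$. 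This is precisely why the paper's proof simply invokes Lemma 5.4 of Tu et al.\ \cite{tu2015low}, which gives $\norm{\X-\Xor}_F^2\ge 2(\sqrt{2}-1)\,\sigma_r(\Xo)\dist(\U,\Uo_r)^2$ for all $\U$, and then observes $2(\sqrt{2}-1)\ge \tfrac34$.

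Your route can be repaired, but only by using both the optimal-rotation structure and the closeness hypothesis. With the correct cross term, write $\Delta^\top\U=\Delta^\top\Uorr+\Delta^\top\Delta$. Optimality of $R_U^\star$ makes $\U^\top\Uorr$ symmetric (and PSD), so $\Delta^\top\Uorr=\U^\top\Uorr-(\Uorr)^\top\Uorr$ is symmetric and $\trace\left((\Delta^\top\Uorr)^2\right)\ge0$; the leftover piece satisfies $2\trace\left((\Delta^\top\Uorr)(\Delta^\top\Delta)\right)\ge-2\norm{\Delta^\top\Uorr}_2\norm{\Delta}_F^2\ge-2\rho\,\sigma_r(\Uo_r)\sigma_1(\Uo_r)\norm{\Delta}_F^2\ge-\tfrac{2}{100}\,\sigma_r(\Xo)\norm{\Delta}_F^2$, where you need $\norm{\Delta}_2\le\dist(\U,\Uo_r)\le\rho\sigma_r(\Uo_r)$ and $\rho\,\sigma_1(\Uo_r)\le\tfrac{1}{100}\sigma_r(\Uo_r)$ (which is where the hypothesis and the specific value of $\rho$ enter, contrary to your closing remark). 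Dropping the nonnegative terms $\norm{\Delta\U^\top}_F^2$ and $\trace\left((\Delta^\top\Uorr)^2\right)$ and using $\norm{\Uo_r\Delta^\top}_F^2\ge\sigma_r(\Uo_r)^2\norm{\Delta}_F^2=\sigma_r(\Xo)\dist(\U,\Uo_r)^2$ then gives $\norm{\X-\Xor}_F^2\ge\left(1-\tfrac{2}{100}\right)\sigma_r(\Xo)\dist(\U,\Uo_r)^2\ge\tfrac34\,\sigma_r(\Xo)\dist(\U,\Uo_r)^2$, which is a valid (indeed stronger, locally) alternative to the paper's citation-based proof.
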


\begin{proof}
This proof largely follows the arguments for Lemma 5.4 in~\cite{tu2015low}, 
from which we know that 
\begin{align} 
||\X -\Xor||_F^2 \geq 2(\sqrt{2} -1) \sigma_{r}(\Xo)\dist(\U , \Uo_r)^2 . \label{proofs1:eq_12}
\end{align}
Hence, $\norm{\X -\Xor}_F^2 \geq \tfrac{3\sigma_{r}(\Xo)}{4}  \dist(\U , \Uo_r)^2$, for the given value of $\rho$.
\end{proof}

The following lemma shows equivalence between various step sizes used in the proofs.

\begin{lemma}\label{lem:diff_eta}
Let $\X^0 = \U^0 \U^{0 \top}$ and $X = \U\U^\top$ be two $n \times n$ rank-$r$ PSD matrices such that 
$\dist(\U, \Uo_r) \leq \dist(\U^0, \Uo_r)  \leq \rho \sigma_{r}(\Uo_r)$, where $\rho= \tfrac{1}{100} \cdot \tfrac{\sigma_r(\Xo)}{\sigma_1(\Xo)}$. 
Define the following step sizes: 
\begin{itemize}
\item[$(i)$] $\eta = \tfrac{1}{16 (M \|\X^0\|_2 + \|\gradf(\X^0)\|_2)}$, 
\item[$(ii)$] $\weta = \tfrac{1}{16 (M \|\X\|_2 + \|\gradf(\X)Q_U Q_U^\top\|_2)}$, and
\item[$(iii)$] $\eta^\star = \tfrac{1}{16 (M\|\Xo\|_2 + \|\gradf(\Xo)\|_2)}$. 
\end{itemize} 
Then, $  \weta \geq \tfrac{5}{6}\eta $ holds. Moreover, assuming  $\|\Xo -\Xo_r\|_F \leq \frac{\sigma_r(\Xo)}{100} \sqrt{\tfrac{\sigma_r(\Xo)}{\sigma_1(\Xo)}}$, the following inequalities hold:
\begin{align*} 
\tfrac{10}{11} \eta^\star \leq \eta \leq \tfrac{11}{10}\eta^\star 
\end{align*}
\end{lemma}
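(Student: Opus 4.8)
The plan is to prove Lemma~\ref{lem:diff_eta} by controlling each of the three denominators $M\|\X^0\|_2 + \|\gradf(\X^0)\|_2$, $M\|\X\|_2 + \|\gradf(\X)Q_U Q_U^\top\|_2$, and $M\|\Xo\|_2 + \|\gradf(\Xo)\|_2$ against one another, term by term. The single engine behind everything is: the spectral norms of $\X$, $\X^0$, $\Xo_r$ are all within a $(1\pm\sfrac{1}{100})^2$ factor of each other via Lemma~\ref{lem:sigma_bounds} (applied to both $\U$ and $\U^0$ relative to $\Uo_r$, then transferred to the $X$-space); and the gradient norms are close because $f$ is $M$-smooth, so $\|\gradf(\X) - \gradf(\Xo)\|_F \le M\|\X - \Xo\|_F$, and $\|\X - \Xo\|_F$ is small by Lemma~\ref{lem:X_Xr_bound} combined with the approximate-rank assumption $\|\Xo - \Xo_r\|_F$ small. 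So the bulk of the proof is: (a) bound $\|\X - \Xor\|_F$ and $\|\X^0 - \Xor\|_F$ using Lemma~\ref{lem:X_Xr_bound} with $\rho = \sfrac{1}{100}\cdot\sfrac{\sigma_r(\Xo)}{\sigma_1(\Xo)}$, which gives something like $\le \sfrac{3}{100}\cdot\sfrac{\sigma_r(\Xo)}{\sigma_1(\Xo)}\cdot\|\Uo_r\|_2\sigma_r(\Uo_r) \le \sfrac{3}{100}\sigma_r(\Xo)$ (using $\|\Uo_r\|_2\sigma_r(\Uo_r) = \sigma_1(\Uo_r)\sigma_r(\Uo_r) \le \sigma_1(\Xo)$ is wrong — rather $\sigma_1(\Uo_r)\sigma_r(\Uo_r) = \sqrt{\sigma_1(\Xo_r)\sigma_r(\Xo_r)} \le \sigma_1(\Xo_r)$, so the bound is $\le \sfrac{3}{100}\sigma_r(\Xo)\cdot\sfrac{\sigma_1(\Xo_r)}{\sigma_1(\Xo)} \le \sfrac{3}{100}\sigma_r(\Xo)$); then (b) add the triangle-inequality contribution of $\|\Xo - \Xo_r\|_F \le \sfrac{\sigma_r(\Xo)}{100}\sqrt{\sfrac{\sigma_r(\Xo)}{\sigma_1(\Xo)}} \le \sfrac{\sigma_r(\Xo)}{100}$ to get $\|\X - \Xo\|_F \lesssim \sfrac{1}{20}\sigma_r(\Xo)$ say; (c) convert this to $\|\gradf(\X) - \gradf(\Xo)\|_2 \le M\|\X-\Xo\|_F \lesssim \sfrac{M}{20}\sigma_r(\Xo) \le \sfrac{M}{20}\|\Xo\|_2$, so $\|\gradf(\X)\|_2$ is within an additive $\sfrac{M}{20}\|\Xo\|_2$ of $\|\gradf(\Xo)\|_2$; and similarly for $\X^0$.

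First I would establish the first claim $\weta \ge \sfrac{5}{6}\eta$. Here $\weta$ has denominator $M\|\X\|_2 + \|\gradf(\X)Q_U Q_U^\top\|_2$. Since $\|\gradf(\X)Q_U Q_U^\top\|_2 \le \|\gradf(\X)\|_2$ and $\|\X\|_2 \le (1+\sfrac{1}{100})^2\|\X^0\|_2$ (both within the $\rho$-ball of $\Uo_r$, via Lemma~\ref{lem:sigma_bounds} twice), and $\|\gradf(\X)\|_2 \le \|\gradf(\X^0)\|_2 + M\|\X - \X^0\|_F \le \|\gradf(\X^0)\|_2 + M(\|\X-\Xo\|_F + \|\X^0 - \Xo\|_F)$, with both Frobenius terms controlled as above, the denominator of $\weta$ is at most $\sfrac{6}{5}$ times that of $\eta$ after absorbing the small additive $M$-terms into the $M\|\X^0\|_2$ term (using $\sigma_r(\Xo) \le \|\Xo\|_2 \le (1+\sfrac1{100})^2\|\X^0\|_2$-type bounds). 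Taking reciprocals gives $\weta \ge \sfrac56\eta$. Note this part does \emph{not} need the $\|\Xo - \Xo_r\|_F$ smallness beyond what Lemma~\ref{lem:X_Xr_bound} already absorbs — actually it may need it, which is why the second half of the lemma carries that extra hypothesis; I would double-check but plan to route both halves through the same bound on $\|\X-\Xo\|_F$.

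For the second claim $\sfrac{10}{11}\eta^\star \le \eta \le \sfrac{11}{10}\eta^\star$, I compare the denominators of $\eta$ (at $\X^0$) and $\eta^\star$ (at $\Xo$) directly: $\|\X^0\|_2$ vs $\|\Xo\|_2$ differ by the $(1\pm\sfrac1{100})^2$ factor plus the $\|\Xo-\Xo_r\|_2$ slack (since $\X^0$ is close to $\Xo_r$, not $\Xo$), and $\|\gradf(\X^0)\|_2$ vs $\|\gradf(\Xo)\|_2$ differ by an additive $M\|\X^0 - \Xo\|_F \le M(\|\X^0 - \Xo_r\|_F + \|\Xo - \Xo_r\|_F)$, both small multiples of $\sigma_r(\Xo) \le \|\Xo\|_2$. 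Collecting, the ratio of denominators lies in $[\sfrac{10}{11}, \sfrac{11}{10}]$ once all the $\sfrac1{100}$- and $\sfrac1{200}$-type constants are added up — this is where the specific numeric constant $\sfrac{1}{100}$ in the definition of $\rho$ and the constant in the $\|\Xo-\Xo_r\|_F$ hypothesis are calibrated precisely so the slack stays under $10\%$. I expect the main obstacle to be purely bookkeeping: making sure every additive perturbation of a gradient norm ($\le \text{const}\cdot M\sigma_r(\Xo)$) is correctly re-absorbed into the $M\|\cdot\|_2$ term using $\sigma_r(\Xo)\le\|\Xo\|_2$ and the spectral comparisons, and that the accumulated constants genuinely fit inside the claimed $\sfrac56$ and $\sfrac{10}{11},\sfrac{11}{10}$ windows rather than, say, $\sfrac{9}{10}$ — i.e. verifying the constants were chosen generously enough. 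There is no conceptual difficulty; it is a chain of triangle inequalities plus Lemmas~\ref{lem:X_Xr_bound} and~\ref{lem:sigma_bounds}.
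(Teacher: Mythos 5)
Your proposal follows essentially the same route as the paper: compare the three denominators term by term, using Lemma~\ref{lem:sigma_bounds} for the spectral-norm factors and $M$-smoothness plus Lemma~\ref{lem:X_Xr_bound} (and, for the $\eta$ versus $\eta^\star$ comparison, the tail bound on $\|\Xo-\Xo_r\|_F$) for the gradient terms, then take reciprocals and check the constants. The only adjustment needed is the one you yourself flag: for the first claim, bound $\|\X-\X^0\|_F$ by triangling through $\Xo_r$ (i.e.\ $\|\X-\Xo_r\|_F+\|\X^0-\Xo_r\|_F$, both controlled by Lemma~\ref{lem:X_Xr_bound}) rather than through $\Xo$, so that $\weta\ge\tfrac{5}{6}\eta$ holds without the tail assumption, exactly as in the paper.
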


\begin{proof}
By the assumptions of this lemma and based on Lemma~\ref{lem:sigma_bounds}, we have, $\sfrac{98}{100} \norm{\Xo}_2 \leq\norm{\X^0}_2  \leq \sfrac{103}{100}\norm{\Xo}_2$; similarly $\sfrac{98}{100} \norm{\Xo}_2 \leq \norm{\X}_2  \leq \sfrac{103}{100}\norm{\Xo}_2$. Hence, we can combine these two set of inequalities to obtain bounds between $\X^0$ and $\X$, as follows:
\begin{align*}
\tfrac{98}{103}\norm{\X^0}_2 \leq \norm{\X}_2  \leq  \tfrac{103}{98}\norm{\X^0}_2.
\end{align*}

To prove the desiderata, we show the relationship between the gradient terms $\| \gradf (X) Q_{U} Q_{U}^\top \|_2$, $\| \gradf(X^0)\|_2$ and $\| \gradf(\Xo_r)\|_2$. In particular, for the case $\weta \geq \tfrac{5}{6}\eta$, we have:
\begin{align*}
\| \gradf (X) Q_{U} Q_{U}^\top \|_2 \leq \| \gradf(X)\|_2 &\stackrel{(i)}{\leq}  \| \gradf(X) -\gradf(\X^0)\|_2 + \| \gradf(X^0)\|_2 \\
&\stackrel{(ii)}{\leq} M\|\X -\X^0\|_F +  \| \gradf(X^0)\|_2 \\
&\stackrel{(iii)}{\leq} M \|\X -\Xo_r\|_F + M \|\X^0 -\Xo_r\|_F+  \| \gradf(X^0)\|_2   \\
&\stackrel{(iv)}{\leq} 2 M (2 + \rho)\rho \|\Uo_r\|_2 \cdot \sigma_{r}(\Uo_r) +  \| \gradf(X^0)\|_2 \\
&\stackrel{(v)}{\leq} 2M\cdot(2 + \tfrac{1}{100}) \cdot \tfrac{1}{100} \|\Xo\|_2 + \| \gradf(X^0)\|_2 \\
&\leq \tfrac{M}{20} \|\X^0\|_2 + \| \gradf(X^0)\|_2
\end{align*} where $(i)$ follows from the triangle inequality, $(ii)$ is due to the smoothness assumption, $(iii)$ is due to the triangle inequality, $(iv)$ follows by applying Lemma~\ref{lem:X_Xr_bound} on the first two terms on the right hand side and, $(v)$ is due to the fact $\|\Uo_r\|_2 \cdot \sigma_{r}(\Uo_r) \leq \|\Xo\|_2$ and by substituting $\rho = \tfrac{1}{100} \cdot \tfrac{\sigma_r(\Xo)}{\sigma_1(\Xo)} \leq \tfrac{1}{100}$. Last inequality follows from $\sfrac{98}{100} \norm{\Xo}_2 \leq\norm{\X^0}_2$. Hence, using the above bounds in step size selection, we get
\begin{align*}
\weta = \tfrac{1}{16 (M \|\X\|_2 + \|\gradf(\X)Q_U Q_U^\top\|_2)} \stackrel{(i)}{\geq} \tfrac{1}{16 \left(\tfrac{6M}{5} \|\X^0\|_2 + \| \gradf(X^0)\|_2\right)} \geq \tfrac{5}{6} \eta,
\end{align*} where $(i)$ is based also on the bound $\|\X\|_2 \leq \tfrac{103}{98}\|\X^0\|_2$.


Similarly we show the bound $\tfrac{10}{11} \eta^\star \leq \eta \leq \tfrac{11}{10}\eta^\star$. First observe that,
\begin{align*}
\| \gradf(X^0)\|_2 &\leq  \| \gradf(\Xo_r) -\gradf(\X^0)\|_2 + \| \gradf(\Xo_r)\|_2 \\
&\leq M\|\Xo_r -\X^0\|_F +  \| \gradf(\Xo_r)\|_2 \\
&\stackrel{(i)}{\leq} M\|\Xo_r -\X^0\|_F  + M\|\Xo -\Xo_r \|_F+  \| \gradf(\Xo)\|_2 \\
&\leq  M (2+\rho)\rho \cdot \|\Uo_r\|_2 \cdot  \sigma_{r}(\Uo_r) +  \frac{1}{100} M \sigma_{r}(\Xo_r)+ \| \gradf(\Xo)\|_2\\
&\leq \frac{4}{100}M\|\Xo\|_2 +\| \gradf(\Xo)\|_2.
\end{align*}  Combining the above bound with $\sfrac{98}{100} \norm{\Xo_r}_2 \leq\norm{\X^0}_2  \leq \sfrac{103}{100}\norm{\Xo_r}_2$ gives, $\eta \geq \frac{10}{11} \eta^\star$. Similarly we can show the other bounds.
\end{proof}

\section{Main lemmas for the restricted strong convex case}{\label{sec:strong_case_proof}}
In this section, we present proofs for the main lemmas used in the proof of Theorem~\ref{thm:convergence_main}, in Section~\ref{sec:proofs}. 

\subsection{Proof of Lemma \ref{lem:gradX,X-X_r_ bound_sc}}\label{sec:supp_sc1}
Here, we prove the existence of a non-trivial lower bound for $\ip{\gradf(\X)}{\X -\Xo_r}$. 
Our proof differs from the standard convex gradient descent proof (see ~\cite{nesterov2004introductory}), as we need to analyze updates without any projections. 
Our proof technique constructs a pseudo-iterate to obtain a bigger lower bound than the error term in Lemma~\ref{lem:DD_bound_sc}. 
Here, the nature of the step size plays a key role in achieving the bound. 

Let us abuse our notation and define $\Up = U - \weta \gradf(\X) U$ and $\Xp =\Up \U^{\scalebox{0.7}{+} \top}$.  
Observe that we use the surrogate step size $\weta$, where according to Lemma \ref{lem:diff_eta} satisfies $\weta \geq \tfrac{5}{6} \eta$. 
By smoothness of $\f$, we get:
\begin{align}
\f(\X) &\geq \f(\Xp) -\ip{\gradf(\X)}{\Xp -\X} - \tfrac{M}{2} \norm{\Xp -\X}_F^2 \nonumber \\ 
&\stackrel{(i)}{\geq} \f(\Xo) -\ip{\gradf(\X)}{\Xp -\X} - \tfrac{M}{2} \norm{\Xp -\X}_F^2 , \label{eq:gradX,X-X_r_00}
 \end{align}
where $(i)$ follows from optimality of $\Xo$ and since $\Xp$ is a feasible point $(\Xp \succeq 0)$ for problem~\eqref{intro:eq_00}. 
Further, note that $\Xor$ is a PSD feasible point. 
By smoothness of $f$, we also get
\begin{align}
f(\Xor) &\leq f(\Xo) + \ip{\gradf(\Xo)}{\Xor -\Xo} + \tfrac{M}{2}\|\Xor -\Xo\|_F^2 \nonumber \\
&\stackrel{(i)}{=} f(\Xo)+ \tfrac{M}{2}\|\Xor -\Xo\|_F^2, \label{eq:gradX,X-X_r_01}
\end{align} 
where $(i)$ is due to KKT conditions \cite{boyd2004convex}: 
since $\gradf(\Xo)$ is orthogonal to $\Xo$, it is also orthogonal to the $n-r$ bottom eigenvectors of $\Xo$. 
\textit{Viz.}, $\ip{\gradf(\Xo)}{\Xor -\Xo} = 0$. 
Finally, since $\text{rank}(\Xor) = r$, by the $(\rscm, r)$-restricted strong convexity of $\f$, we get, 
\begin{align} 
\f(\Xor) \geq \f(\X) +\ip{\gradf(\X)}{\Xor-\X} +\tfrac{m}{2}\norm{\Xor -\X}_F^2 .\label{eq:gradX,X-X_r_02}
\end{align}
Combining equations~\eqref{eq:gradX,X-X_r_00},~\eqref{eq:gradX,X-X_r_01}, and~\eqref{eq:gradX,X-X_r_02}, we obtain: 
\begin{small}
\begin{align} 
\ip{\gradf(\X)}{\X-\Xor} \geq  \ip{\gradf(\X)}{\X -\Xp} -\tfrac{M}{2} \norm{\Xp -\X}_F^2+\tfrac{m}{2}\norm{\Xor -\X}_F^2 -\tfrac{M}{2}\|\Xor -\Xo\|_F^2. \label{eq:gradX,X-X_r_03} 
\end{align}
\end{small}
It is easy to verify that $\Xp = \X - \weta \gradf(\X) \X \Lambda - \weta \Lambda^\top \X \gradf(\X)$, where $\Lambda = I - \tfrac{\weta}{2} Q_U Q_U^\top \gradf(X) \in \R^{n \times n}$. 
Notice that, for step size $\widehat{\eta}$, we have 
\begin{align*}
\Lambda \succ 0, \quad  \|\Lambda\|_2 \leq  1+ \sfrac{1}{32}, \quad \text{and} \quad \sigma_{n}(\Lambda) \geq  1- \sfrac{1}{32}.
\end{align*} 

Substituting the above in~\eqref{eq:gradX,X-X_r_03}, we obtain:
\begin{align*}
 \ip{\gradf(\X)}{\X-\Xor} &- \tfrac{m}{2}\norm{\Xor -\X}_F^2 + \tfrac{M}{2}\|\Xor -\Xo\|_F^2  \\
 								 &\stackrel{(i)}{\geq}  2\weta \ip{\gradf(\X)}{\gradf(X)X \Lambda} - \tfrac{M}{2} \norm{ 2 \weta \gradf(X)X\Lambda}_F^2 \\
 								 &= 2\weta \trace( \gradf(X) \gradf(X)X \Lambda) - 2M\weta^2 \norm{  \gradf(X)X\Lambda }_F^2 \\
 								 &\stackrel{(ii)}{\geq} 2 \weta \trace( \gradf(X) \gradf(X)X) \cdot \sigma_{n}(\Lambda) - 2 M \weta^2 \norm{  \gradf(X)U }_F^2 \|U\|_2^2 \|\Lambda\|_2^2 \\
 								 &\geq \tfrac{31\cdot\weta}{16} \|\gradf(X) U\|_F^2 - 2 M \weta^2 \cdot \left(\tfrac{33}{32}\right)^2 \cdot \norm{  \gradf(X)U }_F^2 \|U\|_2^2  \\  
 								 &=  \tfrac{31\cdot\weta}{16} \weta \|\gradf(X) U\|_F^2 \left(1  - 2 M \weta \left(\tfrac{33}{32}\right)^2 \cdot \tfrac{16}{31} \cdot \|X\|_2 \right) \\
 								 &\stackrel{(iii)}{\geq} \tfrac{18 \weta}{10} \|\gradf(X) U\|_F^2,
 \end{align*} 
where $(i)$ follows from symmetry of $\gradf(X)$ and $\X$, and $(ii)$ follows from 
 \begin{align*}
 \trace(\gradf(X)\gradf(X)X\Lambda) &= \trace(\gradf(X)\gradf(X)UU^\top) -\frac{\eta}{2}\trace(\gradf(X)\gradf(X)UU^\top \gradf(X)) \\ &\geq (1 -\frac{\eta}{2} \|Q_U Q_U^\top\gradf(X)\|_2) \|\gradf(\X) U\|_F^2  \\ &\geq  (1- \sfrac{1}{32}) \|\gradf(\X) U\|_F^2.
 \end{align*}  
Finally, $(iii)$ follows by observing that $\weta \leq \tfrac{1}{16M \|\X\|_2}$. 
Thus, we achieve the desiderata:
\begin{align*}   
\ip{\gradf(\X)}{\X-\Xor}  \geq \tfrac{18 \weta}{10} \|\gradf(X) U\|_F^2 +\tfrac{m}{2}\norm{\Xor -\X}_F^2 -\tfrac{M}{2}\|\Xo -\Xor\|_F^2.
\end{align*} 
This completes the proof.


\subsection{Proof of Lemma \ref{lem:DD_bound_sc}}\label{sec:supp_sc2}

We lower bound $\ip{\gradf(X) }{ \Delta \Delta^\top}$ as follows:
 \begin{align}
\ip{\gradf(X) }{ \Delta \Delta^\top} &\stackrel{(i)}{=} \ip{Q_{\Delta} Q_{\Delta}^\top \gradf(\X) }{ \Delta \Delta^\top} \nonumber \\
&\geq -  \left|\trace\left(Q_{\Delta} Q_{\Delta}^\top \gradf(\X) \Delta \Delta^\top\right)\right| \nonumber \\ 
&\stackrel{(ii)}{\geq} - \| Q_{\Delta} Q_{\Delta}^\top \gradf(\X) \|_2 \trace( \Delta \Delta^\top)  \nonumber \\
&\stackrel{(iii)}{\geq} - \left( \|Q_{U} Q_{U}^\top\gradf(\X)\|_2+  \|Q_{\Uo_r} Q_{\Uo_r}^\top\gradf(X)\|_2 \right)  \dist(\U, \Uo_r)^2 .   \label{proofsr1:eq_11}
\end{align}
Note that  $(i)$ follows from the fact $\Delta =Q_{\Delta} Q_{\Delta}^\top \Delta$ and $(ii)$ follows from $|\trace(AB)| \leq \|A\|_2 \trace(B)$, for PSD matrix $B$ (Von Neumann's trace inequality~\cite{mirsky1975trace}). For the transformation in $(iii)$, we use that fact that the column space of $\Delta$, $\text{\textsc{Span}}(\Delta)$, is a subset of $\text{\textsc{Span}}(\U \cup \Uo_r)$, as $\Delta$ is a linear combination of $\U$ and $\Uorr$. 

To bound the first term in equation~\eqref{proofsr1:eq_11}, we observe:
\begin{small}
\begin{align}
&\|Q_{U} Q_{U}^\top\gradf(\X)\|_2 \cdot \dist(\U, \Uo_r)^2 \\ 
&\quad \quad \stackrel{(i)}{=} \weta \cdot 16 \left(M\|X\|_2 +\|Q_{U} Q_{U}^\top \gradf(X)\|_2 \right) \cdot \|Q_{U} Q_{U}^\top\gradf(\X)\|_2 \cdot \dist(\U, \Uo_r)^2  \nonumber \\
 &\quad \quad = \weta \left( 16 \underbrace{M\|X\|_2 \|Q_{U} Q_{U}^\top\gradf(\X)\|_2 \cdot \dist(\U, \Uo_r)^2}_{:=A} + 16 \|Q_{U} Q_{U}^\top\gradf(\X)\|_2^2 \cdot  \dist(\U, \Uo_r)^2  \right) \nonumber
\end{align} 
\end{small}
At this point, we desire to introduce strong convexity parameter $m$ and condition number $\kappa$ in our bound. In particular, to bound term $A$, we observe that $\|Q_{U} Q_{U}^\top\gradf(\X)\|_2 \leq \tfrac{m \sigma_r(\X)}{40}$ or $\|Q_{U} Q_{U}^\top\gradf(\X)\|_2 \geq \tfrac{m \sigma_r(\X)}{40}$.
This results into bounding $A$ as follows:
\begin{small}
\begin{align*}
M\|X\|_2 &\|Q_{U} Q_{U}^\top\gradf(\X)\|_2 \cdot \dist(\U, \Uo_r)^2 \\ &\leq \max \left\{\tfrac{16\cdot \weta \cdot M \|X\|_2 \cdot m \sigma_r(\X)}{40} \cdot \dist(\U, \Uo_r)^2 ,~ \weta \cdot 16 \cdot 40 \kappa \tau(\X) \|Q_{U} Q_{U}^\top\gradf(\X)\|_2^2 \cdot  \dist(\U, \Uo_r)^2 \right\} \\
																				 &\leq \tfrac{16\cdot \weta \cdot M \|X\|_2 \cdot m \sigma_r(\X)}{40} \cdot \dist(\U, \Uo_r)^2 + \weta \cdot 16 \cdot 40 \kappa \tau(\X) \|Q_{U} Q_{U}^\top\gradf(\X)\|_2^2 \cdot  \dist(\U, \Uo_r)^2.
\end{align*}
\end{small}
Combining the above inequalities, we obtain:
\begin{small}
\begin{align}
\|Q_{U} Q_{U}^\top\gradf(\X)\|_2 &\cdot \dist(\U, \Uo_r)^2 \\ 
											   &\stackrel{(i)}{\leq}  \tfrac{m \sigma_{r}(\X)}{40} \cdot \dist(\U, \Uo_r)^2 + (40 \kappa \tau(\X)+1) \cdot  16 \cdot \weta \|Q_{U} Q_{U}^\top\gradf(\X)\|_2^2 \cdot \dist(\U, \Uo_r)^2  \nonumber \\
											   &\stackrel{(ii)}{\leq}  \tfrac{m \sigma_{r}(\X)}{40} \cdot \dist(\U, \Uo_r)^2 + (41 \kappa \tau(\Xo_r)+1) \cdot  16 \cdot \weta \|Q_{U} Q_{U}^\top\gradf(\X)\|_2^2 \cdot (\rho')^2 \sigma_{r}(\Xo_r)  \nonumber \\
 											   &\stackrel{(iii)}{\leq}  \tfrac{m \sigma_{r}(\X)}{40} \cdot \dist(\U, \Uo_r)^2 + 16 \cdot 42 \cdot \weta \cdot \kappa \tau(\Xo_r) \cdot \|\gradf(\X)U\|_F^2 \cdot \tfrac{11(\rho')^2}{10}   \nonumber \\ 
 											   &\stackrel{(iv)}{\leq} \tfrac{m \sigma_{r}(\X)}{40} \cdot \dist(\U, \Uo_r)^2 + \tfrac{2\weta}{25} \cdot \|\gradf(\X)U\|_F^2,   \label{proofsr1:eq_12}
\end{align}
\end{small}
where $(i)$ follows from $\weta \leq \tfrac{1}{16 M \|\X\|_2}$, 
$(ii)$ is due to Lemma~\ref{lem:sigma_bounds} and bounding $\dist(\U, \Uo_r) \leq \rho' \sigma_{r}(\Uo_r)$ by the hypothesis of the lemma, 
$(iii)$ is due to $\sigma_{r}(\Xo) \leq 1.1 \sigma_{r}(\X)$ by Lemma~\ref{lem:sigma_bounds} and due to the facts $\sigma_{r}(\X)\|Q_{U} Q_{U}^\top\gradf(X)\|_2^2 \leq  \|U^\top\gradf(X)\|_F^2$ and $(41 \kappa \tau(\Xo_r)+1) \leq 42 \kappa \tau(\Xo_r)$. 
Finally, $(iv)$ follows from substituting $\rho'$ and using Lemma~\ref{lem:sigma_bounds}.

Next, we bound the second term in equation~\eqref{proofsr1:eq_11}:
\begin{align}
&\|Q_{\Uor} Q_{\Uor}^\top\gradf(X)\|_2 \cdot \dist(\U, \Uo_r)^2 \\ &\stackrel{(i)}{\leq}  \|\gradf(\X) -\gradf( \Xo) \|_2 \cdot \dist(\U, \Uo_r)^2 \nonumber \\
&\leq  \|\gradf(\X) -\gradf( \Xo) \|_F \cdot \dist(\U, \Uo_r)^2 \nonumber \\
&\stackrel{(ii)}{\leq} M \left(\|\X -\Xo_r\|_F +\|\Xo -\Xo_r\|_F\right) \cdot \dist(\U, \Uo_r)^2 \nonumber \\
&\stackrel{(iii)}{\leq} M  (2+\rho') \cdot \rho' \cdot \|\Uo_r\|_2 \cdot \sigma_r(\Uo_r) \cdot \dist(\U, \Uo_r)^2 +  M\|\Xo -\Xo_r\|_F \cdot  \dist(\U, \Uo_r)^2  \nonumber \\
&\stackrel{(iv)}{\leq}  M  (2+\rho') \|\Uo_r\|_2  \tfrac{1}{100\kappa \tau(\Uo_r)} \sigma_r(\Uo_r) \cdot \dist(\U, \Uo_r)^2  +    M\|\Xo -\Xo_r\|_F \cdot  \dist(\U, \Uo_r)^2  \nonumber \\
&\leq \tfrac{m  \sigma_{r}(\Xo)}{40}   \dist(\U, \Uo_r)^2 +  M\|\Xo -\Xo_r\|_F \cdot \dist(\U, \Uo_r)^2, \label{proofsr1:eq_13}
\end{align}
where $(i)$ follows from $\gradf(\Xo)\Xo =0$, $(ii)$ is due to smoothness of $\f$ and $(iii)$ follows from Lemma~\ref{lem:X_Xr_bound}. 
Finally $(iv)$ follows from  $\dist(\U, \Uo_r) \leq \rho' \sigma_{r}(\Uo_r)$ and substituting $\rho' =  \frac{1}{100 \kappa \tau(\Uo_r)}$.

Substituting \eqref{proofsr1:eq_12}, \eqref{proofsr1:eq_13} in \eqref{proofsr1:eq_11}, we get:
\begin{small}
\begin{align*}
\ip{\gradf(X) }{ \Delta \Delta^\top} \geq - \left(\tfrac{2\weta}{25 } \|\gradf(\X) U \|_F^2 + \tfrac{m\sigma_{r}(\Xo)}{20} \cdot \dist(\U, \Uo_r)^2   + M\|\Xo -\Xo_r\|_F \cdot \dist(\U, \Uo_r)^2\right)
\end{align*}
\end{small}
This completes the proof.

\subsection{Proof of Lemma~\ref{lem:gradU,U-U_r_ bound}}\label{sec:supp_sc3}
Recall $\Up =U -\eta\gradf(X)U$. First we rewrite the inner product as shown below.
\begin{align}
\frac{1}{\eta}\ip{U -\Up}{\U -\Uorr} &= \ip{\gradf(X) \U}{\U -\Uorr} \nonumber \\
&= \ip{\gradf(X) }{\X -\Uorr \U^\top} \nonumber \\
&=\frac{1}{2}\ip{\gradf(X) }{\X -\Xo_r}  + \ip{\gradf(X) }{\frac{1}{2}(\X + \Xo_r) -\Uorr \U^\top} \nonumber \\
&=\frac{1}{2}\ip{\gradf(X) }{\X -\Xo_r}  + \frac{1}{2} \ip{\gradf(X) }{\Delta \Delta^\top}, \label{proofsr1:eq_09}
\end{align} which follows by adding and subtracting $\tfrac{1}{2}\Xo_r$. 

Let, $\weta =\tfrac{1}{16 (M\|\X\|_2 + \|\gradf(\X)Q_U Q_U^\top\|_2)}$. 
Using Lemmas \ref{lem:gradX,X-X_r_ bound_sc} and \ref{lem:DD_bound_sc}, we have:
\begin{align*}
& \ip{\gradf(X) \U}{\U -\Uorr} \\ 
&\geq \tfrac{9\weta}{10} \cdot  \|\gradf(X) U\|_F^2+\tfrac{m}{4}\norm{\X -\Xor}_F^2 - \tfrac{M}{4}\norm{\Xo -\Xor}_F^2 \\
& \quad \quad \quad- \tfrac{1}{2}\left(  \tfrac{2\weta}{25 } \cdot \|\gradf(\X) U \|_F^2 + \tfrac{ m\sigma_{r}(\Xo)}{20}  \cdot \dist(\U, \Uo_r)^2   + M\|\Xo -\Xo_r\|_F \cdot  \dist(\U, \Uo_r)^2 \right) \nonumber \\
&=  \left(\tfrac{9}{10}- \tfrac{1}{25}\right) \cdot \weta  \|\gradf(X) U\|_F^2 - \tfrac{M}{4}\norm{\Xo -\Xor}_F^2 \nonumber \\
& \quad \quad \quad + \tfrac{m}{4}\left(\norm{\X -\Xor}_F^2 -  \tfrac{4\sigma_{r}(\Xo) }{25} \cdot \dist(\U, \Uo_r)^2 - 2 \kappa \cdot \|\Xo -\Xo_r\|_F \cdot \dist(\U, \Uo_r)^2 \right) \nonumber
\end{align*}
\begin{align*}
&\stackrel{(i)}{\geq}  \tfrac{4\weta}{5} \cdot \|\gradf(X) U\|_F^2 - \tfrac{M}{4}\norm{\Xo -\Xor}_F^2 \nonumber \\
& \quad \quad \quad + \tfrac{m}{4}\left(\norm{\X -\Xor}_F^2 -  \tfrac{4\sigma_{r}(\Xo) }{25} \cdot \dist(\U, \Uo_r)^2 - \tfrac{\sigma_{r}(\Xo) }{50}\cdot \dist(\U, \Uo_r)^2 \right) \nonumber \\
&\stackrel{(ii)}{\geq } \tfrac{4\weta}{5} \cdot \|\gradf(X) U\|_F^2 + \tfrac{3m}{20} \cdot  \sigma_{r}(\Xo) \cdot \dist(\U, \Uo_r)^2 - \tfrac{M}{4}\norm{\Xo -\Xor}_F^2
\end{align*}
where $(i)$ follows from  $\|\Xo -\Xo_r\| \leq \frac{\sigma_r(\Xo)}{100 \kappa^{1.5}}\frac{\sigma_r(\Xo)}{\sigma_1(\Xo)} \leq \frac{\sigma_r(\Xo)}{100 \kappa^{1.5}} \leq \frac{\sigma_r(\Xo)}{100 \kappa}$ and $(ii)$ follows from Lemma~\ref{lem:lower_bound_X_Xr}. Finally the result follows from $\weta \geq \tfrac{5}{6}\eta$ from Lemma~\ref{lem:diff_eta}.

\section{Main lemmas for the smooth case}{\label{sec:smooth_case_proof}}
In this section, we present the main lemmas, used in the proof of Theorem~\ref{thm:smooth_inexact} in Section~\ref{sec:proofs}. 
First, we present a lemma bounding the error term $\left\langle \gradf(\X), \Delta \Delta^\top \right\rangle$, that appears in eq. \eqref{proofs_eq:jc2}. 

\begin{lemma}\label{lem:DD_bound}
Let $f$ be $M$-smooth and $\X = \U\U^\top$; also, define $\Delta := \U - \Uorr$. Then, for $\dist(\U, \Uo_r) \leq \rho \sigma_r\left(\Uo_r\right)$ and $\rho=\tfrac{1}{100} \frac{\sigma_r(\Xo)}{\sigma_1(\Xo)}$, the following bound holds true:
\begin{align*}
\left\langle \gradf(\X), \Delta \Delta^\top \right\rangle \leq \tfrac{1}{40} \|\gradf(X)U\|_2 \cdot \dist(\U, \Uo_r).
\end{align*} 
\end{lemma}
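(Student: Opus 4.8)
The plan is to peel one copy of $\Delta$ onto the gradient, apply the trace (or Cauchy-Schwarz) inequality against the other copy, and then spend the remaining factor of $\|\Delta\|_F$ against the radius $\rho\,\sigma_r(\Uo_r)$ allowed by the hypothesis. Concretely, since $\Delta = Q_\Delta Q_\Delta^\top\Delta$ and $\Delta\Delta^\top\succeq 0$, the trace inequality $|\trace(AB)|\le\|A\|_2\trace(B)$ for $B\succeq 0$ (the one used in the proof of Lemma~\ref{lem:DD_bound_sc}) gives
\[
\left\langle\gradf(\X),\Delta\Delta^\top\right\rangle \;\le\; \big\|\gradf(\X)Q_\Delta\big\|_2\,\|\Delta\|_F^2 ,
\]
and then $\|\Delta\|_F=\dist(\U,\Uo_r)\le\rho\,\sigma_r(\Uo_r)$ turns the right-hand side into $\rho\,\sigma_r(\Uo_r)\,\|\gradf(\X)Q_\Delta\|_2\cdot\dist(\U,\Uo_r)$. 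So it suffices to prove $\rho\,\sigma_r(\Uo_r)\,\|\gradf(\X)Q_\Delta\|_2\le\tfrac1{40}\|\gradf(\X)\U\|_2$.

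To estimate $\|\gradf(\X)Q_\Delta\|_2$ I would use that each column of $\Delta=\U-\Uorr$ is a combination of columns of $\U$ and of $\Uorr$, so $\mathrm{col}(\Delta)\subseteq\mathrm{col}([\U,\Uo_r])$ and hence $\|\gradf(\X)Q_\Delta\|_2\le\|\gradf(\X)Q_U\|_2+\|\gradf(\X)Q_{\Uo_r}\|_2$. The first summand is tied to $\|\gradf(\X)\U\|_2$: writing $\gradf(\X)Q_U=\big(\gradf(\X)\U\big)(Q_U^\top\U)^{-1}$ and using $\|AB^{-1}\|_2\le\|A\|_2/\sigma_{\min}(B)$ together with $\sigma_r(\U)\ge\tfrac{99}{100}\sigma_r(\Uo_r)$ from Lemma~\ref{lem:sigma_bounds} gives $\|\gradf(\X)Q_U\|_2\le\tfrac{100}{99}\|\gradf(\X)\U\|_2/\sigma_r(\Uo_r)$; multiplied by $\rho\,\sigma_r(\Uo_r)$ and using $\rho\le\tfrac1{100}$ this part contributes at most $\tfrac1{99}\|\gradf(\X)\U\|_2$, safely below $\tfrac1{40}\|\gradf(\X)\U\|_2$. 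For the second summand I would invoke first-order optimality of $\Xo=\Xo_r$ over the PSD cone, which by the KKT conditions (exactly as in the strongly convex analysis) forces $\gradf(\Xo)\succeq0$ and $\gradf(\Xo)\Uo_r=0$; hence $\gradf(\X)Q_{\Uo_r}=\big(\gradf(\X)-\gradf(\Xo)\big)Q_{\Uo_r}$, whose spectral norm is at most $M\|\X-\Xor\|_F\le M(2+\rho)\rho\,\|\Uo_r\|_2\,\sigma_r(\Uo_r)$ by $M$-smoothness and Lemma~\ref{lem:X_Xr_bound}.

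The main obstacle is this last, $\Uo_r$-direction piece. After multiplying by $\rho\,\sigma_r(\Uo_r)$ it is of order $M\rho^2\,\|\Uo_r\|_2\,\sigma_r(\Uo_r)^2=M\rho^2\,\sigma_1(\Xo)^{1/2}\sigma_r(\Xo)^{3/2}$, so to absorb it into $\tfrac1{40}\|\gradf(\X)\U\|_2$ one needs a matching lower bound on $\|\gradf(\X)\U\|_2$. I would get one by splitting $\gradf(\X)\U=\gradf(\X)\Delta+\gradf(\X)\Uorr$, where the second term is already $\le M\|\X-\Xor\|_F\|\Uo_r\|_2$: either $\|\gradf(\X)\Delta\|_F$ dominates this quantity — in which case $\|\gradf(\X)\U\|$ is comparable to $\|\gradf(\X)\Delta\|_F$ and the inner product, bounded by $\|\gradf(\X)\Delta\|_F\|\Delta\|_F$, is absorbed directly — or $\|\gradf(\X)\Delta\|_F$ is itself $O(M\|\X-\Xor\|_F\|\Uo_r\|_2)$, in which case $\langle\gradf(\X),\Delta\Delta^\top\rangle\le\|\gradf(\X)\Delta\|_F\|\Delta\|_F$ together with the extra factor of $\rho$ coming from Lemma~\ref{lem:X_Xr_bound} and from $\dist(\U,\Uo_r)\le\rho\,\sigma_r(\Uo_r)$ already yields the $\tfrac1{40}$. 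Making every constant in that case split cooperate (using $\rho=\tfrac1{100}\tfrac{\sigma_r(\Xo)}{\sigma_1(\Xo)}$ and the $\tfrac{99}{100},\tfrac{101}{100}$ bounds of Lemma~\ref{lem:sigma_bounds}) is the only genuinely delicate point; everything else is Cauchy-Schwarz, the triangle inequality, and the spectral-norm bookkeeping above.
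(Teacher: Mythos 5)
Your opening steps coincide with the paper's own proof: the trace inequality against $\Delta\Delta^\top\succeq 0$, the split $\|\gradf(\X)Q_\Delta\|_2\le\|\gradf(\X)Q_UQ_U^\top\|_2+\|\gradf(\X)Q_{\Uo_r}Q_{\Uo_r}^\top\|_2$, and the estimate $\|\gradf(\X)Q_UQ_U^\top\|_2\le\|\gradf(\X)\U\|_2/\sigma_r(\U)$ together with Lemma~\ref{lem:sigma_bounds} are exactly the ingredients used there. The divergence --- and the gap --- is in how you treat the $Q_{\Uo_r}$ direction. Your case split does not close. In your second case you bound $\ip{\gradf(\X)}{\Delta \Delta^\top}$ by a quantity of order $M\,\|\X-\Xor\|_F\,\|\Uo_r\|_2\cdot\dist(\U,\Uo_r)$, i.e.\ by an \emph{absolute} quantity depending on $M$, $\rho$ and the spectrum of $\Xo$, whereas the inequality to be proved has $\tfrac{1}{40}\|\gradf(\X)\U\|_2\cdot\dist(\U,\Uo_r)$ on the right. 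Nothing in the hypotheses lower-bounds $\|\gradf(\X)\U\|_2$: it can be arbitrarily small (even zero) at points with $\dist(\U,\Uo_r)\le\rho\,\sigma_r(\Uo_r)$, so no extra factor of $\rho$ can turn an absolute bound into a bound that is \emph{relative} to $\|\gradf(\X)\U\|_2$. (There is also a norm mismatch in your first case, where $\|\gradf(\X)\Delta\|_F$ is compared to the spectral norm $\|\gradf(\X)\U\|_2$; making "comparable" precise costs a rank-dependent factor that must be tracked, though that case can in fact be repaired.)

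What is missing is precisely the mechanism the paper uses: control $\|\gradf(\X)\Uo_r\|_2$ by $\|\gradf(\X)\U\|_2$ itself, rather than by $M\|\X-\Xor\|_F$ through the KKT conditions. Writing $\Uorr=\U-\Delta$ gives $\|\gradf(\X)\Uo_r\|_2\le\|\gradf(\X)\U\|_2+\left(\|\gradf(\X)Q_UQ_U^\top\|_2+\|\gradf(\X)Q_{\Uo_r}Q_{\Uo_r}^\top\|_2\right)\|\Delta\|_2$, and since $\|\Delta\|_2\le\tfrac{1}{100}\sigma_r(\Uo_r)$ while $\|\gradf(\X)Q_{\Uo_r}Q_{\Uo_r}^\top\|_2\le\|\gradf(\X)\Uo_r\|_2/\sigma_r(\Uo_r)$ and $\|\gradf(\X)Q_UQ_U^\top\|_2\le\|\gradf(\X)\U\|_2/\sigma_r(\U)$, the inequality is self-improving and yields $\|\gradf(\X)\Uo_r\|_2\le\tfrac{102}{99}\|\gradf(\X)\U\|_2$. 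This makes every term in your first display relative to $\|\gradf(\X)\U\|_2$ --- in particular it also covers the degenerate situation $\gradf(\X)\U=0$ --- without invoking optimality of $\Xo$ or $M$-smoothness at all, and the constants then assemble to the stated $\tfrac{1}{40}$ exactly as in the paper.
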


\begin{proof}
By the Von Neumann's trace inequality for PSD matrices, we know that $\trace(AB) \leq \trace(A) \cdot \|B\|_2$, for $A$ PSD matrix. 
In our context, we then have:
\begin{align}
\ip{\gradf(X) }{\Delta \Delta^\top} &\leq \|\gradf(X)Q_{\Delta} Q_{\Delta}^\top \|_2 \cdot \trace(\Delta \Delta^\top) \nonumber \\
&\stackrel{(i)}{\leq} \left(\|\gradf(X)Q_U Q_U^\top\|_2 + \|\gradf(X)Q_{\Uo_r} Q_{\Uo_r}^\top\|_2 \right) \cdot \dist(\U, \Uo_r)^2, \label{proofsjc:eq_07}
\end{align} 
where, 
$(i)$ is because $\Delta$ can be decomposed into the column span of $\U$ and $\Uo_r$, and the orthogonality of the rotational matrix $R_{U_r^\star}$. In sequence, we further bound the term $ \|\gradf(X)Q_{\Uo_r} Q_{\Uo_r}^\top\|_2 $ as follows:
\begin{align*}
\|\gradf(X)\Uo_r\|_2 &\stackrel{(i)}{\leq} \|\gradf(X)\U\|_2 +\|\gradf(X)\Delta\|_2 \\
&\stackrel{(ii)}{\leq} \|\gradf(X)\U\|_2 +\|\gradf(X)Q_{\Delta}Q_{\Delta}^\top\|_2\|\Delta\|_2 \\
&\stackrel{(iii)}{\leq} \|\gradf(X)\U\|_2 +\left( \|\gradf(X)Q_{U}Q_{U}^\top\|_2 +\|\gradf(X)Q_{\Uo_r}Q_{\Uo_r}^\top\|_2 \right) \|\Delta\|_2 \\
&\stackrel{(iv)}{\leq} \|\gradf(X)\U\|_2 +\left( \|\gradf(X)Q_{U}Q_{U}^\top\|_2 +\|\gradf(X)Q_{\Uo_r}Q_{\Uo_r}^\top\|_2 \right)\tfrac{1}{100} \sigma_{r}(\Uo_r) \\
&\stackrel{(v)}{\leq} \|\gradf(X)\U\|_2 +  \tfrac{1}{\left(1-\frac{1}{100}\right)} \cdot \tfrac{1}{100}\|\gradf(X)U\|_2 + \tfrac{1}{100} \|\gradf(X)\Uo_r\|_2 \\
&\leq \tfrac{102}{100}\|\gradf(X)\U\|_2 + \tfrac{1}{100} \|\gradf(X)\Uo_r\|_2.
\end{align*} 
where 
$(i)$ is due to triangle inequality on $\Uorr = \U - \Delta$, 
$(ii)$ is due to generalized Cauchy-Schwarz inequality; we denote as $\Q_\Delta\Q_\Delta$ the projection matrix on the column span of $\Delta$ matrix, 
$(iii)$ is due to triangle inequality and the fact that the column span of $\Delta$ can be decomposed into the column span of $\U$ and $\Uo_r$, by construction of $\Delta$, 
$(iv)$ is due to $$\|\Delta\|_2 \leq \dist(\U, \Uo_r) \leq \tfrac{1}{100} \tfrac{\sigma_r(\Xo)}{\sigma_1(\Xo)} \cdot \sigma_r(\Uo_r) \leq \tfrac{1}{100} \cdot \sigma_r(\Uo_r).$$ 
Finally, $(v)$ is due to the facts: $$\|\gradf(\X) \Uo_r\|_2 = \|\gradf(\X) \Q_{\Uo_r}\Q_{\Uo_r}^\top \Uo_r\|_2 \geq \|\gradf(\X) \Q_{\Uo_r}\Q_{\Uo_r}^\top\|_2 \cdot \sigma_r(\Uo_r),$$ and 
\begin{align*}
\|\gradf(\X) \U\|_2 &= \|\gradf(\X) \Q_{\U}\Q_{\U}^\top \U\|_2 \geq \|\gradf(\X) \Q_{\U}\Q_{\U}^\top\|_2 \cdot \sigma_r(\U) \nonumber \\ 
&\geq \|\gradf(\X) \Q_{\U}\Q_{\U}^\top\|_2 \cdot \left(1 - \tfrac{1}{100}\right) \cdot \sigma_r(\Uo_r),
\end{align*} by Lemma \ref{lem:sigma_bounds}. Thus:
\begin{align}\label{proofsjc:eq_067}
 \|\gradf(X)Q_{\Uo_r} Q_{\Uo_r}^\top\|_2  &\leq \tfrac{1}{\sigma_r(\Uo_r)}\|\gradf(X)\Uo_r\|_2 \nonumber \\ 
 &\leq \tfrac{1}{\sigma_r(\Uo_r)}\tfrac{102}{99} \|\gradf(X)\U\|_2 \nonumber \\ 
 &\leq \tfrac{101 \sigma_1(\Uo_r)}{100 \sigma_r(\Uo_r)}\tfrac{102}{99} \|\gradf(X)Q_U Q_U^\top\|_2, 
\end{align} and, combining with \eqref{proofsjc:eq_07}, we get
\begin{align*}
\ip{\gradf(X) }{\Delta \Delta^\top} &\leq \left(\tfrac{102 \cdot 101}{100 \cdot 99}+1\right) \cdot  \tfrac{\sigma_1(\Uo_r)}{\sigma_r(\Uo_r)} \cdot \|\gradf(X)Q_U Q_U^\top\|_2 \cdot \dist(\U, \Uo_r)^2 \\ &\leq \tfrac{1}{40} \|\gradf(X)U\|_2 \cdot \dist(\U, \Uo_r).
\end{align*}
The last inequality follows from $ \dist(\U, \Uo_r) \leq \tfrac{1}{100} \tfrac{\sigma_r(\Xo)}{\sigma_1(\Xo)} \cdot \sigma_r(\Uo_r) $. 
This completes the proof. 
\end{proof}

The following lemma lower bounds the term $\left\langle \gradf(\X), \Delta \Delta^\top \right\rangle$;
this result is used later in the proof of Lemma \ref{lem:ipbound_case1}.

\begin{lemma}{\label{lem:new_lemma_smooth}}
Let $\X = \U\U^\top$ and define $\Delta := \U - \Uorr$. 
Let $f(\Xp) \geq f(\Xor)$, where $\Xor$ is the optimum of the problem~\eqref{intro:eq_00}. 
Then, for $\dist(\U, \Uo_r) \leq \rho \sigma_r\left(\Uo_r\right)$, where $\rho=\tfrac{1}{100} \frac{\sigma_r(\Xo)}{\sigma_1(\Xo)}$, and $f$ being a $M$-smooth convex function, the following lower bound holds:
\begin{align*}
\ip{\gradf(X)}{\Delta \Delta^\top} \geq - \tfrac{\sqrt{2}}{\sqrt{2} - \tfrac{1}{100}} \cdot \tfrac{1}{100} \cdot \left|\ip{\gradf(X)}{\X -\Xor}\right|.
\end{align*}
\end{lemma}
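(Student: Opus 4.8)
The plan is to first drop the absolute value on the right, then split $\X-\Xor$ into its ``first-order'' and ``second-order'' parts in $\Delta:=\U-\Uorr$, and finally show that the second-order piece $\ip{\gradf(\X)}{\Delta\Delta^\top}$ is dominated by the first-order piece using that $\dist(\U,\Uo_r)$ is small.

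\emph{Step 1.} Since $\Xor$ minimizes $f$ over the PSD cone and $\X=\U\U^\top\succeq 0$, convexity gives $\ip{\gradf(\X)}{\X-\Xor}\geq f(\X)-f(\Xor)$; the step-size rule implies the descent inequality $f(\Xp)\leq f(\X)$ (the same computation as in the proof of Theorem~\ref{thm:smooth_inexact}), so together with $f(\Xp)\geq f(\Xor)$ we get $\ip{\gradf(\X)}{\X-\Xor}\geq 0$, i.e. $\bigl|\ip{\gradf(\X)}{\X-\Xor}\bigr|=\ip{\gradf(\X)}{\X-\Xor}$. If in fact $f(\X)=f(\Xor)$ then $\X$ is itself optimal, hence $\gradf(\X)\succeq 0$ and $\ip{\gradf(\X)}{\Delta\Delta^\top}\geq 0$, so the claim is trivial; we may therefore assume $\ip{\gradf(\X)}{\X-\Xor}>0$.

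\emph{Step 2.} Expanding $\U=\Uorr+\Delta$ gives $\X-\Xor=\Uorr\Delta^\top+\Delta\Uorr^\top+\Delta\Delta^\top$, hence $\ip{\gradf(\X)}{\X-\Xor}=2\ip{\gradf(\X)\Uorr}{\Delta}+\ip{\gradf(\X)}{\Delta\Delta^\top}$. If $\ip{\gradf(\X)}{\Delta\Delta^\top}\geq 0$ the inequality is immediate; otherwise Step~1 forces $2\ip{\gradf(\X)\Uorr}{\Delta}\geq\bigl|\ip{\gradf(\X)}{\Delta\Delta^\top}\bigr|>0$, and an elementary rearrangement shows the assertion is equivalent to $\bigl|\ip{\gradf(\X)}{\Delta\Delta^\top}\bigr|\leq c\,\ip{\gradf(\X)}{\X-\Xor}$ with $c=\tfrac{\sqrt 2}{\sqrt 2-1/100}\cdot\tfrac1{100}$.

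\emph{Step 3 (main estimate).} For the left-hand side, since $\Delta\Delta^\top\succeq 0$ has column space spanned by $Q_\Delta$, the Von Neumann / PSD trace inequality (cf. the proof of Lemma~\ref{lem:DD_bound}) gives $\bigl|\ip{\gradf(\X)}{\Delta\Delta^\top}\bigr|\leq\norm{Q_\Delta Q_\Delta^\top\gradf(\X)Q_\Delta Q_\Delta^\top}_2\,\dist(\U,\Uo_r)^2$. For the right-hand side, split $\X-\Xor=(\Uorr\Delta^\top+\Delta\Uorr^\top)+\Delta\Delta^\top$: the cross term in $\norm{\Uorr\Delta^\top+\Delta\Uorr^\top}_F^2$ equals $\norm{\Uorr^\top\Delta}_F^2\geq 0$ (the matrix $\Uorr^\top\Delta$ being symmetric for the optimal rotation of Definition~\ref{prelim:def_04}), while $\norm{\Uorr\Delta^\top}_F^2=\trace(\Uorr^\top\Uorr\,\Delta^\top\Delta)\geq\sigma_r(\Uo_r)^2\,\trace(\Delta^\top\Delta)$ by Lemma~\ref{lem:lowtrace}, so $\norm{\Uorr\Delta^\top+\Delta\Uorr^\top}_F\geq\sqrt 2\,\sigma_r(\Uo_r)\,\dist(\U,\Uo_r)$; on the other hand $\norm{\Delta\Delta^\top}_F\leq\norm{\Delta}_2\,\dist(\U,\Uo_r)\leq\rho\,\sigma_r(\Uo_r)\,\dist(\U,\Uo_r)\leq\tfrac1{100}\sigma_r(\Uo_r)\,\dist(\U,\Uo_r)$, using $\rho=\tfrac1{100}\tfrac{\sigma_r(\Xo)}{\sigma_1(\Xo)}\leq\tfrac1{100}$. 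Hence $\norm{\X-\Xor}_F\geq(\sqrt 2-\tfrac1{100})\,\sigma_r(\Uo_r)\,\dist(\U,\Uo_r)$, so one factor of $\dist(\U,\Uo_r)$ in the bound above is at most $\tfrac1{(\sqrt2-1/100)\sigma_r(\Uo_r)}\norm{\X-\Xor}_F$, and (using $\dist(\U,\Uo_r)\leq\tfrac1{100}\sigma_r(\Uo_r)$ on the remaining factor) the desired inequality $\bigl|\ip{\gradf(\X)}{\Delta\Delta^\top}\bigr|\leq c\,\ip{\gradf(\X)}{\X-\Xor}$ reduces to a lower bound of the shape $\ip{\gradf(\X)}{\X-\Xor}\gtrsim\norm{Q_\Delta Q_\Delta^\top\gradf(\X)Q_\Delta Q_\Delta^\top}_2\,\norm{\X-\Xor}_F$, which is obtained from convexity (monotonicity of $\gradf$, together with $\gradf(\Xor)\succeq 0$ and $\gradf(\Xor)\Xor=0$) applied at a PSD point interpolating between $\X$ and $\Xor$.

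\emph{Main obstacle.} The delicate point is Step~3: $\ip{\gradf(\X)}{\Delta\Delta^\top}$ is quadratic in the small quantity $\dist(\U,\Uo_r)$ while $\ip{\gradf(\X)}{\X-\Xor}$ is only linear in it, so at the level of \emph{norms} the domination is clear, but turning this into a clean inequality between \emph{inner products against the indefinite matrix} $\gradf(\X)$ is what requires care; it is precisely here that the initialization radius $\rho\leq\tfrac1{100}$ and the factor $\sqrt 2$ (coming from $\sigma_{\min}(\Uorr^\top\Uorr)=\sigma_r(\Uo_r)^2$) combine to produce the stated constant $\tfrac{\sqrt 2}{\sqrt 2-1/100}\cdot\tfrac1{100}$.
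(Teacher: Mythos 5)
Your Steps 1--2 are fine (in fact $f(\X)\geq f(\Xor)$ is automatic, since $\X\succeq 0$ is feasible and $\Xor$ is the optimum), and the estimate $\norm{\X-\Xor}_F\geq\left(\sqrt{2}-\tfrac{1}{100}\right)\sigma_r(\Uo_r)\,\dist(\U,\Uo_r)$ in Step 3 is correct. The gap is the final reduction of Step 3: you need an inequality of the form $\ip{\gradf(\X)}{\X-\Xor}\gtrsim\norm{Q_\Delta Q_\Delta^\top\gradf(\X)Q_\Delta Q_\Delta^\top}_2\cdot\norm{\X-\Xor}_F$, and this does \emph{not} follow from smoothness and convexity. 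Monotonicity of $\gradf$ together with $\gradf(\Xor)\succeq 0$ and $\gradf(\Xor)\Xor=0$ only yields $\ip{\gradf(\X)}{\X-\Xor}\geq\ip{\gradf(\Xor)}{\X}\geq 0$, i.e.\ a sign, not a lower bound proportional to a product of norms; such a bound is a strong-convexity-type statement. Concretely, take $f(\X)=\ip{G}{\X}$ with $G=\mathrm{diag}(0,g)$, $g$ large, $\Uo_r=e_1$, $\U=e_1+\epsilon e_2$ with $\epsilon\leq\tfrac{1}{100}$: all hypotheses of the lemma hold, $Q_\Delta=e_2$, $\ip{\gradf(\X)}{\X-\Xor}=g\epsilon^2$, while $\norm{Q_\Delta Q_\Delta^\top G Q_\Delta Q_\Delta^\top}_2\norm{\X-\Xor}_F\approx\sqrt{2}\,g\epsilon$, so the inequality your argument requires fails by a factor $1/\epsilon$ (the lemma itself holds trivially here because $\ip{G}{\Delta\Delta^\top}\geq 0$, but your route to it collapses).

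The paper's proof never converts the right-hand side into norms. It observes that $\Delta$ is an explicit \emph{linear image} of $\X-\Xor$, namely $\Delta=(\X-\Xor)\,\left(\left[\U~\,\Uorr\right]^\dagger\right)^\top\!\!\begin{bmatrix} I \\ I\end{bmatrix}$, so $\ip{\gradf(\X)}{\Delta\Delta^\top}=\trace\!\left(\gradf(\X)(\X-\Xor)B\right)$ with $\norm{B}_2\leq\tfrac{\sqrt{2}}{\sqrt{2}-1/100}\cdot\tfrac{1}{100}$ (this is where your $\sqrt{2}$ and $\rho\leq\tfrac{1}{100}$ enter, via $\sigma_r\!\left(\left[\U~\,\Uorr\right]\right)\geq(\sqrt{2}-\tfrac{1}{100})\sigma_r(\Uo_r)$). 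The missing ingredient is then a proof that the symmetric part of $\gradf(\X)(\X-\Xor)$ is PSD: for every vector $y$, a one-dimensional argument on $g(t)=f(\X+tyy^\top)$ restricted to the level set $\{t:\,f(\X+tyy^\top)\leq f(\X)\}$, using optimality of $\Xor$ and the hypothesis $f(\Xp)\geq f(\Xor)$, gives $\ip{\gradf(\X)}{(\X-\Xor)yy^\top}\geq 0$. With that PSD-ness, Von Neumann's trace inequality bounds $\left|\trace\!\left(\gradf(\X)(\X-\Xor)B\right)\right|$ by $\ip{\gradf(\X)}{\X-\Xor}\cdot\norm{B}_2$ directly, which is exactly the stated constant. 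Without a substitute for this PSD-ness step, your Step 3 cannot be completed.
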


\begin{proof}
Let the QR factorization of the matrix $\left[U ~~~\Uorr\right]_{n \times 2r}$ be $Q\cdot R$, where $Q$ is a $n \times 2r$ orthonormal matrix and $R$ is a $2r \times 2r$ invertible matrix (since $\left[U ~~~\Uorr\right]$ is assumed to be rank-$2r$). 
Further, let $\left[U ~~~\Uorr\right]^\dagger_{2r \times n}$ where $C^\dagger$ denotes the pseudo-inverse of matrix $C$.
It is obvious that $\left[U ~~~\Uorr\right]^\top \cdot \left(\left[U ~~~\Uorr\right]^\dagger\right)^\top = I_{2r \times 2r}$. 

Given the above, let us re-define some quantities w.r.t. $\left[U ~~~\Uorr\right]$, as follows
\begin{align*}
\Delta = \U - \Uorr = \left[U ~~~\Uorr\right]_{n \times 2r} \cdot \begin{bmatrix}
I_{r \times r} \\ -I_{r \times r}
\end{bmatrix}_{2r \times r}.
\end{align*}
Moreover, it is straightforward to justify that:
\begin{align*}
\X - \Xor = \left[U ~~~\Uorr\right]_{n \times 2r} \cdot 
\begin{bmatrix}
I_{r \times r} & 0_{r \times r} \\ 0_{r \times r} & -I_{r \times r}
\end{bmatrix} \cdot \left[U ~~~\Uorr\right]^\top_{2r \times n}
\end{align*}
Then, from the above, the two quantities $\X - \Xor$ and $\Delta$ are connected as follows: 
\begin{small}
\begin{align}
\left(\X - \Xor\right) \cdot \left(\left[U ~~~\Uorr\right]^\dagger\right)^\top \cdot 
\begin{bmatrix}
I \\ I
\end{bmatrix} &= \left[U ~~~\Uorr\right] \cdot 
\begin{bmatrix}
I & 0 \\ 0 & -I
\end{bmatrix} \cdot \underbrace{\left[U ~~~\Uorr\right]^\top \cdot \left(\left[U ~~~\Uorr\right]^\dagger\right)^\top}_{= I} \cdot 
\begin{bmatrix}
I \\ I
\end{bmatrix} \nonumber 
\end{align}
\end{small} 
which is equal to $\Delta$.
Then, the following sequence of (in)equalities holds true:
\begin{align}
    \ip{\gradf(X)}{\Delta \Delta^\top} 
    &\stackrel{(i)}{=} \ip{\gradf(X)}{\left(\X - \Xor\right) \cdot \left(\left[U ~~~\Uorr\right]^\dagger\right)^\top \cdot 
\begin{bmatrix}
I \\ I
\end{bmatrix} \cdot \Delta^\top} \nonumber \\
	&\stackrel{(ii)}{\geq} - \left| \trace\left( \underbrace{\gradf(X) \cdot \left(\X - \Xor\right)}_{=A} \cdot \underbrace{\left(\left[U ~~~\Uorr\right]^\dagger\right)^\top \cdot 
\begin{bmatrix}
I \\ I
\end{bmatrix} \cdot \Delta^\top}_{=B}\right) \right| \nonumber \\
	&\stackrel{(iii)}{\geq} - \left| \trace\left(\gradf(X) \cdot \left(\X - \Xor\right) \right) \right| \cdot \left\|\left(\left[U ~~~\Uorr\right]^\dagger\right)^\top \cdot 
\begin{bmatrix}
I \\ I
\end{bmatrix} \cdot \Delta^\top \right\|_2 \nonumber \\
	&\stackrel{(iv)}{\geq} - \left| \ip{\gradf(X)}{\X - \Xor} \right| \cdot \left\|\left(\left[U ~~~\Uorr\right]^\dagger\right)^\top \right\|_2 \cdot \left\|
\begin{bmatrix}
I \\ I
\end{bmatrix} \right\|_2 \cdot \|\Delta \|_2 \nonumber \\
	&\stackrel{(v)}{\geq} - \sqrt{2} \cdot \left| \ip{\gradf(X)}{\X - \Xor} \right| \cdot \left\|\left(\left[U ~~~\Uorr\right]^\dagger\right)^\top \right\|_2 \cdot \tfrac{1}{100} \cdot \sigma_r(\Uo_r) \nonumber \\
	&\stackrel{(vi)}{\geq} - \sqrt{2} \cdot \left| \ip{\gradf(X)}{\X - \Xor} \right| \cdot \tfrac{1}{\sqrt{2} - \tfrac{1}{100}} \cdot \tfrac{1}{\sigma_r(\Uo_r)} \cdot \tfrac{1}{100} \cdot \sigma_r(\Uo_r) \nonumber \\ 
	&=- \tfrac{\sqrt{2}}{\sqrt{2} - \tfrac{1}{100}} \cdot \tfrac{1}{100} \cdot \left| \ip{\gradf(X)}{\X - \Xor} \right|, \label{eq:paok_00}
\end{align}
where, $(i)$ follows by substituting $\Delta$, according to the discussion above, 
$(ii)$ follows from symmetry of $\gradf(X)$,
$(iii)$ follows from the Von Neumann trace inequality $\trace(AB) \leq \trace(A)||B||_2$, for a PSD matrix $A$; next, we show that $y^\top A y \geq 0$, $\forall y$ and $A := \gradf(X) \cdot \left(\X - \Xor\right)$, 
$(iv)$ is due to successive application of the Cauchy-Schwarz inequality, 
$(v)$ is due to $\left\|
\begin{bmatrix}
I & I
\end{bmatrix}^\top \right\|_2 = \sqrt{2}$ and $\|\Delta\|_2 \leq \dist(\U, \Uo_r) \leq \rho \cdot \sigma_r(\Uo_r) \leq \tfrac{1}{100} \cdot \sigma_r(\Uo_r)$,
$(vi)$ follows from the the following fact:
\begin{align*}
    \tfrac{1}{\left\|\left[U ~~~\Uorr\right]^\dagger\right\|_2} &=\sigma_r\left(\left[U ~~~\Uorr\right]\right) \\
    &=\sigma_r\left(\left[U ~~~\Uorr\right] - \left[\Uorr ~~~\Uorr\right] + \left[\Uorr ~~~\Uorr\right]\right) \\
    &=\sigma_r\left(\left[U - \Uorr ~~~0\right] + \left[\Uorr ~~~\Uorr\right]\right) \\
    &\stackrel{(i)}{\geq}  \sigma_r\left(\left[\Uorr ~~~\Uorr\right]\right) - \left\|\U - \Uorr\right\|_2 \\
    &\stackrel{(ii)}{=}  \sqrt{2} \cdot \sigma_r\left(\Uor\right) - \left\|\U - \Uorr\right\|_2 \\
    &\stackrel{(iii)}{\geq} \left(\sqrt{2} - \tfrac{1}{100}\right) \cdot \sigma_r(\Uo_r),
\end{align*}
where, $(i)$ follows from a variant of Weyl's inequality,
$(ii)$ is due to $\sigma_r\left(\left[\Uorr ~~~\Uorr\right]\right) = \sqrt{2} \cdot \sigma_r\left(\Uo_r\right)$, 
$(iii)$ follows from the assumption that  $\left\|\U - \Uorr\right\|_2 \leq \dist(\U, \Uo_r) \leq \tfrac{1}{100} \cdot \sigma_r\left(\Uo_r\right)$.
The above lead to the inequality:
\begin{align*}
\left\|\left(\left[U ~~~\Uorr\right]^\dagger\right)^\top\right\|_2 = \left\|\left[U ~~~\Uorr\right]^\dagger\right\|_2 \leq \tfrac{1}{\sqrt{2} - \tfrac{1}{100}} \cdot \tfrac{1}{\sigma_r(\Uo_r)}.
\end{align*}

In the above inequalities \eqref{eq:paok_00}, we used the fact that symmetric version of A is a PSD matrix, where $A:= \gradf(X)\Delta (\U+\Uorr)^\top = \gradf(X)\cdot (X-\Xor) $ is a PSD matrix, \emph{i.e.}, given a vector $y$, $y^\top \gradf(X)\cdot (X-\Xor) y \geq 0$. 
To show this, let $g(t) =f(X+t yy^\top)$ be a function from $\R \to \R$. 
Hence, $\nabla g(t) =\ip{\gradf(X+t yy^\top)}{yy^\top}$. 
Now, consider $g$ restricted to the level set $\{ t: f(X+t yy^\top) \leq f(\X)\}$. 
Note that, since $f$ is convex, this set is convex and further $X$ belongs to this set from the hypothesis of the lemma. Also $f(\Xor) \leq f(X+t yy^\top)$, for $t$ in this set from the optimality of $\Xor$.
Let $t^*$ be the minimizer of $g(t)$ over this set. 
Then, by convexity of $g$, $$\ip{\gradf(X)}{yy^\top}\cdot -t^*=\nabla g(0) \cdot (0-t^*) \geq g(0) -g(t^*) \geq 0.$$ 
Further, since $g(t^*) =f(X+t^* yy^\top) \geq f(\Xor)$, $X+t^* yy^\top -\Xor$ is orthogonal to $y$. 
Hence, $(X+t^* yy^\top -\Xor)y =0$. 
Combining this with the above inequality gives, $\ip{\gradf(X)}{(\X -\Xor)yy^\top} \geq 0$. This completes the proof.
\end{proof}

We next present a lemma for lower bounding the term $\ip{\gradf(\X)}{\X -\Xor}$. This result is used in the following Lemma \ref{lem:ipbound_case1}, where we bound the term $\ip{\gradf(X)U}{\U-\Uorr}$. 

\begin{lemma}\label{lem:gradX,X-X_r_ bound}
Let $f$ be a $M$-smooth convex function with optimum point $\Xo_r$. 
 Then, under the assumption that $f(\Xp) \geq f(\Xor)$, the following holds:
\begin{align*}
\ip{\gradf(\X)}{\X-\Xor}  \geq \tfrac{18\weta}{10} \|\gradf(X) U\|_F^2. 
\end{align*}
\end{lemma}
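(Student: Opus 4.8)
The plan is to adapt the construction from the proof of Lemma~\ref{lem:gradX,X-X_r_ bound_sc}, replacing the two ingredients that are special to the strongly convex setting---restricted strong convexity and the reference to the true optimum $\Xo$---by the weaker hypothesis $f(\Xp)\geq f(\Xor)$ together with plain convexity of $f$. As there, I will abuse notation and introduce the pseudo-iterate $\Up := \U - \weta\,\gradf(\X)\,\U$ and $\Xp := \Up\Up^\top$, where $\weta = \tfrac{1}{16(M\|\X\|_2 + \|\gradf(\X)Q_U Q_U^\top\|_2)} \geq \tfrac{5}{6}\eta$ by Lemma~\ref{lem:diff_eta}. A direct expansion gives $\Xp = \X - \weta\,\gradf(\X)\X\Lambda - \weta\,\Lambda^\top\X\gradf(\X)$ with $\Lambda := I - \tfrac{\weta}{2}Q_U Q_U^\top\gradf(\X)$, and for this step size $\Lambda\succ 0$, $\|\Lambda\|_2\leq 1+\tfrac{1}{32}$, and $\sigma_n(\Lambda)\geq 1-\tfrac{1}{32}$.

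First I would chain one-sided bounds on $f(\X)-f(\Xor)$. On one side, $M$-smoothness at $\Xp$ gives $f(\Xp)\leq f(\X)+\ip{\gradf(\X)}{\Xp-\X}+\tfrac{M}{2}\|\Xp-\X\|_F^2$, which, combined with the hypothesis $f(\Xp)\geq f(\Xor)$, yields $f(\X)-f(\Xor)\geq \ip{\gradf(\X)}{\X-\Xp}-\tfrac{M}{2}\|\Xp-\X\|_F^2$. On the other side, convexity of $f$ gives $\ip{\gradf(\X)}{\X-\Xor}\geq f(\X)-f(\Xor)$. Combining the two,
\begin{align*}
\ip{\gradf(\X)}{\X-\Xor}\;\geq\;\ip{\gradf(\X)}{\X-\Xp}\;-\;\tfrac{M}{2}\|\Xp-\X\|_F^2 .
\end{align*}

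It then remains to lower bound the right-hand side by $\tfrac{18\weta}{10}\|\gradf(\X)\U\|_F^2$; this is essentially the estimate in the final display of the proof of Lemma~\ref{lem:gradX,X-X_r_ bound_sc}. By symmetry of $\gradf(\X)$ and $\X$ one has $\ip{\gradf(\X)}{\X-\Xp}=2\weta\,\trace(\gradf(\X)\gradf(\X)\X\Lambda)\geq 2\weta\,(1-\tfrac{1}{32})\|\gradf(\X)\U\|_F^2=\tfrac{31\weta}{16}\|\gradf(\X)\U\|_F^2$, while by the triangle inequality $\tfrac{M}{2}\|\Xp-\X\|_F^2\leq 2M\weta^2\|\gradf(\X)\X\Lambda\|_F^2\leq 2M\weta^2\|\Lambda\|_2^2\|\X\|_2\|\gradf(\X)\U\|_F^2\leq\tfrac{\weta}{8}(\tfrac{33}{32})^2\|\gradf(\X)\U\|_F^2$, using $\weta\leq\tfrac{1}{16M\|\X\|_2}$ in the last step. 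Since $\tfrac{31}{16}-\tfrac{1}{8}(\tfrac{33}{32})^2\geq\tfrac{18}{10}$, the claim follows (and if desired the bound can be rephrased in terms of $\eta$ via $\weta\geq\tfrac{5}{6}\eta$).

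The only step requiring genuine care---the main obstacle---is the trace inequality $\trace(\gradf(\X)\gradf(\X)\X\Lambda)\geq(1-\tfrac{1}{32})\|\gradf(\X)\U\|_F^2$. I would handle it exactly as in the strongly convex case: write $\X\Lambda=\U\U^\top-\tfrac{\weta}{2}\U\U^\top\gradf(\X)$, so that $\trace(\gradf(\X)\gradf(\X)\X\Lambda)=\|\gradf(\X)\U\|_F^2-\tfrac{\weta}{2}\trace(\gradf(\X)\gradf(\X)\U\U^\top\gradf(\X))$, and bound the second term in absolute value by $\tfrac{\weta}{2}\|Q_U Q_U^\top\gradf(\X)\|_2\,\|\gradf(\X)\U\|_F^2\leq\tfrac{1}{32}\|\gradf(\X)\U\|_F^2$ using the definition of $\weta$. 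Everything else is routine bookkeeping, identical to the computations already carried out for Lemma~\ref{lem:gradX,X-X_r_ bound_sc}.
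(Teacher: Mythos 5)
Your proposal is correct and mirrors the paper's own argument: the paper likewise forms the pseudo-iterate $\Up = \U - \weta\,\gradf(\X)\U$, combines smoothness at $\Xp$ with the hypothesis $f(\Xp) \geq f(\Xor)$ and plain convexity (the $m=0$ version of the restricted strong convexity step) to reduce to $\ip{\gradf(\X)}{\X-\Xor} \geq \ip{\gradf(\X)}{\X-\Xp} - \tfrac{M}{2}\|\Xp-\X\|_F^2$, and then reuses the $\Lambda$-based estimates from Lemma~\ref{lem:gradX,X-X_r_ bound_sc}. Your constants and the trace inequality are handled exactly as in that proof, so no gaps.
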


\begin{proof}
The proof follows much like the proof of the Lemma for strong convex case (Lemma~\ref{lem:gradX,X-X_r_ bound_sc}), except for the arguments used to bound equation~\eqref{eq:gradX,X-X_r_00}. 
For completeness, we here highlight the differences; in particular, we again have by smoothness of $f$:
\begin{align*}
\f(\X) &\geq \f(\Xp) -\ip{\gradf(\X)}{\Xp -\X} - \tfrac{M}{2} \norm{\Xp -\X}_F^2,
\end{align*}
where we consider the same notation with Lemma \ref{lem:gradX,X-X_r_ bound_sc}. 
By the assumptions of the Lemma, we have  $f(\Xp) \geq f(\Xor)$ and, thus, the above translates into:
\begin{align*}
\f(\X) &\geq \f(\Xor) -\ip{\gradf(\X)}{\Xp -\X} - \tfrac{M}{2} \norm{\Xp -\X}_F^2,
\end{align*}
hence eliminating the need for equation~\eqref{eq:gradX,X-X_r_01}. 
Combining the above and assuming just smoothness (\emph{i.e.}, the restricted strong convexity parameter is $m=0$), we obtain a simpler version of eq. \eqref{eq:gradX,X-X_r_03}:
\begin{align} 
\ip{\gradf(\X)}{\X-\Xor} \geq  \ip{\gradf(\X)}{\X -\Xp} -\tfrac{M}{2} \norm{\Xp -\X}_F^2.
\end{align} 
Then, the result easily follows by the same steps in Lemma \ref{lem:gradX,X-X_r_ bound_sc}.
\end{proof}


Next, we state an important result, relating the gradient step in the factored space $\Up -U$ to the direction to the optimum $U -\Uo$. 
The result borrows the outcome of Lemmas \ref{lem:DD_bound}-\ref{lem:gradX,X-X_r_ bound}.

\begin{lemma}\label{lem:ipbound_case1}
Let $\X = \U\U^\top$ and define $\Delta := \U - \Uorr$. 
Assume $f(\Xp) \geq f(\Xor)$ and $\dist(\U, \Uo_r) \leq \rho \sigma_r\left(\Uo_r\right)$, where $\rho=\tfrac{1}{100} \frac{\sigma_r(\Xo)}{\sigma_1(\Xo)}$.
For $f$ being a $M$-smooth convex function, the following descent condition holds for the $\U$-space:
\begin{align*}
\ip{\gradf(X)U}{\U-\Uorr} \geq \tfrac{\eta}{2} \cdot \|\gradf(X)U\|_F^2.
\end{align*}
\end{lemma}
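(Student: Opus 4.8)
The plan is to reuse the algebraic identity already recorded in \eqref{proofsr1:eq_09}, namely
\[
\ip{\gradf(X)\U}{\U-\Uorr} = \tfrac12\ip{\gradf(X)}{\X-\Xo_r} + \tfrac12\ip{\gradf(X)}{\Delta\Delta^\top},
\]
and then to lower bound the two terms on the right-hand side separately, mirroring the strongly convex argument but plugging in the smooth-case lemmas. Note that the hypotheses of this lemma, $f(\Xp)\geq f(\Xor)$ and $\dist(\U,\Uo_r)\leq\rho\sigma_r(\Uo_r)$ with $\rho=\tfrac1{100}\tfrac{\sigma_r(\Xo)}{\sigma_1(\Xo)}$, are exactly those needed by the two lemmas to be invoked.

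First I would apply Lemma~\ref{lem:gradX,X-X_r_ bound} (whose only hypothesis is $f(\Xp)\geq f(\Xor)$) to obtain $\ip{\gradf(\X)}{\X-\Xor}\geq\tfrac{18\weta}{10}\|\gradf(X)U\|_F^2$; in particular this inner product is nonnegative, so $\bigl|\ip{\gradf(X)}{\X-\Xor}\bigr|=\ip{\gradf(X)}{\X-\Xor}$. Next I would apply Lemma~\ref{lem:new_lemma_smooth} to obtain $\ip{\gradf(X)}{\Delta\Delta^\top}\geq-\tfrac{\sqrt2}{\sqrt2-1/100}\cdot\tfrac1{100}\cdot\ip{\gradf(X)}{\X-\Xor}$. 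Substituting both bounds into the identity above gives
\[
\ip{\gradf(X)\U}{\U-\Uorr}\geq\tfrac12\Bigl(1-\tfrac{\sqrt2}{\sqrt2-1/100}\cdot\tfrac1{100}\Bigr)\ip{\gradf(X)}{\X-\Xor},
\]
and, since the coefficient $1-\tfrac{\sqrt2}{\sqrt2-1/100}\cdot\tfrac1{100}$ is positive, I may again lower bound the right-hand side by Lemma~\ref{lem:gradX,X-X_r_ bound} to get $\ip{\gradf(X)\U}{\U-\Uorr}\geq\tfrac12\bigl(1-\tfrac{\sqrt2}{\sqrt2-1/100}\cdot\tfrac1{100}\bigr)\cdot\tfrac{18\weta}{10}\|\gradf(X)U\|_F^2$.

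Finally I would invoke Lemma~\ref{lem:diff_eta} to replace $\weta$ by $\tfrac56\eta$ and verify the single numerical inequality $\tfrac12\bigl(1-\tfrac{\sqrt2}{\sqrt2-1/100}\cdot\tfrac1{100}\bigr)\cdot\tfrac{18}{10}\cdot\tfrac56\geq\tfrac12$ (the left-hand side is roughly $0.74$), which yields the claimed bound $\ip{\gradf(X)U}{\U-\Uorr}\geq\tfrac\eta2\|\gradf(X)U\|_F^2$. There is no genuinely hard step here; everything reduces to bookkeeping of constants, so the only points requiring care are confirming that the hypotheses of Lemmas~\ref{lem:gradX,X-X_r_ bound} and~\ref{lem:new_lemma_smooth} coincide with those of the present lemma, and that the slack in the final numerical inequality is comfortably positive.
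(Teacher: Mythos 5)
Your proposal is correct and follows essentially the same route as the paper's own proof: the same decomposition into $\tfrac12\ip{\gradf(X)}{\X-\Xo_r}+\tfrac12\ip{\gradf(X)}{\Delta\Delta^\top}$, the same invocations of Lemma~\ref{lem:gradX,X-X_r_ bound}, Lemma~\ref{lem:new_lemma_smooth}, and Lemma~\ref{lem:diff_eta}, and the same final bookkeeping of constants (the paper lands on the intermediate factor $\tfrac{7\eta}{10}\geq\tfrac{\eta}{2}$, matching your $\approx 0.74$ check).
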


\begin{proof} 
Expanding the term $\ip{\gradf(X)U}{\U-\Uorr}$, we obtain the equivalent characterization:
\begin{align}
\ip{\gradf(X) \U}{\U -\Uorr} &= \ip{\gradf(X) }{\X -\Uorr \U^\top} \nonumber \\
&=\tfrac{1}{2}\ip{\gradf(X) }{\X -\Xo_r}  + \ip{\gradf(X) }{\tfrac{1}{2}(\X + \Xo_r) -\Uorr \U^\top} \nonumber \\ 
&=\tfrac{1}{2}\ip{\gradf(X) }{\X -\Xo_r}  + \tfrac{1}{2} \ip{\gradf(X) }{\Delta \Delta^\top} \label{proofsjc:eq_06}
\end{align}
which follows by the definition of $\X$ and adding and subtracting $\tfrac{1}{2}\Xo_r$ term. 
By Lemma \ref{lem:gradX,X-X_r_ bound}, we can bound the first term on the right hand side as:
\begin{align}\label{eq:lemma:jpjc2}
\tfrac{1}{2}\ip{\gradf(\X)}{\X-\Xor}  \geq \tfrac{18\weta}{20} \cdot \|\gradf(X) U\|_F^2.
\end{align} 
Observe that $\ip{\gradf(\X)}{\X-\Xor} \geq 0$.
By Lemma \ref{lem:new_lemma_smooth}, we can lower bound the last term on the right hand side of \eqref{proofsjc:eq_06} as:
\begin{align}{\label{eq:lemma:jpjc22}}
\tfrac{1}{2} \ip{\gradf(X) }{\Delta \Delta^\top} \geq -\tfrac{\sqrt{2}}{\sqrt{2} - \tfrac{1}{100}} \cdot \tfrac{1}{200} \left|\ip{\gradf(\X)}{\X-\Xor} \right| = -\tfrac{\sqrt{2}}{\sqrt{2} - \tfrac{1}{100}} \cdot \tfrac{1}{200} \ip{\gradf(\X)}{\X-\Xor}.
\end{align}
Combining \eqref{eq:lemma:jpjc2} and \eqref{eq:lemma:jpjc22} in \eqref{proofsjc:eq_06}, we get:
\begin{align*}
\ip{\gradf(X) \U}{\U -\Uorr}  &\geq \tfrac{1}{2}\ip{\gradf(X) }{\X -\Xo_r}  -\tfrac{\sqrt{2}}{\sqrt{2} - \tfrac{1}{100}} \cdot \tfrac{1}{200} \ip{\gradf(\X)}{\X-\Xor} \\
									  &\geq \left(1 -\tfrac{\sqrt{2}}{\sqrt{2} - \tfrac{1}{100}} \cdot \tfrac{1}{100}\right) \cdot \tfrac{1}{2} \ip{\gradf(X) }{\X -\Xo_r} \\									  
									  &\geq \tfrac{98}{100} \cdot \tfrac{18\weta}{20} \cdot \|\gradf(X) U\|_F^2 \\
									  &\stackrel{(i)}{\geq} \tfrac{98}{100} \cdot \tfrac{18}{20} \cdot \tfrac{5\eta}{6} \cdot \|\gradf(X) U\|_F^2 \\
									  &\geq \tfrac{7\eta}{10} \cdot \|\gradf(X) U\|_F^2 \geq \tfrac{\eta}{2} \cdot \|\gradf(X) U\|_F^2,
\end{align*}  
where $(i)$ follows from $\weta \geq \tfrac{5}{6}\eta$ in Lemma~\ref{lem:diff_eta}. This completes the proof.
\end{proof}


We conclude this section with a lemma that proves that the distance $\dist(\U, \Uo_r)$ is non-increasing per iteration of \algo.
This lemma is used in the proof of sublinear convergence of \algo (Theorem \ref{thm:smooth_inexact}), in Section \ref{sec:proofs}.

\begin{lemma}\label{lem:gradf,D_bound1}
Let $\X = \U\U^\top$ and $\Xp = \Up\left(\Up\right)^\top$ be the current and next estimate of \algo. 
Assume $f$ is a $M$-smooth convex function such that $f(\Xp) \geq f(\Xor)$. 
Moreover, define $\Delta := \U - \Uorr$ and  $\dist(\U, \Uo_r) \leq \rho \sigma_r\left(\Uo_r\right)$, where $\rho=\tfrac{1}{100} \frac{\sigma_r(\Xo)}{\sigma_1(\Xo)}$. 
Then, the following inequality holds:
\begin{align*}
\dist(\Up, \Uo_r) \leq \dist(\U, \Uo_r).
\end{align*} This further implies $\dist(\U, \Uo_r) \leq \dist(\U^0, \Uo_r)$ for any estimate $\U$ of \algo.
\end{lemma}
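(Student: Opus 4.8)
The plan is to mirror the argument of Theorem~\ref{thm:convergence_main}, but in the purely smooth regime, where the per-step descent estimate is supplied by Lemma~\ref{lem:ipbound_case1} rather than by the restricted-strongly-convex descent lemma. First I would fix $R_U^\star := \argmin_{R:\,R\in\mathcal{O}}\|\U-\Uo_r R\|_F$, set $\Delta = \U - \Uorr$ so that $\|\Delta\|_F = \dist(\U,\Uo_r)$, and note that since $R_U^\star$ is a feasible (though not necessarily optimal) rotation for the next iterate, $\dist(\Up,\Uo_r)^2 \leq \|\Up - \Uorr\|_F^2$. Adding and subtracting $\U$ and expanding the square gives, exactly as in \eqref{proofs:eq_04},
\begin{align*}
\dist(\Up,\Uo_r)^2 \leq \|\Up-\U\|_F^2 + \|\U-\Uorr\|_F^2 - 2\ip{\Up-\U}{\Uorr-\U}.
\end{align*}

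Next I would substitute the update rule $\Up = \U - \eta\,\gradf(\X)\U$, which yields $\|\Up-\U\|_F^2 = \eta^2\|\gradf(\X)\U\|_F^2$ and $\ip{\Up-\U}{\Uorr-\U} = \eta\ip{\gradf(\X)\U}{\U-\Uorr}$. The crucial step is to invoke Lemma~\ref{lem:ipbound_case1}, whose hypotheses ($f$ being $M$-smooth convex, $f(\Xp)\geq f(\Xor)$, and $\dist(\U,\Uo_r)\leq\rho\sigma_r(\Uo_r)$) are precisely those assumed in the present lemma: it gives $\ip{\gradf(\X)\U}{\U-\Uorr} \geq \tfrac{\eta}{2}\|\gradf(\X)\U\|_F^2$. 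Plugging this in, the two $\eta^2\|\gradf(\X)\U\|_F^2$ contributions cancel, leaving $\dist(\Up,\Uo_r)^2 \leq \|\U-\Uorr\|_F^2 = \dist(\U,\Uo_r)^2$, which is the first claim.

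For the second assertion I would argue by induction on the iteration count of \algo. The base case $\dist(\U^0,\Uo_r)\leq\rho\sigma_r(\Uo_r)$ is assumption $(A1)$. Assuming $\dist(\U^k,\Uo_r)\leq\dist(\U^0,\Uo_r)\leq\rho\sigma_r(\Uo_r)$, the hypotheses of the first part hold at step $k$ --- here $f(\X^{k+1})\geq f(\Xor)$ is automatic, since $\Xor$ is the optimum of \eqref{intro:eq_00} over the PSD cone and $\X^{k+1}=\U^{k+1}(\U^{k+1})^\top$ is PSD --- hence $\dist(\U^{k+1},\Uo_r)\leq\dist(\U^k,\Uo_r)\leq\dist(\U^0,\Uo_r)$, closing the induction.

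The proof is essentially bookkeeping once Lemma~\ref{lem:ipbound_case1} is available; the one point I would be careful about is exactly this inductive coupling --- the monotonicity of $\dist(\cdot,\Uo_r)$ must be threaded together with the initialization radius $\rho\sigma_r(\Uo_r)$ so that the hypotheses of Lemma~\ref{lem:ipbound_case1} are never violated along the trajectory. I do not anticipate a genuine analytic obstacle beyond that.
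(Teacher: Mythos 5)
Your proof is correct and follows essentially the same route as the paper: expand $\dist(\Up,\Uo_r)^2 \leq \|\Up-\Uorr\|_F^2$, substitute the update rule, and invoke Lemma~\ref{lem:ipbound_case1} so the $\eta^2\|\gradf(\X)\U\|_F^2$ terms cancel. Your explicit induction for the second claim (including the observation that $f(\Xp)\geq f(\Xor)$ holds automatically in the $r=r^\star$ smooth setting) is a slightly more careful spelling-out of what the paper leaves implicit, and is fine.
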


\begin{proof}
Let $R_U^\star = \arg\min_{R \in \mathcal{O}} \|\U - \Uo_r R\|_F^2.$ Expanding $\dist(\Up, \Uo_r)^2$, we obtain:
\begin{align}
\dist(\Up, \Uo_r)^2 &= \min_{R \in \mathcal{O}} \|\Up - \Uo_r R\|_F^2 \\ 
&\leq \norm{\Up -\Uorr }_F^2 \nonumber \\&= \norm{\Up -\U + \U -\Uorr }_F^2 \nonumber \\
&= \norm{\Up -\U}_F^2 + \norm{\U -\Uorr}_F^2 -2\ip{\Up -\U}{\Uorr -\U} \nonumber \\
&= \eta^2 \|\gradf(\X)U\|_F^2 + \dist(\U, \Uo_r)^2 - 2\eta\ip{\gradf(X)U}{\U -\Uorr }  \nonumber \\
&\leq \dist(\U, \Uo_r)^2,
\end{align} 
where last inequality is due to Lemma~\ref{lem:ipbound_case1}. 
\end{proof}

\section{Initialization proofs}{\label{sec:init_proofs}}

\subsection{Proof of Lemma~\ref{lem:switch}}
The proof borrows results from standard projected gradient descent. In particular, we know from Theorem 3.6 in \cite{bubeck2014theory} that, for consecutive estimates $\Xp,\X$ and optimal point $\Xo$, projected gradient descent satisfies:
\begin{align*}
\|\Xp - \Xo\|_F^2 \leq \left(1 - \tfrac{1}{\kappa}\right) \cdot \|\X - \Xo\|_F^2.
\end{align*} By taking square root of the above inequality, we further have:
\begin{align}\label{eq:init_help}
\|\Xp - \Xo\|_F \leq \sqrt{1 - \tfrac{1}{\kappa}} \cdot \|\X - \Xo\|_F \leq \left(1 - \tfrac{1}{2\kappa}\right) \cdot \|\X - \Xo\|_F,
\end{align} since $\sqrt{1 - \tfrac{1}{\kappa}} \leq 1 - \tfrac{1}{2\kappa}$, for all values of $\kappa > 1$. 

Given the above, the following (in)equalities hold true:
\begin{align*}
\norm{\X -\Xp}_F &= \norm{\X -\Xo +\Xo - \Xp}_F \\
&\stackrel{(i)}{\geq} \norm{\X -\Xo}_F -\norm{\Xp -\Xo}_F \\
&\stackrel{(ii)}{\geq} \frac{1}{2\kappa}\norm{\X -\Xo}_F \Rightarrow \\
\norm{\X -\Xo}_F &\leq 2\kappa \cdot \norm{\X -\Xp}_F,
\end{align*} where $(i)$ is due to the lower bound on triangle inequality and $(ii)$ is due to \eqref{eq:init_help}. Under the assumptions of the lemma, if $\norm{\X -\Xp}_F \leq \tfrac{c}{\kappa \sqrt{r} \tau(X_r)} \sigma_r(X)$, the above inequality translates into:
\begin{align*}
\norm{\X -\Xo}_F &\leq \tfrac{2c}{\sqrt{r}\tau(X_r)} \sigma_r(X).
\end{align*} By construction, both $\X$ and $\Xo$ are PSD matrices; moreover, $\X$ can be a matrix with $\text{rank}(\X) > r$. 
Hence, $$||X_r -\Xo||_F \leq \sqrt{r} ||X_r -\Xo||_2 \leq 2\sqrt{r}\|X-\Xo\|_2 \leq 2\sqrt{r}\|X-\Xo\|_F,$$ using Weyl's inequalities. Thus, $\norm{\X_r -\Xo}_F \leq \tfrac{4c}{\tau(X_r)} \sigma_r(X).$
Define $\X_r = \U_r\U_r^\top$ and $\Xo = \Uo_r \left(\Uo_r\right)^\top$. Further, by Lemma 5.4 of \cite{tu2015low}, we have:
\begin{align*}
\|\X_r - \Xo\|_F \geq \sqrt{2 \left(\sqrt{2} - 1\right)} \sigma_r(\Uo_r) \cdot \dist(\U_r, \Uo_r).
\end{align*} The above lead to:
\begin{align*}
 \sqrt{2 \left(\sqrt{2} - 1\right)} \sigma_r(\Uo_r) \cdot \dist(\U_r, \Uo_r) \leq \tfrac{4c}{\tau(X_r)} \sigma_r(X).
\end{align*} Recall that $\sigma_r(\X) = \sigma_r^2(\U_r)$; then, by Lemma \ref{lem:sigma_bounds}, there is constant $c'' > 0$ such that $\tfrac{4c}{\tau(\X_r)}\sigma_r^2(\U) \leq \tfrac{c''}{\tau(\Xo_r)}\sigma_r^2(\Uo_r)$. Combining all the above, we conclude that there is constant $c' > 0$ such that:
\begin{align*}
\dist(\U_r, \Uo_r) \leq \tfrac{c'}{ \tau(\Xo_r)} \sigma_r(\Uo_r).
\end{align*}

\subsection{Proof of Theorem \ref{thm:scinit}}\label{sec:init_prof}

Recall $\X^0 = \mathcal{P}_+ \left ( \frac{-\gradf(0)}{\| \gradf(0)-\gradf(e_1 e_1')\|_F} \right )$. Here, we remind that $ \mathcal{P}_+(\cdot)$ is the projection operator onto the PSD cone and $ \mathcal{P}_-(\cdot)$ is the projection operator onto the negative semi-definite cone. 


To bound $\|\X^0 -\Xo\|_F,$ we will bound each individual term in its squared expansion 
\begin{align*}
\|\X^0 -\Xo\|_F^2 = \|\X^0\|_F^2 + \| \Xo\|_F^2 -2\ip{\X^0}{\Xo}.
\end{align*}

From the smoothness of $\f$, we get the following:
\begin{align*}
M \norm{\Xo}_F &\geq   \norm{\gradf(0) - \gradf(\Xo)}_F \stackrel{(i)}{\geq} \norm{ \mathcal{P}_- (\gradf(0)) - \mathcal{P}_- ( \gradf(\Xo))}_F \stackrel{(ii)}{=} \norm{ \mathcal{P}_-(\gradf(0)) }_F.
\end{align*} where $(i)$ follows from non-expansiveness of projection operator and $(ii)$ follows from  the fact that  $\gradf(\Xo)$ is PSD and hence $\mathcal{P}_- ( \gradf(\Xo)) = 0$. Finally, observe that  $ \mathcal{P}_-(\gradf(0)) = \mathcal{P}_+(-\gradf(0))$. The above combined imply:
\begin{align*}
\norm{ \mathcal{P}_+(-\gradf(0)) }_F \leq M \norm{\Xo}_F \quad \Longrightarrow \quad \norm{\X^0}_F \leq \frac{M}{\| \gradf(0)-\gradf(e_1 e_1')\|_F} \cdot \norm{\Xo}_F \leq \kappa \norm{\Xo}_F
\end{align*}  where we used the fact that  $m \leq \norm{\gradf(0)-\gradf(e_1e_1^\top)}_F \leq M$ and $\kappa = \sfrac{M}{m}$. Hence $\|\X^0\|_F^2 \leq \kappa^2 \norm{\Xo}_F^2$.

Using the strong convexity of $\f$ around $\Xo$, we observe
\begin{align*} 
\f(0) \geq \f(\Xo) + \ip{\gradf(\Xo)}{0 - \Xo} + \frac{m}{2}\norm{\Xo}_F^2 \geq \f(\Xo) + \frac{m}{2}\norm{\Xo}_F^2,
\end{align*} where the last inequality follows from first order optimality of $\Xo$, $\ip{\gradf(\Xo)}{0 - \Xo} \geq 0$ and 0 is a feasible point for problem~\eqref{intro:eq_00}. Similarly, using strong convexity of $f$ around $0$, we have
\begin{align*}
\f(\Xo) \geq \f(0) + \ip{\gradf(0)}{\Xo} + \frac{m}{2}\norm{\Xo}_F^2 
\end{align*}
Combining the above two inequalities we get, $\ip{-\gradf(0)}{\Xo} \geq m\norm{\Xo}_F^2$. Moreover:
\begin{align*}
\ip{-\gradf(0)}{\Xo} &= \ip{\mathcal{P}_+\left(-\gradf(0)\right) + \mathcal{P}_{-}\left(-\gradf(0)\right)}{\Xo} \\ &= \ip{\mathcal{P}_+\left(-\gradf(0)\right)}{\Xo} + \underbrace{\ip{\mathcal{P}_{-}\left(-\gradf(0)\right)}{\Xo}}_{\leq 0}
\end{align*} since $\Xo$ is PSD. Thus, $ \ip{\mathcal{P}_+(-\gradf(0))}{\Xo} \geq \ip{-\gradf(0)}{\Xo}$ and
\begin{align}
\ip{X^0}{\Xo} \geq \frac{m}{\| \gradf(0)-\gradf(e_1 e_1')\|_F}\norm{\Xo}_F^2 \geq \frac{1}{\kappa} \norm{\Xo}_F^2, \label{proofs:eq_000}
\end{align} where we used the fact that  $m \leq \norm{\gradf(0)-\gradf(e_1e_1^\top)}_F \leq M$. Given the above inequalities, we can now prove the following:
\begin{small}
\begin{align*}
\norm{\X^0 - \Xo}_F^2 &= \|\X^0\|_F^2 + \| \Xo\|_F^2 -2\ip{\X^0}{\Xo} \\ &\leq \kappa^2 \norm{\Xo}_F^2 + \norm{\Xo}_F^2 - \frac{2}{\kappa} \norm{\Xo}_F^2 \\ &= \left(\kappa^2 -\frac{2}{\kappa}  +1\right) \norm{\Xo}_F^2.
\end{align*} 
\end{small}

Now we know that $\norm{  \X^0 - \Xo }_F ~ \leq  ~ \sqrt{\kappa^2 -\sfrac{2}{\kappa} +1} \, \norm{\Xo}_F .$  Now, by triangle inequality  $\norm{  \X^0 - \Xo_r }_F ~ \leq   \sqrt{\kappa^2 -\sfrac{2}{\kappa} +1} \norm{\Xo}_F + \norm{\Xo -\Xo_r}_F $. By $||.||_2 \leq ||.||_F$ and Weyl's inequality for perturbation of singular values (Theorem 3.3.16 \cite{horn37topics}) we get,
$$ \norm{\X^0_r -\Xo_r}_2 \leq 2\sqrt{\kappa^2 -\sfrac{2}{\kappa} +1} \norm{\Xo}_F + 2 \norm{\Xo -\Xo_r}_F .$$ By the assumptions of the theorem, we have $\norm{\Xo -\Xo_r}_F \leq \tilde{\rho} \norm{\Xo}_2$. Therefore,
$$ \norm{\X^0_r -\Xo_r}_F \leq 2\sqrt{2 r} \left( \sqrt{\kappa^2 -\sfrac{2}{\kappa} +1} \norm{\Xo}_F +   \tilde{\rho} \norm{\Xo}_2\right).$$ Now again using triangle inequality and substituting we get $\norm{\Xo}_F \leq \text{\texttt{srank}}^{\sfrac{1}{2}}\norm{\Xo}_2 + \tilde{\rho}  \norm{\Xo}_2$. Finally combining this with Lemma~\ref{lem:lower_bound_X_Xr} gives the result.



\section{Dependence on condition number in linear convergence rate}\label{sec:discuss}
It is known that the convergence rate of classic gradient descent schemes depends only on the condition number $\kappa = \tfrac{M}{m}$ of the function $f$. However, in the case of \algo, we notice that convergence rate also depends on condition number $\tau(\Xo_r) = \tfrac{\sigma_1(\Xo)}{\sigma_r(\Xo)}$, as well as $\|\gradf(\Xo)\|_2$.

To elaborate more on this dependence, let us recall the update rule of \algo, as presented in Section \ref{sec:algo}. In particular, one can observe that the gradient direction has an extra factor $U$, multiplying $\gradf(UU^\top)$, as compared to the standard gradient descent on $\X$. One way to reveal how this extra factor affects the condition number of the Hessian of $f$, 
we consider the special case of separable functions; see the definition of separable functions in the next lemma. Next, we show that the condition number of the Hessian -- for this special case -- has indeed a dependence on both $\tau(\Xo_r)$ and $\|\gradf(\Xo)\|_2$, a scaling similar to the one appearing in the convergence rate $\alpha$ of \algo.

\begin{lemma}[Dependence of Hessian on $\tau(\Xo_r)$ and $\|\gradf(\Xo)\|_2$]\label{lem:hessian}
Let $f$ be a smooth, twice differentiable function over the PSD cone. Further, assume $f$ is a separable function over the matrix entries, such that  $f(\X) = \sum_{(i, j)} \varphi_{ij}(\X_{ij})$, where $(i, j) \in [n] \times [n]$, and let $\varphi_{ij}$'s be $M$-smooth and $m$-strongly convex functions, $\forall i, j$. Finally, let $\Xo = \Uo (\Uo)^\top$ be rank-$r$ and let $\nabla_{\U R}^2 f(\X)$ denote the Hessian of $f$ w.r.t. the $\U$ factor and up to rotations, for some rotation matrix $R \in \mathcal{O}$. Then, 
\begin{align*}
 \sigma_{1}\left(\nabla_{\Uo}^2 f(\Xo)\right) \leq C \cdot (M\|\Xo\|_2 +\|\gradf(\Xo)\|_2),
\end{align*}
 for constant $C$. Further, for any unit vector $y \in \R^{nr \times 1}$ such that columns of ${\rm \texttt{mat}}(y) \in \R^{n \times r}$ are orthogonal to $\Uo$, i.e., ${\rm \texttt{mat}}(y)^\top \Uo = 0$, we further have: 
\begin{align*}
y^\top \nabla_{\Uo R}^2 f(\Xo) y  \geq c \cdot m\sigma_{r}(\Xo),
\end{align*} for some constant $c$.
\end{lemma}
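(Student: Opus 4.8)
The plan is to compute the Hessian of $g(U) := f(UU^\top)$ explicitly for separable $f$ and then bound its extreme eigenvalues. First I would recall (or re-derive via a direct second-order expansion, exactly as in the $r=1$ computation appearing in Section~\ref{sec:algo}) the general formula for the quadratic form of the Hessian: for a direction $Y \in \R^{n\times r}$,
\begin{align*}
\big\langle Y, \nabla_U^2 g(U)\, Y\big\rangle = \big\langle \nabla^2 f(UU^\top)\big[UY^\top + YU^\top\big],\; UY^\top + YU^\top\big\rangle + 2\big\langle \gradf(UU^\top),\, YY^\top\big\rangle,
\end{align*}
where the first term uses the bilinear action of the (vectorized) Hessian $\nabla^2 f$ and the second is the ``curvature-free'' contribution coming from differentiating $U$ itself. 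For separable $f$, $\nabla^2 f(X)$ is diagonal with entries $\varphi_{ij}''(X_{ij}) \in [m, M]$, which makes the first term lie between $m\|UY^\top + YU^\top\|_F^2$ and $M\|UY^\top+YU^\top\|_F^2$.

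For the upper bound on $\sigma_1\big(\nabla_{\Uo}^2 f(\Xo)\big)$, I would take an arbitrary unit-norm $Y$ and bound both terms: $\|UY^\top + YU^\top\|_F^2 \le 4\|U\|_2^2\|Y\|_F^2 = 4\|\Xo\|_2$ (using $\|\Uo\|_2^2 = \|\Xo\|_2$), so the first term is at most $4M\|\Xo\|_2$; and $|\langle \gradf(\Xo), YY^\top\rangle| \le \|\gradf(\Xo)\|_2 \operatorname{Tr}(YY^\top) = \|\gradf(\Xo)\|_2$ by von Neumann's trace inequality (as used in Lemma~\ref{lem:DD_bound}). Summing gives $\sigma_1 \le 4M\|\Xo\|_2 + 2\|\gradf(\Xo)\|_2 \le C(M\|\Xo\|_2 + \|\gradf(\Xo)\|_2)$ for, say, $C = 4$. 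Rotations only conjugate $U$ by an orthogonal matrix and do not change any spectral quantity, so the ``up to rotations'' qualifier is harmless here.

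For the lower bound, restrict to unit vectors $y$ with $\texttt{mat}(y)^\top \Uo = 0$, i.e., $Y := \texttt{mat}(y)$ has columns orthogonal to the column space of $\Uo$. The key simplification: $U^\top Y = 0$ forces $\langle UY^\top + YU^\top,\ UY^\top+YU^\top\rangle = 2\|UY^\top\|_F^2 + 2\operatorname{Tr}(U^\top Y Y^\top U)$; wait --- more carefully, with $U^\top Y = 0$ we get $\operatorname{Tr}((UY^\top)(YU^\top)) = \operatorname{Tr}(Y^\top U U^\top Y)$, hmm this is not obviously zero, so I would instead expand $\|UY^\top + YU^\top\|_F^2 = 2\|UY^\top\|_F^2 + 2\operatorname{Tr}(UY^\top U Y^\top)$ and note $\operatorname{Tr}(U Y^\top U Y^\top) = \operatorname{Tr}((Y^\top U)(Y^\top U))$ which vanishes since $Y^\top U = 0$. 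Hence the first term equals exactly $2\langle \nabla^2 f(\Xo)[UY^\top],\ UY^\top\rangle \ge 2m\|UY^\top\|_F^2$. Now $\|\Uo^\top Y\|_F = 0$ does not lower-bound $\|\Uo Y^\top\|_F$ directly, but since $\Uo$ has rank $r$, for the full-rank action we use $\|UY^\top\|_F^2 = \operatorname{Tr}(Y U^\top U Y^\top) \ge \sigma_r(U^\top U)\|Y\|_F^2 = \sigma_r(\Xo)$; wait, $\|Y\|_F = 1$ and $U^\top U$ has smallest positive eigenvalue $\sigma_r(\Xo)$, and $Y$ need not lie in the row space of $U$ --- but $\|UY^\top\|_F^2 = \sum_j \|U y_j\|^2$ where $y_j$ are columns of $Y$, and $\|U y_j\|^2 \ge \sigma_r(U)^2 \|y_j\|^2$ only if $y_j$ is in the row space of $U$; in general it is $\ge \sigma_{\min,+}$ applied to the projection. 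So the correct statement is $\|UY^\top\|_F^2 \ge \sigma_r(\Xo)\,\|P Y\|_F^2$ where $P$ projects onto $\operatorname{row}(U)$. This is where I would need to be careful: the orthogonality constraint $\texttt{mat}(y)^\top \Uo = 0$ is on the \emph{column} space of $\Uo$, which in the symmetric PSD factorization equals its row space, so actually $PY = 0$ under the hypothesis --- meaning the first term can be zero. Then the entire lower bound must come from the second term $2\langle \gradf(\Xo), YY^\top\rangle$, which is \emph{not} obviously positive. I expect this sign issue to be the main obstacle: one must use that $\Xo$ is the constrained optimum, so by the KKT/complementary-slackness conditions $\gradf(\Xo) \succeq 0$ on the orthogonal complement of $\operatorname{range}(\Xo)$ (the argument used in Lemma~\ref{lem:gradX,X-X_r_ bound_sc} and in Lemma~\ref{lem:new_lemma_smooth}), and moreover $\sigma_{\min,+}$ of that restricted $\gradf(\Xo)$ is at least $c\, m\,\sigma_r(\Xo)$ --- this last quantitative bound would follow from $m$-strong convexity by comparing $f(\Xo)$ with $f(\Xo + tyy^\top)$ for the perturbation direction and optimizing over $t$, exactly as in the $g(t)$-argument at the end of the proof of Lemma~\ref{lem:new_lemma_smooth}. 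Putting these together yields $y^\top \nabla_{\Uo R}^2 f(\Xo)\, y \ge 2\langle \gradf(\Xo), YY^\top\rangle \ge c\, m\,\sigma_r(\Xo)$ for an appropriate constant $c$, completing the lemma.
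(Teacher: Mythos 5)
Your upper bound is sound and is essentially the paper's argument in a different dress: the paper writes $\nabla^2_{\Uo} f(\Xo)$ explicitly as a sum of three matrices $A+B+C$ (a weighted Gram-type term, a cross term with entries $\varphi''_{ik}(\Xo_{ik})\Uo_{il}\Uo_{kj}$, and a block-diagonal matrix with $r$ copies of $\gradf(\Xo)$) and bounds each spectral norm by $M\|\Xo\|_2$ or $\|\gradf(\Xo)\|_2$; bounding the quadratic form $\ip{\nabla^2 f(\Xo)[\Uo Y^\top + Y\Uo^\top]}{\Uo Y^\top + Y\Uo^\top} + 2\ip{\gradf(\Xo)}{YY^\top}$ directly, as you do, yields the same conclusion.

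The lower bound, however, has a genuine gap, and it originates in a linear-algebra slip. For $\Uo\in\R^{n\times r}$ of rank $r$ and \emph{any} $Y\in\R^{n\times r}$ one has $\|\Uo Y^\top\|_F^2=\trace\left(\Uo^\top \Uo\, Y^\top Y\right)\ge \lambda_{\min}(\Uo^\top \Uo)\,\|Y\|_F^2=\sigma_r(\Xo)\|Y\|_F^2$: the matrix $\Uo Y^\top$ applies the injective map $\Uo$ to the \emph{rows} of $Y$, which are vectors in $\R^r$, so no projection of $Y$ onto the column space of $\Uo$ enters, and $\Uo Y^\top=0$ would force $Y=0$. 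The hypothesis $\texttt{mat}(y)^\top\Uo=0$ constrains the \emph{columns} of $Y$; its role is to kill the cross contribution (your observation $\trace\left((Y^\top \Uo)^2\right)=0$, which with the weights $\varphi''_{ik}$ is exactly the paper's matrix $B$ and needs the paper's separate argument that $y^\top B y=0$ for such $y$), not to make $\|\Uo Y^\top\|_F$ small. Hence the curvature term alone already gives the claimed bound, $2m\|\Uo Y^\top\|_F^2\ge 2m\,\sigma_r(\Xo)$, which is precisely the paper's route: its matrix $A$ satisfies $\sigma_{nr}(A)\ge m\,\sigma_{r}(\Xo)$, while $y^\top B y=0$ and $y^\top C y\ge 0$ because $\gradf(\Xo)\succeq 0$ by the KKT conditions. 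Your fallback—extracting $c\,m\,\sigma_r(\Xo)$ from $2\ip{\gradf(\Xo)}{YY^\top}$—cannot be repaired: the gradient at the optimum can vanish identically (e.g., noiseless matrix sensing, where the paper itself notes $\|\gradf(\Xo)\|_2=0$), yet the lemma's lower bound still holds, coming entirely from the curvature term; the KKT argument you cite only delivers nonnegativity of the gradient term, which is all the paper uses it for.
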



\begin{proof}
By the definition of gradient, we know that $\nabla_U f(\U\U^\top) =  (\gradf(\U \U^\top) + \gradf(\U \U^\top)^\top )\U$; for simplicity, we assume $\gradf(\U\U^T)$ be symmetric. Since $X$ is symmetric, with $\gradf(\U\U^T)_{ij} = \varphi'_{ij}(X_{ij})$ and  $\varphi'_{ij}(X_{ij}) = \varphi'_{ji}(X_{ji})$. By the definition of Hessian, the entries of $\nabla_U^2 f(UU^T)$ are given by:
\begin{align*}
\left(\nabla_U^2 f(UU^T)\right)_{ij, kl} = \frac{\partial}{\partial U_{kl}}\sum_{p=1}^n \varphi'_{ip}(X_{ip}) U_{pj} = \underbrace{\sum_{p=1}^n \frac{\partial \varphi'_{ip}(X_{ip})}{\partial U_{kl}} U_{pj}}_{:=T_1} + \underbrace{\sum_{p=1}^n \varphi'_{ip}(X_{ip}) \frac{\partial  U_{pj} }{\partial U_{kl}}}_{:=T_2}.
\end{align*} In particular, for $T_1$ we observe the following cases:
\begin{align*}
T_1 =
\left\{
	\begin{array}{ll}
		\varphi_{ik}''(X_{ik}) U_{il}U_{kj}  & \mbox{if } i \neq k, \\
		\sum_p \varphi_{ip}''(X_{ip}) U_{pl}U_{pj} + \varphi_{ii}''(X_{ii}) U_{il}U_{kj} & \mbox{if } i = k.
	\end{array}
\right.
\end{align*} while, for $T_2$ we further have:
\begin{align*}
T_2 =
\left\{
	\begin{array}{ll}
		0  & \mbox{if } j \neq l, \\
		\varphi_{ik}'(X_{ik}) & \mbox{if } j = l.
	\end{array}
\right.
\end{align*}
Consider now the case where gradient and Hessian information is calculated at the optimal point $\Xo$. Based on the above, the Hessian of $f$ w.r.t $\Uo$ turns out to be a sum of three PSD $nr \times nr $ matrices, as follows: 
\begin{align*}
\nabla_{\Uo}^2 f(\Xo) =A + B +C,
\end{align*} where
\begin{itemize}
\item [$(i)$] $A =(\widehat{\Uo})^T G \widehat{\Uo}$, where $G$ is a $n^2 \times n^2$ diagonal matrix with diagonal elements $\varphi''_{ij}(\Xo_{ij})$ and $\widehat{\Uo}$ is a $n^2 \times nr$ matrix with $\Uo$ repeated $n$ times on the diagonal. It is easy to see that 
\begin{align*}
\|A\|_2 \leq \|\varphi_{ij}''\|_{\infty} \sigma_{\max} (\Uo)^2 =M \|\Xo\|_2.
\end{align*} 
Similarly, we have $\sigma_{nr}(A) \geq  \min{\varphi_{ij}''} \cdot \sigma_{\min} (\Uo)^2 =m \sigma_{\min}(\Xo).$
\item [$(ii)$] $B$ is a $nr \times nr$ matrix, with $B_{ij, kl} =  \varphi_{ik}''(\Xo_{ik}) \Uo_{il} \Uo_{kj}$. Again, it is easy to verify that  $\|B\|_2 \leq M \|\Xo\|_2$. Now for $y$ perpendicular to $\Uo$, notice that $y^\top B y =0$, since the columns of $B$ are concatenation of scaled columns of $\Uo$.

\item [$(iii)$] $C$ is a $nr \times nr$ diagonal block-matrix, with $n \times n$ blocks $\gradf(\Xo)$ repeated $r$ times. It is again easy to see that $\|C\|_2 \leq \|\gradf(\Xo)\|_2$, since $C$ is a block diagonal matrix. Moreover, by KKT optimality condition $\gradf(\Xo) \Xo =0$, $\text{rank}(\gradf(\Xo)) \leq n-r$ and thus, $\sigma_{nr} (C) =0$.
\end{itemize} Combining the above results and observing that all the three matrices are PSD, we conclude that $\sigma_1\left(\nabla_{\Uo}^2 f(\Xo)\right) \leq C\cdot \left(M\|\Xo\|_2 + \|\gradf{\Xo}\|_2\right)$. Regarding the lower bound on $\sigma_{nr}\left(\nabla_{\Uo}^2 f(\Xo)\right)$, we observe the following: due to $\U\U^\top$ factorization and for $\Uo$ optimum, we know that also $\Uo R$ is optimum, where gradient $\gradf(\Uo (\Uo)^\top) = \gradf(\Uo R R^\top (\Uo)^\top) = 0$. This further indicates that the hessian of $f$ is zero along directions corresponding to columns of $\Uo$, and thus $\sigma_{nr}\left(\nabla_{\Uo}^2 f(\Xo)\right) = 0$ along these directions; see figure \ref{fig:tau_1} (right panel) for an example. However, for any other directions orthogonal to $\Uo$, we have $y^\top\left(\nabla_{\Uo R}^2 f(\Xo)\right)y  \geq c \cdot m \sigma_{\min}(\Xo)$, for some constant $c$. This completes the proof.

\end{proof}

To show this dependence in practice, we present some simulation results in Figure~\ref{fig:tau_1}. We observe that the convergence rate does indeed depend on $\tau(\Xo_r)$.

\begin{figure}[!ht]
	\begin{center}
		\includegraphics[width=0.5\textwidth]{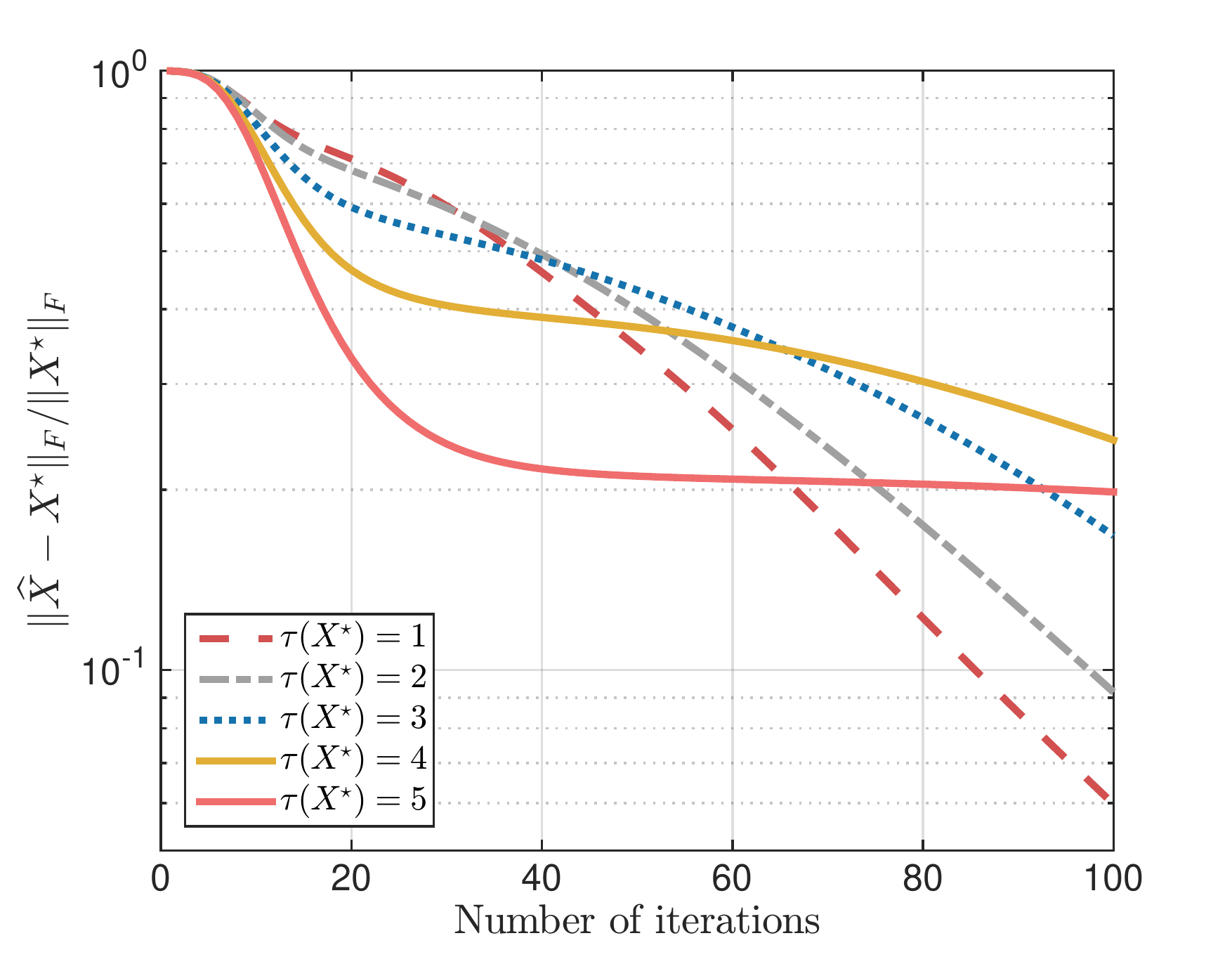}
		\includegraphics[width=0.45\textwidth]{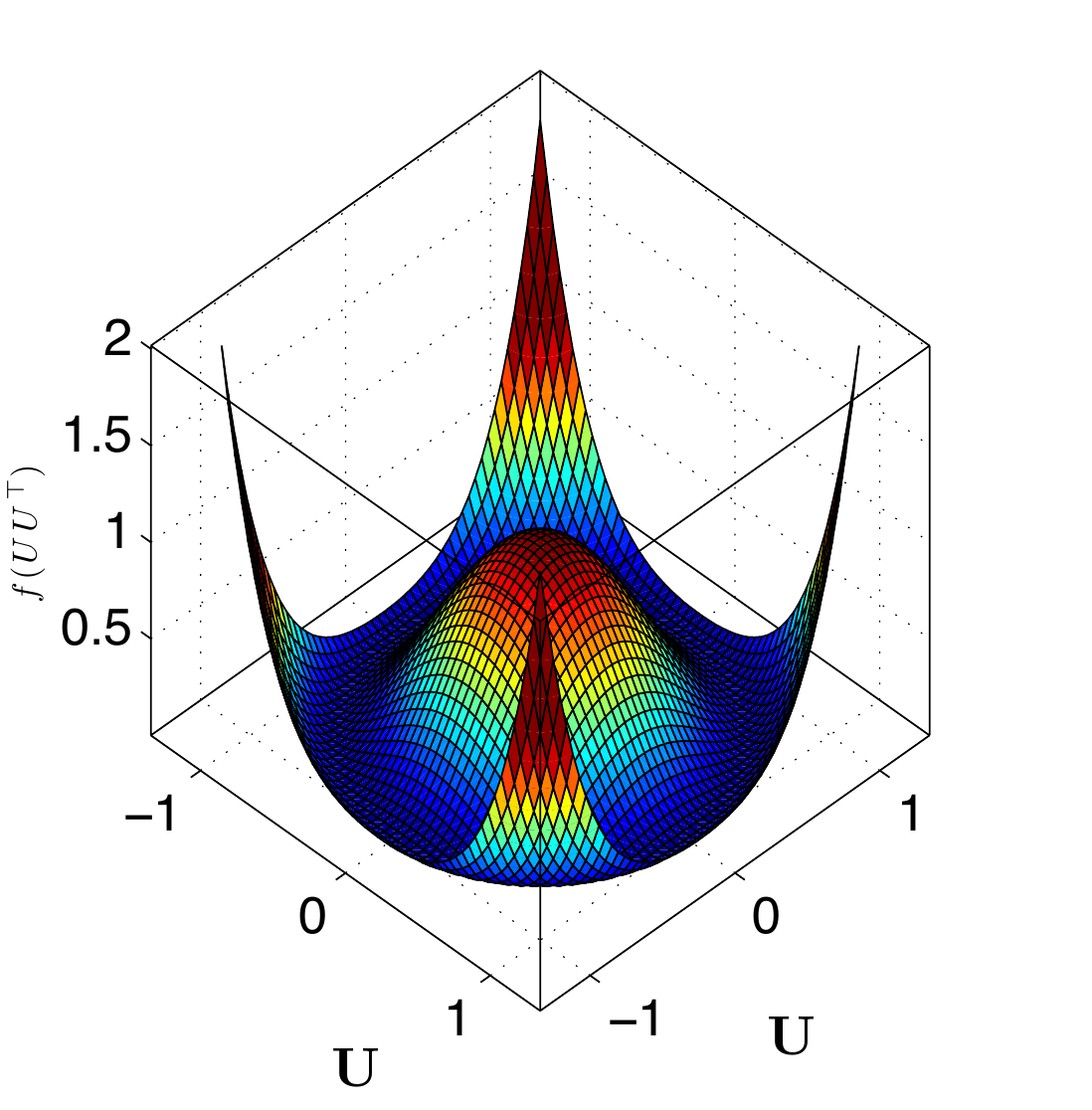}
	\end{center}
	\caption{Left panel: Assume dimension $n = 50$. We consider the matrix sensing setup \cite{recht2010guaranteed} and generate $m = \lceil 2 n \log n \rceil$ Gaussian linear measurements of $n \times n$ matrices $\Xo$ of rank $r = 2$, with varying condition number $\tau(\Xo)$. We compute matrix $X =UU^\top$, $U$ is $n \times r$ tall  matrix, by minimizing the standard least squares lost function, using our scheme. In the plot, we show the log error versus total number of iterations. Observe that, varying the condition number of $\Xo$, higher $\tau(\Xo)$ leads to slower convergence. Right panel: Contour of function $(u_1^2+u_2^2-1)^2$. Observe the ``ring'' of points $(u_1, u_2)$ where $f$ is minimized. This illustrates the existence of multiple points with zero gradient and, thus, directions where the hessian of the objective is zero. } 
{\label{fig:tau_1}}
\end{figure}

\section{Test case I: Matrix sensing problem}\label{sec:sensing}
In this section, we briefly describe and compare algorithms designed specifically for the \emph{matrix sensing} problem, using the variable parametrization $\X = \U\U^\top$. To accommodate the PSD constraint, we consider a variation of the matrix sensing problem where one desires to find $\X^\star$ that minimizes\footnote{This problem is a special case of affine rank minimization problem \cite{recht2010guaranteed}, where no PSD constraints are present.}:
\begin{equation}
\begin{aligned}
\underset{\X \in \R^{n \times n}}{\text{minimize}}
& & \tfrac{1}{2} \|b - \mathcal{A}\left(\X\right)\|_F^2 \quad \text{subject to}
& & \text{rank}(\X) \leq r,~\X \succeq 0.
\end{aligned} \label{related:eq_MS}
\end{equation} W.l.o.g., we assume $b = \mathcal{A}\left(\Xo\right)$ for some rank-$r$ $\Xo$. Here, $\mathcal{A}:\R^{n \times n} \rightarrow \R^p$ is the linear sensing mechanism, such that the $i$-th entry of $\mathcal{A}(\X)$ is given by $\left\langle A_i, ~\X \right\rangle$, for $A_i \in \R^{n \times n}$ sub-Gaussian independent measurement matrices. 

\cite{jain2013low} is one of the first works to propose a provable and efficient algorithm for \eqref{related:eq_MS}, operating in the $\U$-factor space, while \cite{sa2015global} solves \eqref{related:eq_MS} in the stochastic setting; see also \cite{zheng2015convergent, tu2015low, chen2015fast}. To guarantee convergence, most of these algorithms rely on \emph{restricted isometry assumptions}; see Definition \ref{def:RIP} below. 

To compare the above algorithms with \algo, Subsection \ref{sec:RSC1} further describes the notion of restricted strong convexity and its connection with the RIP. Then, Subsection \ref{sec:RSC2} provides explicit comparison results of the aforementioned algorithms, with respect to the convergence rate factor $\alpha$, as well as initialization conditions assumed, for each case.

\subsection{Restricted isometry property and restricted strong convexity}\label{sec:RSC1}
To shed some light on the notion of restricted strong convexity and how it relates to the RIP, consider the matrix sensing problem, as described above. According to \eqref{related:eq_MS}, we consider the quadratic loss function:
\begin{align*}
f(\X) = \tfrac{1}{2} \|b - \mathcal{A}(\X)\|_F^2.
\end{align*} Since the Hessian of $f$ is given by $\mathcal{A}^*\mathcal{A}$, restricted strong convexity suggests that \cite{negahban2012restricted}: 
\begin{align*}
\|\mathcal{A}(Z)\|_2^2\geq C \cdot \|Z\|_F^2, \quad Z \in \R^{n \times n},
\end{align*} for a restricted set of directions $Z$, where $C > 0 $ is a small constant. This bound implies that the quadratic loss function, as defined above, is strongly convex in such a restricted set of directions $Z$.\footnote{One can similarly define the notion of restricted smoothness condition, where $\|\mathcal{A}(Z)\|_2^2$ is upper bounded by $\|Z\|_F^2$. }

A similar but stricter notion is that of \emph{restricted isometry property} for low rank matrices \cite{candes2011tight, liu2011universal}:
\begin{definition}[Restricted Isometry Property (RIP)]\label{def:RIP}
A linear map $\mathcal{A}$ satisfies the $r$-RIP with constant $\delta_r$, if
\begin{align*}
(1 - \delta_r)\|\X\|_F^2 \leq \|\mathcal{A}(\X)\|_2^2 \leq (1 + \delta_r)\|\X\|_F^2,
\end{align*} is satisfied for all matrices $\X \in \R^{n \times n}$ such that $\text{rank}(\X) \leq r$. 
\end{definition} 

The correspondence of restricted strong convexity with the RIP is obvious: both lower bound the quantity $\|\mathcal{A}(\X)\|_2^2$, where $\X$ is drawn from a restricted set. It turns out that linear maps that satisfy the RIP for low rank matrices, also satisfy the restricted strong convexity; see Theorem 2 in \cite{chen2010general}.

By assuming RIP in \eqref{related:eq_MS}, the condition number of $f$ depends on the RIP constants of the linear map $\mathcal{A}$; in particular, one can show that $\kappa = \tfrac{M}{m} \propto \tfrac{1 + \delta}{1-\delta}$, since the eigenvalues of $\mathcal{A}^*\mathcal{A}$ lie between $1-\delta$ and $1 + \delta$, when restricted to low-rank matrices. For $\delta$ sufficiently small and dimension $n$ sufficiently large, $\kappa \approx 1$, which, with high probability, is the case for $\mathcal{A}$ drawn from a sub-Gaussian distribution. 

\subsection{Comparison}\label{sec:RSC2}
Given the above discussion, the following hold true for \algo, under RIP settings:
\begin{itemize}
\item [$(i)$] In the noiseless case, $b = \mathcal{A}(\Xo)$ and thus, $\|\gradf(\Xo)\|_2 = \| -2 \mathcal{A}^*\left(b - \mathcal{A}(\Xo)\right)\|_2 = 0$. Combined with the above discussion, this leads to convergence rate factor
\begin{align*}
\alpha \lesssim 1 - \tfrac{c_4}{\tau(\Uo_r)^2},
\end{align*} in \algo.
\item [$(ii)$] In the noisy case, $b = \mathcal{A}(\Xo) + e$ where $e$ is an additive noise term; for this case, we further assume that $\| \mathcal{A} \left( e \right)\|_2$ is bounded. Then, 
\begin{align*}
\alpha \lesssim 1 - \tfrac{c_4}{\tau(\Uo_r)^2 + \frac{\|\mathcal{A}(e)\|_2}{(1- \delta) \sigma_r(\Xo)}}.
\end{align*}
\end{itemize}

Table \ref{table:convergence_comp_summary} summarizes convergence rate factors $\alpha$ and initialization conditions of state-of-the-art approaches for the noiseless case.
\begin{table*}[!htb]
\centering
\begin{small}
\begin{tabular}{c c c c c}
  \toprule
  Reference & & $\dist\left(\Up, \Uo\right)^2 \leq \alpha \cdot \dist\left(\U, \Uo \right)^2$ & & $\dist\left(\U^0, \Uo\right) \leq \cdots$ \\ 
	  \cmidrule{1-1} \cmidrule{3-3} \cmidrule{5-5}
	  \cite{jain2013low} & & $\alpha = \tfrac{1}{16}^\dagger$ & & $\sqrt{6 \delta} \cdot \tfrac{\sigma_r(\Xo)}{\sigma_1(\Xo)}\sigma_r(\Uo_r)$  \\
	  \cite{tu2015low} & & $\alpha = 1 - \tfrac{c_1}{\tau(\Uo_r)^4}$ & & $\tfrac{1}{4}\sigma_r(\Uo_r)$ \\
      \cite{zheng2015convergent} & & $\alpha = 1 - \tfrac{c_2}{\tau(\Uo_r) \cdot r}$  & & $\sqrt{\tfrac{3}{16}} \cdot \sigma_r(\Uo_r)$ \\
      \cite{chen2015fast} & & $\alpha = 1 - \tfrac{c_3}{\tau(\Uo_r)^{10}}$  & & $\left(1 - \tau\right) \cdot \sigma_r(\Uo_r)$ \\      
  \midrule
      This work & & $\alpha = 1 - \tfrac{c_4}{\tau(\Uo_r)^2}$  & & $\tfrac{1}{100} \cdot \tfrac{\sigma_r(\Xo)}{\sigma_1(\Xo)} \sigma_r(\Uo_r)$ \\  
  \bottomrule
\end{tabular}
\end{small}
\caption{Comparison of related work for the matrix sensing problem. All methods use $\U\U^\top$ parametrization of the variable $\X$ and admit linear convergence. $\tau = \sqrt{12\delta}$ according to \cite{chen2015fast}. $c_i > 0, ~\forall i$ denote absolute constants. In \cite{jain2013low}, the proposed algorithm is designed to solve the rectangular case where $\X = \U\V^\top$; the reported factor $\alpha$ and initial conditions could be improved for the case of \eqref{intro:eq_01}. $^\dagger$ Note that this convergence is in terms of subspace distance. } \label{table:convergence_comp_summary} 
\end{table*}

\subsection{Empirical results}\label{sec:sims}
We start our discussion on empirical findings with respect to the convergence rate of the algorithm, how the step size and initialization affects its efficiency and some comparison plots with an efficient first-order projected gradient solver. We note that the experiments presented below are performed as a proof of concept and are not complete in the set of algorithms we could compare with.

\paragraph{Linear convergence rate and step size selection: } To show the convergence rate of the factored gradient descent in practice, we solve affine rank minimization problems instances with synthetic data. In particular, the ground truth $\Xo \in \mathbb{R}^{n \times n}$ is synthesized as a rank-$r$ matrix as $\X^\star = \U^\star \left(\U^\star\right)^\top$, where $\U^\star \in \mathbb{R}^{n \times r}$. In sequence, we sub-sample $\X^\star$ by observing $m = C_{\text{sam}} \cdot p \cdot r$ entries, according to:
\begin{align}
y = \mathcal{A}(\Xo) \in \R^m. \label{exp:eq_00}
\end{align} We use permuted and sub-sampled noiselets for the linear operator $\mathcal{A}: \R^{n \times n} \rightarrow \R^m $; for more information, see \cite{waters2011sparcs}. $\mathbf{y} \in \mathbb{R}^m$ contains the linear measurements of $\X^\star$ through $\mathcal{A}$ in vectorized form. We consider the noiseless case, for ease of exposition. Under this setting, we solve \eqref{intro:eq_01} with $f(\uut) := \sfrac{1}{2} \cdot \|y - \mathcal{A}\left(\uut\right)\|_2^2$. We use as a stopping criterion the condition $\|\Up\left(\Up\right)^\top - \U\U^\top\|_F < \text{\texttt{tol}} \cdot \|\Up\left(\Up\right)^\top\|_F$ where $\text{\texttt{tol}} := 5 \cdot 10^{-6}$.

\begin{figure}[!ht]
	\begin{center}
		\includegraphics[width=0.34\textwidth]{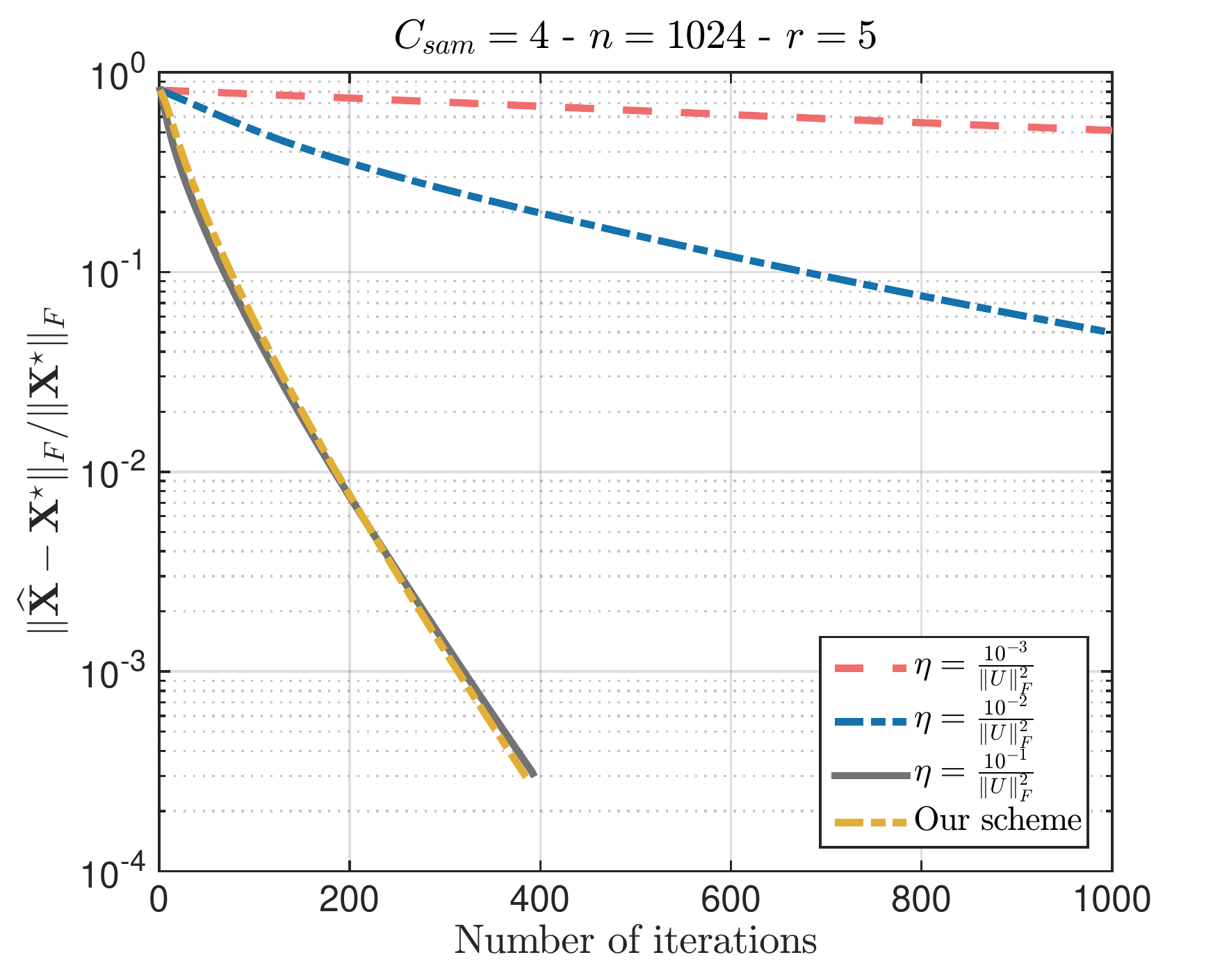} \hspace{-0.5cm}
		\includegraphics[width=0.34\textwidth]{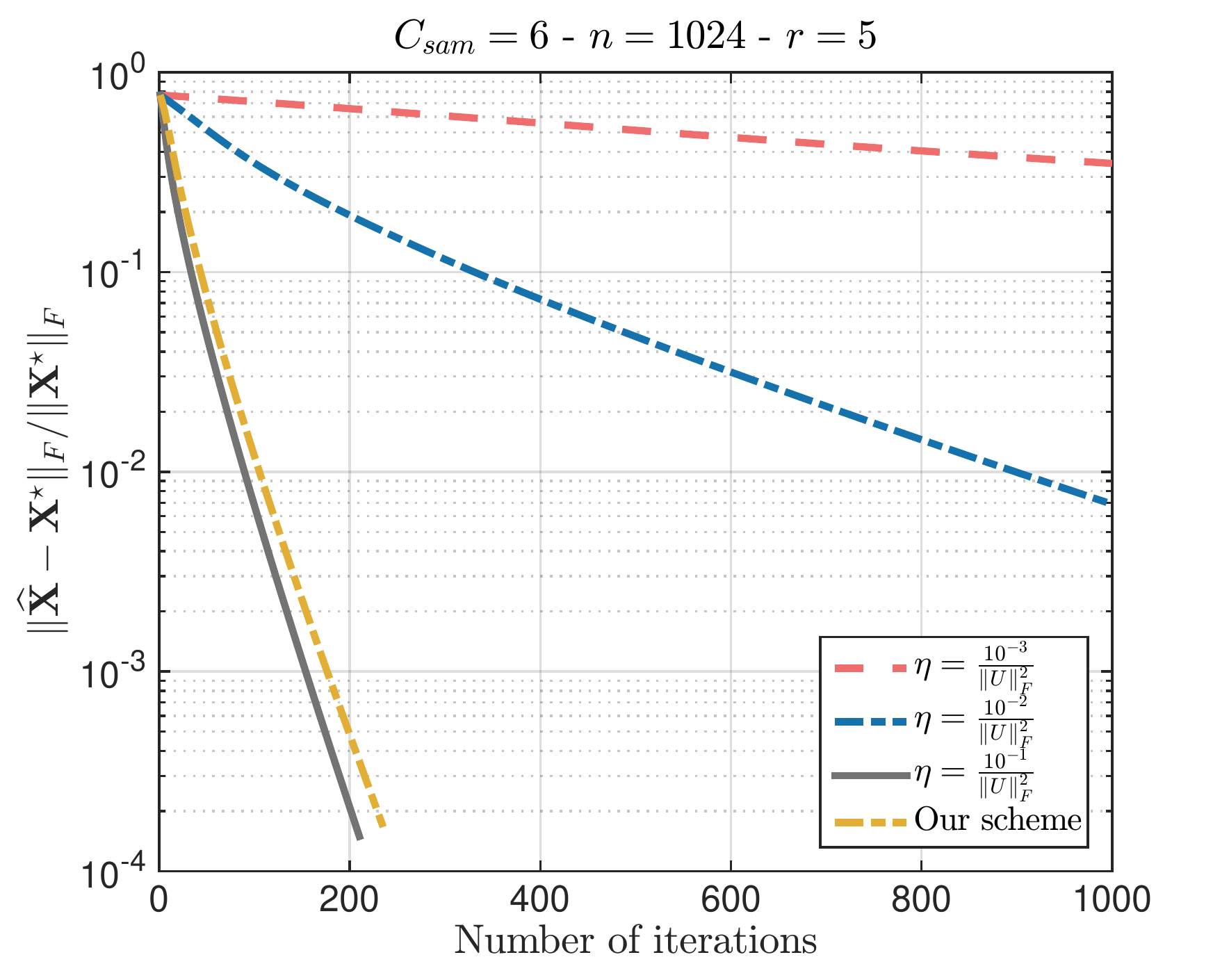} \hspace{-0.5cm}
		\includegraphics[width=0.34\textwidth]{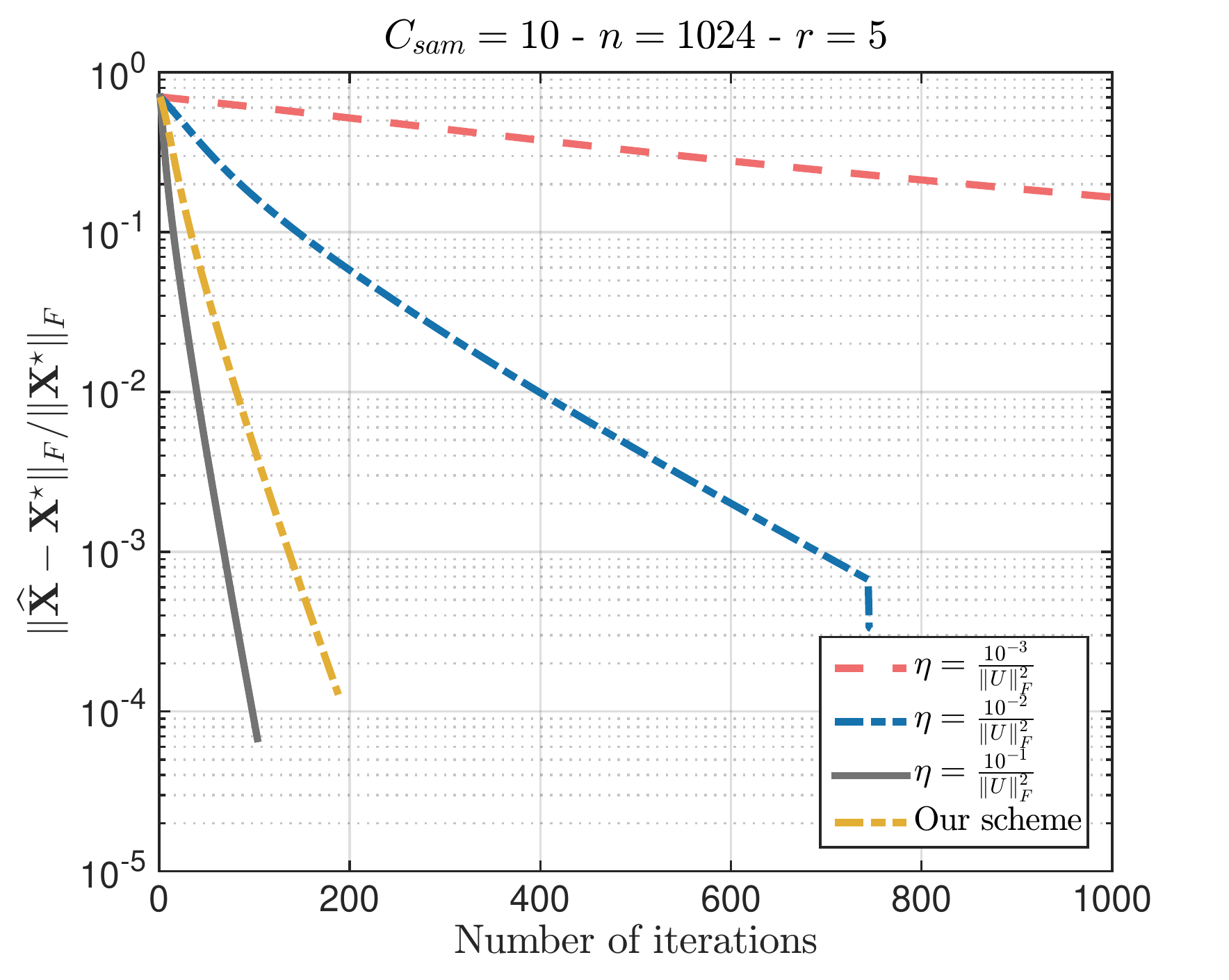}		
	\end{center}
	\caption{Median error per iteration of factored gradient descent algorithm for different step sizes, over 20 Monte Carlo iterations. The number of measurements is fixed to $C_{\text{sam}} \cdot n \cdot r$ for varying $C_{\text{sam}} \in \left\{4, 6, 10 \right\}$. Here, $n = 1204$ and rank $r = 5$. Curves show convergence behavior of factored gradient descent  as a function of the step size selection. One can observe that arbitrary step size selections can lead to slow convergence. Moreover, good constant step size selections -- for a specific problem configuration, do not necessarily translate into good performance for a different setting; \text{e.g.}, observe how the constant step size convergence rates worsen \textit{faster}, as we decrease the number of observations.}{\label{fig:00}}
\end{figure}

Figure \ref{fig:00} show the linear convergence of our approach as well as the efficiency of our step selection, as compared to other arbitrary constant step size selections. All instances use our initialization point. It is worth mentioning that the performance of our step size can be inferior to specific constant step size selections; however, finding such a good constant step size usually requires trial-and-error rounds and do not come with convergence guarantees. 
Moreover, we note that one can perform line search procedures to find the ``best'' step size per iteration; although, for more complicated $f$ instances, such step size selection might not be computationally desirable, even infeasible.


\paragraph{Impact of avoiding low-rank projections on the PSD cone:} In this experiment, we compare factored gradient descent with a variant of the Singular Value Projection (\texttt{SVP}) algorithm \cite{jain2010guaranteed, becker2013randomized}\footnote{\texttt{SVP} is a non-convex, first-order, projected gradient descent scheme for low rank recovery from linear measurements.}. For the purpose of this experiment, the \texttt{SVP} variant further projects on the PSD cone, along with the low rank projection. Its main difference is that it does not operate on the factor $\U$ space but requires projection over the (low-rank) positive semi-definite cone per iteration. In the discussion below, we refer to this variant as \texttt{SVP} (SDP).

\begin{figure}[!ht]
	\begin{center}
		\includegraphics[width=0.45\textwidth]{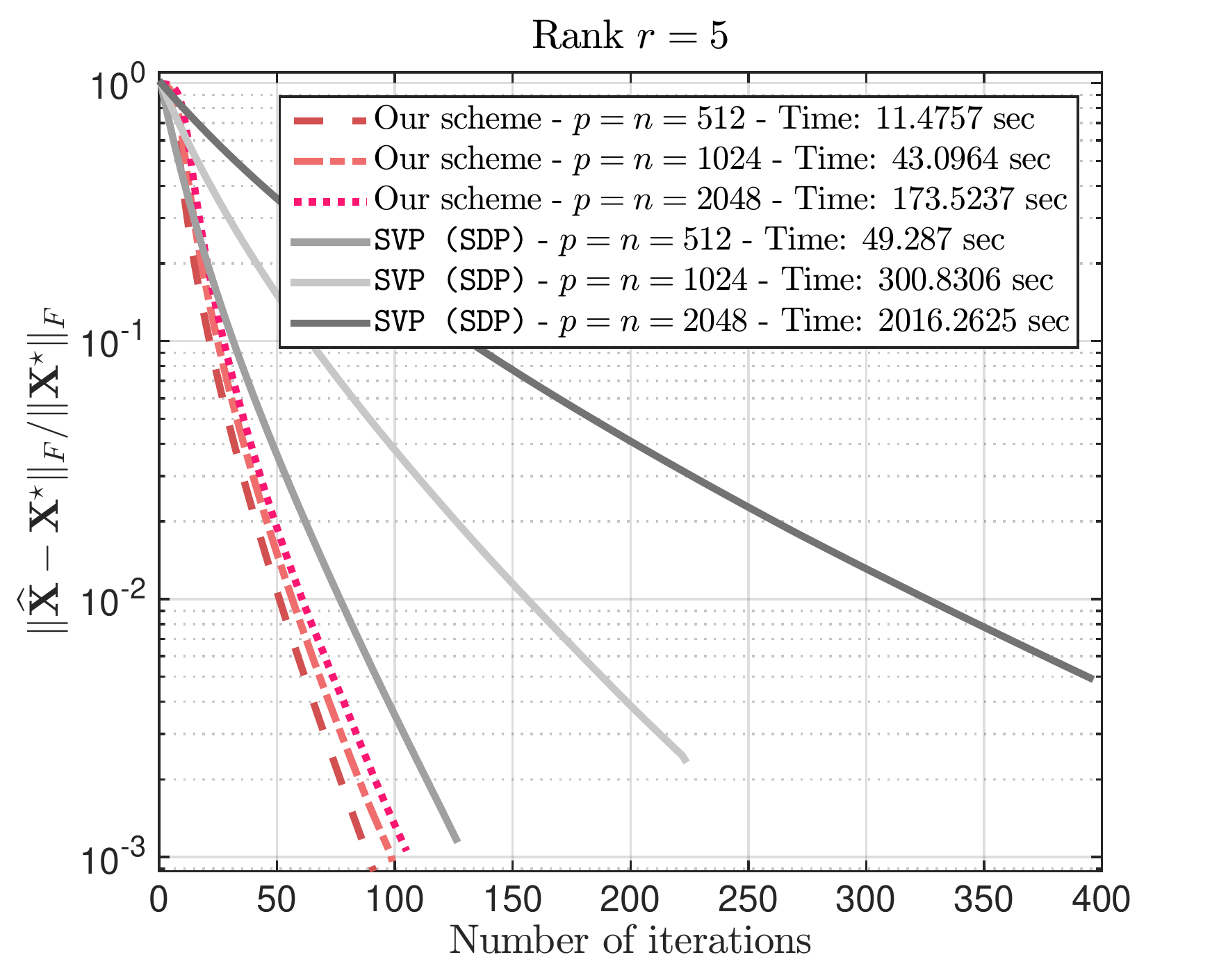}
		\includegraphics[width=0.45\textwidth]{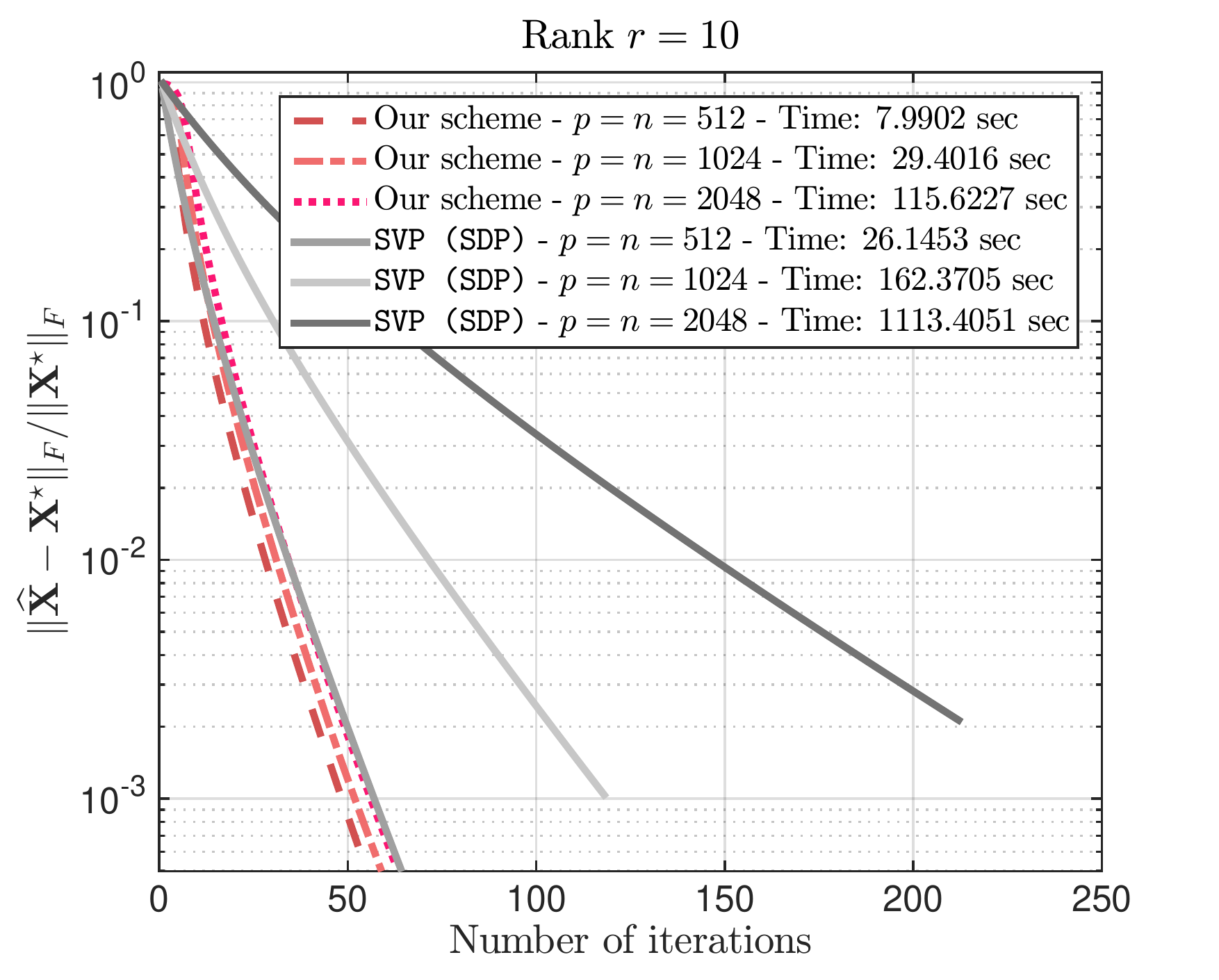}
	\end{center}
	\caption{Median error per iteration for factored gradient descent and \texttt{SVP} (SDP) algorithms, over 20 Monte Carlo iterations. For all cases, the number of measurements is fixed to $C_{\text{sam}} \cdot n \cdot r$ for $C_{\text{sam}} = 6.$ From left to right, we consider different rank configurations: $(i)$ $r = 5$ and $(ii)$ $r = 10$. Both schemes use the same initialization point. Both plots show better convergence rate performance in terms of iterations due to our step size selection. In addition, factored gradient descent avoids performing SVD operations per iteration, a fact that leads also to lower per iteration complexity; see also Table \ref{tbl:CompSVP}.}{\label{fig:01}}
\end{figure}

\begin{table}[!ht]
\centering
\ra{1.3}
\small
\begin{tabular}{ll c cc c cc} \toprule
\multicolumn{2}{c}{Model} & \phantom{a} & \multicolumn{2}{c}{$\|\widehat{\X} - \X^\star\|_F/\| \X^\star\|_F$} &  \phantom{a}  & \multicolumn{2}{c}{Time (sec)} \\
\cmidrule{1-2} \cmidrule{4-5} \cmidrule{7-8} 
\multicolumn{1}{c}{$n$} & \multicolumn{1}{c}{$r$} & \phantom{a} & \texttt{SVP} (SDP) & Our scheme & \phantom{a}  & \texttt{SVP} (SDP) & Our scheme  \\ \midrule
\multirow{3}{*}{$512$} & $5$ &  & 1.1339e-03  & \textcolor[rgb]{0.4,0.1,0}{\textbf{8.4793e-04}} &  & 36.9652  & \textcolor[rgb]{0.4,0.1,0}{\textbf{11.4757}} \\
 & $10$ &  & 4.6552e-04  & \textcolor[rgb]{0.4,0.1,0}{\textbf{4.4954e-04}} &  & 19.6089  & \textcolor[rgb]{0.4,0.1,0}{\textbf{7.9902}} \\
 & $20$ &  & \textcolor[rgb]{0.4,0.1,0}{\textbf{1.6541e-04}}  & 2.0571e-04 &  & 10.6052  & \textcolor[rgb]{0.4,0.1,0}{\textbf{6.4149}} \\
 \midrule 
\multirow{3}{*}{$1024$} & $5$ &  & 2.4224e-03  & \textcolor[rgb]{0.4,0.1,0}{\textbf{9.9180e-04}} &  & 225.6230  & \textcolor[rgb]{0.4,0.1,0}{\textbf{43.0964}} \\
 & $10$ &  & 1.0203e-03  & \textcolor[rgb]{0.4,0.1,0}{\textbf{4.5103e-04}} &  & 121.7779  & \textcolor[rgb]{0.4,0.1,0}{\textbf{29.4016}} \\
 & $20$ &  & 4.1149e-04  & \textcolor[rgb]{0.4,0.1,0}{\textbf{2.3442e-04}} &  & 67.6272  & \textcolor[rgb]{0.4,0.1,0}{\textbf{22.9616}} \\
 \midrule 
\multirow{3}{*}{$2048$} & $5$ &  & 4.8500e-03  & \textcolor[rgb]{0.4,0.1,0}{\textbf{1.0093e-03}} &  & 1512.1969  & \textcolor[rgb]{0.4,0.1,0}{\textbf{173.5237}} \\
 & $10$ &  & 2.0836e-03  & \textcolor[rgb]{0.4,0.1,0}{\textbf{4.6735e-04}} &  & 835.0538  & \textcolor[rgb]{0.4,0.1,0}{\textbf{115.6227}} \\
 & $20$ &  & 9.4893e-04  & \textcolor[rgb]{0.4,0.1,0}{\textbf{2.6417e-04}} &  & 458.8766  & \textcolor[rgb]{0.4,0.1,0}{\textbf{88.1960}} \\
\bottomrule
\end{tabular}
\caption{Summary of comparison results for reconstruction and efficiency. Observe that both our scheme and \texttt{SVP} (SDP) require more iterations to converge as $r$ radically decreases. This justifies the higher time-complexity observed; see also Figure \ref{fig:01} for comparison.} \label{tbl:CompSVP}
\end{table}
We perform two experiments. In the first experiment, we compare factored gradient descent with \texttt{SVP} (SDP), as designed in \cite{jain2010guaranteed}; \textit{i.e.}, while we use our initialization point for both schemes, step size selections are different. 
Figure \ref{fig:01} shows some convergence rate results: clearly our step size selection performs better in practice, in terms of the total number of iterations required for convergence.

In the second experiment, we would like to highlight the time bottleneck introduced by the projection operations: for this aim, \emph{we use the same initialization points and step sizes} for both the algorithms under comparison. Thus, the only difference lies in the SVD computations of \texttt{SVP} (SDP) to retain a PSD low rank estimate per iteration. Table \ref{tbl:CompSVP} presents reconstruction error and execution time results. It is obvious that projecting on the low-rank PSD code per iteration constitutes a computational bottleneck per iteration, which slows down (w.r.t. total time required) the convergence of \texttt{SVP} (SDP). 

\paragraph{Initialization.} Here, we evaluate the importance of our initialization point selection:
\begin{equation}
\X^0 ~ := ~ \mathcal{P}_+ \left ( \frac{-\gradf(0)}{\| \gradf(0)-\gradf(e_1 e_1')\|_F} \right )
\label{eq:X0a}
\end{equation} To do so, we consider the following settings: we compare random initializations against the rule \eqref{eq:X0a}, both for constant step size selections and our step size selection. In all cases, we work with the factored parametrization.

Figure \ref{fig:02} shows the results. Left panel presents results for constant step size selections where $\eta = \sfrac{0.1}{\|\U\|_F^2}$ and right panel uses our step size selection; again, note that the selection of the constant step size is after many trial-and-errors for best step size selection, based on the specific configuration. Both figures compare the performance of factored gradient descent when $(i)$ a random initialization point is selected and, $(ii)$ our initialization is performed, according to \eqref{eq:X0a}. All curves depict median reconstruction errors over 20 Monte Carlo iterations. For all cases, the number of measurements is fixed to $C_{\text{sam}} \cdot n \cdot r$ for $C_{\text{sam}} = 10,$ $n = 1024$ and rank $r = 20$.

\begin{figure}[!ht]
	\begin{center}
		\includegraphics[width=0.45\textwidth]{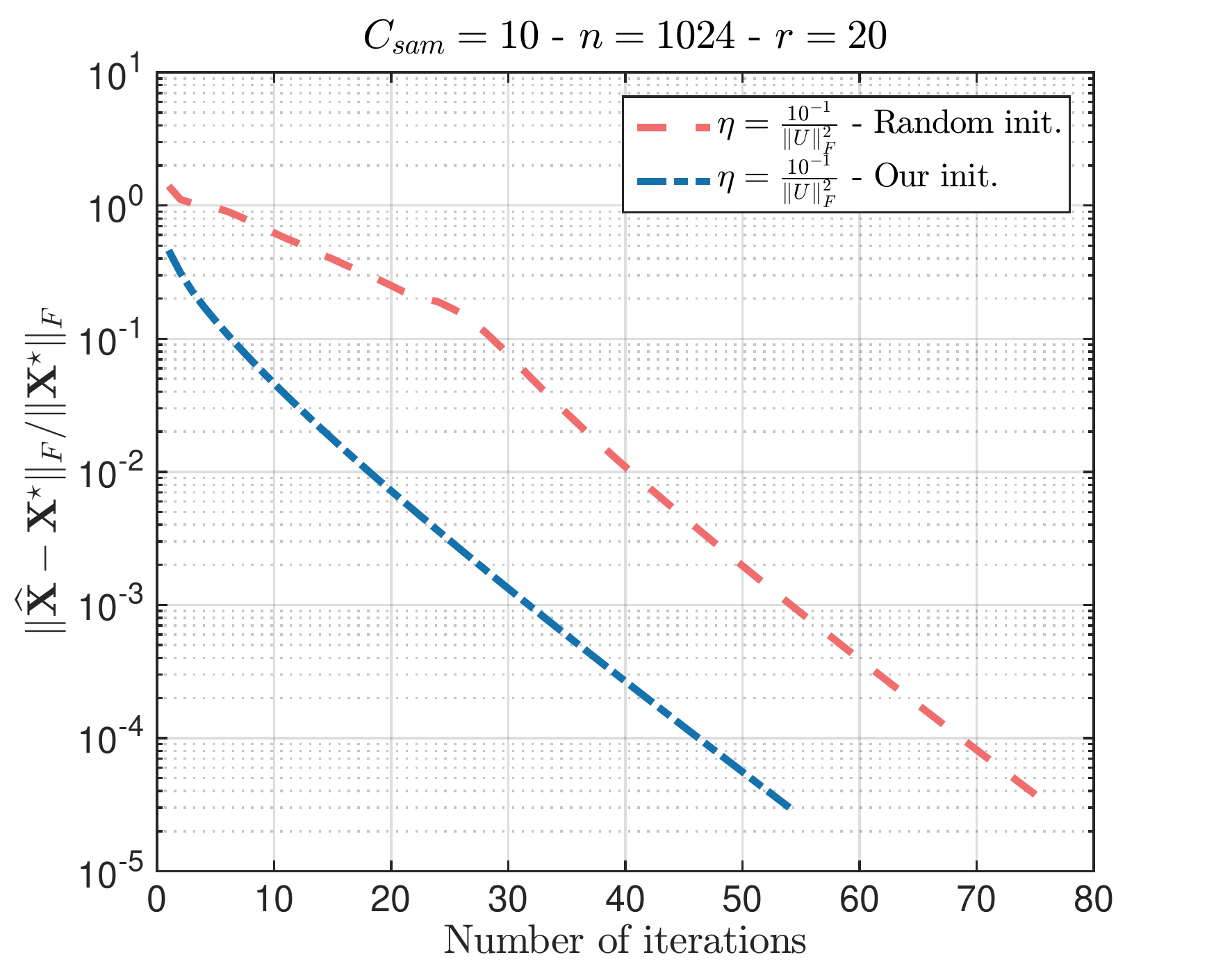}
		\includegraphics[width=0.45\textwidth]{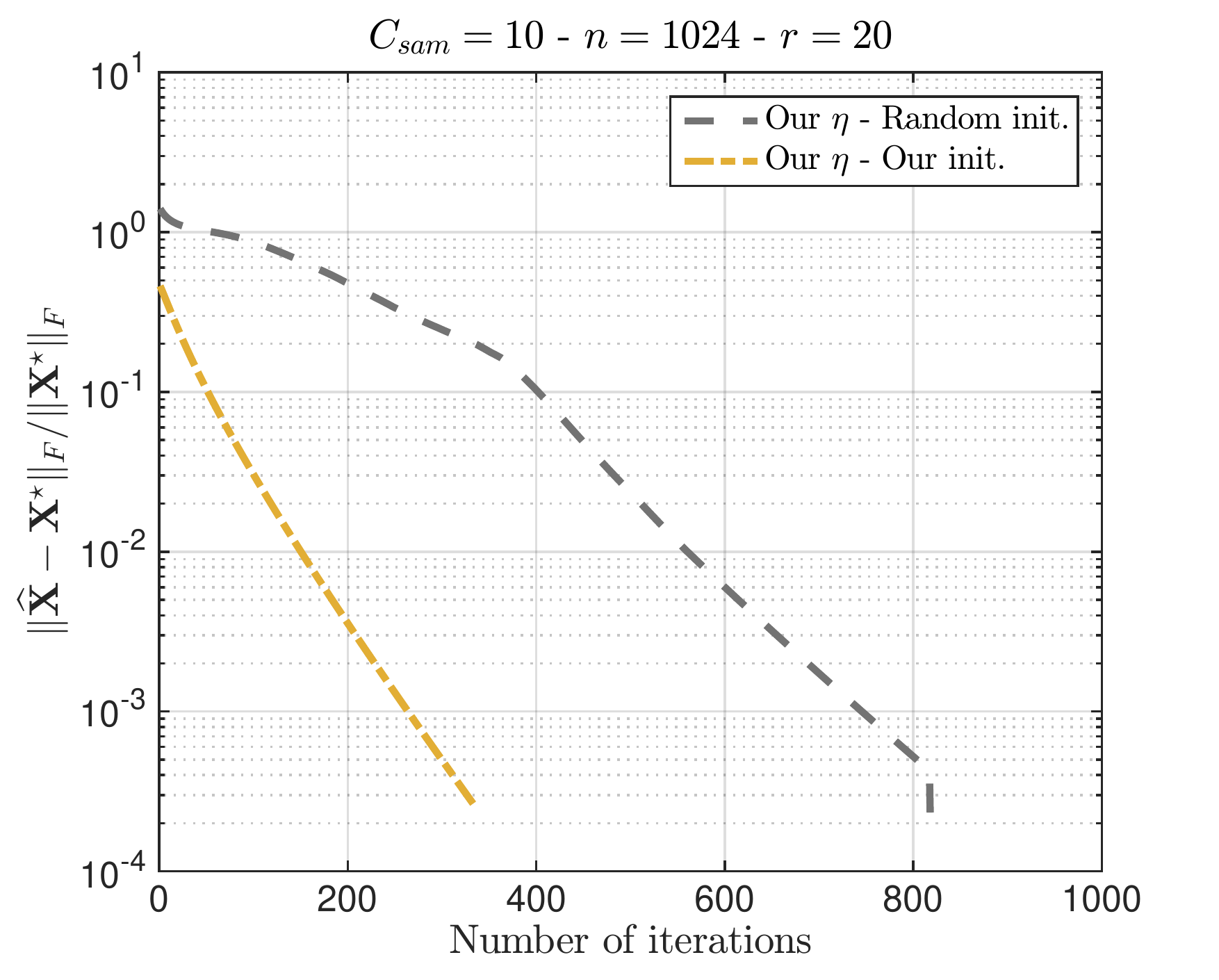}
	\end{center}
	\caption{Median error per iteration for different initialization set ups. Left panel presents results for constant step size selections where $\eta = \sfrac{0.1}{\|\U\|_F^2}$ and right panel uses our step size selection. Both figures compare the performance of factored gradient descent when $(i)$ a random initialization point is selected and, $(ii)$ our initialization is performed, according to \eqref{eq:X0a}. All curves depict median reconstruction errors over 20 Monte Carlo iterations. For all cases, the number of measurements is fixed to $C_{\text{sam}} \cdot n \cdot r$ for $C_{\text{sam}} = 10,$ $n = 1024$ and rank $r = 20$. }{\label{fig:02}}
\end{figure}

\paragraph{Dependence of $\alpha $ on $\frac{\sigma_1(\Xo)}{\sigma_r(\Xo)}$.} Here, we highlight the dependence of $\frac{\sigma_1(\Xo)}{\sigma_r(\Xo)}$ on the convergence rate of factored gradient descent. Consider the following matrix sensing toy example: let $\Xo := \Uo\left(\Uo\right)^\top \in \R^{n \times n}$ for $n = 50$ and assume $\text{rank}(\Xo) > r$. We desire to compute a (at most) rank-$r$ approximation of $\Xo$ by minimizing the simple least squares loss function:
\begin{equation} \label{exp:eq_01}
\begin{aligned}
& \underset{\X \in \R^{n \times n}}{\text{minimize}}
& & \frac{1}{2} \|\X - \Xo\|_F^2 \\
& \text{subject to}
& & \X \succeq 0, \quad \text{rank}(\X) \leq r
\end{aligned}
\end{equation} For this example, let us consider $r = 3$ and design $\Xo$ according to the following three scenarios: we fix $\sigma_1(\Xo) = \sigma_2(\Xo) = 100$ and vary $\sigma_3(\Xo) \in \left\{1, 10, 20\right\}$. This leads to condition numbers for these three cases as: $(i)$ $\frac{\sigma_1(\Xo)}{\sigma_3(\Xo)} = 100$, $(ii)$ $\frac{\sigma_1(\Xo)}{\sigma_3(\Xo)} = 10$ and, $(iii)$ $\frac{\sigma_1(\Xo)}{\sigma_3(\Xo)} = 5$. The convergence behavior is shown in Figure \ref{fig:03}(Left panel). It is obvious that factored gradient descent suffers -- w.r.t. convergence rate -- as the condition number $\frac{\sigma_1(\Xo)}{\sigma_3(\Xo)}$ get worse; especially, for the case where $\frac{\sigma_1(\Xo)}{\sigma_3(\Xo)} = 100$, factored gradient descent reaches a plateau after the $\sim$80-th iteration, where the steps towards solution become smaller. As the condition number improves, factored gradient descent enjoys faster convergence to the optimum, which shows the dependence of the algorithm on $\frac{\sigma_1(\Xo)}{\sigma_3(\Xo)}$ also in practice.

As a second setting, we fix $r = 2$, thereby computing a rank-$2$ approximation. As Figure \ref{fig:03}(Right panel) illustrates, for all values of $\sigma_3(\Xo)$, factored gradient descent performs similarly, enjoying fast convergence towards the optimum $\Xo$. Thus, while the condition number of original $\Xo$ varies to a large degree for $r = 3$, the convergence rate factor $\alpha$ only depends on $\frac{\sigma_1(\Xo)}{\sigma_2(\Xo)} = 1$, for $r = 2$. This leads to similar convergence behavior for all three scenarios described above.
\begin{figure}[!ht]
	\begin{center}
		\includegraphics[width=0.45\textwidth]{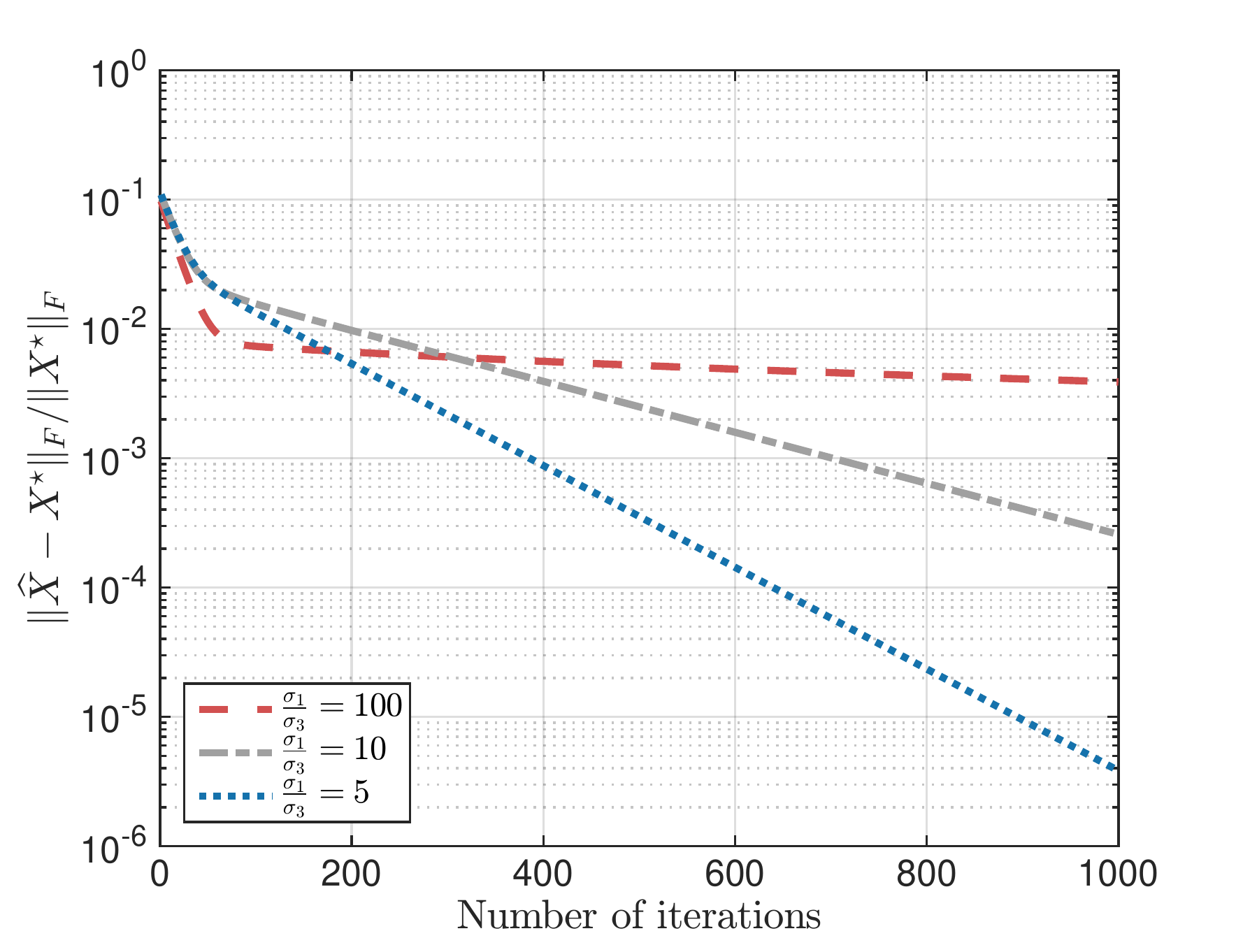}
		\includegraphics[width=0.45\textwidth]{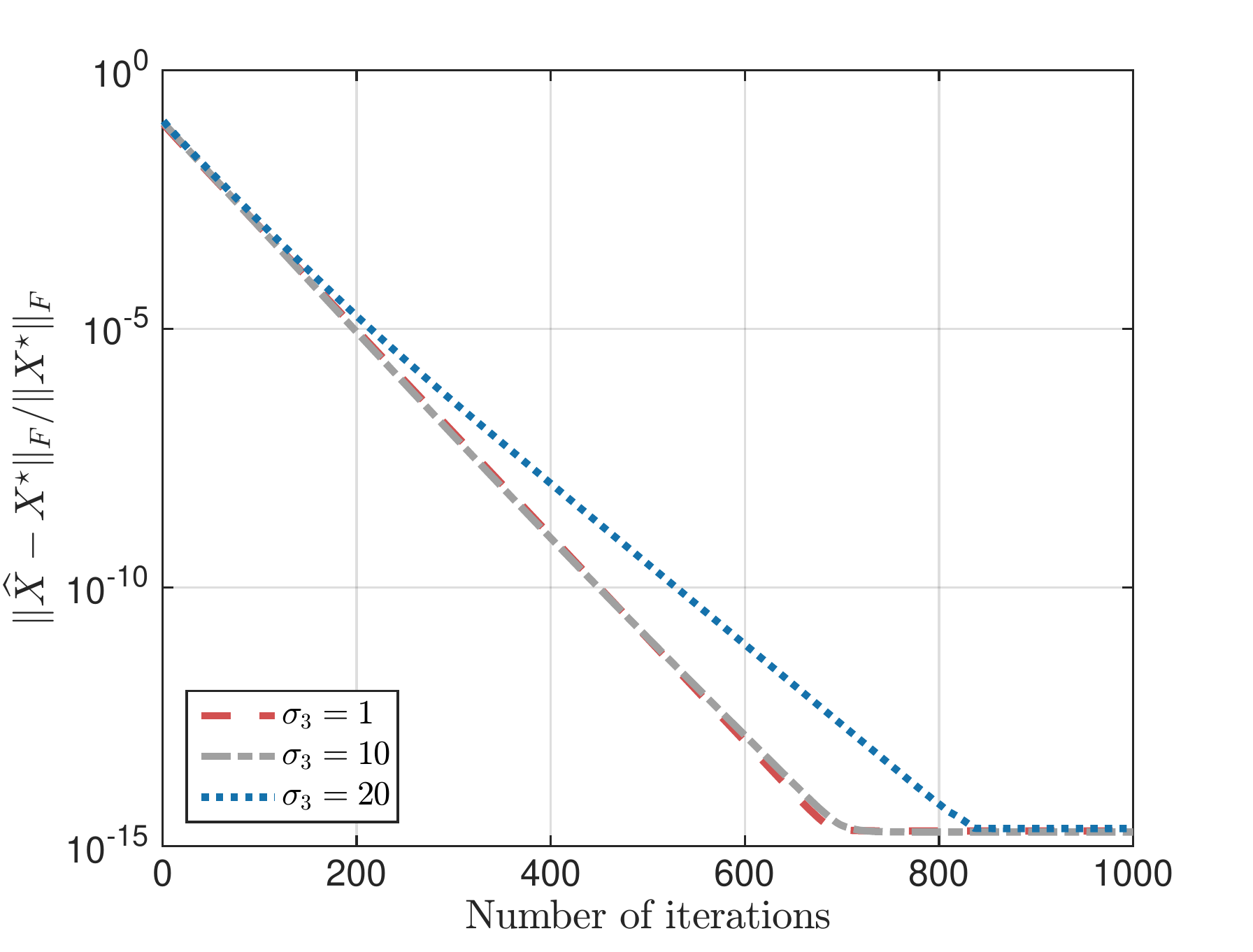}
	\end{center}
	\caption{Toy example on the dependence of $\alpha$ on the term $\frac{\sigma_1(\Xo)}{\sigma_r(\Xo)}$. Here, $\Xo := \Uo\left(\Uo\right)^\top \in \R^{n \times n}$ for $n = 50$. We use factored gradient descent to solve \eqref{exp:eq_01} for $r = 3$. Left panel: As condition number $\frac{\sigma_1(\Xo)}{\sigma_3(\Xo)}$ improves, factored gradient descent enjoys faster convergence in practice, as dictated by our theory. Right panel: convergence rate behavior of factored gradient descent when $r = 2$ in \eqref{exp:eq_01}.}{\label{fig:03}}
\end{figure}

\section{Test case II: Quantum State Tomography}{\label{sec:QST}}
As a second example, we consider the \emph{quantum state tomography} (QST) problem. QST can be described as follows:
\begin{equation}{\label{eq: orig QT}}
\begin{aligned}
	& \underset{\X \succeq 0}{\text{minimize}}
	& &  \|\linmap(\X) - y\|_F^2 \\
	& \text{subject to}
	& & \text{rank}(\X)= r, \;\trace(\X)\leq1.
\end{aligned}
\end{equation} 
In this problem, we look for a \emph{density matrix} $\X^{\star} \in \mathbb{C}^{n\times n}$ of a $q$-bit quantum system from a set of QST measurements $y \in \R^m,~m \ll n^2,$ that satisfy $y = \linmap(\X^{\star}) + \eta$.
Here, $(\linmap(\X^{\star}))_i = \trace(E_i \X^{\star})$ and $\eta_i$ could be modeled as zero-mean Gaussian. 
The operators $E_i \in  \R^{n \times n}$ are typically the tensor product of the $2\times 2$ Pauli matrices \cite{liu2011universal}. 
The density matrix is a priori known to be Hermitian, positive semi-definite matrix that satisfies $\text{rank}(\X^{\star}) = r$ and is normalized as $\trace(\X^{\star}) = 1$ \cite{gross2010quantum}; here, $n = 2^q$. 
Our task is to recover $\X^\star$. 

One can easily transform \eqref{eq: orig QT} into the following re-parameterized formulation:
\begin{equation}{\label{transform:eq_01}}
\begin{aligned}
	& \underset{\U \in \R^{n \times r}}{\text{{\rm minimize}}}
	& & \|\mathcal{A}(\U\U^\top) - y\|_2^2 \\
	& \text{{\rm subject to}}
	& &\|\U\|_F^2 \leq 1.
\end{aligned}
\end{equation} It is apparent that this problem formulation is not included in the cases \algo naturally solve, due to the Frobenius norm constraint on the factor $\U$. However, as a heuristic, one can alter \algo to include such projection: per iteration, each putative solution $\Up$ can be trivially projected onto the Frobenious norm ball $\|\U\|_F^2 \leq 1$; let us call this heuristic \texttt{projFGD}.
We compare such algorithm with state-of-the-art scheme for QST to show the merits of our approach.
The analysis of such constraint cases is an interesting and important extension of this paper and is left for future work.

\paragraph{State-of-the-art approaches.}
One of the first provable algorithmic solutions for the QST problem was through convexification \cite{fazel2002matrix, recht2010guaranteed, candes2009exact}:
this includes nuclear norm minimization approaches \cite{gross2010quantum} (using \textit{e.g.}, \cite{becker2011templates}), as well as proximal variants \cite{flammia2012quantum} (using \textit{e.g.}, \cite{cai2010singular}).
Recently, \cite{yurtsever2015universal} presented a universal primal-dual convex framework, which includes the QST problem as application and outperforms the above approaches, both in terms of recovery performance and execution time.

From a non-convex perspective, apart from Hazan's algorithm \cite{hazan2008sparse} -- see Related work,  \cite{becker2013randomized} propose Randomized Singular Value Projection (\texttt{RSVP}), a projected gradient descent algorithm for \eqref{eq: orig QT}, which merges gradient calculations with truncated SVDs via randomized approximations for computational efficiency. 

\paragraph{Experiments.}
Figure \ref{fig:exp1} (two-leftmost plots) illustrates the iteration and timing complexities of each algorithm under comparison, for a pure state density recovery setting ($r = 1$). Here, $q = 12$ which corresponds to a $\tfrac{n(n+1)}{2} = 8,390,656$ dimensional problem; moreover, we assume $C_{\rm sam} = 3$ and thus the number of measurements are $m = 12,288$. For initialization, we use the proposed initialization for all algorithms. It is apparent that \algo converges faster to a vicinity of $\X^\star$, as compared to the rest of the algorithms; observe also the sublinear rate of \texttt{SparseApproxSDP} in the inner plots, as reported in \cite{hazan2008sparse}.

\begin{figure*}[t!]
\centering
\includegraphics[width=0.33\textwidth]{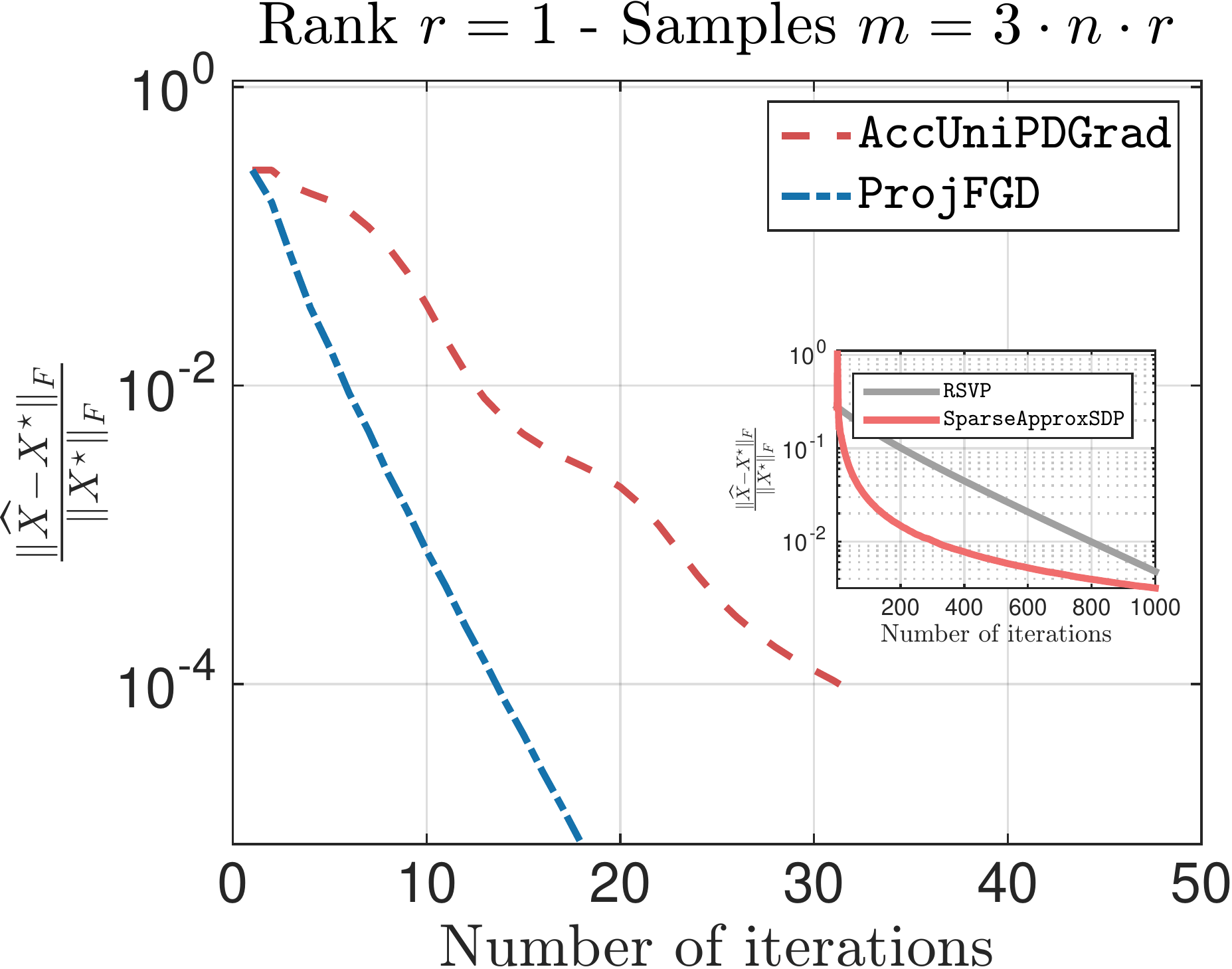} \hspace{-0.3cm}
\includegraphics[width=0.33\textwidth]{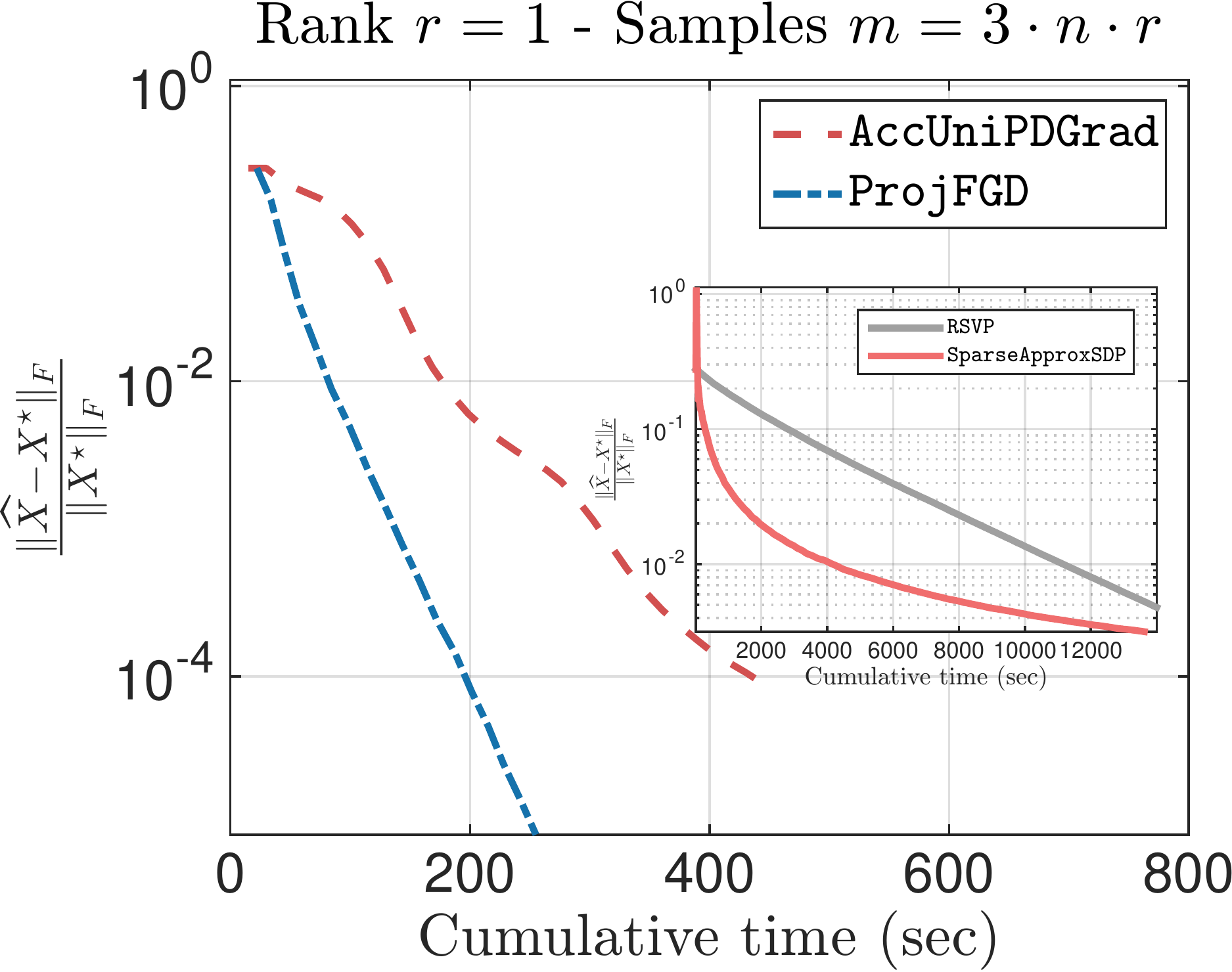} \hspace{-0.3cm}
\includegraphics[width=0.33\textwidth]{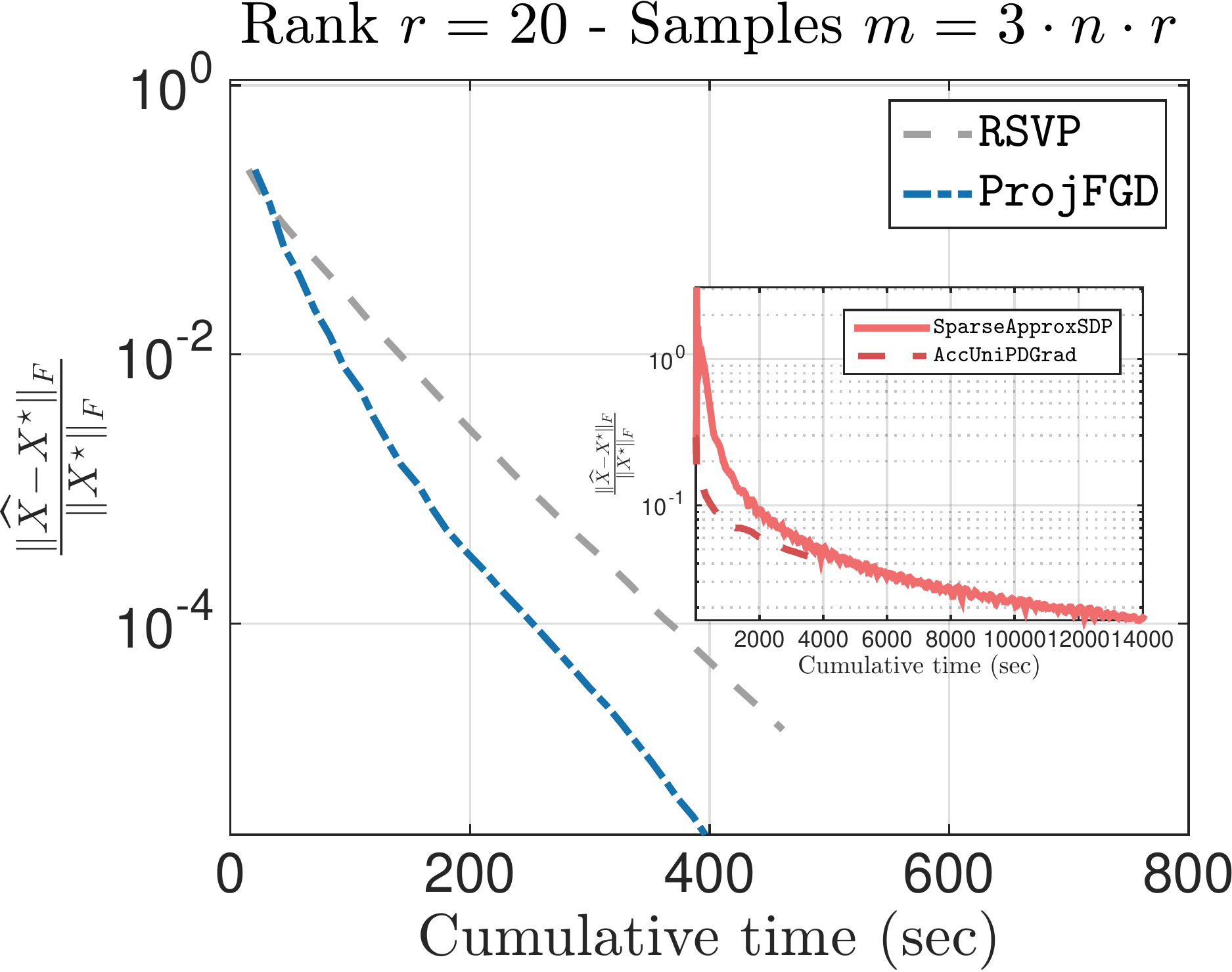} 
\caption{\textbf{Left and middle panels}: Convergence performance of algorithms under comparison w.r.t. $\tfrac{\|\widehat{X} - \X^\star\|_F}{\|\X^\star\|_F}$ vs. $(i)$ the total number of iterations (left) and $(ii)$ the total execution time. Both cases correspond to the case $C_{\rm sam} = 3$, $r = 1$ (pure state setting) and $q = 12$ (\textit{i.e.}, $n = 4096$). \textbf{Right panel}: Almost pure state ($r = 20$). Here, $C_{\rm sam} = 3$. 
}
\label{fig:exp1}
\end{figure*}
\section{Test case III: PSD problems with high-rank solutions}{\label{sec:high_rank}}

As a final example, we consider problems of the form:
\begin{equation*}
\begin{aligned}
\underset{\X \in \R^{n \times n}}{\text{minimize}}
& & f(\X) \quad \text{subject to}
& & \X \succeq 0,
\end{aligned} 
\end{equation*} 
where $\X^\star$ is the minimizer of the above problem and $\text{rank}(\X^\star) = O(n)$. 
In this particular case and assuming we are interested in finding high-ranked $\X^\star$, we can reparameterized the above problem as follows:
\begin{equation*}
\begin{aligned}
\underset{\U \in \R^{n \times O(n)}}{\text{minimize}}
& & f(\U\U^\top).
\end{aligned} 
\end{equation*} Observe that $\U$ is a square $n \times O(n)$ matrix. Under this setting, \algo performs the recursion:
\begin{align*}
\underbrace{\Up}_{n \times O(n)} = \underbrace{\U}_{n \times O(n)} - \eta \underbrace{\gradf(\U\U^\top)}_{n \times n}  \cdot \underbrace{\U}_{n \times O(n)}.
\end{align*} Due to the matrix-matrix multiplication, the per-iteration time complexity of \algo is $O(n^3)$, which is comparable to a SVD calculation of a $n \times n$ matrix. In this experiment, we study the performance of \algo in such high-rank cases and compare it with state-of-the-art approaches for PSD constrained problems. 

\begin{table}[!b]
\centering
\begin{footnotesize}
\begin{tabular}{c c c c c c c}
\toprule
Algorithm & & $\|\widehat{\X} - \X^\star\|_F / \|\X^\star\|_F$ & & Total time (sec) & & Time per iter. (sec - median) \\ 
\midrule
\multicolumn{7}{c}{Setting: $n = 1024$, $r = n/4$, $m = 2nr$.} \\ 
\midrule
\texttt{RSVP} & & 4.9579e-04 & & 1262.3719 & & 2.1644 \\ 
\texttt{SparseApproxSDP} & & 3.3329e-01 & & 895.9605 & & 2.1380e-01 \\ 
\texttt{FGD} & & 1.6763e-04 & & 57.8495 & & 2.1961e-01 \\ 
\midrule 
\multicolumn{7}{c}{Setting: $n = 2048$, $r = n/4$, $m = 2nr$.} \\ 
\midrule
\texttt{RSVP} & & 4.9537e-04 & & 8412.6981 & & 14.6811 \\ 
\texttt{SparseApproxSDP} & & 3.3526e-01 & & 26962.0379 & & 8.7761e-01 \\ 
\texttt{FGD} & & 1.6673e-04 & & 272.8102 & & 1.0040e+00 \\ 
\midrule 
\multicolumn{7}{c}{Setting: $n = 2048$, $r = n/8$, $m = 4nr$.} \\ 
\midrule
\texttt{RSVP} & & 2.4254e-04 & & 1945.6714 & & 5.9763 \\ 
\texttt{SparseApproxSDP} & & 9.6725e-02 & & 3506.8147 & & 8.6440e-01 \\ 
\texttt{FGD} & & 3.8917e-05 & & 68.5689 & & 9.2567e-01\\ 
 \midrule  
\end{tabular}
\end{footnotesize}
\caption{Comparison of related work in high-rank matrix sensing problems. We construct $\X^\star$ with $\trace(\X^\star) = 1$ such that \cite{hazan2008sparse} applies. It is apparent that avoiding SVDs helps in practice.} \label{table:high_rank_summary} 
\end{table}

For the purpose of this experiment, we only consider first-order solvers; \emph{i.e.}, second order methods such as interior point methods are excluded as, in high dimensions, it is prohibitively expensive the hessian of $f$. To this end, the algorithms to compare include: $(i)$ standard projected gradient descent approach \cite{kyrillidis2014matrix} and $(ii)$ Frank-Wolfe type of algorithms, such as the one in \cite{hazan2008sparse}. We note that this experiment can be seen as a proof of concept on how avoiding SVD calculations help in practice.\footnote{Here, we assume a standard Lanczos implementation of SVD, as the one provided in Matlab enviroment.}

\paragraph{Experiments.}
We consider the simple example of matrix sensing \cite{kyrillidis2014matrix}: we obtain a set of measurements $y \in \R^{m}$ according to the linear model:
\begin{align*}
y = \mathcal{A}(\X^\star).
\end{align*} Here, $\mathcal{A}: \R^{n \times n} \rightarrow \R^{m}$ is a sensing mechanism such that $\left(\mathcal{A}(\X)\right)_i = \trace(A_i \X)$ for some Gaussian random matrices $A_i, ~i = 1, \dots, m$. The ground truth matrix $\X^\star$ is design such that $\text{rank}(\X^\star) = \sfrac{n}{4}$ and $\trace(\X^\star) = 1$.\footnote{The reason we design $\X^\star$ such that $\trace(\X^\star)$ is such that the algorithm \texttt{SparseApproxSDP} \cite{hazan2008sparse} applies; this is due to the fact that \texttt{SparseApproxSDP} is designed for QST problems, where trace constraint is present in the optimization criterion---see also \ref{eq: orig QT}, without the rank constraint.}

Figure \ref{fig:exp2} and Table \ref{table:high_rank_summary} show some results for the following settings: $(i)$ $n = 1024$, $r  = n/4$ and $m = 2nr$, $(ii)$ $n = 2048$, $r  = n/4$ and $m = 2nr$, $(iii)$ $n = 2048$, $r  = n/8$ and $m = 4nr$. From our finding, we observe that, even for high rank cases---where $r = O(n)$---performing matrix factorization and optimizing over the factors results into a much faster  convergence, as compared to low-rank projection algorithms, such as \texttt{RSVP}  in \cite{becker2013randomized}. 
Furthermore, \algo performs better than \texttt{SparseApproxSDP} \cite{hazan2008sparse} in practice: while \texttt{SparseApproxSDP} is a Frank-Wolfe type-of algorithm (and thus, the per iteration complexity is low), it admits \emph{sublinear} convergence which leads to suboptimal performance, in terms of total execution time. However, \texttt{RSVP} and \texttt{SparseApproxSDP} algorithms do not assume specific initialization procedures to work in theory.

\begin{figure}[t!]
\centering
\includegraphics[width=0.32\textwidth]{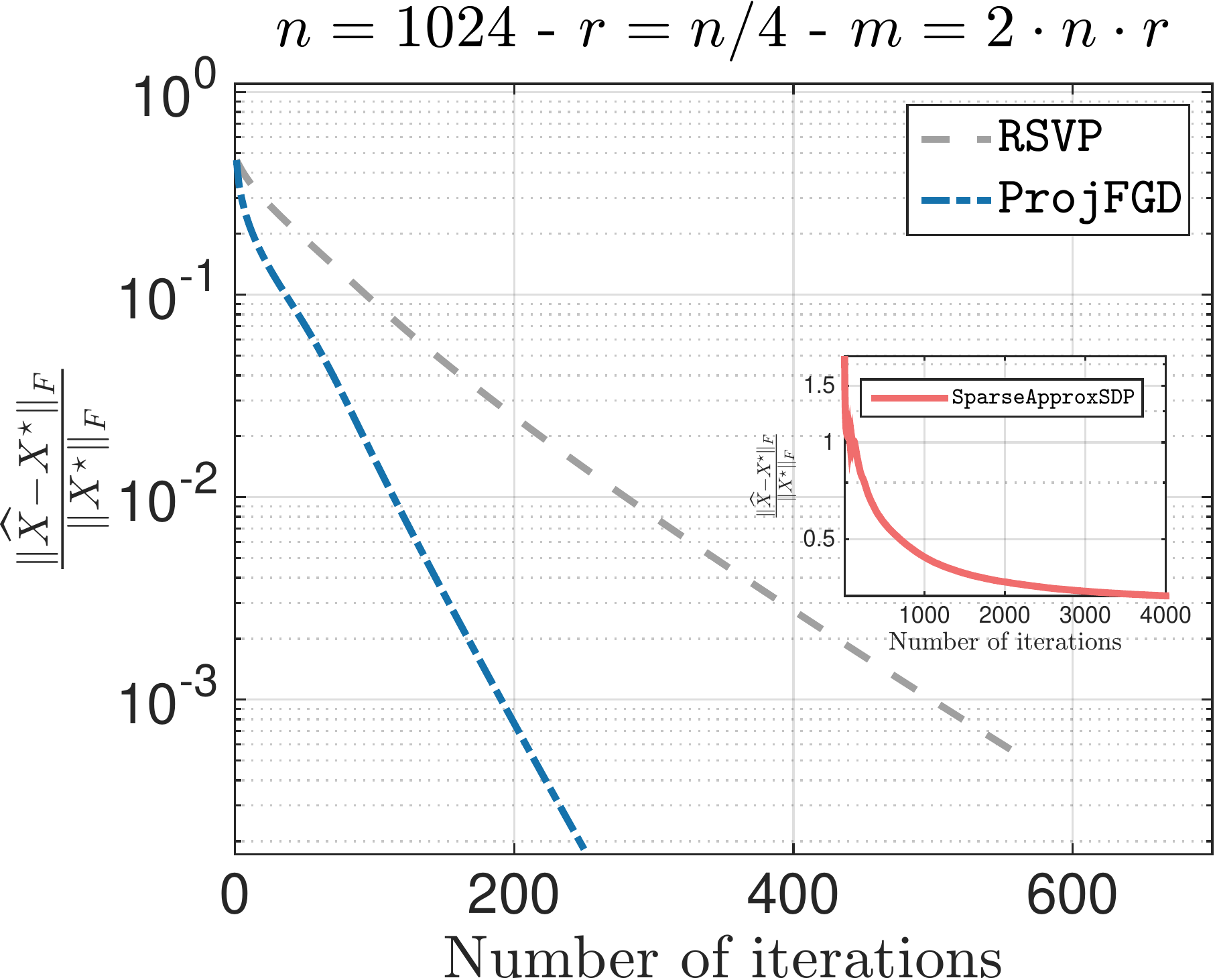}
\includegraphics[width=0.32\textwidth]{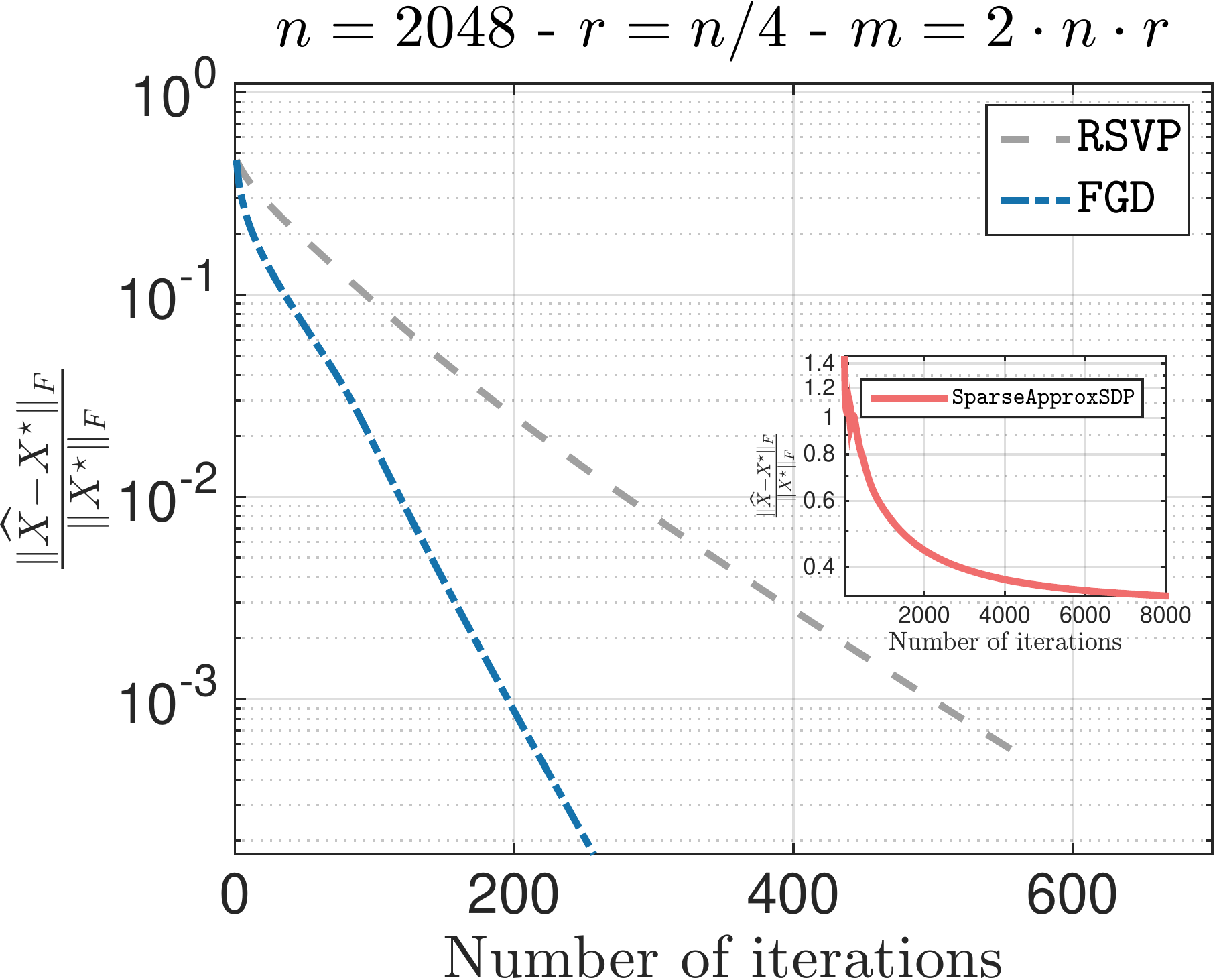}
\includegraphics[width=0.32\textwidth]{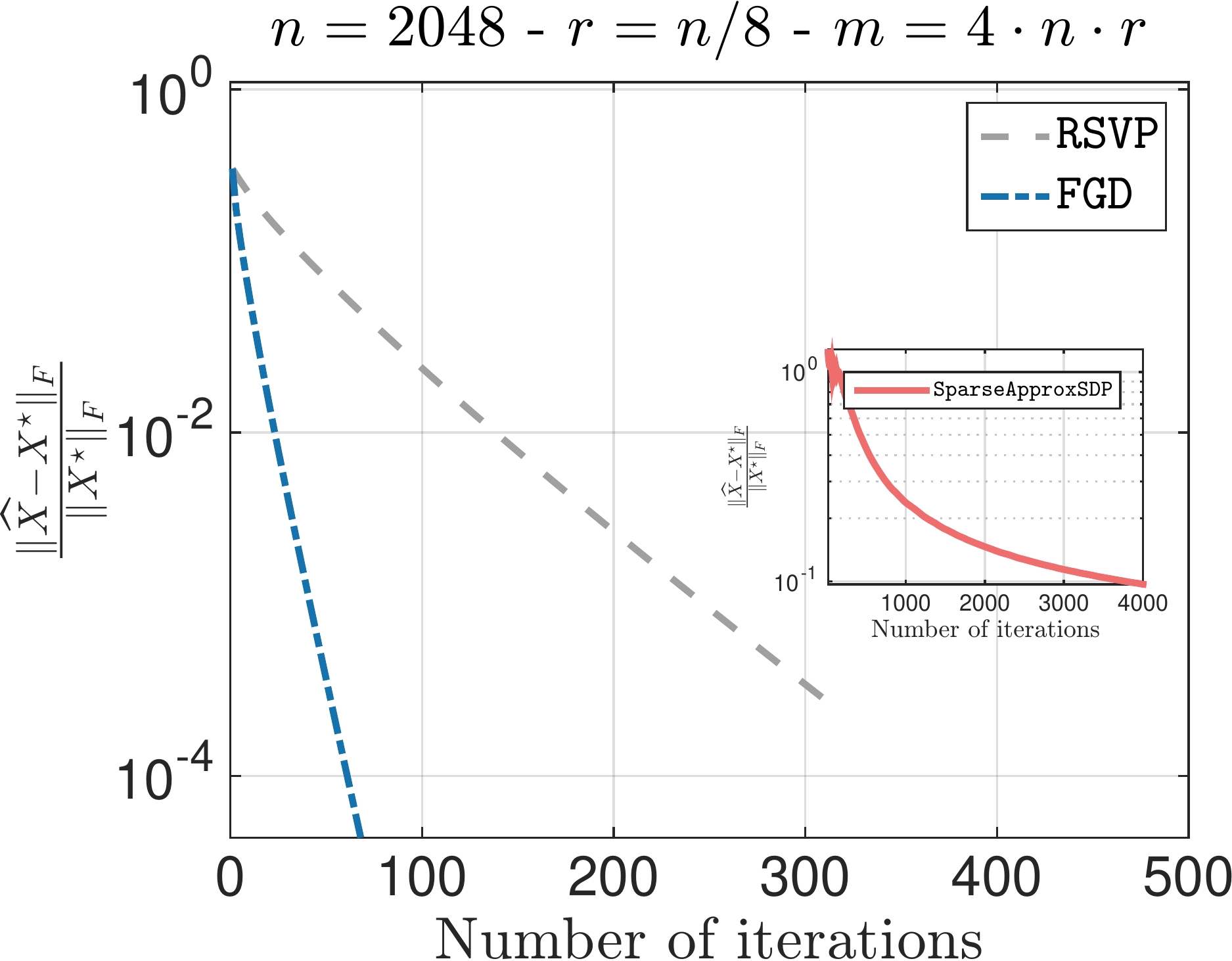}\\
\includegraphics[width=0.32\textwidth]{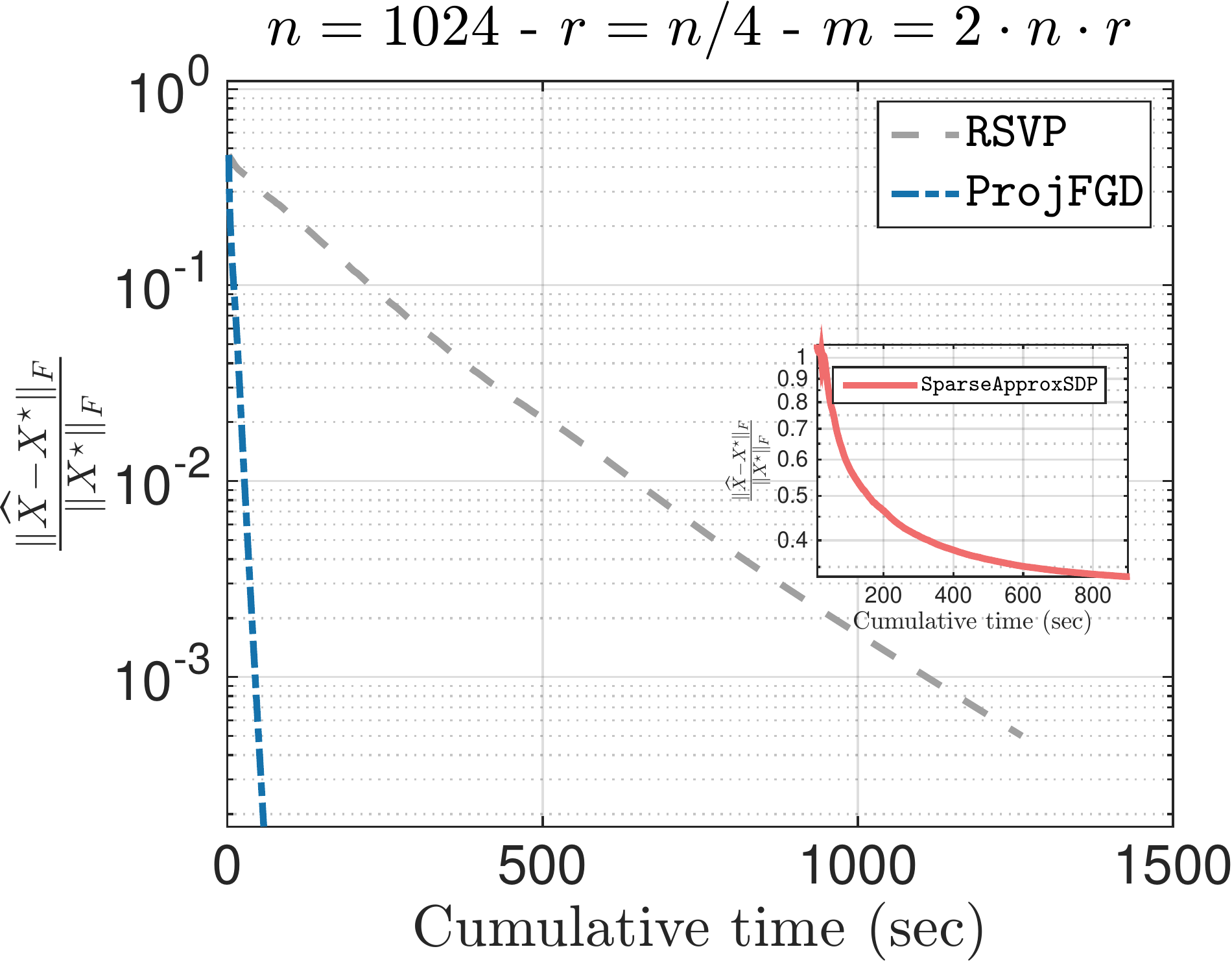}
\includegraphics[width=0.32\textwidth]{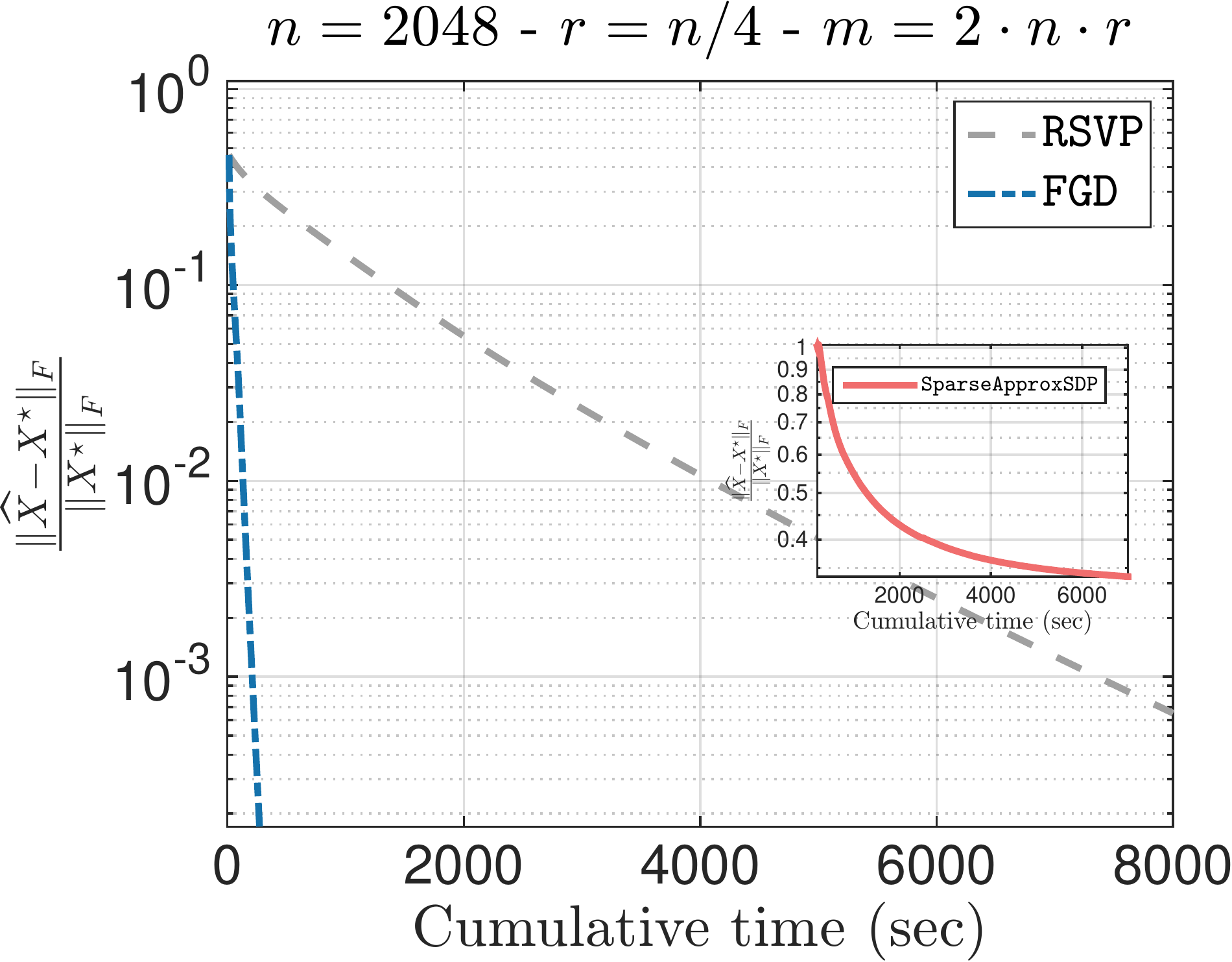}
\includegraphics[width=0.32\textwidth]{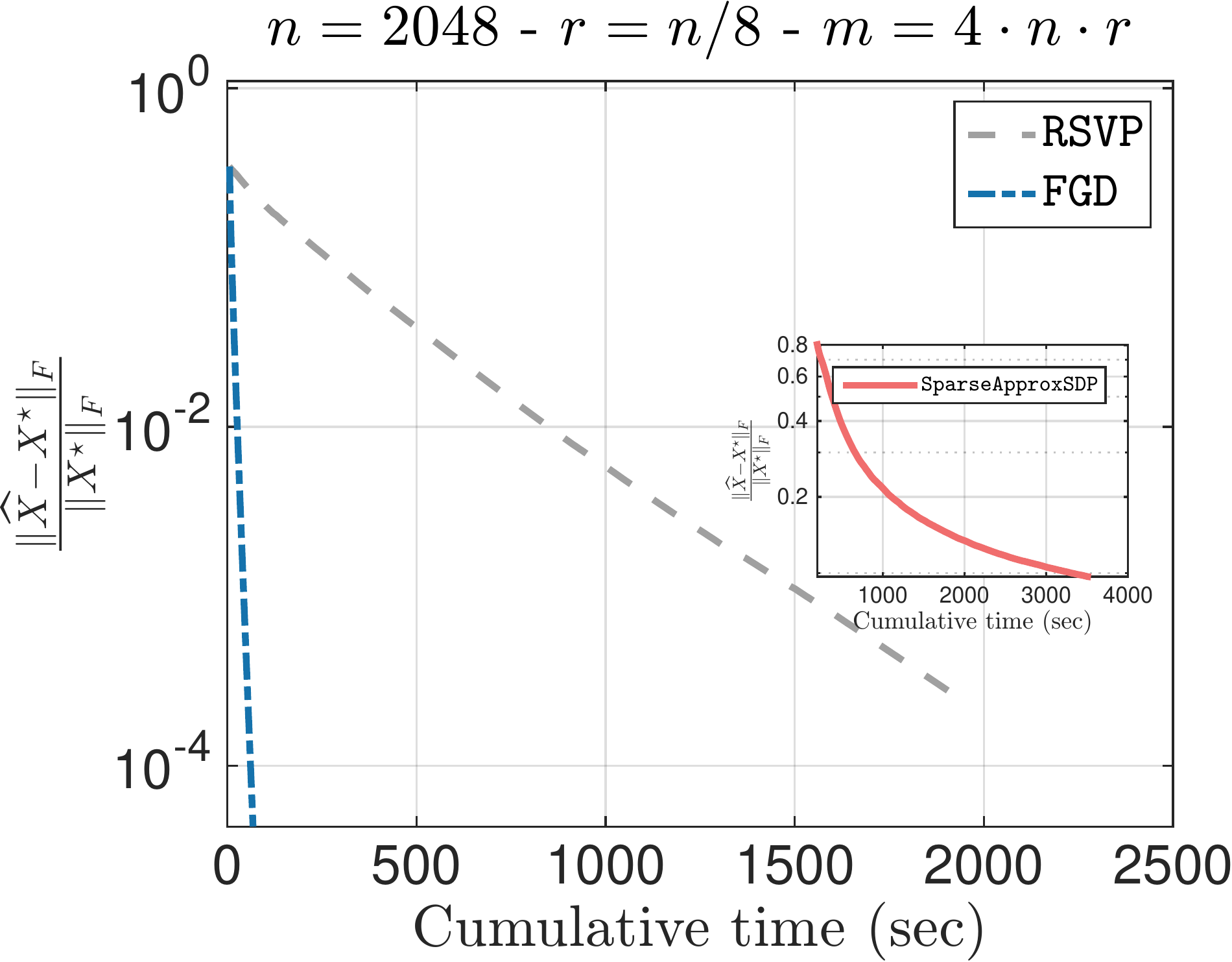}
\caption{Convergence performance of algorithms under comparison w.r.t. $\tfrac{\|\widehat{X} - \X^\star\|_F}{\|\X^\star\|_F}$ vs. $(i)$ the total number of iterations (top) and $(ii)$ the total execution time (bottom). 
}
\label{fig:exp2}
\end{figure}

\section{Convergence without tail bound assumptions}{\label{sec:no_tail}}
In this section, we show how assumptions (A3) and (A4) can be dropped by using a different step size $\eta$, where spectral norm calculation of two $n \times r$ matrices is required per iteration.
Here, we succinctly describe the main theorems and how they differ from the case where $\eta$ as in \eqref{eq:step_size} is used. We also focus only on the case of restricted strongly convex functions. Similar extension is possible without restricted strong convexity.

Our discussion is organized as follows: we first re-define key lemmas (\emph{e.g.}, descent lemmas, etc.) for a different step size; then, we state the main theorems and a sketch of their proof. In the analysis below we use as step size:
$$
\weta = \frac{1}{16 \, (M \norm{X}_2 + \norm{\gradf(X)Q_UQ_U^\top}_2)}
$$

\subsection{Key lemmas}
Next, we present the main descent lemma that is used for both sublinear and linear convergence rate guarantees of \algo. 

\begin{lemma}[Descent lemma]\label{lem:newgradU,U-U_r_ bound}
For $f$ being a $M$-smooth and $(m, r)$-strongly convex function and under assumptions $(A2)$ and $f(\Xp) \geq f(\Xor)$, the following inequality holds true:
\begin{align*}
\tfrac{1}{\weta}\ip{U -\Up}{\U -\Uorr} \geq  \tfrac{3}{5} \weta  \|\gradf(X) U\|_F^2 + \tfrac{3m}{20} \cdot  \sigma_{r}(\Xo)  \| \Delta\|_F^2.
\end{align*}
\end{lemma}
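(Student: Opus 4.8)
The plan is to mimic the proof of Lemma~\ref{lem:gradU,U-U_r_ bound} with $\eta$ replaced throughout by $\weta$, and to eliminate the two places where that argument passed through the true optimum $\Xo$ — and so picked up the tail term $\|\Xo-\Xor\|_F$ — by anchoring the estimates directly at $\Xor$ through the hypothesis $f(\Xp)\ge f(\Xor)$. Since $U-\Up=\weta\,\gradf(X)U$, the left-hand side equals $\ip{\gradf(X)U}{\U-\Uorr}$, and exactly as in~\eqref{proofsr1:eq_09} I would split
\begin{align*}
\ip{\gradf(X)U}{\U-\Uorr}=\tfrac{1}{2}\ip{\gradf(X)}{\X-\Xor}+\tfrac{1}{2}\ip{\gradf(X)}{\Delta\Delta^\top},
\end{align*}
and lower-bound the two summands in turn.

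\emph{First summand.} I would rerun the proof of Lemma~\ref{lem:gradX,X-X_r_ bound_sc}, but replacing the step $f(\Xor)\le f(\Xo)+\tfrac{M}{2}\|\Xor-\Xo\|_F^2$ (the source of the tail term) by the hypothesis $f(\Xp)\ge f(\Xor)$. Concretely, $M$-smoothness between $\X$ and $\Xp:=\Up\Up^\top$ gives $f(\X)\ge f(\Xp)-\ip{\gradf(X)}{\Xp-\X}-\tfrac{M}{2}\|\Xp-\X\|_F^2\ge f(\Xor)-\ip{\gradf(X)}{\Xp-\X}-\tfrac{M}{2}\|\Xp-\X\|_F^2$, while $(m,r)$-restricted strong convexity (valid since $\X$ and $\Xor$ are both rank-$r$) gives $f(\Xor)\ge f(\X)+\ip{\gradf(X)}{\Xor-\X}+\tfrac{m}{2}\|\Xor-\X\|_F^2$; adding these produces the analogue of~\eqref{eq:gradX,X-X_r_03} \emph{without} the $-\tfrac{M}{2}\|\Xo-\Xor\|_F^2$ term,
\begin{align*}
\ip{\gradf(X)}{\X-\Xor}\ge\ip{\gradf(X)}{\X-\Xp}-\tfrac{M}{2}\|\Xp-\X\|_F^2+\tfrac{m}{2}\|\Xor-\X\|_F^2.
\end{align*}
Writing $\Xp=\X-\weta\,\gradf(X)\X\Lambda-\weta\,\Lambda^\top\X\,\gradf(X)$ with $\Lambda=I-\tfrac{\weta}{2}Q_UQ_U^\top\gradf(X)$, the facts $\Lambda\succ0$, $\|\Lambda\|_2\le1+\tfrac{1}{32}$ and $\sigma_n(\Lambda)\ge1-\tfrac{1}{32}$ are immediate from the definition of $\weta$ (which forces $\tfrac{\weta}{2}\|Q_UQ_U^\top\gradf(X)\|_2\le\tfrac{1}{32}$ and $\weta\le\tfrac{1}{16M\|\X\|_2}$), so the same chain of inequalities as in Lemma~\ref{lem:gradX,X-X_r_ bound_sc} gives $\ip{\gradf(X)}{\X-\Xp}-\tfrac{M}{2}\|\Xp-\X\|_F^2\ge\tfrac{18\weta}{10}\|\gradf(X)U\|_F^2$. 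Combining with Lemma~\ref{lem:lower_bound_X_Xr} (whose hypothesis is implied by $(A2)$) to get $\|\X-\Xor\|_F^2\ge\tfrac{3}{4}\sigma_r(\Xo)\|\Delta\|_F^2$, I obtain $\ip{\gradf(X)}{\X-\Xor}\ge\tfrac{18\weta}{10}\|\gradf(X)U\|_F^2+\tfrac{3m}{8}\sigma_r(\Xo)\|\Delta\|_F^2\ge0$.

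\emph{Second summand and combination.} Here I would use the mechanism of Lemma~\ref{lem:new_lemma_smooth}: writing $\Delta$ as $(\X-\Xor)$ times a fixed matrix built from $\bigl[\,\U\ \Uorr\,\bigr]^\dagger$, using that the symmetric part of $\gradf(X)(\X-\Xor)$ is PSD, the bound $\|\Delta\|_2\le\dist(\U,\Uo_r)\le\tfrac{1}{100}\sigma_r(\Uo_r)$ from $(A2)$, and $\bigl\|\bigl[\,\U\ \Uorr\,\bigr]^\dagger\bigr\|_2\le\bigl((\sqrt2-\tfrac{1}{100})\sigma_r(\Uo_r)\bigr)^{-1}$, one gets $\ip{\gradf(X)}{\Delta\Delta^\top}\ge-\tfrac{\sqrt2}{\sqrt2-1/100}\cdot\tfrac{1}{100}\,\bigl|\ip{\gradf(X)}{\X-\Xor}\bigr|$, where the absolute value is redundant by the nonnegativity established above. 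With $c_0:=\tfrac{\sqrt2}{\sqrt2-1/100}\cdot\tfrac{1}{100}<\tfrac{3}{200}$ this yields
\begin{align*}
\ip{\gradf(X)U}{\U-\Uorr}\ge\tfrac{1}{2}(1-c_0)\ip{\gradf(X)}{\X-\Xor}\ge\tfrac{1}{2}\cdot\tfrac{197}{200}\Bigl(\tfrac{18\weta}{10}\|\gradf(X)U\|_F^2+\tfrac{3m}{8}\sigma_r(\Xo)\|\Delta\|_F^2\Bigr),
\end{align*}
and the right-hand side is at least $\tfrac{3}{5}\weta\|\gradf(X)U\|_F^2+\tfrac{3m}{20}\sigma_r(\Xo)\|\Delta\|_F^2$, which is precisely the claim since the left side equals $\tfrac{1}{\weta}\ip{U-\Up}{\U-\Uorr}$.

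\emph{Main obstacle.} The only genuinely new point is the PSD claim for $\gradf(X)(\X-\Xor)$ used in the second summand: in Lemma~\ref{lem:new_lemma_smooth} it rests on $\Xor$ being a global minimizer over the PSD cone, which is false as soon as $r<r^\star$ (there $\Xor$ is merely the rank-$r$ truncation of $\Xo$). I expect the fix to be to carry out the one-dimensional level-set/convexity argument of that lemma at the genuine optimum $\Xo$ and then absorb the correction $\ip{\gradf(X)}{\Xo-\Xor}$, or to exploit $f(\Xp)\ge f(\Xor)$ to restore the requisite optimality along the relevant sections; the remaining steps are a faithful replay of Lemmas~\ref{lem:gradX,X-X_r_ bound_sc} and~\ref{lem:lower_bound_X_Xr} with $\eta\to\weta$.
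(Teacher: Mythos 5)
Your Step I is exactly the paper's route: rerunning Lemma~\ref{lem:gradX,X-X_r_ bound_sc} with the hypothesis $f(\Xp)\geq f(\Xor)$ in place of the detour through $\Xo$ gives precisely Lemma~\ref{lem:newgradX,X-X_r_ bound_sc}, and your constants check out. The gap is in your Step II, and it is the one you yourself flag: you bound $\ip{\gradf(X)}{\Delta\Delta^\top}$ by importing Lemma~\ref{lem:new_lemma_smooth}, whose proof hinges on the symmetric part of $\gradf(X)(\X-\Xor)$ being PSD, and that in turn uses $f(\Xor)\leq f(\X+tyy^\top)$ on the relevant level set, i.e.\ genuine optimality of $\Xor$ over the PSD cone. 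In the smooth section this is legitimate because there $r=r^\star$ and $\Xor=\Xo$; but the present lemma is the engine of the ``no tail bound'' section, whose entire purpose is to cover $r<\mathrm{rank}(\Xo)$ with $\|\Xo-\Xor\|_F$ arbitrary, and in that regime $\Xor$ is only the rank-$r$ truncation of $\Xo$ and the PSD claim has no justification. The hypothesis $f(\Xp)\geq f(\Xor)$ does not restore optimality of $\Xor$ along sections $\X+tyy^\top$, and your alternative fix---running the argument at $\Xo$ and absorbing $\ip{\gradf(X)}{\Xo-\Xor}$---would reintroduce a term controlled only by $\|\Xo-\Xor\|_F$, i.e.\ exactly the tail quantity that assumption $(A3)$ was dropped to avoid. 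So as written the proof is complete only in the exact case $\Xo=\Xor$.

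The paper closes Step II by a different, purely spectral argument (Lemma~\ref{lem:newDD_bound_sc}): it bounds $\ip{\gradf(X)}{\Delta\Delta^\top}\geq -\bigl(\|Q_UQ_U^\top\gradf(X)\|_2+\|Q_{\Uo_r}Q_{\Uo_r}^\top\gradf(X)\|_2\bigr)\|\Delta\|_F^2$, controls $\|Q_{\Uo_r}Q_{\Uo_r}^\top\gradf(X)\|_2$ by a constant multiple of $\tau(\Uo_r)\,\|Q_UQ_U^\top\gradf(X)\|_2$ using only triangle inequalities and the initialization assumption $(A2)$ (the bound \eqref{proofsjc:eq_067}, which never touches $\gradf(\Xo)$ or any optimality of $\Xor$), and then splits into cases according to whether $\|Q_UQ_U^\top\gradf(X)\|_2$ exceeds the threshold $\tfrac{m\sigma_r(\X)}{40\tau(\Uo_r)}$, which is where the $-\tfrac{3m\sigma_r(\Xo)}{40}\|\Delta\|_F^2$ and $-\tfrac{6\weta}{25}\|\gradf(X)U\|_F^2$ terms come from. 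Replacing your use of Lemma~\ref{lem:new_lemma_smooth} by this projection-plus-case-split bound (or supplying an independent proof of the PSD property valid for truncations $\Xor$, which is not available) is what is needed to make your argument go through in the stated generality.
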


\begin{proof}[Proof of Lemma \ref{lem:newgradU,U-U_r_ bound}]
By \eqref{proofsr1:eq_09_main}, we have: 
\begin{align}
 \ip{\gradf(X) \U}{\U -\Uorr} = \frac{1}{2}\ip{\gradf(X) }{\X -\Xo_r}  + \frac{1}{2} \ip{\gradf(X) }{\Delta \Delta^\top}, \label{proofsr1:neweq_09_main}
\end{align} 

\noindent{\bf Step I: Bounding $\ip{\gradf(X) }{\X -\Xo_r}$.} For this term, we have a variant of Lemma \ref{lem:gradX,X-X_r_ bound_sc}, as follows:
\begin{lemma}\label{lem:newgradX,X-X_r_ bound_sc}
Let $f$ be a $M$-smooth and $(m, r)$-restricted strongly convex function with optimum point $\Xo$. Assume $f(\Xp) \geq f(\Xor)$. Let $\X = \U\U^\top$. Then,
\begin{align*}   
\ip{\gradf(\X)}{\X-\Xor}  \geq \tfrac{18\weta}{10} \|\gradf(X) U\|_F^2 +\tfrac{m}{2}\norm{\X - \Xor}_F^2,
\end{align*}
where  $\weta =\frac{1}{16 (M\|\X\|_2 + \|\gradf(\X)Q_U Q_U^\top\|_2)}$.
\end{lemma} The proof of this lemma is provided in Section~\ref{sec:newsupp_sc1}.\\

\noindent{\bf Step II: Bounding $\ip{\gradf(X) }{\Delta \Delta^\top}$.}
For the second term, we have the following variant of Lemma \ref{lem:DD_bound_sc}. 
\begin{lemma}\label{lem:newDD_bound_sc}
Let $f$ be $M$-smooth and $(m, r)$-restricted strongly convex. Then, under assumptions $(A2)$ and $f(\Xp) \geq f(\Xo_r)$, the following bound holds true:
\begin{align*}
\ip{\gradf(X) }{ \Delta \Delta^\top} \geq - \tfrac{6\weta}{25 } \|\gradf(\X) U \|_F^2 - \tfrac{3m\sigma_{r}(\Xo)}{40}  \cdot \| \Delta\|_F^2.
\end{align*} 
\end{lemma} Proof of this lemma can be found in Section~\ref{sec:newsupp_sc2}.\\

\noindent{\bf Step III: Combining the bounds in equation~\eqref{proofsr1:neweq_09_main}.} The rest of the proof is similar to that of Lemma \ref{lem:gradU,U-U_r_ bound}. 

\end{proof}

\subsection{Proof of linear convergence}\label{sec:newproofs_2}
For the case of (restricted) strongly convex functions $f$, we have the following revised theorem:

\begin{theorem}[Convergence rate for restricted strongly convex $f$]\label{thm:convergence_main_new}
Let current iterate be $\U$ and $\X = \U\U^\top$. Assume $ \dist(\U, \Uo_r) \leq \rho' \sigma_{r}(\Uo_r)$ and let the step size be $\weta =\frac{1}{16 \, (M \norm{X}_2 + \norm{\gradf(X)Q_UQ_U^\top}_2)} $. Then under assumptions $(A2)$ and $f(\Xp) \geq f(\Xor)$, the new estimate $\Up= U -\eta \gradf(X) \cdot U$ satisfies
\begin{equation}
\dist(\Up, \Uo_r)^2 \leq \alpha \cdot \dist(\U, \Uo_r)^2, \label{conv:neweq_00}
\end{equation}
where $\alpha = 1 -\frac{m \sigma_{r}(\Xo)}{64 (M\|\Xo\|_2 + \|\gradf(\Xor)\|_2)}$. Furthermore, $\Up$ satisfies $ \dist(\Up, \Uo_r) \leq \rho' \sigma_{r}(\Uo_r). $
\end{theorem}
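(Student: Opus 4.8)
The plan is to mirror the proof of Theorem~\ref{thm:convergence_main} from Section~\ref{sec:proofs_2}, but substitute the adaptive step size $\weta = \frac{1}{16(M\|X\|_2 + \|\gradf(X)Q_UQ_U^\top\|_2)}$ in place of the fixed $\eta$ based on $X^0$. First I would start from the same algebraic identity
\begin{align*}
\dist(\Up, \Uo_r)^2 \leq \norm{\Up - \Uorr}_F^2 = \norm{\Up - \U}_F^2 + \norm{\U - \Uorr}_F^2 - 2\ip{\Up - \U}{\Uorr - \U},
\end{align*}
and then invoke the new descent lemma (Lemma~\ref{lem:newgradU,U-U_r_ bound}) to lower bound the inner product $\ip{\U - \Up}{\U - \Uorr} = \weta \ip{\gradf(X)\U}{\U - \Uorr}$ by $\tfrac{3}{5}\weta^2\|\gradf(X)U\|_F^2 + \tfrac{3m}{20}\weta\sigma_r(\Xo)\|\Delta\|_F^2$. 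Substituting $\|\Up - \U\|_F^2 = \weta^2\|\gradf(X)U\|_F^2$ and combining, the $\|\gradf(X)U\|_F^2$ terms contribute $\weta^2(1 - \tfrac{6}{5})\|\gradf(X)U\|_F^2 \leq 0$ and drop out, leaving
\begin{align*}
\dist(\Up, \Uo_r)^2 \leq \left(1 - \tfrac{3m\weta}{10}\sigma_r(\Xo)\right)\dist(\U, \Uo_r)^2.
\end{align*}

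The key difference from the original proof is that the $\|\Xo - \Xor\|_F^2$ error term never appears, because Lemma~\ref{lem:newgradX,X-X_r_ bound_sc} (the analog of Lemma~\ref{lem:gradX,X-X_r_ bound_sc} under the assumption $f(\Xp) \geq f(\Xor)$) does not need to bound $f(\Xor)$ via smoothness around $\Xo$—the feasibility/optimality argument directly gives $f(\X) \geq f(\Xor) - \ip{\gradf(\X)}{\Xp - \X} - \tfrac{M}{2}\|\Xp - \X\|_F^2$, so no $\tfrac{M}{2}\|\Xo - \Xor\|_F^2$ penalty enters. Similarly Lemma~\ref{lem:newDD_bound_sc} avoids the $M\|\Xo - \Xor\|_F$ term by using the projection $Q_UQ_U^\top\gradf(X)$ rather than the full gradient, which is exactly why the step size must now use $\|\gradf(X)Q_UQ_U^\top\|_2$. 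To get the stated rate $\alpha = 1 - \frac{m\sigma_r(\Xo)}{64(M\|\Xo\|_2 + \|\gradf(\Xor)\|_2)}$, I would then relate $\weta$ to $\eta^\star = \frac{1}{16(M\|\Xo\|_2 + \|\gradf(\Xor)\|_2)}$ using a variant of Lemma~\ref{lem:diff_eta}: the bound $\weta \gtrsim \tfrac{5}{6}\eta^\star$ follows from $\|X\|_2 \approx \|\Xo\|_2$ (via Lemma~\ref{lem:sigma_bounds}) and $\|\gradf(X)Q_UQ_U^\top\|_2 \lesssim \|\gradf(X)\|_2 \lesssim M\|X - \Xor\|_F + \|\gradf(\Xor)\|_2$, where the first term is small by the initialization assumption $(A2)$ and Lemma~\ref{lem:X_Xr_bound}. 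Plugging $\weta \geq \tfrac{5}{6}\eta^\star$ (and absorbing constants) into $1 - \tfrac{3m\weta}{10}\sigma_r(\Xo)$ yields $\alpha$.

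For the second claim—that $\Up$ satisfies $\dist(\Up, \Uo_r) \leq \rho'\sigma_r(\Uo_r)$—I would simply note that $\alpha < 1$ and $\dist(\U, \Uo_r) \leq \rho'\sigma_r(\Uo_r)$ imply $\dist(\Up, \Uo_r)^2 \leq \alpha (\rho')^2\sigma_r(\Uo_r)^2 \leq (\rho')^2\sigma_r(\Uo_r)^2$, which is cleaner than the original argument precisely because there is no additive $\|\Xo - \Xor\|_F^2$ term to control. The main obstacle I anticipate is establishing the new step-size comparison lemma carefully: since $\weta$ now depends on the \emph{current} iterate $X$ rather than $X^0$, one must re-derive the two-sided bounds $\tfrac{5}{6}\eta^\star \lesssim \weta \lesssim \tfrac{6}{5}\eta^\star$ using only $(A2)$ (no $(A3)$), and confirm that the constants still close—in particular verifying that $M\|X - \Xor\|_F \leq \tfrac{M}{20}\|\Xo\|_2$ under the tighter initialization radius $\rho'$, and that the $\|\gradf(X)Q_UQ_U^\top\|_2$ appearing in $\weta$ is genuinely controlled by the optimum's gradient norm plus a vanishing initialization-dependent term. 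The descent-lemma proofs themselves (Lemmas~\ref{lem:newgradX,X-X_r_ bound_sc} and~\ref{lem:newDD_bound_sc}) are routine adaptations of the Section~\ref{sec:supp_sc1}--\ref{sec:supp_sc2} arguments with the $\|\Xo - \Xor\|_F$ terms deleted.
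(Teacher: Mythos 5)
Your proposal matches the paper's own argument: the paper proves this theorem by repeating the proof of Theorem~\ref{thm:convergence_main} verbatim with the new descent lemmas (Lemmas~\ref{lem:newgradU,U-U_r_ bound}, \ref{lem:newgradX,X-X_r_ bound_sc}, \ref{lem:newDD_bound_sc}) in place of the old ones, which is exactly what you do, and your constants ($\tfrac{3}{5}\weta$ in the descent bound, the gradient terms cancelling since $1-\tfrac{6}{5}<0$, and $\weta\gtrsim\tfrac{5}{6}$ times the optimum-based step size yielding the $\tfrac{1}{64}$ factor with $\|\gradf(\Xor)\|_2$) check out. Your observation that the second claim becomes immediate because the additive $\|\Xo-\Xor\|_F^2$ term is absent, and that the step-size comparison must be re-derived against $\gradf(\Xor)$ using only $(A2)$, correctly fills in details the paper leaves implicit.
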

 
The proof follows the same motions with that of theorem \ref{thm:convergence_main}, except from the fact Lemmas \ref{lem:newgradX,X-X_r_ bound_sc} and \ref{lem:newDD_bound_sc} are used.
\section{Main lemmas for convergence proof without tail bound assumptions}

\subsection{Proof of Lemma \ref{lem:newgradX,X-X_r_ bound_sc}}\label{sec:newsupp_sc1}
Let $\Up = U -\weta \gradf(\X) U$ and $\Xp =\Up (\Up)^\top$.  By smoothness of $\f$, we get:
\begin{align}
\f(\X) &\geq \f(\Xp) -\ip{\gradf(\X)}{\Xp -\X} - \tfrac{M}{2} \norm{\Xp -\X}_F^2 \nonumber \\ 
&\stackrel{(i)}{\geq} \f(\Xor) -\ip{\gradf(\X)}{\Xp -\X} - \tfrac{M}{2} \norm{\Xp -\X}_F^2 , \label{eq:newgradX,X-X_r_00}
 \end{align}
where $(i)$ follows from hypothesis of the lemma and since $\Xp$ is a feasible point $(\Xp \succeq 0)$ for problem~\eqref{intro:eq_00}. 
 Finally, since $\text{rank}(\Xor) = r$, by the $(\rscm, r)$-restricted strong convexity of $\f$, we get, 
\begin{align} 
\f(\Xor) \geq \f(\X) +\ip{\gradf(\X)}{\Xor-\X} +\tfrac{m}{2}\norm{\Xor -\X}_F^2 .\label{eq:newgradX,X-X_r_02}
\end{align}
Combining equations~\eqref{eq:newgradX,X-X_r_00} and~\eqref{eq:newgradX,X-X_r_02}, we obtain: 
\begin{small}
\begin{align} 
\ip{\gradf(\X)}{\X-\Xor} \geq  \ip{\gradf(\X)}{\X -\Xp} -\tfrac{M}{2} \norm{\Xp -\X}_F^2+\tfrac{m}{2}\norm{\Xor -\X}_F^2, \label{eq:newgradX,X-X_r_03} 
\end{align}
\end{small} instead of \eqref{eq:gradX,X-X_r_03} in the proof where $\eta$ is used. 
The rest of the proof follows the same motions with that of Lemma \ref{lem:gradX,X-X_r_ bound_sc} and we get:
\begin{align*}   
\ip{\gradf(\X)}{\X-\Xor}  \geq \tfrac{18 \weta}{10} \|\gradf(X) U\|_F^2 +\tfrac{m}{2}\norm{\Xor -\X}_F^2.
\end{align*} Moreover, for the case where $f$ is just $M$-smooth and $\Xo \equiv \Xor$, the above bound becomes:
 \begin{align*}   
\ip{\gradf(\X)}{\X-\Xor}  \geq \tfrac{18 \weta}{10} \|\gradf(X) U\|_F^2.
\end{align*} This completes the proof.

\subsection{Proof of Lemma \ref{lem:newDD_bound_sc}}\label{sec:newsupp_sc2}
Similar to Lemma \ref{lem:DD_bound_sc}, we have:
\begin{small}
\begin{align}
\|Q_{U} Q_{U}^\top\gradf(\X)\|_2 \cdot \| \Delta\|_F^2 =  \weta \left( 16 \underbrace{M\|X\|_2 \|Q_{U} Q_{U}^\top\gradf(\X)\|_2 \| \Delta\|_F^2}_{:=A} + 16 \|Q_{U} Q_{U}^\top\gradf(\X)\|_2^2 \cdot  \| \Delta\|_F^2  \right) \nonumber
\end{align} 
\end{small}
At this point, we desire to introduce strong convexity parameter $m$ and condition number $\kappa$ in our bound. In particular, to bound term $A$, we observe that $\|Q_{U} Q_{U}^\top\gradf(\X)\|_2 \leq \tfrac{m \sigma_r(\X)}{40\tau(\Uo_r)}$ or $\|Q_{U} Q_{U}^\top\gradf(\X)\|_2 \geq \tfrac{m \sigma_r(\X)}{40 \tau(\Uo_r)}$.
This results into bounding $A$ as follows:
\begin{small}
\begin{align*}
M\|X\|_2 &\|Q_{U} Q_{U}^\top\gradf(\X)\|_2 \| \Delta\|_F^2 \\ &\leq \max \left\{\tfrac{16\cdot \weta \cdot M \|X\|_2 \cdot m \sigma_r(\X)}{40\tau(\Uo_r)} \cdot \| \Delta\|_F^2 ,~ \weta \cdot 16 \cdot 40\tau(\Uo_r) \kappa \tau(\X) \|Q_{U} Q_{U}^\top\gradf(\X)\|_2^2 \cdot  \| \Delta\|_F^2 \right\} \\
																				 &\leq \tfrac{16\cdot \weta \cdot M \|X\|_2 \cdot m \sigma_r(\X)}{40\tau(\Uo_r)} \cdot \| \Delta\|_F^2 + \weta \cdot 16 \cdot 40 \kappa \tau(\X)\tau(\Uo_r) \|Q_{U} Q_{U}^\top\gradf(\X)\|_2^2 \cdot  \| \Delta\|_F^2.
\end{align*}
\end{small}
Combining the above inequalities, we obtain:
\begin{align}
\|Q_{U} Q_{U}^\top\gradf(\X)\|_2 &\| \Delta\|_F^2\stackrel{(i)}{\leq}  \tfrac{m \sigma_{r}(\X)}{40\tau(\Uo_r)} \cdot \| \Delta\|_F^2 + (40 \kappa \tau(\X)\tau(\Uo_r)+1) \cdot  16 \cdot \weta \|Q_{U} Q_{U}^\top\gradf(\X)\|_2^2 \cdot \|\Delta\|_F^2  \nonumber \\
											   &\stackrel{(ii)}{\leq}  \tfrac{m \sigma_{r}(\X)}{40\tau(\Uo_r)} \cdot \| \Delta\|_F^2 + (41 \kappa \tau(\Xo_r)\tau(\Uo_r)+1) \cdot  16 \cdot \weta \|Q_{U} Q_{U}^\top\gradf(\X)\|_2^2 \cdot (\rho')^2 \sigma_{r}(\Xo_r)  \nonumber \\
 											   &\stackrel{(iii)}{\leq}  \tfrac{m \sigma_{r}(\X)}{40\tau(\Uo_r)} \cdot \| \Delta\|_F^2 + 16 \cdot 42 \cdot \weta \cdot \kappa \tau(\Xo_r)\tau(\Uo_r) \cdot \|\gradf(\X)U\|_F^2 \cdot \tfrac{11(\rho')^2}{10}   \nonumber \\ 
 											   &\stackrel{(iv)}{\leq} \tfrac{m \sigma_{r}(\X)}{40\tau(\Uo_r)} \cdot \| \Delta\|_F^2 + \tfrac{2\weta}{25\tau(\Uo_r)} \cdot \|\gradf(\X)U\|_F^2,   \label{proofsr1:neweq_12}
\end{align}
where $(i)$ follows from $\weta \leq \tfrac{1}{16 M \|\X\|_2}$, $(ii)$ is due to Lemma~\ref{lem:sigma_bounds} and bounding $\| \Delta\|_F \leq \rho' \sigma_{r}(\Uo_r)$ by the hypothesis of the lemma, $(iii)$ is due to $\sigma_{r}(\Xo) \leq 1.1 \sigma_{r}(\X)$ by Lemma~\ref{lem:sigma_bounds}, $\sigma_{r}(\X)\|Q_{U} Q_{U}^\top\gradf(X)\|_2^2 \leq  \|U^\top\gradf(X)\|_F^2$ and $(41 \kappa \tau(\Xo_r)+1) \leq 42 \kappa \tau(\Xo_r)$. Finally, $(iv)$ follows from substituting $\rho'$ and using Lemma~\ref{lem:sigma_bounds}.

From Lemma \ref{lem:DD_bound_sc}, we also have the following bound:
\begin{align}
\|Q_{\Uor} Q_{\Uor}^\top\gradf(X)\|_2 \leq \frac{102 \cdot 101\tau(\Uo_r)}{99 \cdot 100 } \|Q_{U} Q_{U}^\top\gradf(\X)\|_2.
\label{proofsr1:neweq_13}
\end{align}
This follows from equation~\eqref{proofsjc:eq_067}. Then, the proof completes when we combine the above two inequalities to obtain:
\begin{align*}
\ip{\gradf(X) }{ \Delta \Delta^\top} \geq - \left(\tfrac{6\weta}{25 } \|\gradf(\X) U \|_F^2 + \tfrac{3m\sigma_{r}(\Xo)}{40} \cdot \| \Delta\|_F^2  \right)
\end{align*}

\end{document}